\documentclass[]{fairmeta}

\usepackage{amsmath,amsthm,amssymb,bm}
\usepackage{color}
\usepackage{algorithm}
\usepackage[noend]{algpseudocode}
\usepackage[ruled,algo2e]{algorithm2e}%
\usepackage{derivative}
\usepackage{longtable}
\usepackage{pifont}
\usepackage{mdframed}
\usepackage{enumitem}
\usepackage{mathrsfs}
\usepackage{nicefrac}
\usepackage{subcaption}
\usepackage{float}
\newfloat{algorithm}{t}{lop}

\usepackage[normalem]{ulem}

\usepackage{siunitx}
\usepackage{tocloft}

\let\oldaddcontentsline\addcontentsline
\newcommand{\stoptocentries}{\renewcommand{\addcontentsline}[3]{}}
\newcommand{\starttocentries}{\let\addcontentsline\oldaddcontentsline}

\usepackage{tikz}
\usetikzlibrary{positioning, fit, calc}

\usepackage{booktabs}

\usepackage{arydshln} 

\usepackage{graphicx}
\usepackage{mathtools}
\usepackage{amsfonts}
\usepackage{color}
\usepackage{dsfont}
\usepackage{thmtools} 
\newcommand{\R}{\mathbb{R}}

\newcommand{\norm}[1]{\lVert#1\rVert}

\newcommand{\E}{\mathbb{E}}

\renewcommand{\phi}{\varphi}

\setlength{\parskip}{0.5em}
\setlength\parindent{0pt}

\graphicspath {{figures/}}

\usepackage{hyperref}
\usepackage{caption}
\usepackage[super]{nth}
\delimitershortfall-1sp

\newtheorem{lemma}{Lemma}
\newtheorem{definition}{Definition}
\newtheorem{theorem}{Theorem}

\newtheorem{proposition}{Proposition}

\usepackage{subcaption}

\DeclareMathOperator*{\argmin}{argmin}

\newenvironment{talign*}
 {\csname align*\endcsname}
 {\endalign}
\newenvironment{talign}
 {\csname align\endcsname}
 {\endalign}

\makeatletter

\makeatother

\makeatletter
\DeclareRobustCommand{\cev}[1]{%
  {\mathpalette\do@cev{#1}}%
}
\newcommand{\do@cev}[2]{%
  \vbox{\offinterlineskip
    \sbox\z@{$\m@th#1 x$}%
    \ialign{##\cr
      \hidewidth\reflectbox{$\m@th#1\vec{}\mkern4mu$}\hidewidth\cr
      \noalign{\kern-\ht\z@}
      $\m@th#1#2$\cr
    }%
  }%
}
\makeatother

\definecolor{mygray}{gray}{0.95}
\newcommand{\greybox}[1]{
\vspace{-0.9em}
\begin{center}			
\vspace{-0.5em}
\colorbox{mygray} {		
\begin{minipage}{0.987\linewidth} 	
\centering
\vspace{-0.8em}
{#1}
\end{minipage}}			
\end{center}
\vspace{-0.5em}
}

\newcommand{\fX}{\bm{X}}
\usepackage{xspace}
\newcommand*{\eg}{{\it e.g.}\@\xspace}
\newcommand*{\ie}{{\it i.e.}\@\xspace}
\DeclarePairedDelimiterX{\infdivx}[2]{(}{)}{%
  #1\;\delimsize\|\;#2%
}
\newcommand{\infdiv}{D_\text{KL}\infdivx}

\newcommand*{\tran}{^{\mkern-1.5mu\mathsf{T}}}

\definecolor{mygray}{gray}{0.95}
\newcommand{\graybox}[1]{%
\vspace{-1em} 
\begin{center}			
\colorbox{mygray} {		
\begin{minipage}{0.987\linewidth} 	
\centering
\vspace{-1em}   
{#1}    
\end{minipage}}			
\end{center}
}

\title{Adjoint Matching: Fine-tuning Flow and Diffusion Generative Models with Memoryless Stochastic Optimal Control}

\author[1]{Carles Domingo-Enrich}
\author[1]{Michal Drozdzal}
\author[1]{Brian Karrer}
\author[1]{Ricky T. Q. Chen}
\affiliation[1]{FAIR, Meta}

\abstract{
    Dynamical generative models that produce samples through an iterative process, such as Flow Matching and denoising diffusion models, have seen widespread use, but there have not been many theoretically-sound methods for improving these models with reward fine-tuning.
    In this work, we cast reward fine-tuning as stochastic optimal control (SOC). 
    Critically, we prove that a very specific \emph{memoryless} noise schedule must be enforced during fine-tuning, in order to account for the dependency between the noise variable and the generated samples.
    We also propose a new algorithm named \emph{Adjoint Matching} which outperforms existing SOC algorithms, by casting SOC problems as a regression problem. 
    We find that our approach significantly improves over existing methods for reward fine-tuning, achieving better consistency, realism, and generalization to unseen human preference reward models, while retaining sample diversity.
}

\stoptocentries

\correspondence{Carles Domingo-Enrich at \email{cd2754@nyu.edu}}


\begin{document}

\maketitle

\section{Introduction}
\label{sec:intro}

\begin{figure}[b!]
    \vspace{-1.5em}
    \centering
    \begin{subfigure}[t]{0.495\linewidth}
        \centering
        \caption*{Base model (Flow Matching) w/ Guidance}
        \includegraphics[width=0.32\linewidth]{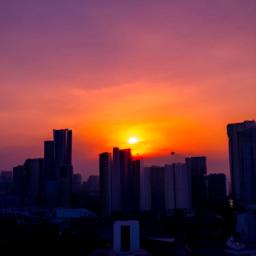}\;%
        \includegraphics[width=0.32\linewidth]{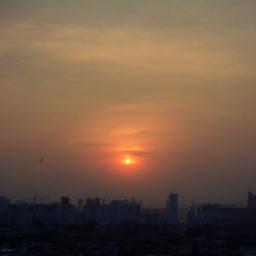}\;%
        \includegraphics[width=0.32\linewidth]{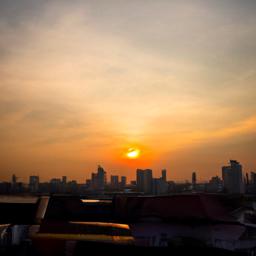}\\
        \includegraphics[width=0.32\linewidth]{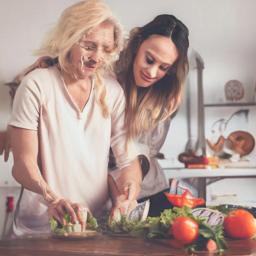}\;%
        \includegraphics[width=0.32\linewidth]{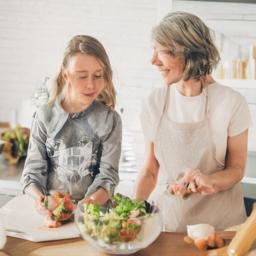}\;%
        \includegraphics[width=0.32\linewidth]{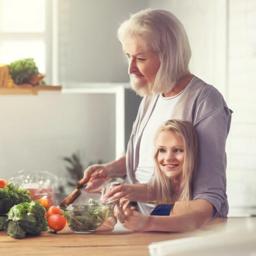}
    \end{subfigure}
    \begin{subfigure}[t]{0.495\linewidth}
        \centering
        \caption*{\textbf{Adjoint Matching (Ours)}}
        \includegraphics[width=0.32\linewidth]{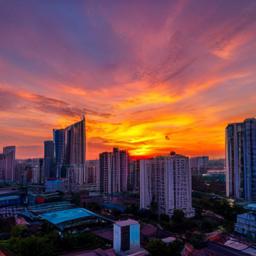}\;%
        \includegraphics[width=0.32\linewidth]{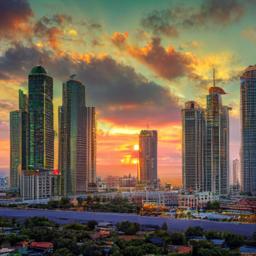}\;%
        \includegraphics[width=0.32\linewidth]{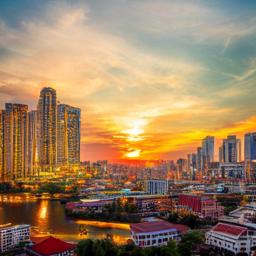}\\
        \includegraphics[width=0.32\linewidth]{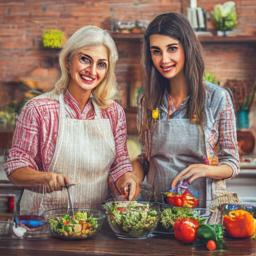}\;%
        \includegraphics[width=0.32\linewidth]{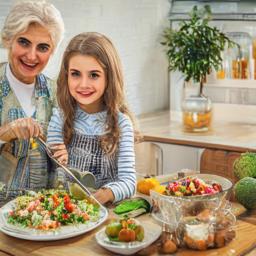}\;%
        \includegraphics[width=0.32\linewidth]{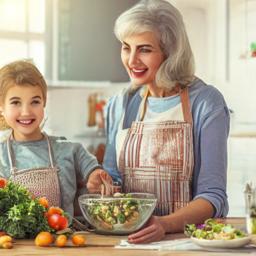}
    \end{subfigure}\\
    \begin{subfigure}[t]{0.495\linewidth}
        \centering
        \includegraphics[width=0.32\linewidth]{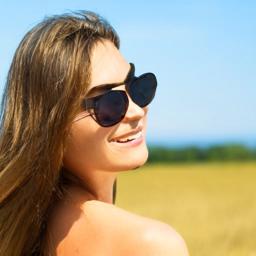}\;%
        \includegraphics[width=0.32\linewidth]{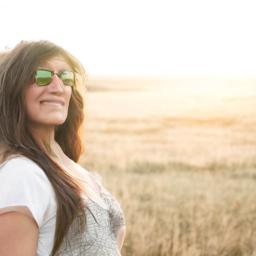}\;%
        \includegraphics[width=0.32\linewidth]{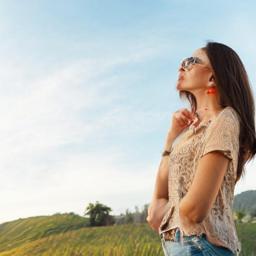}\\
        \includegraphics[width=0.32\linewidth]{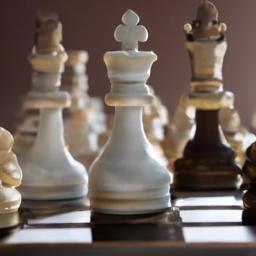}\;%
        \includegraphics[width=0.32\linewidth]{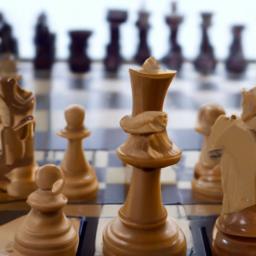}\;%
        \includegraphics[width=0.32\linewidth]{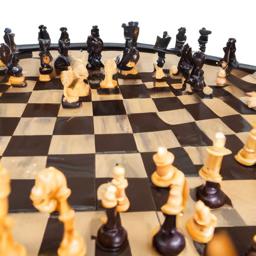}
    \end{subfigure}
    \begin{subfigure}[t]{0.495\linewidth}
        \centering
        \includegraphics[width=0.32\linewidth]{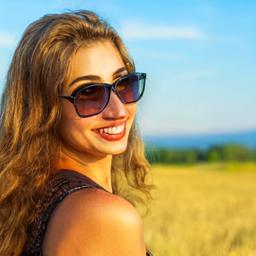}\;%
        \includegraphics[width=0.32\linewidth]{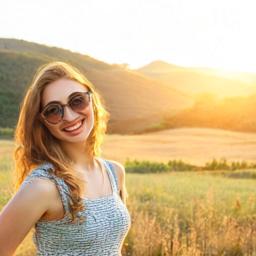}\;%
        \includegraphics[width=0.32\linewidth]{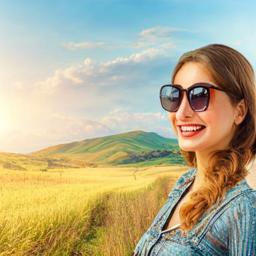}\\
        \includegraphics[width=0.32\linewidth]{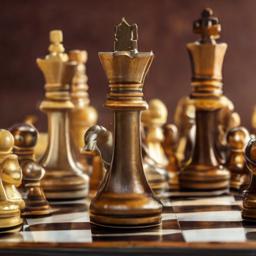}\;%
        \includegraphics[width=0.32\linewidth]{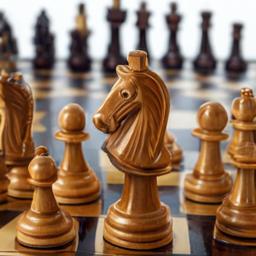}\;%
        \includegraphics[width=0.32\linewidth]{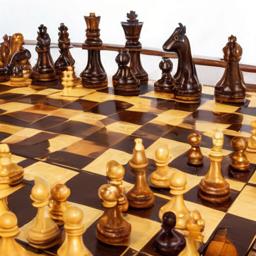}
    \end{subfigure}\\
    \caption{We introduce Adjoint Matching, a theoretically-driven yet simple algorithm for reward fine-tuning that works for a large family of dynamical generative models, including for the first time, Flow Matching models. 
    Text prompts: ``\textit{Beautiful colorful sunset midst of building in Bangkok Thailand}'', ``\textit{Beautiful grandma and granddaughter are mixing salad and smiling while cooking in kitchen}'', ``\textit{The beautiful young woman in sunglasses is standing at the background of field and hill. She is smiling and looking over shoulder}'', ``\textit{Chess, intellectual games, figure horse, chess board}''. 
    }
    \vspace{-6em}
    \label{fig:fig1}
\end{figure}

Flow Matching \citep{lipman2023flow,albergo2023building,liu2023flow} and denoising diffusion
\citep{song2019generative,ho2020denoising,song2021scorebased,kingma2021ondensity}
models are being used for many generative modeling applications, including text-to-image \citep{rombach2022high,esser2024scaling}, text-to-video \citep{singer2022make}, and text-to-audio \citep{le2024voicebox,vyas2023audiobox}. 
In most cases, the base generative model does not achieve the desired sample quality.
To improve the generated samples, it is common to resort to techniques such as classifier-free guidance \citep{ho2022classifier,zheng2023guided} to get better text-to-sample alignment, or to fine-tune using human preference reward models to improve sample quality and realism \citep{wallace2023diffusion,clark2024directly}.

In the adjacent field of large language models, the behavior of the model is aligned to human preferences through fine-tuning with reinforcement learning from human feedback (RLHF).  
Either explicitly or implicitly, RLHF methods \citep{ziegler2020finetuning,stiennon2020learning,ouyang2022training,bai2022training} assume a reward model $r(x)$ that captures human preferences, with the goal of modifying the base generative model such that it generates the following \emph{tilted distribution}: 
\begin{talign} \label{eq:p_star_info}
p^*(x) \propto p^{\mathrm{base}}(x) \exp(r(x)),
\end{talign}
where $p_{\mathrm{base}}$ is the base generative model's sample distribution. 

Inspired by this,
fine-tuning methods have been developed to improve denoising diffusion models based on human preference data; either using a reward-based approach \citep{fan2023optimizing,black2024training,fan2023dpok,xu2023imagereward,clark2024directly,uehara2024understanding,uehara2024finetuning},
or direct preference optimization \citep{wallace2023diffusion}. 
However, unlike the fine-tuning methods designed for large language models,
most of the existing methods to a large degree ignore $p^\text{base}$ and focus solely on the reward model.
Reward models can range from standard evaluation metrics such as ClipScore \citep{hessel2021clipscore,kirstain2023pickapic} to specialized models that have been trained on human preferences \citep{schuhmann2022laion,xu2023imagereward,wu2023humanpreferencescorev2}. As these are parameterized by neural networks, they fall pray to adversarial examples which lead to the generation of undesirable artifacts \citep{goodfellow2014explaining,mordvintsev2015inceptionism}. This has led some works to consider adding regularization during fine-tuning \citep{fan2024reinforcement,uehara2024finetuning} to incentivize staying close to the base model distribution; however, there does not yet exist a \emph{simple} approach which actually provably generates from the tilted distribution \eqref{eq:p_star_info}. 

The main contributions of our paper are as follows:
\begin{enumerate}[label=(\roman*)]
    \item We present a stochastic optimal control (SOC) formulation for reward fine-tuning of dynamical generative models. Importantly, we prove that the na\"ive approach considered by prior works lead to a \emph{value function bias} problem that biases the fine-tuned model away from the tilted distribution \eqref{eq:p_star_info}. 
    This problem has also been observed by \citet{uehara2024finetuning} but they propose a more complicated solution which involves training a separate generative model for the optimal noise distribution.
    \item Instead, we propose a very simple solution: the \emph{memoryless noise schedule}. This is a unique noise schedule that completely removes the dependency between noise variables and the generated samples, resulting in provable convergence to the tilted distribution. 
    This allows us to fine-tune dynamical generative models in full generality, including being the first to fine-tune noiseless Flow Matching models.
    \item We also propose a new method for solving SOC problems, called \emph{Adjoint Matching}, which combines the scalability of gradient-based methods and the simplicity of a least-squares regression objective. This is orthogonal to the reward fine-tuning application and can be applied to general SOC problems.
    \item We perform extensive comparisons to baseline approaches, and analyze them from multiple perspectives such as realism, consistency, and diversity. We find that our proposed method provides generalization to unseen human preference reward models, better text-to-sample consistency, and retains good diversity.
\end{enumerate}
In the following, sections are broken down as follows: \Cref{sec:prelim_generative_models} summarizes the algorithms used for sampling from pre-trained Flow Matching and diffusion models, while \Cref{sec:common_perspective} provides a common notation that we will use throughout. \Cref{sec:memoryless_SOC,sec:adjoint_matching} form the core of our contributions. \Cref{sec:memoryless_SOC} details the value function bias problem and our proposed solution via the memoryless noise schedule. \Cref{sec:adjoint_matching} details the new Adjoint Matching algorithm for solving SOC problems. 

\section{Preliminaries on dynamical generative models}\label{sec:prelim_generative_models}

We are interested in fine-tuning base generative models $p^{\mathrm{base}}(X_1)$ where samples are generated through the simulation of a stochastic process. That is, these models transform noise variables into a sample through an iterative process. In particular, we discuss the specific constructions and sampling processes of Flow Matching~\citep{lipman2023flow,liu2023flow,liu2022rectified,albergo2023building} and Denoising Diffusion Models~\citep{ho2020denoising,song2021scorebased,song2021denoising}.
The goal of this section is to provide background information on these methods, which we will later unify into a single consistent notation in \Cref{sec:common_perspective}.

Given random variables from an initial distribution $\bar{X}_0 \sim p_0 = \mathcal{N}(0, I)$, and $\bar{X}_1$ which are distributed according to some data distribution, we define the reference flow $\bm{\bar{X}} = (\bar{X}_t)_{t\in[0,1]}$ where
\begin{talign} \label{eq:reference_flow}
\bar{X}_t = \beta_t \bar{X}_0 + \alpha_t \bar{X}_1,
\end{talign}
where $(\alpha_t)_{t \in [0,1]}, (\beta_t)_{t \in [0,1]}$ are functions such that $\alpha_0 = \beta_1 = 0$ and $\alpha_1 = \beta_0 = 1$. 
Diffusion models and Flow Matching construct generative Markov processes $X_t$ with initial distribution $X_0 \sim \mathcal{N}(0, I)$ that result in flows $\bm{X} = (X_t)_{t \in [0,1]}$ with the same time marginals as the reference flow $\bm{\bar{X}}$, \ie, the random variables $X_t$ and $\bar{X}_t$ have identical distribution for all times $t \in [0, 1]$. 
This implies $X_1$ has the same distribution as the data distribution, so simulating the Markov process from random noise $X_0$ is a way to generate artificial samples\footnote{In our derivations, we will simply assume the base model has been trained perfectly during the pre-training phase.}.

\subsection{Flow Matching}

In its simplest form, the generative Markov process of a Flow Matching model is an ordinary differential equation (ODE) of the form:
\begin{talign} \label{eq:FM_ode}
    \mathrm{d}X_t = v
    (X_t,t) \, \mathrm{d}t, \qquad X_0 \sim \mathcal{N}(0,I).
\end{talign}
where $v(X_t, t)$ is a parametric velocity that is optimized to match the derivative of the reference flow, \ie,
$v
(X_t, t) = \argmin_{\hat{v}} \mathbb{E} \big\| \hat{v}(\bar{X}_t,t) - \frac{\mathrm{d}}{\mathrm{d}t}\bar{X}_t \big\|^2$ (see \eg \citet{lipman2023flow} for details on pre-training Flow Matching models).
It can then be proven that the solution of the generative process \eqref{eq:FM_ode} has the same time marginals as the reference flow \citep{lipman2023flow,liu2022rectified,albergo2023building}, and a commonly used choice is $\alpha_t = t$ and $\beta_t = 1-t$.
One can also consider a family of stochastic differential equations (SDEs) with an arbitrary state-independent diffusion coefficient%
\footnote{We use the common short-hand ``over-dot'' notation to denote the time derivative, \ie, $\dot{x}_t = \frac{\mathrm{d}}{\mathrm{d}t} x_t$.}%
:
\begin{talign} \label{eq:FM_general_diffusion_coeff}
    \mathrm{d}X_t = \left( v
    (X_t,t) + \frac{\sigma(t)^2}{2\beta_{t}(\frac{\dot{\alpha}_{t}}{\alpha_{t}} \beta_{t} -\dot{\beta}_{t})} \left( v
    (X_t,t) - \frac{\dot{\alpha}_{t}}{\alpha_{t}} X_t \right) \right) \, \mathrm{d}t + \sigma(t) \, \mathrm{d}B_t, \qquad X_0 \sim \mathcal{N}(0,I),
\end{talign}
where $(B_t)_{t\geq 0}$ is a Brownian motion.
The generative processes in \eqref{eq:FM_ode} and \eqref{eq:FM_general_diffusion_coeff} have the same time marginals. This can be seen by writing down the Fokker-Planck equations for \eqref{eq:FM_ode} and \eqref{eq:FM_general_diffusion_coeff}, and observing that they are the same up to a cancellation of terms~\citep{maoutsa2020interacting}.
The diffusion coefficient $\sigma(t)$ in \eqref{eq:FM_general_diffusion_coeff} is compensated by the second term in the drift which scales proportionally as $\sigma(t)^2$. 


\subsection{Denoising Diffusion Models}

We next discuss diffusion models, in particular the sampling scheme proposed by Denoising Diffusion Implicit Model (DDIM; \cite{song2021denoising}) which we will later relate to Denoising Diffusion Probabilistic Models (DDPM; \cite{ho2020denoising}) as a particular case of the former. 
For sampling from a diffusion model, the DDIM update rule%
\footnote{We slightly depart from the notation in \citet{song2021denoising} by flipping the direction of time and using $\bar{\alpha}_k$ which corresponds to the $\alpha_k$ in \citet{song2021denoising} while it corresponds to the $\bar{\alpha}_k$ in \citet{ho2020denoising}.}~(\citet{song2021denoising}, Eq. 12)%
, typically stated in discrete time with $k\in \{0,\dots,K\}$, is:
\begin{talign} \label{eq:DDIM_original_main}
    X_{k+1} = \sqrt{\bar{\alpha}_{k+1}}
    \big( \frac{X_{k} - \sqrt{1-\bar{\alpha}_k} \epsilon(X_k,k)}{\sqrt{\bar{\alpha}_k}} \big)
    + \sqrt{1-\bar{\alpha}_{k+1} - \sigma_{k}^2} \epsilon(X_k,k) +
    \sigma_{k} \varepsilon_{k},
    \qquad \varepsilon_k \sim \mathcal{N}(0,I), \ X_0 \sim \mathcal{N}(0,I),
\end{talign}
where $\bar{\alpha}_k$ is an increasing sequence such that $\bar{\alpha}_0 = 0$, $\bar{\alpha}_K = 1$, and the sequence $\sigma_k$ is arbitrary. That is, one samples an initial Gaussian random variable $x_0$, and applies the stochastic update \eqref{eq:DDIM_original_main} iteratively $K$ times in order to obtain an artificial sample $X_K$. Updates can be interpreted as progressively denoising the iterate: $x_0$ is completely noisy and $x_K$ is fully denoised. The noise predictor model $\epsilon(x_k,k)$ is trained to predict the noise of $x_k$ (see \eg \citet{ho2020denoising} for details on pre-training denoising diffusion models). 

\section{Flow Matching and diffusion models from a common perspective}\label{sec:common_perspective}

We formulate Flow Matching and diffusion models in a unified framework, which we will later use throughout the paper.
Firstly, to simplify notation, we will be using continuous-time formulations. This will also directly enable fine-tuning methods inspired by the continuous-time paradigm, which we find tends to perform better than discrete-time counterparts in our empirical validations. Secondly, by consolidating notation, we will be able to discuss fine-tuning of dynamical generative models that follow the same time marginals as the reference flow \eqref{eq:reference_flow}, pre-trained with either the Denoising Diffusion or Flow Matching framework, in full generality.

To convert DDIM to a continuous-time stochastic process, we can show that the DDIM update rule \eqref{eq:DDIM_original_main}, up to a first-order approximation, is equivalent to the Euler-Maruyama discretization of the following SDE:
\begin{talign} \label{eq:euler_maruyama_DDIM}
    \mathrm{d}X_t &= \big( \frac{\dot{\bar{\alpha}}_{t}}{2\bar{\alpha}_{t}} X_t - \big( \frac{\dot{\bar{\alpha}}_{t}}{2\bar{\alpha}_{t}} + \frac{\sigma(t)^2}{2} \big) \frac{\epsilon^\text{base}(X_{t},t)}{\sqrt{1-\bar{\alpha}_{t}}} \big) \mathrm{d}t + \sigma(t) \mathrm{d}B_t, \qquad X_{0} \sim \mathcal{N}(0,I).
\end{talign}
See \Cref{subsec:continuous_DDIM} for the full derivation.
To go from \eqref{eq:DDIM_original_main} to \eqref{eq:euler_maruyama_DDIM}, we assumed a uniform discretization of time, \ie $t=\tfrac{k}{K}$. 
This results in identifying the discrete-time process $(X_{k})_{k\in \{0,\dots,K\}}$ with a continuous-time process $(X_{t})_{t\in[0, 1]}$, where $\bar{\alpha}_k := \bar{\alpha}_{t}$, $\sigma_k := \frac{1}{\sqrt{K}} \sigma(t)$, and $\epsilon(X_k, k)$ with $\epsilon^\text{base}(X_k, t)$. 
In relation to the reference flow \eqref{eq:reference_flow}, the generative process in \eqref{eq:euler_maruyama_DDIM} has the same time marginals when $\alpha_t = \sqrt{\bar{\alpha}_t}$ and $\beta_t = \sqrt{1 - \bar{\alpha}_t}$~\citep{ho2020denoising}.

Furthermore, when viewed up to first order approximations, the DDPM sampling scheme~(\citet{ho2020denoising}; Algorithm 2) can be seen as special instance of the DDIM sampling scheme when $\sigma(t) = \sqrt{\nicefrac{\dot{\bar{\alpha}}_t}{\bar{\alpha}_t}}$. This results in the following generative process:
\begin{talign}\label{eq:euler_maruyama_DDPM}
    \mathrm{d}X_t &= \big( \frac{\dot{\bar{\alpha}}_{t}}{2\bar{\alpha}_{t}} X_t - \frac{\dot{\bar{\alpha}}_{t}}{\bar{\alpha}_{t}} \frac{\epsilon^\text{base}(X_{t},t)}{\sqrt{1-\bar{\alpha}_{t}}} \big) \mathrm{d}t + \sqrt{\frac{\dot{\bar{\alpha}}_{t}}{\bar{\alpha}_{t}}} \mathrm{d}B_t, \qquad X_{0} \sim \mathcal{N}(0,I),
\end{talign}

We can further consolidate notation by converting all quantities to the score function $\mathfrak{s}(x,t)$---defined as the gradient of the log density of the random variable $X_t$---which is possible when $X_0$ is Normal-distributed and under the affine reference flow \eqref{eq:reference_flow}. In particular, 
the velocity $v^\text{base}$ from Flow Matching can be expressed in terms of the score function (see \Cref{subsec:v_score}):
\begin{talign}
    v^\text{base}(x,t) = \frac{\dot{\alpha}_t}{\alpha_t} x + \beta_t(\frac{\dot{\alpha}_t}{\alpha_t} \beta_t - \dot{\beta}_t) \mathfrak{s}(x,t).
\end{talign}
And the noise predictor $\epsilon^\text{base}$ also admits an expression in terms of the score function (see \Cref{subsec:hat_epsilon_score}):
\begin{talign}
    \mathfrak{s}(x,t) = - \frac{\epsilon^\text{base}(x,t)}{\sqrt{1-\bar{\alpha}_t}}.
\end{talign}
Plugging these two equations into \eqref{eq:FM_general_diffusion_coeff} and \eqref{eq:euler_maruyama_DDIM}, respectively, and rewriting them in terms of only the $\alpha_t$ and $\beta_t$ in \eqref{eq:reference_flow}, we can unify both the Flow Matching and continuous-time DDIM generative processes as:
\begin{talign} \label{eq:gen_process_1}
    \mathrm{d}X_t &= b(X_t,t) \, \mathrm{d}t + \sigma(t) \, \mathrm{d}B_t, \qquad X_0 \sim \mathcal{N}(0,I), \\
    \text{where} \ b(x,t) &= \kappa_t x + \big(\frac{\sigma(t)^2}{2} + \eta_t\big) \mathfrak{s}(x,t), \quad \kappa_t = \frac{\dot{\alpha}_t}{\alpha_t}, \quad 
    \eta_t = \beta_t (\frac{\dot{\alpha}_t}{\alpha_t} \beta_t - \dot{\beta}_t)
    \label{eq:gen_process_2}
\end{talign}
where $(\alpha_t, \beta_t)$ are coefficients of the reference flow \eqref{eq:reference_flow}. We have hence expressed the generative process of a base model, whether it is a Flow Matching or a diffusion model, as an SDE of the form \eqref{eq:gen_process_1}-\eqref{eq:gen_process_2}, unified by the choice of reference flow. This expression has been written before for DDIM, e.g. \cite{bartosh2024neural,bartosh2024neural2}.

\section{Fine-tuning as ``memoryless'' stochastic optimal control}
\label{sec:memoryless_SOC}

We now discuss the crux of the problem: how to produce a fine-tuned generative model that produces samples $X_1$ which follow the tilted distribution involving a reward model \eqref{eq:p_star_info}. 
An obvious direction is to construct a \emph{fine-tuning objective} involving both the base generative model and the reward model, where the optimal solution results in a fine-tuned generative model for the tilted distribution. 
However, as we will explain, this turns out to be non-trivial, because a na\"ive formulation will introduce bias into the solution.

In \Cref{sec:SOC_formulation}, we discuss the problem formulation of stochastic optimal control, a general framework for optimizing SDEs, and its relation to the maximum entropy reinforcement learning framework commonly used for RLHF fine-tuning. 
Next, in \Cref{sec:value_function_bias_problem}, we discuss the \emph{initial value function bias} problem which plagues existing approaches and so far has seen no simple solution.
Finally, in \Cref{sec:memoryless_schedule}, we propose a novel simple solution that circumvents the bias problem, by enforcing a particular diffusion coefficient, the \emph{memoryless noise schedule}, to be used during fine-tuning. This results in an extremely simple fine-tuning objective that provably converges to a model which generates the tilted distribution \eqref{eq:p_star_info} without any statistical bias.


\subsection{Preliminaries on the stochastic optimal control problem formulation} \label{sec:SOC_formulation}

Stochastic optimal control (SOC; \cite{bellman1957,fleming2012deterministic,sethi2018optimal}) considers general optimization problems over stochastic differential equations, but we only need to consider a common instantiation, the quadratic cost control-affine problem formulation:
\graybox{
\begin{talign} \label{eq:control_problem_def}
    &\min\limits_{u \in \mathcal{U}} \mathbb{E} \big[ \int_0^1 
    \big(\frac{1}{2} \|u(X^u_t,t)\|^2 + f(X^u_t,t) \big) \, \mathrm{d}t + 
    g(X^u_1) \big], \\
    \begin{split}
    \text{s.t.}~ \mathrm{d}X^u_t =  \left( b(X^u_t,t) + \sigma(t) u(X^u_t,t) \right) \, \mathrm{d}t + 
    \sigma(t) \mathrm{d}B_t, \qquad X^u_0 \sim p_0
    \end{split} 
    \label{eq:controlled_SDE}
\end{talign}
}
where in \eqref{eq:controlled_SDE}, $X_t^u \in \R^d$ is the state of the stochastic process, $u : \R^d \times [0,1] \to \R^d$ is commonly referred to as the control vector field, $b : \R^d \times [0,1] \to \R^d$ is a base drift, and $\sigma : [0,1] \to \R^{d \times d}$ is the diffusion coefficient. These jointly define the \emph{controlled process} $\fX^u \sim p^u$ that we are interested in optimizing; often both $b$ and $\sigma$ are fixed and we only optimize over the control $u$. 

As part of the objective functional \eqref{eq:control_problem_def}, we have an affine control cost $\frac{1}{2} \|u(X^u_t,t)\|^2$, a running state cost $f : \R^d \times [0,1] \to \R$ and a terminal state cost $g : \R^d \to \R$. 

The stochastic optimal control (SOC) objective \eqref{eq:control_problem_def} can be decomposed recursively from the final time value. It is common to define the \emph{cost functional} which is the expected future cost starting from state $x$ at time $t$:
\begin{talign} \label{eq:cost_functional}
J(u;x,t) := \mathbb{E}_{\fX \sim p^u} \left[ \int_t^1 
\left(\frac{1}{2} \|u(X_s,s)\|^2  +  f(X_s,s) \right) \, \mathrm{d}s  +  
g(X_1) \;\big|\; X_t = x \right].
\end{talign}
From here, the \emph{value function} is the optimal value of the cost functional%
\footnote{Note that there is a slight difference in terminology between SOC and reinforcement learning, where our cost functional is referred to as the state value function and our value function is the optimal state value function in RL.}
:
\begin{talign}\label{eq:value_fn_defn}
V(x,t) := \min_{u\in \mathcal{U}} J(u;x,t) = J(u^*;x,t),
\end{talign}
where $u^*$ is the \emph{optimal control}, \ie, minimizer of \eqref{eq:control_problem_def}. Furthermore, a classical result is that the value function can be expressed in terms of the \emph{uncontrolled} base process $p^\text{base}$ (\cite{kappen2005path}, see \citealt[Eq.~8,~App.~B]{domingoenrich2023stochastic} for a self-contained proof):
\begin{talign}\label{eq:value_fn_from_uncontrolled}
    V(x, t) = - \log \E_{\fX \sim p^\text{base}} \left[ \exp( - \int_t^1 f(X_s, s) \mathrm{d}s - g(X_1) ) \;\big|\; X_t = x  \right]. 
\end{talign}
A useful expression for the optimal control (which we will make use of in deriving the Adjoint Matching objective in \Cref{sec:adjoint_matching}) is that it is related to the gradient of the value function:
\begin{talign} \label{eq:optimal_control}
u^*(x,t) = - \sigma(t)^{\top} \nabla_x V(x,t) = - \sigma(t)^{\top} \nabla_x J(u^*, x,t).
\end{talign}
\paragraph{Relation to MaxEnt RL.} Stochastic optimal control with the control-affine formulation \eqref{eq:control_problem_def} is the continuous-time equivalence of maximum entropy reinforcement learning (MaxEnt RL; \citet{todorov2006linearly,ziebart2008maximum}) with a KL regularization instead of only an entropy regularization. In particular, by the Girsanov theorem (\Cref{cor:girsanov_sdes}), 
the affine control cost is equivalent to a Kullback–Leibler (KL) divergence between the base process $p^\text{base}$, when $u=0$, and the controlled process $p^u$, when conditioned on the same initial state $X_0$ (see \Cref{subsec:proof_eq_cond_kl}):
\begin{talign}\label{eq:cond_kl}
    \infdiv*{p^u(\fX | X_0)}{p^{base}(\fX | X_0)} = \mathbb{E}_{\fX^u \sim p^u} \left[ \int_{0}^1 \frac{1}{2} \|u(X^u_t,t)\|^2 \mathrm{d}t \right],
\end{talign}
resulting in the KL-regularized RL interpretation of \eqref{eq:control_problem_def}:
\begin{talign}\label{eq:kl_regularized_interpretation}
    &\max\limits_{u \in \mathcal{U}}\;
    \E_{X_0 \sim p_0} \left[
    \mathbb{E}_{\fX \sim p^u(\cdot | X_0)} \big[ \int_0^1  -f(X_t^u , t) \mathrm{d} t - g(X_1^u) \big] - \infdiv{p^u(\fX | X_0)}{p^{base}(\fX | X_0)}
    \right],
\end{talign}
where the negative state costs correspond to intermediate and terminal rewards in the RL interpretation. The KL divergence incentivizes the optimal solution to stay close to the distribution of the base process. 

\subsection{The initial value function bias problem} \label{sec:value_function_bias_problem}

We next discuss why na\"ively adding a KL regularization does not lead to the tilted distribution \eqref{eq:p_star_info}.
From \eqref{eq:kl_regularized_interpretation}, we can also show that the optimal distribution conditioned on $X_0$ is%
\footnote{Note \eqref{eq:cond_optimal_distribution_SOC} is informal because densities over continuous-time processes are ill-defined; the formal statement is $\frac{\mathrm{d}\mathbb{P}^{*}}{\mathrm{d}\mathbb{P}^{\mathrm{base}}}(\fX | X_0) = \exp ( - \int_0^1 f(X_t,t) \, \mathrm{d}t - g(X_1))$, where $\frac{\mathrm{d}\mathbb{P}^{*}}{\mathrm{d}\mathbb{P}^{\mathrm{base}}}$ denotes the Radon-Nikodym derivative. We treat this formally in the proofs.}
\begin{talign}\label{eq:cond_optimal_distribution_SOC}
    p^{*}(\fX | X_0) \propto 
    p^{\mathrm{base}}(\fX | X_0) \exp \big( - \int_0^1 f(X_t,t) \, \mathrm{d}t - g(X_1) \big).
\end{talign}
This is analogous to the exponentiated reward distribution in MaxEnt RL \citep{rawlik2013stochastic}, but since we generalize the entropy regularization to a KL regularization, $p^\text{base}$ acts as a prior distribution.

In order to relate this to the tilted distribution \eqref{eq:p_star_info} that we want to achieve for fine-tuning, 
first notice that the normalization constant of the right-hand side (RHS) of \eqref{eq:cond_optimal_distribution_SOC} is exactly the value function at $t=0$:
\begin{talign}\label{eq:normalization_constant}
    \E_{\fX \sim p^\text{base}(\fX | X_0)} \left[ \exp \big( - \int_0^1 f(X_t,t) \, \mathrm{d}t - g(X_1) \big) \right] = \exp \left( - V(X_0, 0) \right),
\end{talign}
where the equality is due to \eqref{eq:value_fn_from_uncontrolled}.  
Dividing the RHS of \eqref{eq:cond_optimal_distribution_SOC} by \eqref{eq:normalization_constant} and multiplying by $p_0(X_0)$, we obtain the normalized distribution over the full path $\fX$,
\begin{talign} \label{eq:optimal_distribution_SOC} 
    p^{*}(\bm{X}) = 
    p^{\mathrm{base}}(\bm{X}) \exp \big( - \int_0^1 f(X_t,t) \, \mathrm{d}t - g(X_1) + V(X_0,0) \big).
\end{talign}
Setting $f=0$ and $g = -r$, we arrive at an expression for the optimal distribution
\begin{talign} \label{eq:optimal_distribution_SOC_RLHF} 
    p^{*}(X_0, X_1) = 
    p^{\mathrm{base}}(X_0, X_1) \exp \big( r(X_1) + V(X_0,0) \big).
\end{talign}
This unfortunately does not lead to the tilted distribution \eqref{eq:p_star_info} because we have a bias in the optimal distribution that is due to the value function of the initial distribution $V(X_0, 0)$. That is to say, na\"ively adding a KL regularization \eqref{eq:cond_kl} to the fine-tuning objective in the sense of \eqref{eq:kl_regularized_interpretation} leads to a biased distribution \eqref{eq:optimal_distribution_SOC} after fine-tuning and is \textit{not} equivalent to the tilted distribution \eqref{eq:p_star_info}. For instance, when the sampling procedure is noiseless, \ie, $\sigma(t) = 0$, fine-tuning na\"ively will not have any effect because $X_0$ completely determines $X_1$.

This is unlike the situation for large language models \citep{ouyang2022training,rafailov2023direct}, where there is no dynamical process that samples $X_1$ iteratively and hence no dependence on the initial noise variable $X_0$. 
Although this KL regularization is a common objective for RLHF of large language models, it has seen seldom use in fine-tuning diffusion models, likely due to this issue of the initial value function bias.

In the context of diffusion models, KL regularization \eqref{eq:kl_regularized_interpretation} has been explored in prior works \citep{fan2024reinforcement}, but its behavior was not well-understood and they did not relate the fine-tuned model to the tilted distribution \eqref{eq:p_star_info}. Another direction that has been proposed is to learn the initial distribution $p_0$ to cancel out the bias \citep{uehara2024finetuning,tang2024finetuning} but this simply shifts the work into tilting the initial distribution and requires an auxiliary model for parameterizing the optimal initial distribution. 
In contrast, we show in the next section that it is possible to remove the value function bias by simply choosing a very particular noise schedule during the fine-tuning procedure. 

\subsection{The memoryless noise schedule for fine-tuning dynamical generative models} \label{sec:memoryless_schedule}

\begin{table}
\centering
\begin{tabular}{lcccc}
    \toprule
     & $\kappa_t$ & $\eta_t$ & Diffusion coefficient $\sigma(t)$ & Memoryless $X_t$ \\
    \midrule
    \addlinespace
    Flow Matching \eqref{eq:FM_ode} & $\frac{\dot{\alpha}_t}{\alpha_t}$ & $\beta_t \big(\frac{\dot{\alpha}_t}{\alpha_t} \beta_t - \dot{\beta}_t\big)$ & General (commonly $0$) & No \\
    \addlinespace
    Memoryless Flow Matching \eqref{eq:FM_general_diffusion_coeff} & $\frac{\dot{\alpha}_t}{\alpha_t}$ & $\beta_t \big(\frac{\dot{\alpha}_t}{\alpha_t} \beta_t - \dot{\beta}_t\big)$ & $\sqrt{2\eta_t}$ & Yes \\
    \addlinespace
    DDIM \eqref{eq:euler_maruyama_DDIM} & 
    $\frac{\dot{\bar{\alpha}}_{t}}{2\bar{\alpha}_{t}}$
    & 
    $\frac{\dot{\bar{\alpha}}_{t}}{2\bar{\alpha}_{t}}$
    & General (commonly $0$) & No \\
    \addlinespace
    DDPM \eqref{eq:euler_maruyama_DDPM} & 
    $\frac{\dot{\bar{\alpha}}_{t}}{2\bar{\alpha}_{t}}$
    & 
    $\frac{\dot{\bar{\alpha}}_{t}}{2\bar{\alpha}_{t}}$
    & $\sqrt{2\eta_t}$ & Yes \\
    \bottomrule
\end{tabular}
\caption{Diffusion coefficient $\sigma(t)$ and the factors $\kappa_t$, $\eta_t$ for the Flow Matching, Memoryless Flow Matching, DDIM, and DDPM generative processes. When the diffusion coefficient is $\sigma(t) = \sqrt{2\eta_t}$, the generative process is memoryless, \ie, samples $X_1$ will be independent of the initial noise $X_0$.}
\label{tab:coefficients}
\end{table} 

In this section, we propose a very simple method of turning \eqref{eq:optimal_distribution_SOC_RLHF} into the tilted distribution \eqref{eq:p_star_info} through the use of a particular \emph{memoryless} noise schedule.
Throughout, we provide an intuitive explanation of why this noise schedule is sufficient for fine-tuning while discussing the full theoretical result where we show that the memoryless noise schedule is actually not only sufficient but also necessary.

Intuitively, the main reason we cannot arrive at the tilted distribution from \eqref{eq:optimal_distribution_SOC_RLHF} is due to the $p^{\text{base}}(X_0, X_1)$ distribution not factoring into $X_0$ and $X_1$. Hence, we define a memoryless generative process as follows:
\begin{definition}[Memoryless generative process]
A generative process of the form \eqref{eq:gen_process_1}-\eqref{eq:gen_process_2} is memoryless if $X_0$ and $X_1$ are independent, \ie, $p^\text{base}(X_0, X_1) = p^\text{base}(X_0) p^\text{base}(X_1)$.
\end{definition}
When the base generative process is memoryless, this implies:
\begin{talign}
    p^{*}(X_1) 
    = \int p^{\text{base}}(X_0) p^{\text{base}}(X_1) \exp( r(X_1) + V(X_0, 0)) \mathrm{d} X_0
    \propto p^\text{base}(X_1) \exp(r(X_1)).
\end{talign}
That is, solving the SOC problem \eqref{eq:control_problem_def}-\eqref{eq:controlled_SDE} with a memoryless base model will result in a fine-tuned model that generates samples $p^*(X_1)$ according to the tilted distribution \eqref{eq:p_star_info}. 
This memoryless property is not satisfied generally by the family of generative processes captured by \eqref{eq:control_problem_def}-\eqref{eq:controlled_SDE}.
For instance, the Flow Matching and DDIM generative processes with zero diffusion coefficient (\ie, $\sigma(t) = 0$) are definitely not memoryless due to $X_0$ and $X_1$ being theoretically invertible.
Below, we provide the sufficient and neccessary condition for the noise schedule in order to have a memoryless generative process.
%
\begin{proposition}[Memoryless noise schedules] \label{prop:memorylessness_noise_schedule}
    Within the family of generative processes \eqref{eq:gen_process_1}-\eqref{eq:gen_process_2}, a generative process is memoryless if and only if the noise schedule is chosen as: 
    \begin{talign} \label{eq:chi_condition}
        \sigma(t)^2 = 2 \eta_t + \chi(t), \text{ where } \chi : [0,1] \to \R \text{ is s.t. } \forall t \in (0,1], \quad \lim_{t' \to 0^{+}} \alpha_{t'} \exp \big( - \int_{t'}^t \frac{\chi(s)}{2 \beta_{s}^2} \, \mathrm{d}s \big) = 0.
    \end{talign}
    where $\eta_t$ is the coefficient defined in \eqref{eq:gen_process_2} (see also \autoref{tab:coefficients}).
    In particular, we refer to $\sigma(t) = \sqrt{2\eta_t}$ as the memoryless noise schedule.
\end{proposition}
Due to the endpoint constraints of $(\alpha_t, \beta_t)$ for the reference flow \eqref{eq:reference_flow}, the memoryless noise schedule $\sigma(t)$ is infinite at $t=0$ and approaches zero at $t=1$. This provides a way for the generative process to mix when close to noise $X_0$ while stay steadying when close to the sample $X_1$. Hence, the sample will have no information about $X_0$ due to the enormous amount of mixing with a large diffusion coefficient.
Furthermore, while we have intuitively justified the memoryless noise schedule through its independence property, our theoretical result is actually even stronger: all generative models of the form \eqref{eq:gen_process_1}-\eqref{eq:gen_process_2} \textit{must} be fine-tuned using the memoryless noise schedule.
We formalize this in the following theorem, which we prove in \Cref{subsec:proof_prop_diff_finetuning}:
\begin{theorem}[Fine-tuning recipe for general noise schedule sampling] \label{thm:general_fine-tuning}
Within the family of generative processes \eqref{eq:gen_process_1}-\eqref{eq:gen_process_2}, in order to allow the use of arbitrary noise schedules and still generate samples according to the tilted distribution \eqref{eq:p_star_info}, the fine-tuning problem \eqref{eq:control_problem_def}-\eqref{eq:controlled_SDE} with $f=0$ and $g=-r$ must be done with the memoryless noise schedule $\sigma(t) = \sqrt{2\eta_t}$.
\end{theorem}
\Cref{thm:general_fine-tuning} states that we \textit{need} to use the memoryless noise schedule for fine-tuning with the SOC objective---or equivalently, the KL regularized reward objective \eqref{eq:kl_regularized_interpretation}.
This is the only noise schedule that retains the relationship between the velocity and score function, allowing the conversion to arbitrary noise schedules (\eg, $\sigma(t) = 0$) after fine-tuning.
It is worth noting that when using the memoryless noise schedule for DDIM, this recovers what we derived as the continuous-time limit of the DDPM generative process \eqref{eq:euler_maruyama_DDPM}. However, the DDPM sampler \citep{ho2020denoising} is not commonly used while the DDIM sampler \citep{song2021denoising} and Flow Matching models typically generate samples using $\sigma(t) = 0$, so an explicit conversion to the memoryless noise schedule is necessary for fine-tuning. 
To the best of our knowledge, we are not aware of any existing works that have proposed a time-varying diffusion coefficient with theoretical guarantees.
\Cref{tab:coefficients} summarizes the memoryless schedule for diffusion and Flow Matching models, which we refer to as Memoryless Flow Matching. In \Cref{fig:memorylessness_illustration}, we visualize fine-tuning a 1D model, where we see that constant $\sigma(t)$ leads to biased distributions whereas the memoryless noise schedule perfectly converges to the tilted distribution \eqref{eq:p_star_info}. 

\begin{figure}
    \centering
    \begin{subfigure}[t]{\linewidth}
        \centering
        \includegraphics[width=0.7\linewidth]{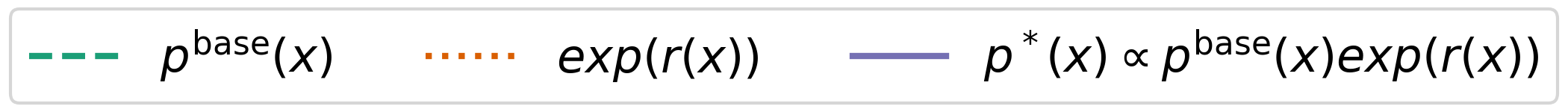}
    \end{subfigure}\\
    \begin{subfigure}[t]{0.25\linewidth}
        \centering
        \includegraphics[width=\linewidth]{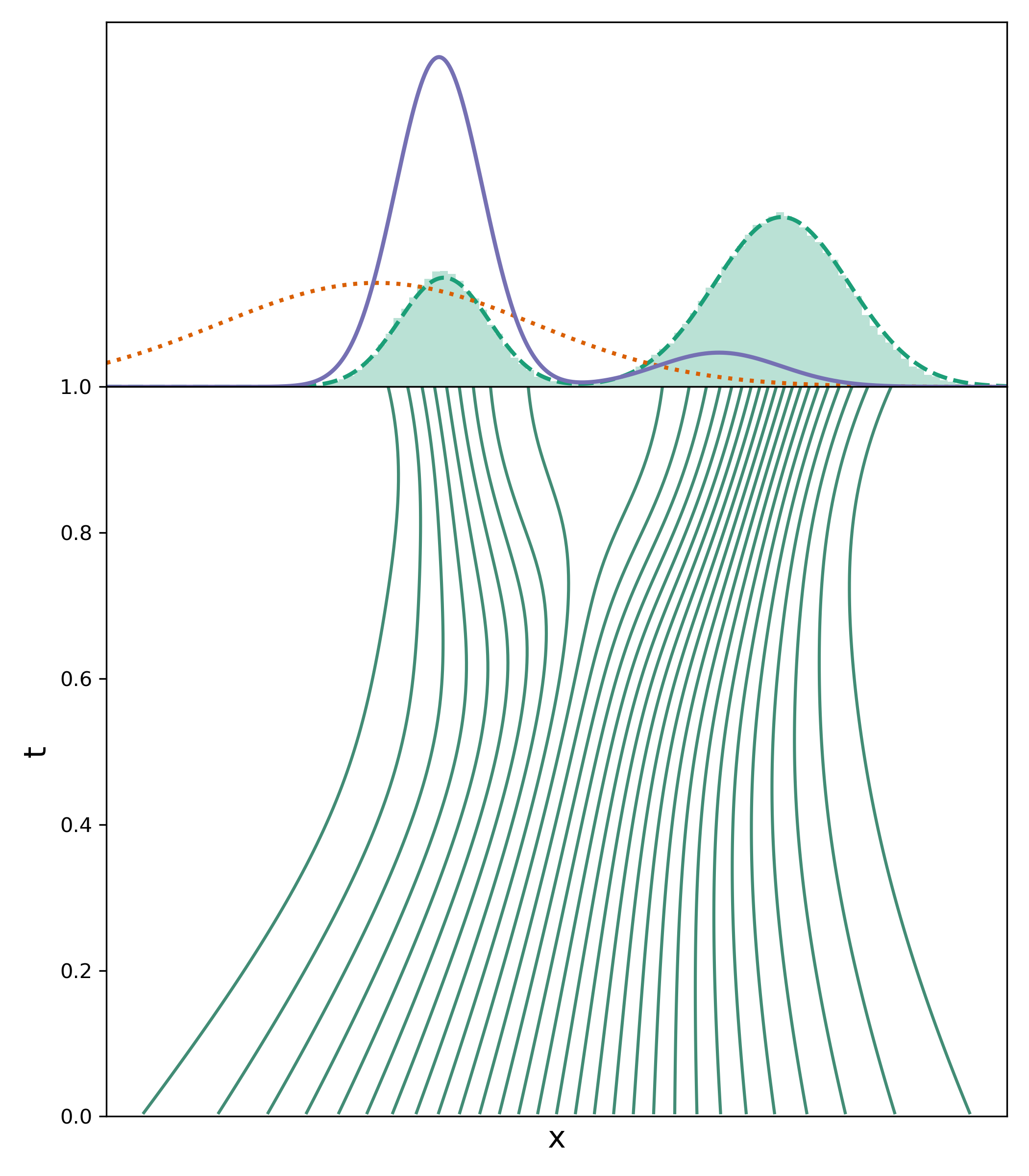}
        \caption{Pre-trained FM $v^\mathrm{base}$}
    \end{subfigure}%
    \begin{subfigure}[t]{0.25\linewidth}
        \centering
        \includegraphics[width=\linewidth]{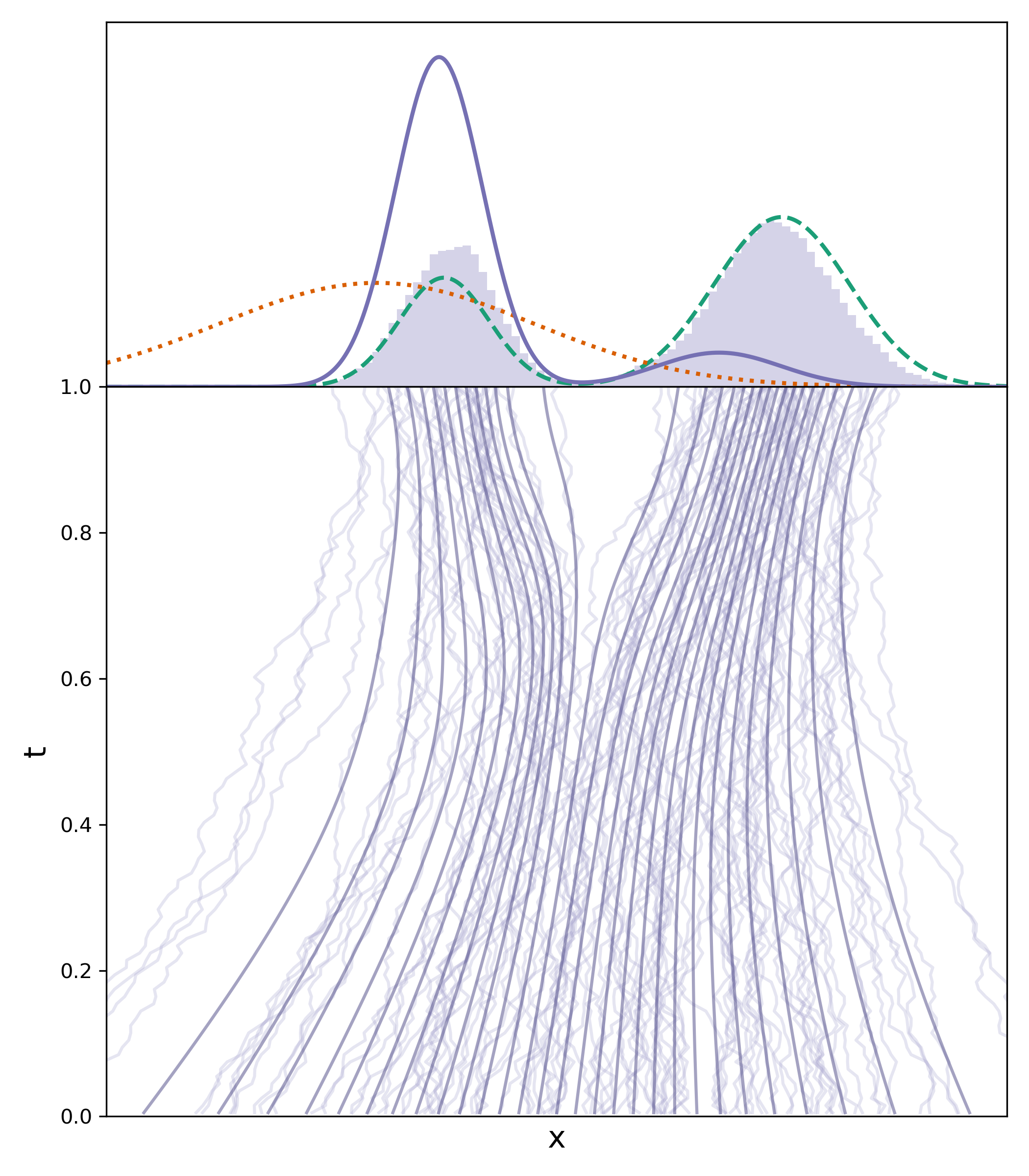}
        \caption{Fine-tuned FM $v^\mathrm{finetune}$\\ with $\sigma(t) = 0.2$}
    \end{subfigure}%
    \begin{subfigure}[t]{0.25\linewidth}
        \centering
        \includegraphics[width=\linewidth]{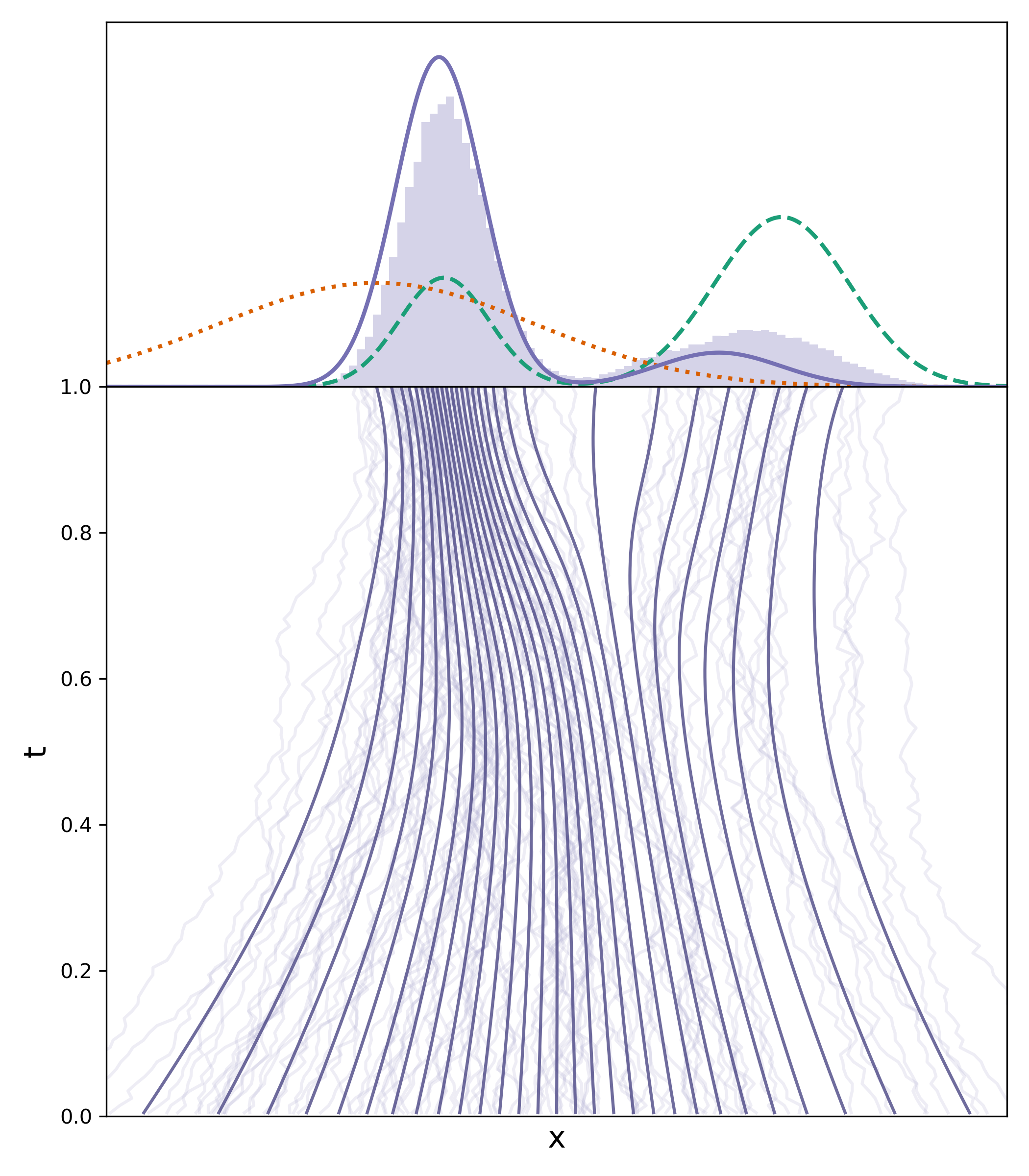}
        \caption{Fine-tuned FM $v^\mathrm{finetune}$\\ with $\sigma(t) = 1.0$}
    \end{subfigure}%
    \begin{subfigure}[t]{0.25\linewidth}
        \centering
        \includegraphics[width=\linewidth]{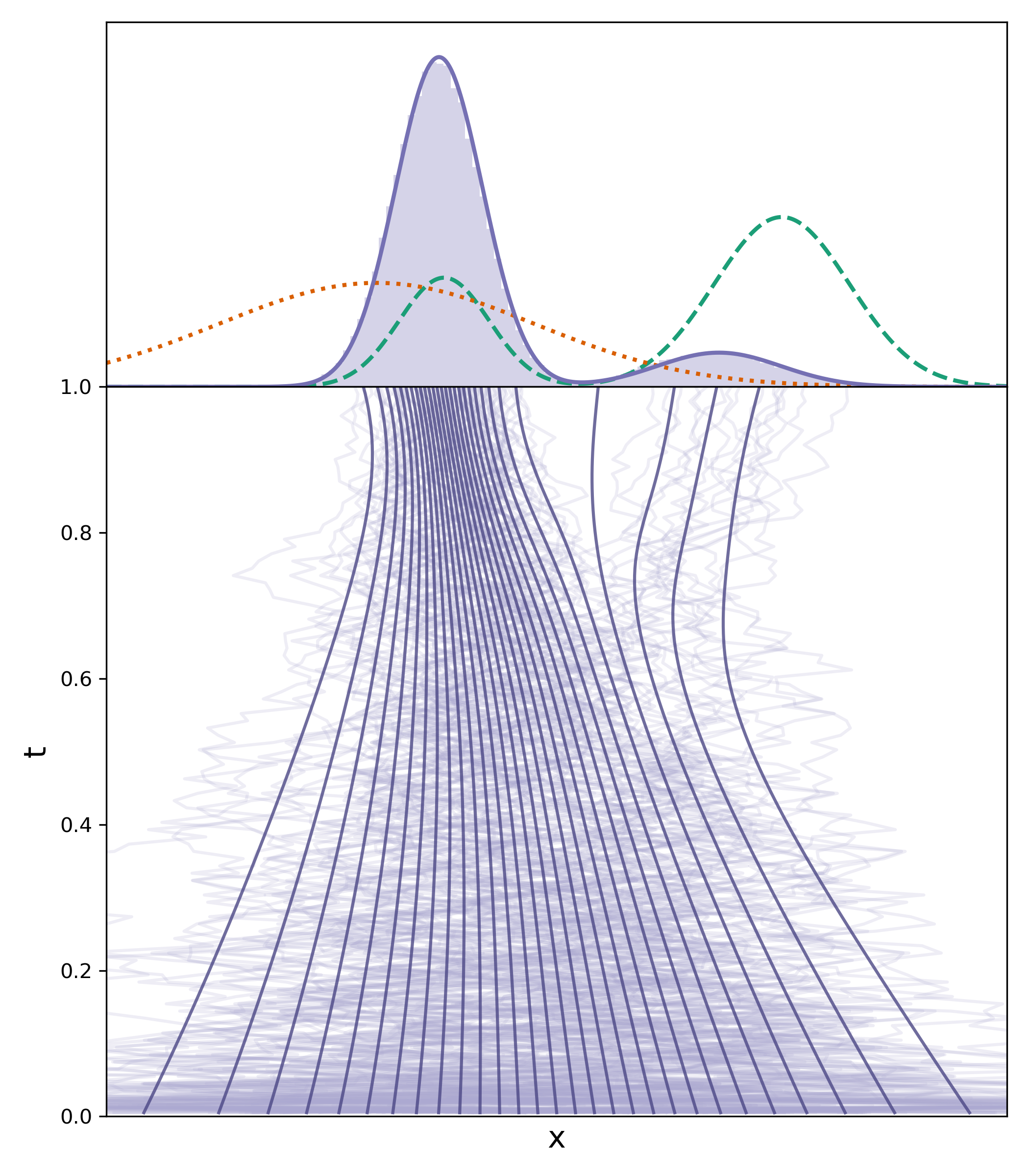}
        \caption{Fine-tuned FM $v^\mathrm{finetune}$\\ with memoryless $\sigma(t) = \sqrt{2\eta_t}$}
    \end{subfigure}
    \caption{Visualization of \Cref{thm:general_fine-tuning} showing that fine-tuning must be done with the memoryless noise schedule to ensure convergence to the tilted distribution \eqref{eq:p_star_info}. (a) Shows the base Flow Matching model. (b, c) Fine-tuning using a constant $\sigma(t)$ leads to biased distributions. (d) Fine-tuning using the memoryless noise schedule leads to the correct tilted distribution. Note that sample generation can use any noise schedule after fine-tuning, including $\sigma(t) = 0$.}
    \label{fig:memorylessness_illustration}
\end{figure}

For convenience, we plug the memoryless noise schedule into the controlled process for fine-tuning \eqref{eq:controlled_SDE}, and express them in terms of each respective framework.
Let $\epsilon^{\mathrm{base}}$, $v^{\mathrm{base}}$ denote the pre-trained vector fields and $\epsilon^{\mathrm{finetune}}$, $v^{\mathrm{finetune}}$ the fine-tuned vector fields. Then we have the following expressions for the full drift $b(x,t) + \sigma(t) u(x,t)$ and control $u(x,t)$ when $\sigma(t) = \sqrt{2\eta_t}$:

\quad \textit{DDIM / DDPM}:
\greybox{
\begin{talign} \label{eq:conversion_DDPM}
b(x,t) + \sigma(t) u(x,t) = \frac{\dot{\bar{\alpha}}_{t}}{2\bar{\alpha}_{t}} x - \frac{\dot{\bar{\alpha}}_{t}}{\bar{\alpha}_{t}} \frac{\epsilon^{\mathrm{finetune}}(x,t)}{\sqrt{1-\bar{\alpha}_{t}}} 
,\quad\quad
u(x,t) = 
- \sqrt{\frac{\dot{\bar{\alpha}}_t}{\bar{\alpha}_t(1-\bar{\alpha}_t)}} (\epsilon^{\mathrm{finetune}}(x,t) - \epsilon^{\mathrm{base}}(x,t)).
\end{talign}
}
\vspace{0.4em}
\quad \textit{Memoryless Flow Matching}:
\greybox{
\begin{talign} \label{eq:conversion_MFM}
b(x,t) + \sigma(t) u(x,t) = 2v^{\mathrm{finetune}}(x,t) - \frac{\dot{\alpha}_{t}}{\alpha_{t}} x 
,\quad\quad
u(x,t) = 
\sqrt{\frac{2}{\beta_{t}(\frac{\dot{\alpha}_{t}}{\alpha_{t}} \beta_{t} - \dot{\beta}_{t})}} (v^{\mathrm{finetune}}(x,t) - v^{\mathrm{base}}(x,t)).
\end{talign}
}
Thus, to solve the SOC problem \eqref{eq:control_problem_def}-\eqref{eq:controlled_SDE} in practice, we parameterize the control $u$ in terms of $\epsilon^{\mathrm{finetune}}$ or $v^{\mathrm{finetune}}$ and optimize these vector fields instead. After plugging in \eqref{eq:conversion_DDPM}-\eqref{eq:conversion_MFM}, the SOC problem \eqref{eq:control_problem_def}-\eqref{eq:controlled_SDE} can then be solved using any SOC algorithm in order to perform fine-tuning, and we proposed an especially effective algorithm next in \Cref{sec:adjoint_matching}. After fine-tuning, $\epsilon^{\mathrm{finetune}}$ and $v^{\mathrm{finetune}}$ can simply be plugged back into their respective generative processes \eqref{eq:FM_ode}-\eqref{eq:euler_maruyama_DDPM} to sample from the tilted distribution \eqref{eq:p_star_info} using any choice of diffusion coefficient. 

\section{Adjoint Matching for control-affine stochastic optimal control} \label{sec:adjoint_matching}

We discuss existing methods and also propose a new method for optimizing control-affine SOC problems. The new Adjoint Matching method is a combination of the time-tested continuous adjoint method \citep{pontryagin1962mathematical} with recent developments on constructing least-squares objectives for solving SOC problems \citep{domingoenrich2023stochastic}. In this section, we briefly discuss preliminaries on existing methods, their pros and cons, then detail the Adjoint Matching algorithm and its surprising connections to the prior methods. For numerical optimization, we now assume that the control $u$ is a parametric model with parameters $\theta$.

\subsection{Existing methods for stochastic optimal control}
\subsubsection{The adjoint method}\label{sec:adjoint_method}

The most basic method of optimizing the simulation of an SDE is to directly differentiate through the simulation using gradients from the SOC objective function \citep{han2016deep}. The adjoint method simply uses the objective:
\begin{talign} \label{eq:L_RE}
    \mathcal{L}(u ; \fX) := \int_0^1 \big(\frac{1}{2} \|u(X_t,t)\|^2 \! + \! f(X_t,t) \big) \, \mathrm{d}t \! + \! g(X_1), \qquad \fX \sim p^u.
\end{talign}
This is a stochastic estimate of the control objective in \eqref{eq:control_problem_def}, and the goal is to take compute the gradient of $\mathcal{L}(u ; \fX)$ with respect to the parameters $\theta$ of the control $u$.
Due to the continuous-time nature of SDEs, there are two main approaches to implementing this numerically. Firstly, the \emph{Discrete Adjoint} method uses a ``discretize-then-differentiate'' approach, where the numerical solver for simulating the SDE is simply stored in memory then differentiated through, and it has been studied extensively (\eg, \citet{bierkens2014explicit,gomez2014policy,hartmann2012efficient,kappen2012optimal,rawlik2013stochastic,haber2017stable}).
This approach, however, uses an extremely large amount of memory as the full computational graph of the numerical solver must be stored in memory and implementations often must rely on gradient checkpointing \citep{chen2016training} to reduce memory usage.

Secondly, the \emph{Continuous Adjoint} method exploits the continuous-time nature of SDEs and uses an analytical expression for the gradient of the control objective with respect to the intermediate states $X_t$, expressed as an adjoint ODE, and then applies a numerical method to simulate this gradient itself, hence it is referred to as a ``differentiate-then-discretize'' approach \citep{pontryagin1962mathematical,chen2018neural,li2020scalable}. We first define the \emph{adjoint state} as: 
\begin{talign}
\begin{split} \label{eq:adjoint_state_defn}
&a(t ; \fX, u) := \nabla_{X_t} 
\big(\int_t^1 \big(\frac{1}{2} \|u(X_{t'},t')\|^2 \! + \! f(X_{t'},t') \big) \, \mathrm{d}t' \! + \! g(X_1) \big), \\
&\text{where } \fX \text{ solves } \mathrm{d}X_t =  \left( b(X_t,t) + \sigma(t) u(X_t,t) \right) \, \mathrm{d}t + 
    \sigma(t) \mathrm{d}B_t.
\end{split}
\end{talign}
This implies that $\mathbb{E}_{\fX \sim p^u} \left[ a(t ; \fX, u) \;|\; X_t = x \right] = \nabla_{x} J(u; x, t)$,
where $J$ denotes the cost functional defined in \eqref{eq:cost_functional}.
It can then be shown that this adjoint state satisfies 
\footnote{Note we use the convention that a Jacobian matrix $J = \nabla_x v(x)$ is defined as $J_{ij} = \frac{\partial v_i(x)}{\partial x_j}$.}%
:
\begin{talign} 
\begin{split} \label{eq:cont_adjoint_1}
    \frac{\mathrm{d}}{\mathrm{d}t} a(t;\fX,u)  &=  - \left[ a(t;\fX,u)\tran{} \left(\nabla_{X_t} (b (X_t,t) + \sigma(t) u(X_t,t))\right) 
    + \nabla_{X_t} \left( f(X_t,t) + \frac{1}{2}\|u(X_t,t)\|^2 \right) \right],
\end{split}
    \\ a(1;\fX,u) &= \nabla g(X_1). \label{eq:cont_adjoint_2}
\end{talign}
The adjoint state is solved backwards in time, starting from the terminal condition \eqref{eq:cont_adjoint_2}. Computation of \eqref{eq:cont_adjoint_1} can be efficiently done as a vector-Jacobian product on automatic differentiation software \citep{paszke2019pytorch}. Once the adjoint state has been solved for $t \in [0, 1]$, then the gradient of $\mathcal{L}(u ; \fX)$ with respect to the parameters $\theta$ can be obtained by integrating over the entire time interval: 
\begin{talign}\label{eq:continuous_adjoint_grads}
    \frac{\mathrm{d} \mathcal{L}}{\mathrm{d} \theta} =  
    \frac{1}{2}\int_0^1 \frac{\partial}{\partial \theta} \norm{u(X_t, t)}^2 \mathrm{d} t
    +\int_0^1 \frac{\partial u(X_t, t)}{\partial \theta}\tran{} \sigma(t)\tran{} a(t; \fX, u) \mathrm{d} t,
\end{talign}
where the first term is the partial derivative of $\mathcal{L}$ w.r.t. $\theta$ and the second term is the partial derivative through the sample trajectory $\fX$. See \Cref{prop:cont_adjoint_method} in \Cref{subsec:derivation_cont_adj_method} for a statement and proof of this result. 
The discrete and continuous adjoint methods converge to the same gradient as the step size of the numerical solvers go to zero. 
Both are scalable to high dimensions and have seen their fair share of usage in optimizing neural ODE/SDEs \citep{chen2018neural,chen2021learning,li2020scalable}. As the adjoint methods are essentially gradient-based optimization algorithms applied on a highly non-convex problem, many have also reported they can be unstable empirically \citep{mohamed2020monte,suh2022differentiable,domingoenrich2023stochastic}.

\subsubsection{Importance-weighted matching objectives for regressing onto the optimal control}
\label{sec:socm}

An alternative is to consider regressing onto the optimal control $u^*$, which is the approach of the cross-entropy method~\citep{rubinstein2013cross,zhang2014applications} and stochastic optimal control matching (SOCM; \citet{domingoenrich2023stochastic}). These methods make use of path integral theory \citep{kappen2005path} to express the optimal control through importance sampling, resulting in an \emph{importance-weighted} least-squares objective function
\begin{talign}\label{eq:socm_objective}
    \mathcal{L}_{\text{SOCM}}(u; \fX) := \int_0^1 \norm{u(X_t, t) - \hat{u}^*(X_t, t)}^2 \mathrm{d} t \times \omega(u, \fX), \qquad \fX \sim p^u,
\end{talign}
where $\omega$ is an importance weighting that approximates sampling from the optimal distribution $p^*$, and $\hat{u}^*$ is a stochastic estimator of the optimal control relying on having sampled from the optimal process. We defer to \citet{domingoenrich2023stochastic} for the exact details. The functional landscape of this objective is convex, which is argued to help yield stable training. However, the need for importance sampling renders this impractical for high dimensional applications: the variance of the importance weighting $\omega$ grows exponentially with dimension of the stochastic process, leading to catastrophic failure. This unfortunately means that such importance-weighted matching objectives are impractical for fine-tuning dynamical generative models; however, a least-squares objective is greatly coveted as it can lead to stable training and simple interpretations.

\subsection{Adjoint Matching} \label{subsec:adjoint_matching}

We make two important observations which lead to our proposed method: (\textit{i}) it is possible to construct a matching objective without any importance weighting, and (\textit{ii}) there are unnecessary terms in the adjoint differential equation \eqref{eq:cont_adjoint_1} that can lead to higher variance at convergence.

Firstly, we notice that we can simply match the gradient of the cost functional under the \textit{current} control. That is, while SOCM carefully constructs an importance-weighted estimator of the \textit{optimal} control $u^* = - \sigma(t)\tran{} \nabla J(u^*; x, t)$ \eqref{eq:optimal_control}, we claim that we can actually just regress onto the target vector field $- \sigma(t)\tran{} \nabla J(u; x, t)$ where $u$ is the current control, and furthermore, this results in a gradient equal in expectation to the continuous adjoint method. 
We formalize this in the following proposition, proven in \Cref{subsec:derivation_continuous}: 
\begin{proposition} \label{prop:continuous_adjoint_loss_main}
    Let us define, for now, the basic Adjoint Matching objective as:
    \begin{talign}
    \begin{split} \label{eq:cont_adjoint}
        \mathcal{L}_{\mathrm{Basic-Adj-Match}}(u; \fX) &:= \frac{1}{2} \int_0^{1} \big\| u(X_t,t)
        + \sigma(t)\tran{} a(t;\bm{X},\bar{u}) \big\|^2 \, \mathrm{d}t, \qquad \fX \sim p^{\bar{u}}, \quad \bar{u} = \texttt{stopgrad}(u),
    \end{split}
    \end{talign}
    where $\bar{u} = \texttt{stopgrad}(u)$ means that the gradients of $\bar{u}$ with respect to the parameters $\theta$ of the control $u$ are artificially set to zero. The gradient of $\mathcal{L}_{\mathrm{Basic-Adj-Match}}(u; \fX)$ with respect to $\theta$ is equal to the gradient $\frac{\mathrm{d} \mathcal{L}}{\mathrm{d} \theta}$ in equation \eqref{eq:continuous_adjoint_grads}.
    Importantly, the only critical point of $\E \left[\mathcal{L}_{\mathrm{Basic-Adj-Match}} \right]$ is the optimal control $u^*$.
\end{proposition}

Critical points of $\mathcal{L}$ are controls $u$ such that $\frac{\delta}{\delta u} \mathcal{L}(u) = 0$, where $\frac{\delta}{\delta u} \mathcal{L}$ denotes the first variation of the functional $\mathcal{L}$.
In other words, \Cref{prop:continuous_adjoint_loss_main} states that the only control that satisfies the first-order optimality condition for the basic Adjoint Matching objective is the optimal control, which provides theoretical grounding for gradient-based optimization algorithms. 

An intuitive way to understand the basic Adjoint Matching objective is that it is a \emph{consistency loss}. The Adjoint Matching objective is based off of the observation that the optimal control $u^*(x, t)$ is the unique fixed-point of the relation $u(x,t) = -\sigma(t)\tran{} \nabla_x J(u; x, t)$ (see \Cref{eq:lemma_cost_functional} in \Cref{subsec:derivation_continuous})
and so we are directly optimizing for a control that fits this relation, while using the adjoint state as a stochastic estimator of $\nabla_x J(u; x, t)$ \eqref{eq:adjoint_state_defn}. 

The basic Adjoint Matching objective in \Cref{prop:continuous_adjoint_loss_main} does not yet yield a novel algorithm for stochastic optimal control, because it produces the same gradient as the continuous adjoint method. This can be seen by taking the gradient w.r.t. $\theta$ after expanding the square in \eqref{eq:cont_adjoint} and removing terms that do not depend on $\theta$ to arrive exactly at the continuous adjoint method~\eqref{eq:continuous_adjoint_grads}. 
However, it provides the means of deriving a simpler \textit{leaner} objective function.

\paragraph{The ``Lean'' Adjoint.} The minimizer of a least-squares objective is the conditional expectation of the regression target, so for the Adjoint Matching objective, at the optimum we have that
\begin{talign}
    u^*(x, t) = \E_{\fX \sim p^*} \left[ -\sigma(t)\tran{} a(t; \fX, u^*) | X_t = x\right].
\end{talign}
Multiplying both sides by the Jacobian $\nabla_x u^*(x, t)$ and re-arranging, we get the relation
\begin{talign}\label{eq:cancellation_terms}
    \E_{\fX \sim p^*} \left[ u^*(x, t)\tran{} \nabla_x u^*(x, t) + a(t; \fX, u^*) \tran{} \sigma(t) \nabla_x u^*(x, t) \;|\; X_t = x\right] = 0.
\end{talign}
Notice that the terms inside the expectation in \eqref{eq:cancellation_terms} show up as part of the adjoint differential equation \eqref{eq:cont_adjoint_1}, which we have now shown to have expectation zero at the optimal solution. 
Therefore, we motivate the definition of a \emph{lean adjoint state} $\tilde{a}$ with the terms in \eqref{eq:cancellation_terms} removed. Plugging this lean adjoint back into the least-squares objective, we obtain our final proposed Adjoint Matching objective: 
\graybox{
\begin{talign}\label{eq:lean_adjoint_matching}
\mathcal{L}_{\mathrm{Adj-Match}}(u; \fX) 
:= \frac{1}{2} \int_0^{1} \big\| & u(X_t 
,t)
+ \sigma(t)\tran{} \tilde{a}(t;\bm{X} 
) \big\|^2 \, \mathrm{d}t, 
\qquad \fX \sim p^{\bar{u}}, \quad \bar{u} = \texttt{stopgrad}(u), \\
\label{eq:lean_adjoint_1}
\text{where }\quad \frac{\mathrm{d}}{\mathrm{d}t} \tilde{a}(t;\bm{X}) 
&= - (\tilde{a}(t;\bm{X})^{\top} \nabla_x b (X_t,t) + \nabla_x f(X_t,t)), \\ 
\label{eq:lean_adjoint_2}
\tilde{a}(1;\bm{X}) &= \nabla_x g(X_1).
\end{talign}
}
Equations \eqref{eq:lean_adjoint_1}-\eqref{eq:lean_adjoint_2} define the \emph{lean adjoint state}, and \eqref{eq:lean_adjoint_matching} is the complete Adjoint Matching objective.
\textit{The unique critical point of $\mathbb{E}[\mathcal{L}_{\mathrm{Adj-Match}}]$ is the optimal control}, which we prove relying on \Cref{prop:continuous_adjoint_loss_main} and equation \eqref{eq:cancellation_terms} (see \Cref{prop:lean_adjoint} in \Cref{subsec:proof_lean_adjoint}). 

Compared to the importance sampling methods (\Cref{sec:socm}), Adjoint Matching is a simple least-squares regression objective and has no importance weighting. This allows it to avoid the pitfalls of high variance importance weights and makes it as scalable as the adjoint methods while retaining the interpretation of matching a target vector field.

Compared to the adjoint method (\Cref{sec:adjoint_method}), Adjoint Matching produces a \emph{different gradient in expectation than the continuous adjoint}. This is because the lean adjoint state is not related to the gradient of the cost functional anymore, \ie, \eqref{eq:adjoint_state_defn} is not true, except at the optimum when $u=u^*$.
Even at the optimal solution, since Adjoint Matching removes terms that have expectation zero, it can potentially exhibit better convergence and lower variance than the continuous adjoint method. 
Additionally, computation of the lean adjoint state \eqref{eq:lean_adjoint_1} also exhibits a smaller computational cost due to the removal of the extra terms (no longer need the Jacobian of the control $\nabla_x u$).
We provide a rigorous derivation of Adjoint Matching and the above claims in \Cref{subsec:proof_lean_adjoint}.

Adjoint Matching can be applied to reward fine-tuning of dynamical generative models through the memoryless SOC formulation discussed in \Cref{sec:memoryless_SOC}. We provide pseudo-code for this in \Cref{alg:adjoint_matching_finetuning_FM} for Flow Matching models and in \Cref{alg:adjoint_matching_finetuning_DDIM} in \Cref{subsec:pseudocode_DDIM} for denoising diffusion models.

\begin{algorithm}
\SetAlgoNoLine 
\SetAlgoNlRelativeSize{0} 
\small{
\KwIn{Pre-trained FM velocity field $v^{\mathrm{base}}$, step size $h$, number of fine-tuning iterations $N$.}

Initialize fine-tuned vector fields: $v^{\mathrm{finetune}} = v^{\mathrm{base}}$ with parameters $\theta$.

  \For{$n \in \{0,\dots,N-1\}$}{
    Sample $m$ trajectories $\bm{X} = (X_t)_{t\in\{0, \dots, 1\}}$ with memoryless noise schedule $\sigma(t) = \sqrt{2 \beta_t (\frac{\dot{\alpha}_t}{\alpha_t} \beta_t - \dot{\beta}_t)}$, \eg :
    \begin{talign} \label{eq:EM_update_box}
    X_{t+h} = X_{t} + h \left(2v_\theta^{\mathrm{finetune}}(X_t, t) - \frac{\dot{\alpha}_t}{\alpha_t} X_t \right) + \sqrt{h} \sigma(t) \varepsilon_t, \quad\quad \varepsilon_t \sim \mathcal{N}(0, I), \quad\quad X_0 \sim \mathcal{N}(0, I).
    \end{talign}

    For each trajectory, solve the \textit{lean adjoint ODE} \eqref{eq:lean_adjoint_1}-\eqref{eq:lean_adjoint_2} backwards in time from $t={1}$ to $0$, \eg:
    \begin{talign} \label{eq:Euler_lean_adjoint}
        \tilde{a}_{t-h} 
        = \tilde{a}_{t} + h \tilde{a}_t\tran{} \nabla_{X_t} \left(2v^{\mathrm{base}}(X_t, t) - \frac{\dot{\alpha}_t}{\alpha_t} X_t \right), \qquad 
        \tilde{a}_1 = - \nabla_{X_1} r(X_1).
    \end{talign}

    Note that $X_t$ and $\tilde{a}_t$ should be computed without gradients, \ie, $X_t = \texttt{stopgrad}(X_t)$, $\tilde{a}_t = \texttt{stopgrad}(\tilde{a}_t)$. \vspace{0.5em}

    For each trajectory, compute the Adjoint Matching objective \eqref{eq:lean_adjoint_matching}: 
    \begin{talign} \label{eq:adj_matching_algorithm_box}
    \mathcal{L}_{\mathrm{Adj-Match}}(\theta) =
        \sum_{t\in\{0, \dots, 1 - h\}} \big\|\frac{2}{\sigma(t)} \big(v^{\mathrm{finetune}}_{\theta}(X_t, t) - v^{\mathrm{base}}(X_t, t) \big) + \sigma(t) \tilde{a}_t \big\|^2.
    \end{talign}

    Compute the gradient $\nabla_{\theta} \mathcal{L}(\theta)$ and update $\theta$ using favorite gradient descent algorithm.
  }
\KwOut{Fine-tuned vector field $v^{\mathrm{finetune}}$}}
\caption{Adjoint Matching for fine-tuning Flow Matching models}
\label{alg:adjoint_matching_finetuning_FM}
\end{algorithm}

\section{Related work} 

\paragraph{Fine-tuning from human feedback.}
There are two main overarching approaches to RLHF: the \textit{reward-based} approach \citep{ziegler2020finetuning,stiennon2020learning,ouyang2022training,bai2022training} and \textit{direct preference optimization} (DPO; \cite{rafailov2023direct}).
The reward-based approach \citep{ziegler2020finetuning,stiennon2020learning,ouyang2022training,bai2022training} consists in learning the reward model $r(x)$ from human preference data, and then solving a maximum entropy RL problem with rewards produced by $r(x)$. 
DPO merges the two previous steps into one: there is no need to learn $r(x)$ as human preference data is directly used to fine-tune the model. 
However, DPO is typically only applied with a filtered dataset, and does not work explicitly with a reward model.
Furthermore, for flow and diffusion models specifically, it is possible to differentiate the reward function, so there is a larger emphasis on reward-based approaches.

\paragraph{Fine-tuning for diffusion models.}
Among existing reward-based diffusion fine-tuning methods, \citet{fan2023optimizing} interpret the denoising process as a multi-step decision-making task and use policy gradient algorithms to fine-tune diffusion samplers. 
\citet{black2024training} makes use of proximal policy gradients for fine-tuning but this does not make use of the differentiability of the reward model. 
\citet{fan2023dpok} also consider KL-regularized rewards \eqref{eq:kl_regularized_interpretation} but do not make the critical connection to the tilted distribution \eqref{eq:p_star_info} that we flesh out in \Cref{sec:value_function_bias_problem}. 
The fine-tuning algorithms of \cite{xu2023imagereward,clark2024directly} directly take gradients of the reward model and use heuristics to try to stay close to the original base generative model, but their behavior is not well understood and unrelated to the tilted distribution: \cite{xu2023imagereward} takes gradients of the reward applied on the denoised sample at different points in time, and \cite{clark2024directly} backpropagates the reward function through all or part of the diffusion trajectory. 
Finally, \cite{uehara2024finetuning} also fine-tune diffusion models with the goal of sampling from the tilted distribution \eqref{eq:p_star_info}, but their approach is much more involved than ours as it requires learning a value function, and solving two stochastic optimal control problems. Additional reward fine-tuning works include \cite{bruna2024posterior}, that provide theoretical guarantees to sample from the tilted distribution when the reward is a quadratic function, and \cite{zhang2024improving}, that propose a reward fine-tuning algorithm for the GFlowNet architecture. 

\paragraph{Inference-time optimization methods.} Some have proposed methods that do not update the base model but instead modify the generation process directly. One approach is to add a guidance term to the velocity \citep{chung2022diffusion,song2023pseudoinverse,pokle2023training}; however, this is a heuristic and it is not well-understood what particular distribution is being generated. Another approach is to directly optimize the initial noise distribution \citep{li2021differentiable,wallace2023endtoend,benhamu2024dflow}; this is taking an opposite approach to the inital value bias problem than us by moving all of the work into optimizing the initial distribution. 
A more computationally intensive approach is to perform online estimation of the optimal control, for the purpose of heuristically solving an optimal control problem within the sampling process~\citep{huang2024symbolic,rout2024rb}; these approaches aim to solve a separate control problem for each generated sample, instead of performing amortization \citep{amos2023tutorial} to learn a fine-tuned generative model.

\paragraph{Optimal control in generative modeling.} Methods from optimal control have been used to train dynamical generative models parameterized by ODEs \citep{chen2018neural}, SDEs \citep{li2020scalable}, and jump processes \citep{chen2021learning}, enabled through the adjoint method. 
They can be used to train arbitrary generative processes, but for simplified constructions these have fallen in favor of simulation-free matching objectives such as denoising score matching \citep{vincent2011connection} and Flow Matching \citep{lipman2023flow}. 
The optimal control formalism also has significance in sampling from un-normalized distributions \citep{zhang2022path,berner2023optimal,vargas2023denoising,vargas2022bayesian,richter2024improved,tzen2019theoretical}. 
The inclusion of a state cost has been used to solve transport problems where intermediate path distributions are of importance \citep{liu2023generalized,pooladian2024neural}.
These collective advances naturally lead to the consideration of the optimal control formalism for reward fine-tuning.

\paragraph{Conditional sampling in inverse problems.} \cite{denker2024deft} and \cite{wu2023practical} independently consider a pre-trained diffusion model $p(x)$, and an observation $y$ on the generated sample $x$, as well as the analytic likelihood $p(y|x)$. Their aim is to sample from the posterior $p(x) p(y|x)$, and their applications include inpainting, class-conditional generation, super-resolution, phase retrieval, non-linear deblurring, computed tomography, and protein design. Their setting reduces to a particular case of our reward fine-tuning framework by setting $r(x) = \log p(y|x)$. \cite{denker2024deft} formulate an SOC problem, and they solve it via the log-variance loss (\cite{richter2020vargrad,nüsken2023solving}), and the moment loss \citep{nüsken2023solving}\footnote{See also \cite{domingoenrich2024taxonomy} for a comparison among SOC losses.}, which they refer to as the trajectory balance loss \citep{malkin2023trajectory}. \cite{wu2023practical} propose Twisted Diffusion Sampler, an algorithm based on Sequential Monte Carlo that uses increased inference-time compute to reduce bias.
A third work that also tackles the conditional sampling problem is \cite{du2024doobs}, which use a Lagrangian formulation that they solve approximately using Gaussian paths. 

\section{Experiments} \label{sec:diff_finetuning_exp}

\begin{table}[t]
\centering
\small
\begin{tabular}{llccccccc}
    \toprule
    & Fine-tuning & Fine-tuning & Sampling & \multirow{2}{*}{ClipScore$\, \uparrow$} & \multirow{2}{*}{PickScore$\, \uparrow$} & \multirow{2}{*}{HPS v2$\, \uparrow$} & DreamSim 
    \\
    & Method & $\sigma(t)$ & $\sigma(t)$ &  &  &  & Diversity$\, \uparrow$ 
    \\
    \midrule
    & None & \multirow{2}{*}{\color{gray}N/A} & $\sqrt{2 \eta_t}$ & 24.15{\tiny$\pm$0.26} & 17.25{\tiny$\pm$0.06} & 16.19{\tiny$\pm$0.17} & 53.60{\tiny$\pm$1.37} 
    \\ 
    & (Base model)
                                 &                     & 0                 & 28.32{\tiny$\pm$0.22} & 18.15{\tiny$\pm$0.07} & 17.89{\tiny$\pm$0.16} & \textbf{56.53{\tiny$\pm$1.52}} 
                                 \\
    \midrule 
    \parbox[t]{2mm}{\multirow{8}{*}{\rotatebox[origin=c]{90}{Baselines \;\;\;}}}
    & \multirow{2}{*}{DRaFT-1}           & $\sqrt{2 \eta_t}$ & $\sqrt{2 \eta_t}$ & 30.18{\tiny$\pm$0.24} & 19.38{\tiny$\pm$0.08} & 24.61{\tiny$\pm$0.17} & 25.54{\tiny$\pm$0.99} 
    \\
    &                                   & 0                 & 0                 & 30.95{\tiny$\pm$0.28} & 19.37{\tiny$\pm$0.06} & 24.37{\tiny$\pm$0.17} & 27.39{\tiny$\pm$1.14} 
    \\
    \addlinespace
    & \multirow{2}{*}{DRaFT-40}          & $\sqrt{2 \eta_t}$ & $\sqrt{2 \eta_t}$ & 26.94{\tiny$\pm$0.28} & 18.34{\tiny$\pm$0.19} & 19.98{\tiny$\pm$1.02} & 41.98{\tiny$\pm$2.14} 
    \\
    &                                   & 0                 & 0                 & 30.07{\tiny$\pm$0.39} & 19.45{\tiny$\pm$0.08} & 24.06{\tiny$\pm$0.24} & 36.53{\tiny$\pm$1.69} 
    \\
    \addlinespace
    & \multirow{2}{*}{DPO}          & $\sqrt{2 \eta_t}$ & $\sqrt{2 \eta_t}$ & 24.11{\tiny$\pm$0.22} & 17.24{\tiny$\pm$0.06} & 16.15{\tiny$\pm$0.14} & 53.27{\tiny$\pm$1.36} 
    \\
    &                                   & 0                 & 0                 & 27.77{\tiny$\pm$0.18} & 17.92{\tiny$\pm$0.07} & 17.30{\tiny$\pm$0.20} & 54.11{\tiny$\pm$1.50} 
    \\
    \addlinespace
    & \multirow{2}{*}{ReFL}              & $\sqrt{2 \eta_t}$ & $\sqrt{2 \eta_t}$ & 28.59{\tiny$\pm$0.31} & 18.68{\tiny$\pm$0.10} & 22.24{\tiny$\pm$0.46} & 32.71{\tiny$\pm$2.76} 
    \\
    &                                   & 0                 & 0                 & 30.06{\tiny$\pm$0.63} & 19.07{\tiny$\pm$0.21} & 23.06{\tiny$\pm$0.41} & 32.69{\tiny$\pm$1.28} 
    \\
    \midrule 
    \parbox[t]{2mm}{\multirow{10}{*}{\rotatebox[origin=c]{90}{Memoryless SOC \;\;\;\; }}} 
    & Cont. Adjoint & \multirow{2}{*}{$\sqrt{2 \eta_t}$} & $\sqrt{2 \eta_t}$ & 26.99{\tiny$\pm$0.43} & 18.33{\tiny$\pm$0.16} & 20.83{\tiny$\pm$0.63} & 46.59{\tiny$\pm$1.40} 
    \\
    & $\lambda = 12500$                     &                                    & 0                 & 29.49{\tiny$\pm$0.32} & 18.98{\tiny$\pm$0.16} & 21.34{\tiny$\pm$0.53} & 48.41{\tiny$\pm$1.44} 
    \\
    \addlinespace
    & Disc. Adjoint & \multirow{2}{*}{$\sqrt{2 \eta_t}$} & $\sqrt{2 \eta_t}$ & 28.04{\tiny$\pm$0.57} & 18.44{\tiny$\pm$0.21} & 20.04{\tiny$\pm$0.39} & 54.90{\tiny$\pm$2.03} 
    \\
    & $\lambda = 12500$                    &                                    & 0                 & 29.28{\tiny$\pm$0.17} & 18.82{\tiny$\pm$0.14} & 19.73{\tiny$\pm$0.17} & 53.36{\tiny$\pm$2.48} 
    \\
    \addlinespace
    \cline{2-9}\addlinespace
    & Adj.-Matching  & \multirow{2}{*}{$\sqrt{2 \eta_t}$} & $\sqrt{2 \eta_t}$ & 30.36{\tiny$\pm$0.22} & 19.29{\tiny$\pm$0.08} & 24.12{\tiny$\pm$0.17} & 40.89{\tiny$\pm$1.50} 
    \\
    & $\lambda = 1000$                     &                                    & 0                 & 31.41{\tiny$\pm$0.22} & 19.57{\tiny$\pm$0.09} & 23.29{\tiny$\pm$0.18} & 43.10{\tiny$\pm$1.76} 
    \\
    \addlinespace
    & Adj.-Matching & \multirow{2}{*}{$\sqrt{2 \eta_t}$} & $\sqrt{2 \eta_t}$ & 30.59{\tiny$\pm$0.40} & 19.49{\tiny$\pm$0.10} & 24.85{\tiny$\pm$0.23} & 37.07{\tiny$\pm$1.47} 
    \\
    & $\lambda = 2500$                     &                                    & 0                 & 31.64{\tiny$\pm$0.21} & 19.71{\tiny$\pm$0.09} & 24.12{\tiny$\pm$0.27} & 39.88{\tiny$\pm$1.59} 
    \\
    \addlinespace
    & Adj.-Matching  & \multirow{2}{*}{$\sqrt{2 \eta_t}$} & $\sqrt{2 \eta_t}$ & 30.62{\tiny$\pm$0.30} & 19.50{\tiny$\pm$0.09} & \textbf{24.95{\tiny$\pm$0.28}} & 34.50{\tiny$\pm$1.33}
    \\
    & $\lambda = 12500$                    &                                    & 0                 & \textbf{31.65{\tiny$\pm$0.19}} & \textbf{19.76{\tiny$\pm$0.08}} & 24.49{\tiny$\pm$0.27} & 37.24{\tiny$\pm$1.57} 
    \\
    \bottomrule
\end{tabular}
\caption{Evaluation metrics of different fine-tuning methods for text-to-image generation. 
The second and third columns show the noise schedules $\sigma(t)$ used for fine-tuning and for sampling: $\sigma(t) = \sqrt{2\eta_t}$ corresponds to Memoryless Flow Matching, and $\sigma(t) = 0$ to the Flow Matching ODE \eqref{eq:FM_ode}. 
We report standard errors estimated over 3 runs of the fine-tuning algorithm on random sets of 40000 training prompts, each evaluated over a random set of 1000 test prompts. 
}
\label{tab:evaluation_metrics}
\end{table}

\begin{figure}[t!]
    \centering
    \begin{subfigure}[t]{0.49\linewidth}
        \centering
        \rotatebox{90}{\;\; $\lambda=1000$}\,%
        \includegraphics[width=0.24\linewidth]{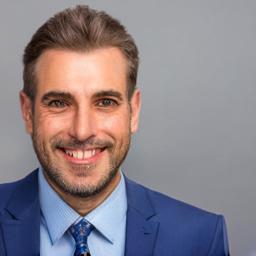}\,%
        \includegraphics[width=0.24\linewidth]{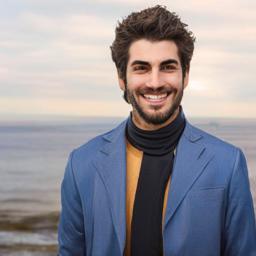}\,%
        \includegraphics[width=0.24\linewidth]{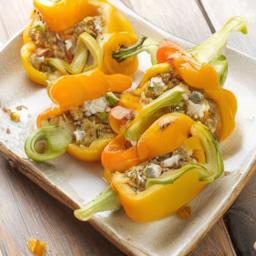}\,%
        \includegraphics[width=0.24\linewidth]{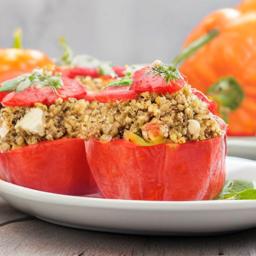}\\
        \rotatebox{90}{\;\; $\lambda=2500$}\,%
        \includegraphics[width=0.24\linewidth]{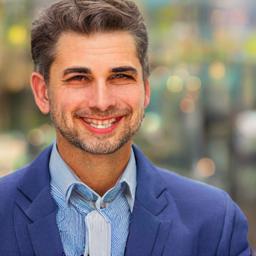}\,%
        \includegraphics[width=0.24\linewidth]{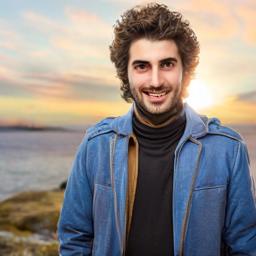}\,%
        \includegraphics[width=0.24\linewidth]{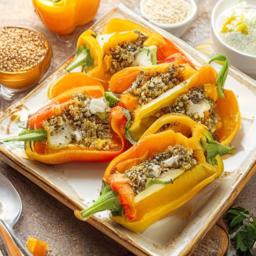}\,%
        \includegraphics[width=0.24\linewidth]{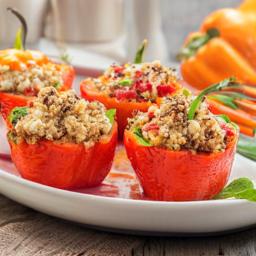}\\
        \rotatebox{90}{\; $\lambda=12500$}\,%
        \includegraphics[width=0.24\linewidth]{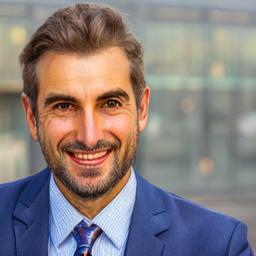}\,%
        \includegraphics[width=0.24\linewidth]{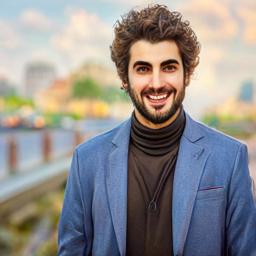}\,%
        \includegraphics[width=0.24\linewidth]{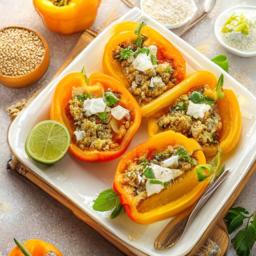}\,%
        \includegraphics[width=0.24\linewidth]{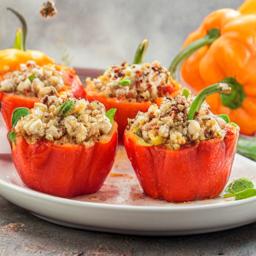}
        \caption*{Adjoint Matching (Ours)}
    \end{subfigure}\hfill
    \begin{subfigure}[t]{0.49\linewidth}
        \centering
        \includegraphics[width=0.24\linewidth]{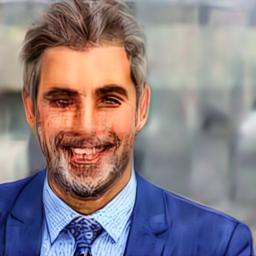}\,%
        \includegraphics[width=0.24\linewidth]{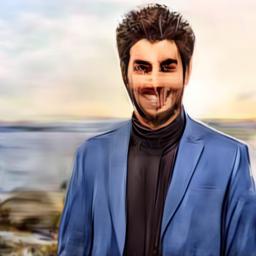}\,%
        \includegraphics[width=0.24\linewidth]{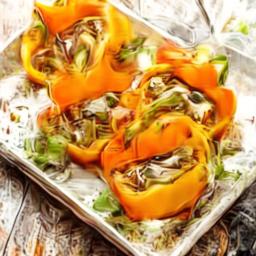}\,%
        \includegraphics[width=0.24\linewidth]{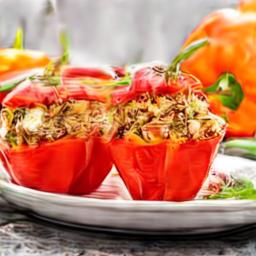}\,%
        \rotatebox[origin=r]{270}{$1000$ itrs. }\\
        \includegraphics[width=0.24\linewidth]{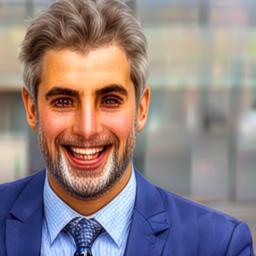}\,%
        \includegraphics[width=0.24\linewidth]{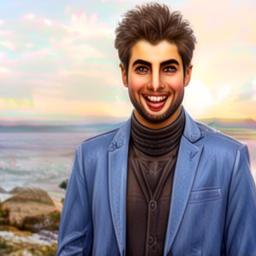}\,%
        \includegraphics[width=0.24\linewidth]{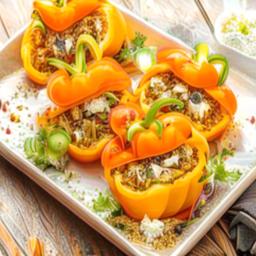}\,%
        \includegraphics[width=0.24\linewidth]{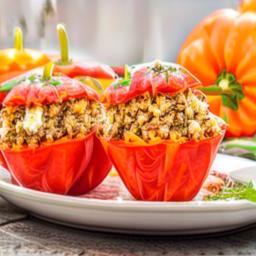}\,%
        \rotatebox[origin=r]{270}{$2000$ itrs. }\\
        \includegraphics[width=0.24\linewidth]{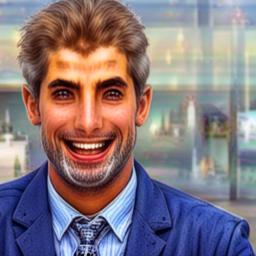}\,%
        \includegraphics[width=0.24\linewidth]{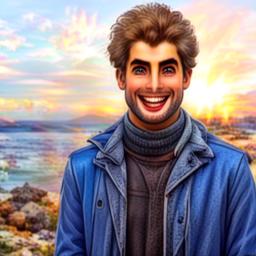}\,%
        \includegraphics[width=0.24\linewidth]{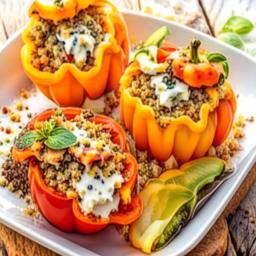}\,%
        \includegraphics[width=0.24\linewidth]{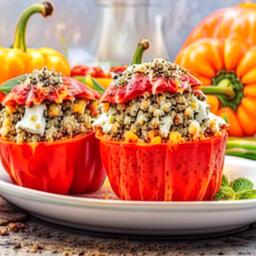}\,%
        \rotatebox[origin=r]{270}{$4000$ itrs. }
        \caption*{DRaFT-1}
    \end{subfigure}
    \caption{Our proposed Adjoint Matching using the memoryless SOC formulation introduces a much more principled way of trading off how close to stay to the base model while optimizing the reward model. In contrast, baseline methods such as DRaFT-1 only optimize the reward model and must rely on early stopping to perform this trade off, resulting in a much more sensitive hyperparameter. Samples are produced using $\sigma(t)=0$ with the same noise sample. Text prompts: ``\textit{Handsome Smiling man in blue jacket portrait}'' and ``\textit{Quinoa and Feta Stuffed Baby Bell Peppers}''.}
    \label{fig:ablation_tradeoff_lambda}
\end{figure}

\begin{figure}[t!]
    \centering
    \begin{subfigure}[t]{0.49\linewidth}
        \centering
        \rotatebox{90}{\;\;\; $w=0.0$}\,%
        \includegraphics[width=0.24\linewidth]{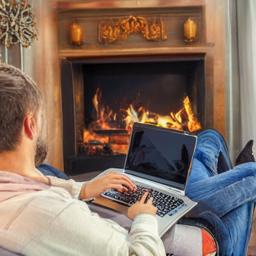}\,%
        \includegraphics[width=0.24\linewidth]{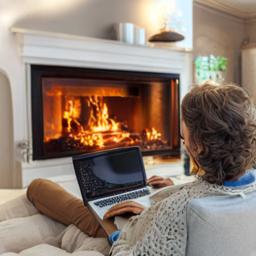}\,%
        \includegraphics[width=0.24\linewidth]{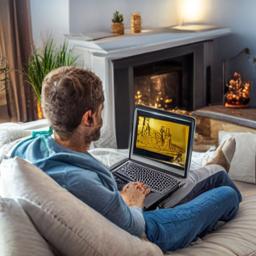}\,%
        \includegraphics[width=0.24\linewidth]{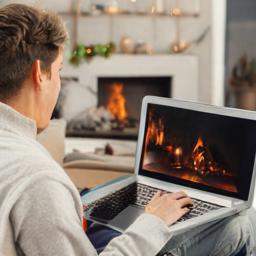}\\
        \rotatebox{90}{\;\;\; $w=1.0$}\,%
        \includegraphics[width=0.24\linewidth]{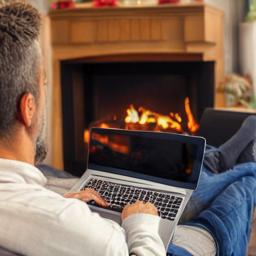}\,%
        \includegraphics[width=0.24\linewidth]{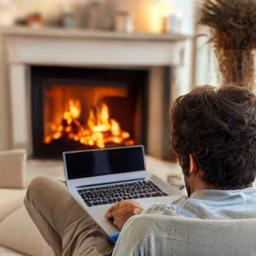}\,%
        \includegraphics[width=0.24\linewidth]{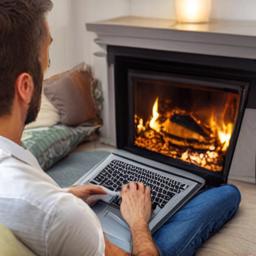}\,%
        \includegraphics[width=0.24\linewidth]{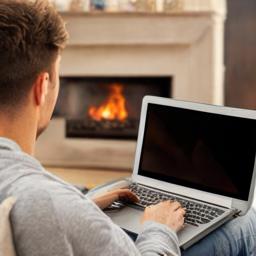}\\
        \rotatebox{90}{\;\;\; $w=4.0$}\,%
        \includegraphics[width=0.24\linewidth]
        {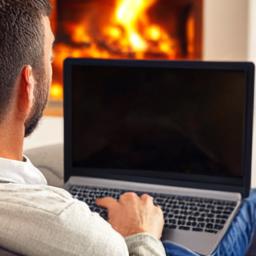}\,%
        \includegraphics[width=0.24\linewidth]{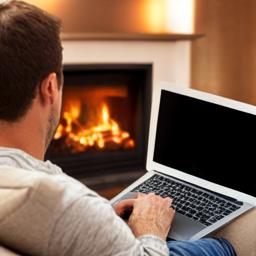}\,%
        \includegraphics[width=0.24\linewidth]{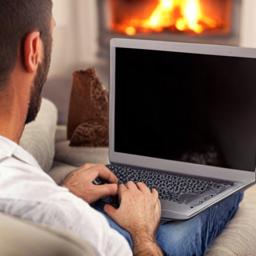}\,%
        \includegraphics[width=0.24\linewidth]{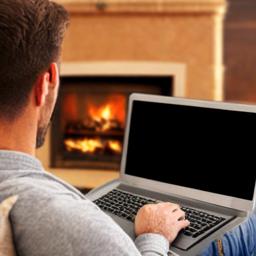}
        \caption*{Text prompt: ``\textit{Man sitting on sofa at home in front of fireplace and using laptop computer, rear view}''}
    \end{subfigure}\hfill
    \begin{subfigure}[t]{0.49\linewidth}
        \centering
        \includegraphics[width=0.24\linewidth]{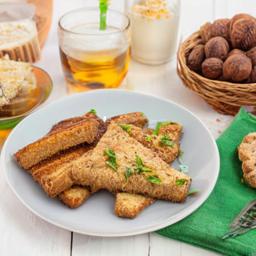}\,%
        \includegraphics[width=0.24\linewidth]{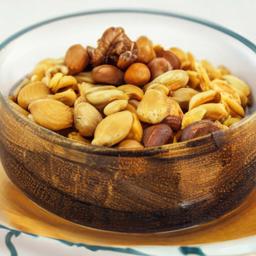}\,%
        \includegraphics[width=0.24\linewidth]{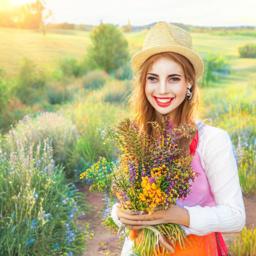}\,%
        \includegraphics[width=0.24\linewidth]{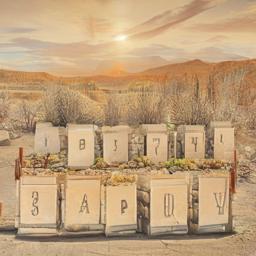}\\
        \includegraphics[width=0.24\linewidth]{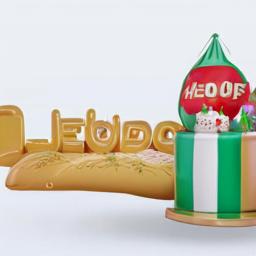}\,%
        \includegraphics[width=0.24\linewidth]{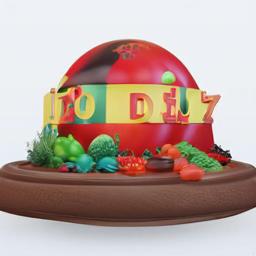}\,%
        \includegraphics[width=0.24\linewidth]{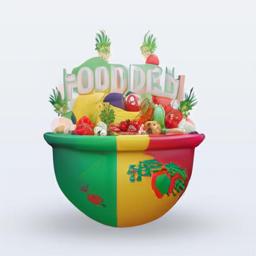}\,%
        \includegraphics[width=0.24\linewidth]{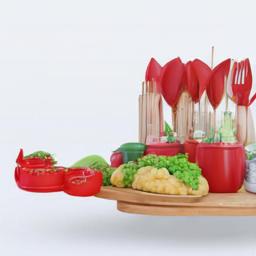}\\
        \includegraphics[width=0.24\linewidth]{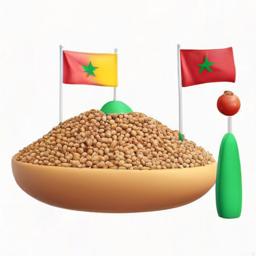}\,%
        \includegraphics[width=0.24\linewidth]{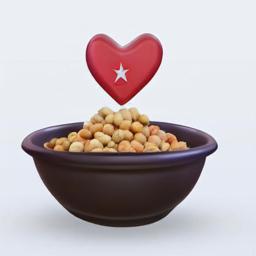}\,%
        \includegraphics[width=0.24\linewidth]{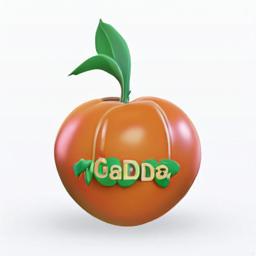}\,%
        \includegraphics[width=0.24\linewidth]{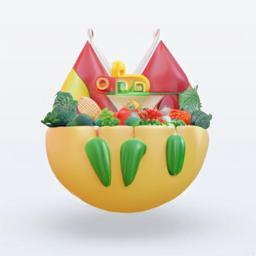}
        \caption*{Text prompt: ``\textit{3D World Food Day Morocco}''}
    \end{subfigure}
    \caption{
    Generated samples from varying classifier-free guidance weight $w$, from an Adjoint Matching fine-tuned model. 
    Higher guidance increases text-to-image consistency but loses diversity and has use cases for generating highly structured images such as 3D renderings.
    Corresponding samples from the base model can be found in \Cref{fig:ablation_tradeoff_cfg_base}. 
    }
    \label{fig:ablation_tradeoff_cfg}
\end{figure}

\begin{figure}
    \centering
    \includegraphics[width=0.65\linewidth]{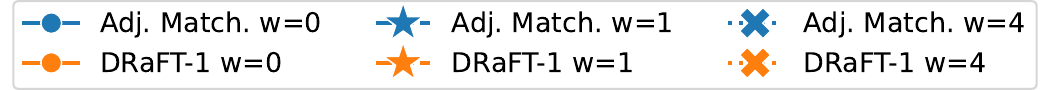}\\
    \includegraphics[width=0.325\linewidth]{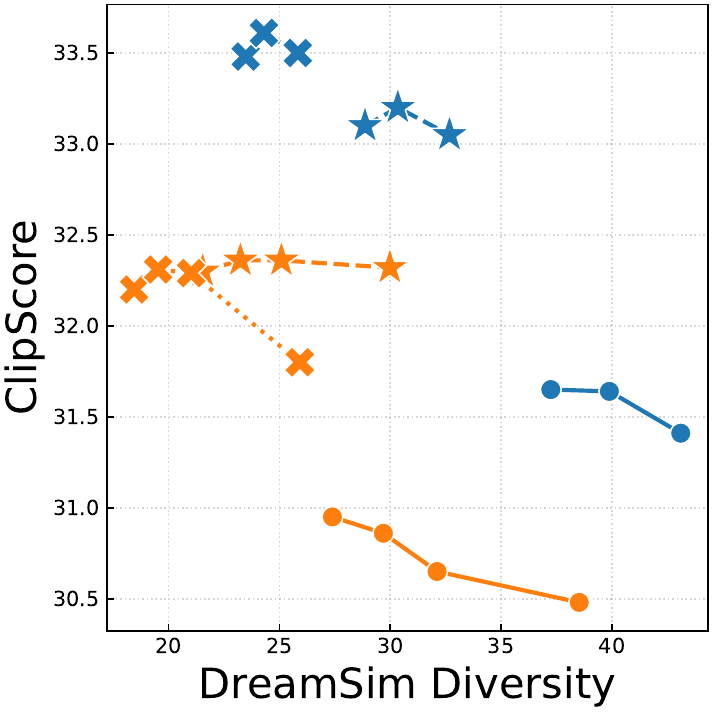}
    \includegraphics[width=0.325\linewidth]{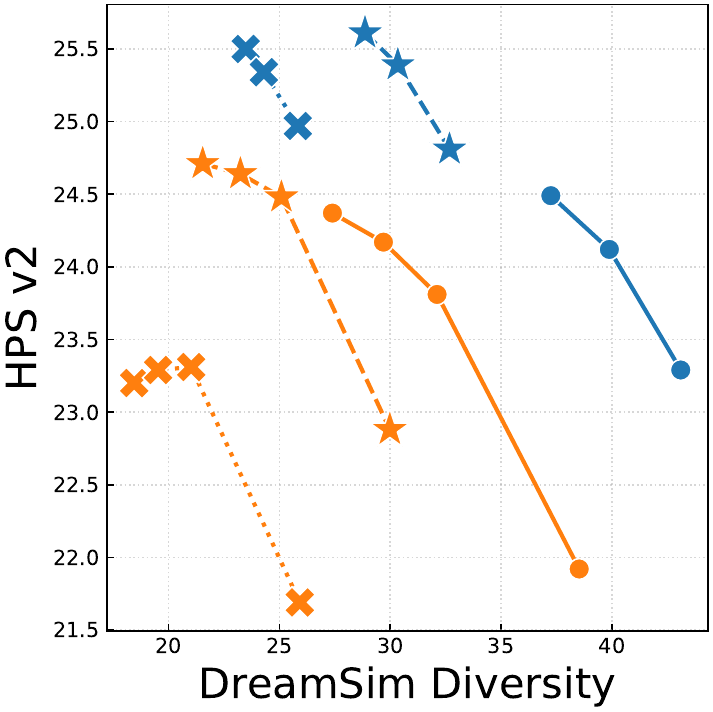}
    \includegraphics[width=0.325\linewidth]{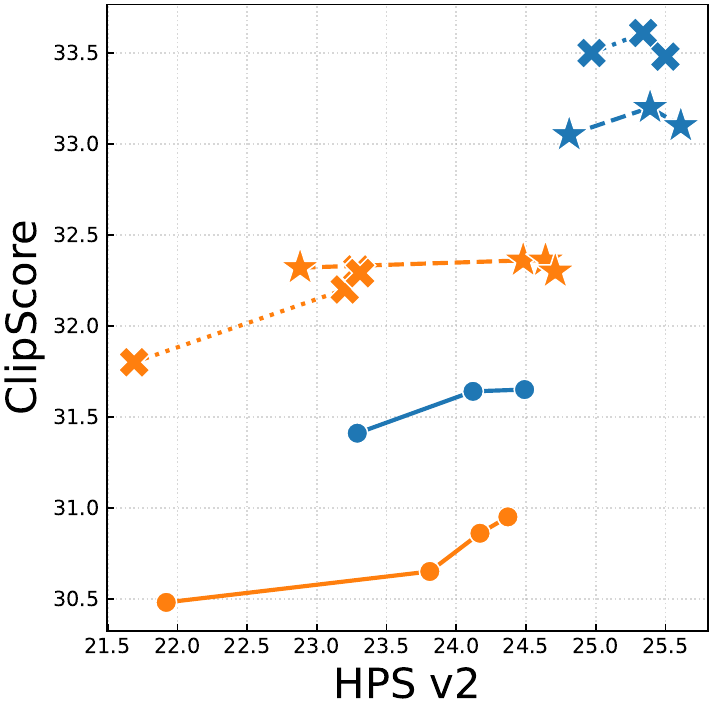}
    \caption{Tradeoffs between different aspects of generative models: text-to-image consistency (ClipScore), sample diversity for each prompt (DreamSim Diversity), and generalization to unseen human preferences (HPS v2). Different points are obtained from varying values of $\lambda$ for Adjoint Matching and varying number of fine-tuning iterations for the DRaFT-1 baseline. Overall, we find our proposed method Adjoint Matching has the best Pareto fronts.}
    \label{fig:tradeoffs}
\end{figure}

We experimentally validate our proposed method on reward fine-tuning a Flow Matching base model \citep{lipman2023flow}. In particular, we use the usual setup of pre-training an autoencoder for 512$\times$512 resolution images, then training a text-conditional Flow Matching model on the latent variables with a U-net architecture \citep{long2015fully}, similar to the setup in \citet{rombach2022high}. We pre-trained our base model using a dataset of licensed text and image pairs. Then for fine-tuning, we consider the reward function:
\begin{talign}
    r(x) := \lambda \times \texttt{RewardModel}(x)
\end{talign}
corresponding to a scaled version of the reward model, which we take to be ImageReward~\citep{xu2023imagereward}. Different values of $\lambda$ provide different tradeoffs between the KL regularization and the reward model \eqref{eq:kl_regularized_interpretation}.

For evaluation and benchmarking purposes, we report metrics that separately quantify text-to-image consistency, human preference, and sample diversity, capturing the tradeoff between each aspect of generative models~\citep{astolfi2024consistency}. For consistency, we make use of the standard ClipScore \citep{hessel2021clipscore} and PickScore \citep{kirstain2023pickapic}; for generalization to unseen human preferences, we use the HPSv2 model \citep{wu2023human}; and for diversity, we compute averages of pairwise distances of the DreamSim features \citep{fu2023learning}. More details are provided in \Cref{subsec:evaluation_metrics}.

As our baselines, we consider the DPO \citep{wallace2023diffusion}, ReFL \citep{xu2023imagereward}, and DRaFT-K algorithms \citep{clark2024directly}. DPO does not use gradients from the reward function, while ReFL and DRaFT make use of heuristic gradient stopping approaches to stay close to the base generative model. 
Out of these baseline methods, we find that DRaFT-1 performs the best, so we perform additional ablation experiments comparing to this method.
Within the same SOC formulation as our method, we also consider the discrete and continuous adjoint methods. We provide full experimental details in \Cref{sec:experimental_details}; an important implementation detail is that we slightly offset $\sigma(t)$ in order to avoid division by zero.

\paragraph{Evaluation results.} In \Cref{tab:evaluation_metrics} we report the evaluation metrics for the baselines as well as our proposed Adjoint Matching approach. We compare each method at roughly the same wall clock time (see the times and number of iterations in \Cref{table:metrics_multiprompt_alternative}, and comments in \Cref{subsec:remarks_computational_cost}).
We find that across all metrics, our proposed memoryless SOC formulation outperforms existing baseline methods. The choice of SOC algorithms also obviously favors Adjoint Matching over continuous and discrete adjoint methods, which result in poorer consistency and human preference metrics.

\paragraph{Ablation: base model vs. reward tradeoff.} We note that the scaling in front of the reward model $\lambda$ determines how strongly the we should prefer the reward model over the base model. As such, we see a natural tradeoff curve: higher $\lambda$ results in better consistency and human preference, but lower diversity in the generated samples. Overall, we find that Adjoint Matching performs stably across all values of $\lambda$. Our method of regularizing the fine-tuning procedure through memoryless SOC works much better than baseline methods which often must employ early stopping. We show the qualitative effect of varying $\lambda$ in \Cref{fig:ablation_tradeoff_lambda}, while for the DRaFT-1 baseline we show the effect of varying the number of fine-tuning iterations.

\paragraph{Ablation: classifier-free guidance.} We note that it is possible to apply classifier-free guidance (CFG; \citet{ho2022classifier,zheng2023guided}) after fine-tuning. We use the formula $(1+w) v(x, t | y) - w v(x, t)$, where $w$ is the guidance weight, $v(x, t | y)$ is a fine-tuned text-to-image model while $v(x, t)$ is an unconditional image model.
This is not principled as only the conditional model is fine-tuned, but generally it is unclear what distribution guided models sample from anyhow. 
In \Cref{fig:tradeoffs} we show the evaluation metrics with classifier-free guidance applied. Comparing three different guidance weight values, we see a higher weight does improve text-to-image consistency, and to some extent, human preference, but this comes at the cost of being worse in terms of diversity. We show qualitative differences in \Cref{fig:ablation_tradeoff_cfg}.

\section{Conclusion}
\label{sec:conclusion}

We investigate the problem of fine-tuning dynamical generative models such as Flow Matching and propose the use of a stochastic optimal control (SOC) formulation with a memoryless noise schedule. 
This ensures we converge to the same tilted distribution that the large language modeling literature uses for learning from human feedback. 
In particular, the memoryless noise schedule corresponds to DDPM sampling for diffusion models and a new Memoryless Flow Matching generative process for flow models. 
In conjunction, we propose a novel training algorithm for solving stochastic optimal control problems, by casting SOC as a regression problem, which we call the Adjoint Matching objective. 
Empirically, we find that our memoryless SOC formulation works better than multiple existing works on fine-tuning diffusion models, and our Adjoint Matching algorithm outperforms related gradient-based methods.
In summary, we are the first to provide a theoretically-driven algorithm for fine-tuning Flow Matching models, and we find that our approach significantly outperforms baseline methods across multiple axes of evaluation---text-to-image consistency, generalization to unseen human preference, and sample diversity---on large-scale text-to-image generation.

\bibliographystyle{assets/plainnat}
\bibliography{biblio}

\clearpage
\newpage
\beginappendix

\tableofcontents

\starttocentries

\clearpage
\newpage

\section{Additional Figures \& Tables}
\label{sec:additional_figures_tables}

\begin{figure}[h!]
    \centering
    \includegraphics[width=0.46\linewidth]{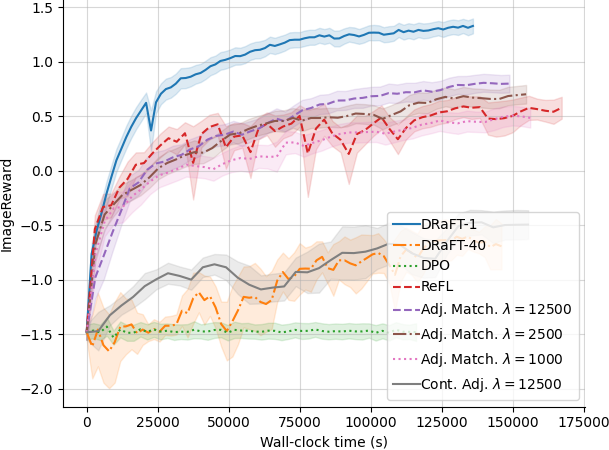}
    \includegraphics[width=0.46\linewidth]{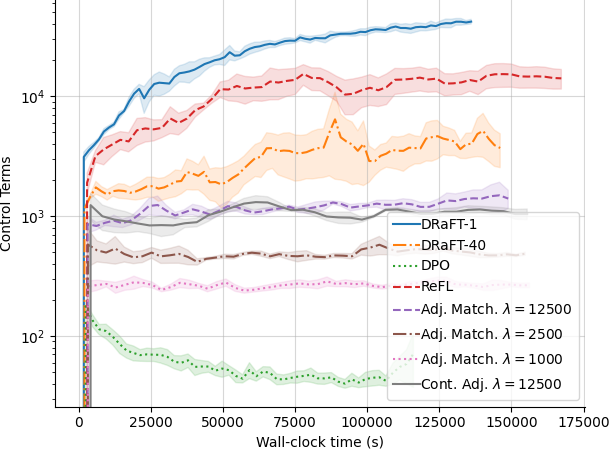}
    \includegraphics[width=0.46\linewidth]{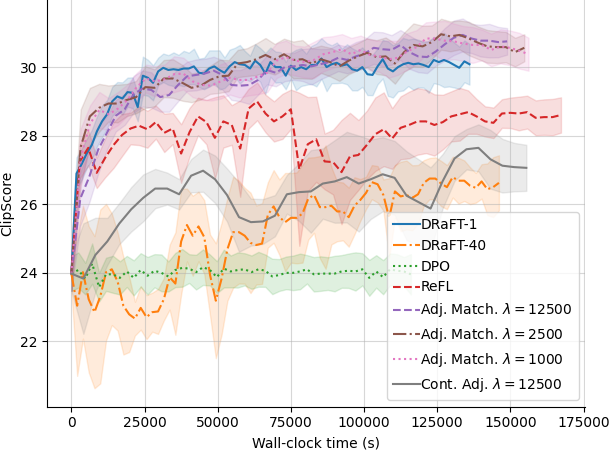}
    \caption{Average values of ImageReward (reward function), control cost ($\int_0^t \frac{1}{2} \|u(X^u_t,t)\|^2 \, \mathrm{d}t$), and ClipScore vs. wall-clock time for Adjoint Matching and our baselines. Lines show averages over three fine-tuning runs, evaluating on separate test datasets of size 200. Confidence intervals show standard errors of estimates.}
    \label{fig:training_figures}
\end{figure}

\begin{table}[h!]
\centering
{\small
\begin{tabular}{lcccccc}
    \toprule
    Fine-tuning & Fine-tuning & Sampling & \multirow{2}{*}{ImageReward$\, \uparrow$} & ClipScore & PickScore & Total time (s) /
    \\
    loss & $\sigma(t)$ & $\sigma(t)$ &  & diversity$\, \uparrow$ & diversity$\, \uparrow$ & \# iterations \\
    \midrule
    None & \multirow{2}{*}{N/A} & $\sqrt{2 \eta_t}$ & $-$1.384{\tiny$\pm$0.040} & 28.07{\tiny$\pm$1.40} & 1.63{\tiny$\pm$0.08} & \multirow{2}{*}{N/A} 
    \\
    ($\mathrm{CFG}=1.0$)                    &                     & 0                 & $-$0.920{\tiny$\pm$0.042} & 30.29{\tiny$\pm$1.53} & 1.82{\tiny$\pm$0.09} \\
    \midrule
    \multirow{2}{*}{DRaFT-1}           & $\sqrt{2 \eta_t}$ & $\sqrt{2 \eta_t}$ & 1.357{\tiny$\pm$0.039} & 16.86{\tiny$\pm$0.98} & 1.21{\tiny$\pm$0.07} 
    & 140k{\tiny$\pm$5.9k}
    \\
                                       & 0                 & 0                 & 1.251{\tiny$\pm$0.040} & 16.76{\tiny$\pm$1.06} & 1.27{\tiny$\pm$0.07} 
    & / 4000 
    \\
    \addlinespace
    \multirow{2}{*}{DRaFT-40}          & $\sqrt{2 \eta_t}$ & $\sqrt{2 \eta_t}$ & $-$0.560{\tiny$\pm$0.138} & 24.07{\tiny$\pm$1.37} & 1.64{\tiny$\pm$0.12} 
    & 148k{\tiny$\pm$4.2k} 
    \\
                                       & 0                 & 0                 & 0.424{\tiny$\pm$0.042} & 20.99{\tiny$\pm$1.54} & 1.67{\tiny$\pm$0.08} 
    & / 1500 
    \\
    \addlinespace
    \multirow{2}{*}{DPO}          & $\sqrt{2 \eta_t}$ & $\sqrt{2 \eta_t}$ & $-$1.386{\tiny$\pm$0.033} & 27.80{\tiny$\pm$1.40} & 1.62{\tiny$\pm$0.08} 
    & 118k{\tiny$\pm$0.6k} 
    \\
                                       & 0                 & 0                 & $-$0.957{\tiny$\pm$0.040} & 29.81{\tiny$\pm$1.43} & 1.68{\tiny$\pm$0.10} 
    & / 1000
    \\
    \addlinespace
    \multirow{2}{*}{ReFL}              & $\sqrt{2 \eta_t}$ & $\sqrt{2 \eta_t}$ & 0.687{\tiny$\pm$0.085} & 19.49{\tiny$\pm$1.76} & 1.22{\tiny$\pm$0.08} 
    & 173k{\tiny$\pm$10.9k}
    \\
                                       & 0                 & 0                 & 0.709{\tiny$\pm$0.080} & 18.39{\tiny$\pm$1.11} & 1.31{\tiny$\pm$0.10} 
    & / 6000 
    \\
    \midrule 
    Cont. Adjoint & \multirow{2}{*}{$\sqrt{2 \eta_t}$} & $\sqrt{2 \eta_t}$ & $-$0.448{\tiny$\pm$0.135} & 26.97{\tiny$\pm$1.37} & 1.82{\tiny$\pm$0.09} 
    & 153k{\tiny$\pm$0.9k}
    \\
    $\lambda = 12500$                     &                                    & 0                 & $-$0.249{\tiny$\pm$0.116} & 26.25{\tiny$\pm$1.30} & 1.90{\tiny$\pm$0.10} 
    & / 750  
    \\
    \addlinespace
    Disc. Adjoint & \multirow{2}{*}{$\sqrt{2 \eta_t}$} & $\sqrt{2 \eta_t}$ & $-$0.557{\tiny$\pm$0.113} & 30.40{\tiny$\pm$2.39} & 1.91{\tiny$\pm$0.09} 
    & 152k{\tiny$\pm$1.5k}
    \\
    $\lambda = 12500$                    &                                    & 0                 & $-$0.552{\tiny$\pm$0.041} & 28.37{\tiny$\pm$2.26} & 1.97{\tiny$\pm$0.09} 
    & / 1000 
    \\
    \midrule 
    Adj.-Matching  & \multirow{2}{*}{$\sqrt{2 \eta_t}$} & $\sqrt{2 \eta_t}$ & 0.550{\tiny$\pm$0.043} & 23.00{\tiny$\pm$1.27} & 1.65{\tiny$\pm$0.08} 
    &  
    \\
    $\lambda = 1000$                     &                                    & 0                 & 0.454{\tiny$\pm$0.055} & 22.76{\tiny$\pm$1.40} & 1.73{\tiny$\pm$0.09} 
    \\
    \addlinespace
    Adj.-Matching & \multirow{2}{*}{$\sqrt{2 \eta_t}$} & $\sqrt{2 \eta_t}$ & 0.755{\tiny$\pm$0.040} & 21.33{\tiny$\pm$1.71} & 1.55{\tiny$\pm$0.08} 
    & 156k{\tiny$\pm$1.9k}
    \\
    $\lambda = 2500$                     &                                    & 0                 & 0.671{\tiny$\pm$0.047} & 21.42{\tiny$\pm$1.54} & 1.64{\tiny$\pm$0.08}
    & / 1000 
    \\
    \addlinespace
    Adj.-Matching  & \multirow{2}{*}{$\sqrt{2 \eta_t}$} & $\sqrt{2 \eta_t}$ & 0.882{\tiny$\pm$0.058} & 20.49{\tiny$\pm$1.48} & 1.50{\tiny$\pm$0.09} 
    & 
    \\
    $\lambda = 12500$                    &                                    & 0                 & 0.778{\tiny$\pm$0.050} & 20.34{\tiny$\pm$1.49} & 1.57{\tiny$\pm$0.09}
    &  
    \\
    \bottomrule
\end{tabular}
}
\caption{Metrics for various fine-tuning methods for text-to-image generation. The second and third columns show the noise schedules $\sigma(t)$ used for fine-tuning and for inference: $\sigma(t) = \sqrt{2\eta_t}$ corresponds to Memoryless Flow Matching, 
and $\sigma(t) = 0$ to the Flow Matching ODE \eqref{eq:FM_ode}. Confidence intervals show standard errors of estimates; computed over 3 runs of the fine-tuning algorithm on separate fine-tuning prompt datasets of size 40000 each. Test prompt sets are of size 1000, and also different for each run.}
\label{table:metrics_multiprompt_diversity}
\end{table}

\begin{table}[h!]
\centering
{\footnotesize
\begin{tabular}{lcccccccc}
    \toprule
    Fine-tun. & Fine-tun. & Generat. & \multirow{2}{*}{ImageReward$\, \uparrow$} & \multirow{2}{*}{ClipScore$\, \uparrow$} & \multirow{2}{*}{PickScore$\, \uparrow$} & \multirow{2}{*}{HPS v2$\, \uparrow$} & DreamSim & Runtime/ \\
    loss & $\sigma(t)$ & $\sigma(t)$ &  & &  &  & diversity$\, \uparrow$ & $\#$iter. \\
    \midrule 
    \multirow{2}{*}{ReFL}              & $\sqrt{2 \eta_t}$ & $\sqrt{2 \eta_t}$ & 0.459{\tiny$\pm$0.096} & 28.46{\tiny$\pm$0.25} & 18.77{\tiny$\pm$0.09} & 22.54{\tiny$\pm$0.17} & 37.51{\tiny$\pm$3.50} & 43k{\tiny$\pm$2.7k} \\
                                       & 0                 & 0                 & 0.330{\tiny$\pm$0.114} & 29.63{\tiny$\pm$0.61} & 19.08{\tiny$\pm$0.18} & 22.46{\tiny$\pm$0.77} & 39.51{\tiny$\pm$1.30} & / 1500 \\
    \addlinespace
    \multirow{2}{*}{DRaFT-1}           & $\sqrt{2 \eta_t}$ & $\sqrt{2 \eta_t}$ & 0.913{\tiny$\pm$0.068} & 29.80{\tiny$\pm$0.22} & 19.16{\tiny$\pm$0.06} & 23.63{\tiny$\pm$0.16} & 35.21{\tiny$\pm$1.93} & 35k{\tiny$\pm$1.5k} \\
                                       & 0                 & 0                 & 0.626{\tiny$\pm$0.195} & 30.48{\tiny$\pm$0.32} & 18.91{\tiny$\pm$0.34} & 21.92{\tiny$\pm$1.63} & 38.52{\tiny$\pm$2.01} & / 1000 \\
    \addlinespace
    \multirow{2}{*}{Draft-40}          & $\sqrt{2 \eta_t}$ & $\sqrt{2 \eta_t}$ & $-$1.427{\tiny$\pm$0.267} & 23.39{\tiny$\pm$1.72} & 17.24{\tiny$\pm$0.45} & 15.72{\tiny$\pm$1.80} & 41.98{\tiny$\pm$2.14} & 49k{\tiny$\pm$1.4k} \\
                                       & 0                 & 0                 & $-$0.097{\tiny$\pm$0.052} & 29.12{\tiny$\pm$0.41} & 18.97{\tiny$\pm$0.14} & 21.93{\tiny$\pm$0.20} & 46.35{\tiny$\pm$1.34} & / 500 \\
    \midrule 
    Adj.-Match.  & \multirow{2}{*}{$\sqrt{2 \eta_t}$} & $\sqrt{2 \eta_t}$ & 0.107{\tiny$\pm$0.046} & 29.37{\tiny$\pm$0.25} & 19.05{\tiny$\pm$0.07} & 22.79{\tiny$\pm$0.20} & 46.38{\tiny$\pm$1.36} &  \\
    $\lambda = 1000$                     &                                    & 0                 & 0.051{\tiny$\pm$0.044} & 30.58{\tiny$\pm$0.17} & 19.31{\tiny$\pm$0.07} & 21.93{\tiny$\pm$0.23} & 48.12{\tiny$\pm$1.56} &  \\
    \addlinespace
    Adj.-Match. & \multirow{2}{*}{$\sqrt{2 \eta_t}$} & $\sqrt{2 \eta_t}$ & 0.199{\tiny$\pm$0.068} & 29.27{\tiny$\pm$0.21} & 19.07{\tiny$\pm$0.10} & 22.98{\tiny$\pm$0.30} & 45.03{\tiny$\pm$1.61} & 39k{\tiny$\pm$0.5k} \\
    $\lambda = 2500$                     &                                    & 0                 & 0.106{\tiny$\pm$0.067} & 30.43{\tiny$\pm$0.24} & 19.32{\tiny$\pm$0.11} & 22.16{\tiny$\pm$0.33} & 47.61{\tiny$\pm$1.49} & / 250 \\
    \addlinespace
    Adj.-Match.  & \multirow{2}{*}{$\sqrt{2 \eta_t}$} & $\sqrt{2 \eta_t}$ & 0.299{\tiny$\pm$0.095} & 29.61{\tiny$\pm$0.37} & 19.26{\tiny$\pm$0.14} & 23.67{\tiny$\pm$0.27} & 43.36{\tiny$\pm$1.93} &  \\
    $\lambda = 12500$                    &                                    & 0                 & 0.224{\tiny$\pm$0.051} & 30.70{\tiny$\pm$0.23} & 19.52{\tiny$\pm$0.11} & 22.93{\tiny$\pm$0.21} & 44.62{\tiny$\pm$1.79} &  \\
    \midrule 
    Cont. Adj. & \multirow{2}{*}{$\sqrt{2 \eta_t}$} & $\sqrt{2 \eta_t}$ & $-$0.910{\tiny$\pm$0.116} & 26.29{\tiny$\pm$0.44} & 18.06{\tiny$\pm$0.16} & 18.86{\tiny$\pm$0.88} & 51.60{\tiny$\pm$1.97} & 51k{\tiny$\pm$0.3k} \\
    $\lambda = 12500$                     &                                    & 0                 & $-$0.681{\tiny$\pm$0.051} & 28.50{\tiny$\pm$0.19} & 18.69{\tiny$\pm$0.11} & 19.90{\tiny$\pm$0.50} & 50.87{\tiny$\pm$1.52} & / 250 \\
    \addlinespace
    Disc. Adj. & \multirow{2}{*}{$\sqrt{2 \eta_t}$} & $\sqrt{2 \eta_t}$ & $-$0.978{\tiny$\pm$0.123} & 26.68{\tiny$\pm$0.76} & 18.51{\tiny$\pm$0.11} & 18.53{\tiny$\pm$0.28} & 55.95{\tiny$\pm$1.70} & 38k{\tiny$\pm$0.4k} \\
    $\lambda = 12500$                    &                                    & 0                 & $-$0.791{\tiny$\pm$0.065} & 28.66{\tiny$\pm$0.33} & 18.51{\tiny$\pm$0.11} & 18.53{\tiny$\pm$0.28} & 54.78{\tiny$\pm$2.00} & / 250 \\
    \bottomrule
\end{tabular}
}
\caption{Additional metrics for various fine-tuning methods for text-to-image generation, which complement the ones in \Cref{tab:evaluation_metrics} (both tables correspond to the same runs). The second and third columns show the noise schedules $\sigma(t)$ used for fine-tuning and for inference: $\sigma(t) = \sqrt{2\eta_t}$ corresponds to Memoryless Flow Matching, 
and $\sigma(t) = 0$ to the Flow Matching ODE \eqref{eq:FM_ode}.
}
\label{table:metrics_multiprompt_alternative}
\end{table}

\begin{table}[h!]
\centering
{\footnotesize
\begin{tabular}{lllccccccc}
    \toprule
    \multirow{2}{*}{$w$} & Fine-tuning & $\#$iter. & Fine-tun. & Sampl.  & \multirow{2}{*}{ImageReward$\, \uparrow$} & \multirow{2}{*}{ClipScore$\, \uparrow$} & \multirow{2}{*}{PickScore$\, \uparrow$} & \multirow{2}{*}{HPS v2$\, \uparrow$} & DreamSim \\
     & loss & / $\lambda$ & $\sigma(t)$ & $\sigma(t)$ &  &  &  &  & diversity$\, \uparrow$ \\
    \midrule
    \multirow{2}{*}{0.0} & \multirow{2}{*}{None} &  \multirow{2}{*}{N/A}        & \multirow{2}{*}{N/A}          & $\sqrt{2 \eta_t}$         & $-$1.384{\tiny$\pm$0.040} & 24.15{\tiny$\pm$0.26} & 17.25{\tiny$\pm$0.06} & 16.19{\tiny$\pm$0.17} & 53.60{\tiny$\pm$1.37} \\
     &  &       &                & 0                 & $-$0.920{\tiny$\pm$0.042} & 28.32{\tiny$\pm$0.22} & 18.15{\tiny$\pm$0.07} & 17.89{\tiny$\pm$0.16} & \textbf{56.53{\tiny$\pm$1.52}}       
                                    \\
    \midrule
    \multirow{7}{*}{0.0} & \multirow{7}{*}{DRaFT-1} &  \multirow{2}{*}{1000}       & $\sqrt{2 \eta_t}$ & $\sqrt{2 \eta_t}$ & 0.913{\tiny$\pm$0.068} & 29.80{\tiny$\pm$0.22} & 19.16{\tiny$\pm$0.06} & 23.63{\tiny$\pm$0.16} & 35.21{\tiny$\pm$1.93} \\
    & &  & 0                 & 0                 & 0.626{\tiny$\pm$0.195} & 30.48{\tiny$\pm$0.32} & 18.91{\tiny$\pm$0.34} & 21.92{\tiny$\pm$1.63} & 38.52{\tiny$\pm$2.01} \\
    \addlinespace
     &  &  \multirow{2}{*}{2000}       & $\sqrt{2 \eta_t}$ & $\sqrt{2 \eta_t}$ & 1.204{\tiny$\pm$0.046} & 29.90{\tiny$\pm$0.43} & 19.29{\tiny$\pm$0.12} & 24.40{\tiny$\pm$0.27} & 28.51{\tiny$\pm$1.68} \\
                                   & &  & 0                 & 0                 & 1.052{\tiny$\pm$0.088} & 30.65{\tiny$\pm$0.24} & 19.27{\tiny$\pm$0.11} & 23.81{\tiny$\pm$0.44} & 32.11{\tiny$\pm$2.37} \\
    \addlinespace
    & &  \multirow{2}{*}{3000}       & $\sqrt{2 \eta_t}$ & $\sqrt{2 \eta_t}$ & \textbf{1.307{\tiny$\pm$0.041}} & 29.96{\tiny$\pm$0.22} & 19.31{\tiny$\pm$0.06} & 24.42{\tiny$\pm$0.13} & 26.57{\tiny$\pm$1.32} \\
                                   & &  & 0                 & 0                 & 1.173{\tiny$\pm$0.058} & 30.86{\tiny$\pm$0.25} & 19.37{\tiny$\pm$0.06} & 24.17{\tiny$\pm$0.23} & 29.69{\tiny$\pm$1.30}
    \\
    \addlinespace
    & &  \multirow{2}{*}{4000}       & $\sqrt{2 \eta_t}$ & $\sqrt{2 \eta_t}$ & \textbf{1.357{\tiny$\pm$0.039}} & 30.18{\tiny$\pm$0.24} & 19.38{\tiny$\pm$0.08} & 24.61{\tiny$\pm$0.17} & 25.54{\tiny$\pm$0.99} \\
                                   & &  & 0                 & 0                 & 1.251{\tiny$\pm$0.040} & 30.95{\tiny$\pm$0.28} & 19.37{\tiny$\pm$0.06} & 24.37{\tiny$\pm$0.17} & 27.39{\tiny$\pm$1.14}
    \\
    \midrule
    \multirow{7}{*}{0.0} & \multirow{7}{*}{Adj.-Match.} & \multirow{2}{*}{1000} & $\sqrt{2 \eta_t}$ & $\sqrt{2 \eta_t}$  & 0.550{\tiny$\pm$0.043} & 30.36{\tiny$\pm$0.22} & 19.29{\tiny$\pm$0.08} & 24.12{\tiny$\pm$0.17} & 40.89{\tiny$\pm$1.50} \\
     & &  & 0                 & 0                 & 0.454{\tiny$\pm$0.055} &  31.41{\tiny$\pm$0.22} & 19.57{\tiny$\pm$0.09} & 23.29{\tiny$\pm$0.18} & 43.10{\tiny$\pm$1.76} \\
    \addlinespace
    & & \multirow{2}{*}{2500} & $\sqrt{2 \eta_t}$ & $\sqrt{2 \eta_t}$ & 0.755{\tiny$\pm$0.040} & 30.59{\tiny$\pm$0.40} & 19.49{\tiny$\pm$0.10} & 24.85{\tiny$\pm$0.23} & 37.07{\tiny$\pm$1.47} \\
    & &  & 0 & 0 & 0.671{\tiny$\pm$0.047} & 31.64{\tiny$\pm$0.21} & 19.71{\tiny$\pm$0.09} & 24.12{\tiny$\pm$0.27} & 39.88{\tiny$\pm$1.59} \\
    \addlinespace
    & & \multirow{2}{*}{12500} & $\sqrt{2 \eta_t}$ & $\sqrt{2 \eta_t}$ & 0.882{\tiny$\pm$0.058} & 30.62{\tiny$\pm$0.30} & 19.50{\tiny$\pm$0.09} & 24.95{\tiny$\pm$0.28} & 34.50{\tiny$\pm$1.33} \\
    & &  & 0 & 0 & 0.778{\tiny$\pm$0.050} & 31.65{\tiny$\pm$0.19} & 19.76{\tiny$\pm$0.08} & 24.49{\tiny$\pm$0.27} & 37.24{\tiny$\pm$1.57} \\
    \midrule
    \multirow{2}{*}{1.0} & \multirow{2}{*}{None} &  \multirow{2}{*}{N/A}        & \multirow{2}{*}{N/A}          & $\sqrt{2 \eta_t}$         & $-$0.269{\tiny$\pm$0.050} & 30.41{\tiny$\pm$0.22} & 18.74{\tiny$\pm$0.07} & 20.47{\tiny$\pm$0.18} & 43.82{\tiny$\pm$1.24} \\
     &  &       &                & 0                 & $-$0.123{\tiny$\pm$0.041} & 31.83{\tiny$\pm$0.17} & 19.28{\tiny$\pm$0.07} & 20.95{\tiny$\pm$0.16} & 42.59{\tiny$\pm$1.23}                             
                                    \\
    \midrule
    \multirow{6}{*}{1.0} & \multirow{6}{*}{DRaFT-1} &  \multirow{2}{*}{1000}        & $\sqrt{2 \eta_t}$          & $\sqrt{2 \eta_t}$         & 1.123{\tiny$\pm$0.051} & 32.06{\tiny$\pm$0.19} & 19.69{\tiny$\pm$0.06} & 24.56{\tiny$\pm$0.17} & 28.25{\tiny$\pm$1.55} \\
     &  &       & 0                 & 0                 & 0.856{\tiny$\pm$0.167} & 32.32{\tiny$\pm$0.25} & 19.38{\tiny$\pm$0.34} & 22.88{\tiny$\pm$1.54} & 29.98{\tiny$\pm$1.86} \\
    \addlinespace
     &  &  2000        & 0                 & 0                 & 1.177{\tiny$\pm$0.053} & 32.36{\tiny$\pm$0.18} & 19.67{\tiny$\pm$0.08} & 24.48{\tiny$\pm$0.28} & 25.09{\tiny$\pm$1.82} \\
    \addlinespace
    & & 3000 & 0                 & 0                 & 1.255{\tiny$\pm$0.038} & 32.36{\tiny$\pm$0.19} & 19.70{\tiny$\pm$0.06} & 24.64{\tiny$\pm$0.17} & 23.24{\tiny$\pm$1.19} \\
    \addlinespace
    & &  4000  & 0 & 0 & \textbf{1.296{\tiny$\pm$0.033}} & 32.30{\tiny$\pm$0.19} & 19.68{\tiny$\pm$0.06} & 24.71{\tiny$\pm$0.14} & 21.54{\tiny$\pm$0.96} \\
    \midrule
    \multirow{5}{*}{1.0} & \multirow{5}{*}{Adj.-Match.} & 1000 & 0                 & 0                 & 0.782{\tiny$\pm$0.044} & 33.05{\tiny$\pm$0.22} & 20.20{\tiny$\pm$0.09} & 24.81{\tiny$\pm$0.18} & 32.67{\tiny$\pm$1.26} \\
    \addlinespace
    & & \multirow{2}{*}{2500} & $\sqrt{2 \eta_t}$ & $\sqrt{2 \eta_t}$ & 1.027{\tiny$\pm$0.038} & 32.85{\tiny$\pm$0.21} & 20.08{\tiny$\pm$0.08} & \textbf{25.88{\tiny$\pm$0.20}} & 29.83{\tiny$\pm$1.00} \\
    & &  & 0 & 0 & 0.910{\tiny$\pm$0.040} & 33.20{\tiny$\pm$0.17} & 20.29{\tiny$\pm$0.09} & 25.39{\tiny$\pm$0.24} & 30.34{\tiny$\pm$1.51} \\
    \addlinespace
    & & 12500 & 0 & 0 & 0.985{\tiny$\pm$0.041} & 33.10{\tiny$\pm$0.18} & 20.28{\tiny$\pm$0.08} & \textbf{25.61{\tiny$\pm$0.27}} & 28.86{\tiny$\pm$1.37} \\
    \midrule
    \multirow{2}{*}{4.0} & \multirow{2}{*}{None} &  \multirow{2}{*}{N/A}        & \multirow{2}{*}{N/A}          & $\sqrt{2 \eta_t}$         & 0.277{\tiny$\pm$0.043} & 32.68{\tiny$\pm$0.18} & 19.50{\tiny$\pm$0.07} & 22.29{\tiny$\pm$0.16} & 35.12{\tiny$\pm$0.92} \\
     &  &       &                & 0                 & 0.209{\tiny$\pm$0.046} & 32.83{\tiny$\pm$0.17} & 19.79{\tiny$\pm$0.07} & 22.30{\tiny$\pm$0.17} & 32.05{\tiny$\pm$1.05}    
                                    \\
    \midrule
    \multirow{6}{*}{4.0} & \multirow{6}{*}{DRaFT-1} &  \multirow{2}{*}{1000}        & $\sqrt{2 \eta_t}$          & $\sqrt{2 \eta_t}$         & 1.062{\tiny$\pm$0.045} & 32.29{\tiny$\pm$0.16} & 19.48{\tiny$\pm$0.06} & 23.67{\tiny$\pm$0.13} & 25.03{\tiny$\pm$1.32} \\
     &  &       & 0                 & 0                 & 0.604{\tiny$\pm$0.395} & 31.80{\tiny$\pm$0.86} & 19.09{\tiny$\pm$0.53} & 21.69{\tiny$\pm$2.10} & 25.92{\tiny$\pm$2.57} \\
    \addlinespace
     &  &  2000        & 0                 & 0                 & 1.112{\tiny$\pm$0.046} & 32.29{\tiny$\pm$0.20} & 19.34{\tiny$\pm$0.11} & 23.31{\tiny$\pm$0.22} & 21.02{\tiny$\pm$1.67} \\
    \addlinespace
    & & 3000 & 0                 & 0                 & 1.151{\tiny$\pm$0.036} & 32.31{\tiny$\pm$0.21} & 19.36{\tiny$\pm$0.06} & 23.29{\tiny$\pm$0.14} & 19.53{\tiny$\pm$1.24} \\
    \addlinespace
    & &  4000  & 0 & 0 & 1.172{\tiny$\pm$0.040} & 32.20{\tiny$\pm$0.22} & 19.30{\tiny$\pm$0.07} & 23.20{\tiny$\pm$0.15} & 18.45{\tiny$\pm$1.06} \\
    \midrule
    \multirow{5}{*}{4.0} & \multirow{5}{*}{Adj.-Match.} & 1000 & 0                 & 0                 & 0.852{\tiny$\pm$0.046} & \textbf{33.50{\tiny$\pm$0.22}} & 20.31{\tiny$\pm$0.08} & 24.97{\tiny$\pm$0.19} & 25.83{\tiny$\pm$0.82} \\
    \addlinespace
    & & \multirow{2}{*}{2500} & $\sqrt{2 \eta_t}$ & $\sqrt{2 \eta_t}$ & 1.052{\tiny$\pm$0.039} & \textbf{33.51}{\tiny$\pm$0.19} & 20.15{\tiny$\pm$0.07} & \textbf{25.56{\tiny$\pm$0.18}} & 26.21{\tiny$\pm$0.73} \\
    & &  & 0 & 0 & 0.942{\tiny$\pm$0.042} & \textbf{33.61}{\tiny$\pm$0.19} & \textbf{20.35}{\tiny$\pm$0.08} & 25.34{\tiny$\pm$0.21} & 24.30{\tiny$\pm$0.86} \\
    \addlinespace
    & & 12500 & 0 & 0 & 1.007{\tiny$\pm$0.052} & \textbf{33.48{\tiny$\pm$0.20}} & 20.29{\tiny$\pm$0.08} & \textbf{25.50{\tiny$\pm$0.29}} & 23.48{\tiny$\pm$0.81} \\
    \bottomrule
\end{tabular}
}
\caption{ 
Evaluation metrics when using classifier-free guidance (CFG; \citet{ho2022classifier}).
}
\end{table}

\begin{table}[h!]
\centering
{\footnotesize
\begin{tabular}{llccccccc}
    \toprule
    LR / & Fine-tuning & Fine-tun. & Generat. & \multirow{2}{*}{ImageReward$\, \uparrow$} & \multirow{2}{*}{ClipScore$\, \uparrow$} & \multirow{2}{*}{PickScore$\, \uparrow$} & \multirow{2}{*}{HPS v2$\, \uparrow$} & DreamSim \\
    Adam $\beta_1$ & loss & $\sigma(t)$ & $\sigma(t)$ & & &  &  & diversity$\, \uparrow$ \\
    \midrule
    \num{3e-5} & DRaFT-1          & $\sqrt{2 \eta_t}$ & $\sqrt{2 \eta_t}$ & 1.467{\tiny$\pm$0.029} & 30.28{\tiny$\pm$0.56} & 19.37{\tiny$\pm$0.09} & 24.70{\tiny$\pm$0.15} & 21.20{\tiny$\pm$0.93} \\
    \addlinespace
    / \num{0.97} & Adj.-Match.  & \multirow{2}{*}{$\sqrt{2 \eta_t}$} & \multirow{2}{*}{$\sqrt{2 \eta_t}$} & \multirow{2}{*}{1.130{\tiny$\pm$0.034}} & \multirow{2}{*}{31.01{\tiny$\pm$0.27}} & \multirow{2}{*}{19.60{\tiny$\pm$0.08}} & \multirow{2}{*}{25.01{\tiny$\pm$0.25}} & \multirow{2}{*}{26.73{\tiny$\pm$0.88}} \\
    & $\lambda = 12500$                     &                                    &                 &  &  &  &  &  \\
    \midrule
    \num{2e-5} & Disc. Adj.  & $\sqrt{2 \eta_t}$ & $\sqrt{2 \eta_t}$ & $-$1.186{\tiny$\pm$0.553} & 21.95{\tiny$\pm$4.29} & 16.94{\tiny$\pm$0.95} & 12.34{\tiny$\pm$4.40} & 28.33{\tiny$\pm$10.26} \\
     / \num{0.95} & $\lambda = 12500$                     &  0                                  &  0               & $-$0.961{\tiny$\pm$0.653} & 24.07{\tiny$\pm$4.71} & 17.86{\tiny$\pm$1.17} & 15.93{\tiny$\pm$5.80} & 33.62{\tiny$\pm$7.80} \\
    \bottomrule
\end{tabular}
}
\caption{Metrics for alternative optimization hyperparameters (learning rate and Adam $\beta_1$). 
}
\label{table:alternative_hyperparameters}
\end{table}

\begin{table}[h!]
\centering
{\footnotesize
\begin{tabular}{lccccccc}
    \toprule
    Fine-tuning & Fine-tuning & Generative & \multirow{2}{*}{ImageReward$\, \uparrow$} & \multirow{2}{*}{ClipScore$\, \uparrow$} & \multirow{2}{*}{PickScore$\, \uparrow$} & \multirow{2}{*}{HPS v2$\, \uparrow$} & DreamSim \\
    loss & $\sigma(t)$ & $\sigma(t)$ & & &  &  & diversity$\, \uparrow$ \\
    \midrule
    Adj.-Matching  & \multirow{2}{*}{1} & 1 & 0.009{\tiny$\pm$0.077} & 29.18{\tiny$\pm$0.51} & 18.66{\tiny$\pm$0.09} & 20.75{\tiny$\pm$0.32} & 41.33{\tiny$\pm$1.24} \\
    $\lambda = 12500$                     &                                    &  0               & 0.454{\tiny$\pm$0.055} & 31.41{\tiny$\pm$0.22} & 19.57{\tiny$\pm$0.09} & 23.29{\tiny$\pm$0.18} & 43.10{\tiny$\pm$1.76} \\
    \addlinespace
    Adj.-Matching  & \multirow{2}{*}{$\sqrt{2 \eta_t}$} & $\sqrt{2 \eta_t}$ & 0.882{\tiny$\pm$0.058} & 30.62{\tiny$\pm$0.30} & 19.50{\tiny$\pm$0.09} & 24.95{\tiny$\pm$0.28} & 34.50{\tiny$\pm$1.33}  \\
    $\lambda = 12500$                    &                                    & 0                 & 0.778{\tiny$\pm$0.050} & 31.65{\tiny$\pm$0.19} & 19.76{\tiny$\pm$0.08} & 24.49{\tiny$\pm$0.27} & 37.24{\tiny$\pm$1.57}  \\
    \bottomrule
\end{tabular}
}
\caption{Comparison with an alternative fine-tuning noise schedule $\sigma(t)=1$. We see that the initial value function bias (\Cref{sec:value_function_bias_problem}) results in the model not having a high reward function (ImageReward is the reward function used for fine-tuning). Its performance on other metrics are also lower than when fine-tuning with the memoryless noise schedule, except for diversity.}
\label{table:alternative_noise_schedule}
\end{table}

\begin{table}[h!]
\centering
{\footnotesize
\begin{tabular}{llccccccc}
    \toprule
    $\#$sampl. & Fine-tuning & Fine-tun. & Sampl. & \multirow{2}{*}{ImageReward$\, \uparrow$} & \multirow{2}{*}{ClipScore$\, \uparrow$} & \multirow{2}{*}{PickScore$\, \uparrow$} & \multirow{2}{*}{HPS v2$\, \uparrow$} & DreamSim \\
    timesteps & loss & $\sigma(t)$ & $\sigma(t)$ &  &  &  &  & diversity$\, \uparrow$ \\
    \midrule
    \multirow{6}{*}{$10$} & \multirow{2}{*}{None (Base)} & \multirow{2}{*}{N/A} & $\sqrt{2 \eta_t}$ & $-$2.279{\tiny$\pm$0.001} & 13.99{\tiny$\pm$0.12} & 14.98{\tiny$\pm$0.05} & 7.37{\tiny$\pm$0.10} & 5.07{\tiny$\pm$0.13} \\
                            &     &                     & 0                 & $-$1.386{\tiny$\pm$0.040} & 26.26{\tiny$\pm$0.24} & 17.64{\tiny$\pm$0.07} & 14.92{\tiny$\pm$0.17} & 51.26{\tiny$\pm$1.38} \\
    \addlinespace
     & \multirow{2}{*}{DRaFT-1}           & $\sqrt{2 \eta_t}$ & $\sqrt{2 \eta_t}$ & 1.033{\tiny$\pm$0.051} & 25.98{\tiny$\pm$0.25} & 18.28{\tiny$\pm$0.07} & 22.08{\tiny$\pm$0.18} & 14.47{\tiny$\pm$0.67} \\
                                    &   & 0                 & 0                 & 1.236{\tiny$\pm$0.038} & 31.54{\tiny$\pm$0.27} & 19.53{\tiny$\pm$0.07} & 24.47{\tiny$\pm$0.19} & 24.78{\tiny$\pm$0.88} \\
    \addlinespace
     & Adj.-Match.  & \multirow{2}{*}{$\sqrt{2 \eta_t}$} & $\sqrt{2 \eta_t}$ & $-$2.104{\tiny$\pm$0.074} & 17.12{\tiny$\pm$0.56} & 15.76{\tiny$\pm$0.20} & 11.48{\tiny$\pm$1.03} & 9.88{\tiny$\pm$0.81} \\
     & $\lambda = 12500$                     &                                    & 0                 & 0.607{\tiny$\pm$0.055} & 31.36{\tiny$\pm$0.20} & 19.56{\tiny$\pm$0.08} & 23.23{\tiny$\pm$0.28} & 33.75{\tiny$\pm$1.48} \\
    \midrule
    \multirow{6}{*}{$20$} & \multirow{2}{*}{None (Base)} & \multirow{2}{*}{N/A} & $\sqrt{2 \eta_t}$ & $-$2.275{\tiny$\pm$0.002} & 14.58{\tiny$\pm$0.13} & 15.07{\tiny$\pm$0.05} & 7.47{\tiny$\pm$0.10} & 11.27{\tiny$\pm$0.33} \\
                         &        &                     & 0                 &  $-$1.017{\tiny$\pm$0.055} & 27.92{\tiny$\pm$0.19} & 18.01{\tiny$\pm$0.07} & 17.17{\tiny$\pm$0.15} & 54.69{\tiny$\pm$1.45} \\
    \addlinespace
     & \multirow{2}{*}{DRaFT-1}           & $\sqrt{2 \eta_t}$ & $\sqrt{2 \eta_t}$ & \textbf{1.301{\tiny$\pm$0.039}} & 27.09{\tiny$\pm$0.24} & 18.93{\tiny$\pm$0.07} & 23.78{\tiny$\pm$0.20} & 21.05{\tiny$\pm$1.12} \\
                                    &   & 0                 & 0                 & 1.255{\tiny$\pm$0.038} & 31.14{\tiny$\pm$0.25} & 19.43{\tiny$\pm$0.06} & 24.52{\tiny$\pm$0.16} & 26.15{\tiny$\pm$1.11} \\
    \addlinespace
     & Adj.-Match.  & \multirow{2}{*}{$\sqrt{2 \eta_t}$} & $\sqrt{2 \eta_t}$ & $-$0.032{\tiny$\pm$0.072} & 25.07{\tiny$\pm$0.27} & 18.01{\tiny$\pm$0.07} & 20.75{\tiny$\pm$0.23} & 29.06{\tiny$\pm$2.34} \\
     & $\lambda = 12500$                     &                                    & 0                 & 0.768{\tiny$\pm$0.048} & \textbf{31.70{\tiny$\pm$0.17}} & \textbf{19.73{\tiny$\pm$0.08}} & 24.30{\tiny$\pm$0.26} & 35.90{\tiny$\pm$1.52} \\
     \midrule
    \multirow{6}{*}{$40$} & \multirow{2}{*}{None (Base)} & \multirow{2}{*}{N/A} & $\sqrt{2 \eta_t}$ & $-$1.384{\tiny$\pm$0.040} & 24.15{\tiny$\pm$0.26} & 17.25{\tiny$\pm$0.06} & 16.19{\tiny$\pm$0.17} & 53.60{\tiny$\pm$1.37} \\
                         &        &                     & 0                 & $-$0.920{\tiny$\pm$0.042} & 28.32{\tiny$\pm$0.22} & 18.15{\tiny$\pm$0.07} & 17.89{\tiny$\pm$0.16} & \textbf{56.53{\tiny$\pm$1.52}} \\
    \addlinespace
     & \multirow{2}{*}{DRaFT-1}           & $\sqrt{2 \eta_t}$ & $\sqrt{2 \eta_t}$ & \textbf{1.357{\tiny$\pm$0.039}} & 30.18{\tiny$\pm$0.24} & 19.38{\tiny$\pm$0.08} & 24.61{\tiny$\pm$0.17} & 25.54{\tiny$\pm$0.99} \\
                                    &   & 0                 & 0                 & 1.251{\tiny$\pm$0.040} & 30.95{\tiny$\pm$0.28} & 19.37{\tiny$\pm$0.06} & 24.37{\tiny$\pm$0.17} & 27.39{\tiny$\pm$1.14} \\
    \addlinespace
     & Adj.-Match.  & \multirow{2}{*}{$\sqrt{2 \eta_t}$} & $\sqrt{2 \eta_t}$ & 0.882{\tiny$\pm$0.058} & 30.62{\tiny$\pm$0.30} & 19.50{\tiny$\pm$0.09} & \textbf{24.95{\tiny$\pm$0.28}} & 34.50{\tiny$\pm$1.33} \\
     & $\lambda = 12500$                     &                                    & 0                 & 0.778{\tiny$\pm$0.050} & \textbf{31.65{\tiny$\pm$0.19}} & \textbf{19.76{\tiny$\pm$0.08}} & 24.49{\tiny$\pm$0.27} & 37.24{\tiny$\pm$1.57} \\
    \midrule
    \multirow{6}{*}{$100$} & \multirow{2}{*}{None (Base)} & \multirow{2}{*}{N/A} & $\sqrt{2 \eta_t}$ & $-$0.881{\tiny$\pm$0.041} & 27.83{\tiny$\pm$0.19} & 18.10{\tiny$\pm$0.07} & 18.43{\tiny$\pm$0.17} & \textbf{57.21{\tiny$\pm$1.50}} \\
                         &        &                     & 0                 & $-$0.881{\tiny$\pm$0.036} & 28.65{\tiny$\pm$0.18} & 18.22{\tiny$\pm$0.06} & 18.20{\tiny$\pm$0.17} & \textbf{57.73{\tiny$\pm$1.68}} \\
    \addlinespace
     & \multirow{2}{*}{DRaFT-1}           & $\sqrt{2 \eta_t}$ & $\sqrt{2 \eta_t}$ & \textbf{1.343{\tiny$\pm$0.040}} & 30.64{\tiny$\pm$0.20} & 19.38{\tiny$\pm$0.08} & 24.37{\tiny$\pm$0.17} & 25.51{\tiny$\pm$1.10} \\
                                    &   & 0                 & 0                 & 1.239{\tiny$\pm$0.037} & 30.74{\tiny$\pm$0.28} & 19.33{\tiny$\pm$0.06} & 24.24{\tiny$\pm$0.17} & 28.70{\tiny$\pm$1.11} \\
    \addlinespace
     & Adj.-Match.  & \multirow{2}{*}{$\sqrt{2 \eta_t}$} & $\sqrt{2 \eta_t}$ & 0.892{\tiny$\pm$0.044} & 31.23{\tiny$\pm$0.23} & \textbf{19.65{\tiny$\pm$0.08}} & \textbf{24.92{\tiny$\pm$0.23}} & 35.13{\tiny$\pm$1.40} \\
     & $\lambda = 12500$                     &                                    & 0                 & 0.779{\tiny$\pm$0.048} & \textbf{31.64{\tiny$\pm$0.17}} & \textbf{19.76{\tiny$\pm$0.08}} & 24.57{\tiny$\pm$0.25} & 38.26{\tiny$\pm$1.65} \\
    \midrule
    \multirow{6}{*}{$200$} & \multirow{2}{*}{None (Base)} & \multirow{2}{*}{N/A} & $\sqrt{2 \eta_t}$ & $-$0.848{\tiny$\pm$0.048} & 28.37{\tiny$\pm$0.21} & 18.27{\tiny$\pm$0.08} & 18.56{\tiny$\pm$0.19} & \textbf{58.00{\tiny$\pm$1.58}} \\
                         &        &                     & 0                 & $-$0.871{\tiny$\pm$0.036} & 28.50{\tiny$\pm$0.18} & 18.23{\tiny$\pm$0.06} & 18.25{\tiny$\pm$0.14} & \textbf{57.84{\tiny$\pm$1.60}} \\
    \addlinespace
     & \multirow{2}{*}{DRaFT-1}           & $\sqrt{2 \eta_t}$ & $\sqrt{2 \eta_t}$ & \textbf{1.331{\tiny$\pm$0.044}} & 30.69{\tiny$\pm$0.23} & 19.36{\tiny$\pm$0.07} & 24.21{\tiny$\pm$0.17} & 26.41{\tiny$\pm$1.18} \\
                                    &   & 0                 & 0                 & 1.222{\tiny$\pm$0.042} & 30.77{\tiny$\pm$0.27} & 19.32{\tiny$\pm$0.06} & 24.18{\tiny$\pm$0.16} & 29.09{\tiny$\pm$1.07} \\
    \addlinespace
     & Adj.-Match.  & \multirow{2}{*}{$\sqrt{2 \eta_t}$} & $\sqrt{2 \eta_t}$ & 0.869{\tiny$\pm$0.062} & 31.33{\tiny$\pm$0.21} & \textbf{19.68{\tiny$\pm$0.09}} & \textbf{24.81{\tiny$\pm$0.30}} & 35.90{\tiny$\pm$1.55} \\
     & $\lambda = 12500$                     &                                    & 0                 & 0.766{\tiny$\pm$0.050} & \textbf{31.61{\tiny$\pm$0.16}} & \textbf{19.75{\tiny$\pm$0.08}} & 24.52{\tiny$\pm$0.24} & 38.60{\tiny$\pm$1.38} \\
    \bottomrule
\end{tabular}
}
\caption{Performance metrics for different number of sampling steps. Only the number of sampling steps is ablated; the fine-tuned models used in all cases are the ones fine-tuned using 40 steps.
}
\label{table:metrics_multiprompt_sampling_steps}
\end{table}

\begin{figure}[h!]
    \centering
    \begin{subfigure}[t]{0.49\linewidth}
        \centering
        \rotatebox{90}{\;\;\; $w=0.0$}\,%
        \includegraphics[width=0.24\linewidth]{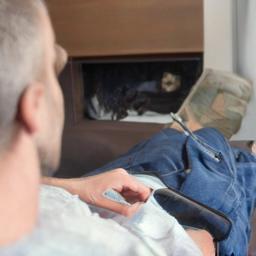}\,%
        \includegraphics[width=0.24\linewidth]{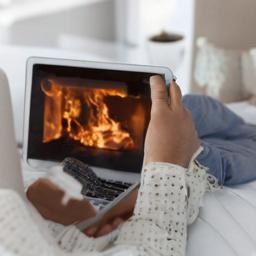}\,%
        \includegraphics[width=0.24\linewidth]{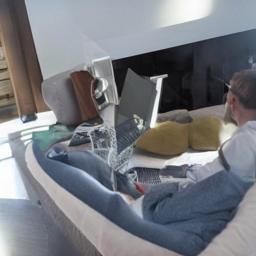}\,%
        \includegraphics[width=0.24\linewidth]{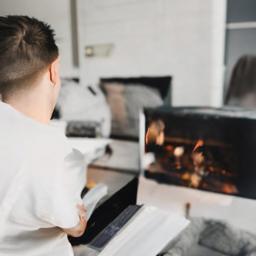}\\
        \rotatebox{90}{\;\;\; $w=1.0$}\,%
        \includegraphics[width=0.24\linewidth]{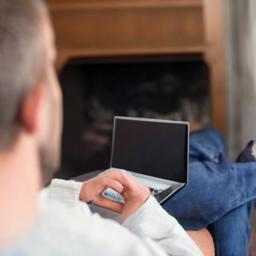}\,%
        \includegraphics[width=0.24\linewidth]{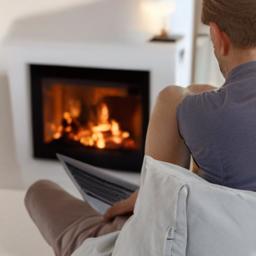}\,%
        \includegraphics[width=0.24\linewidth]{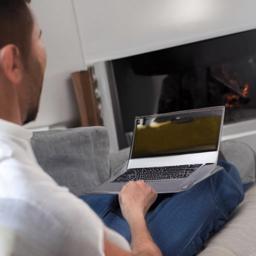}\,%
        \includegraphics[width=0.24\linewidth]{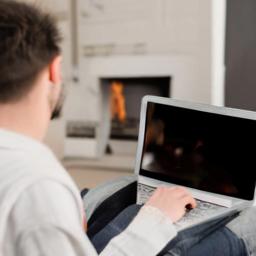}\\
        \rotatebox{90}{\;\;\; $w=4.0$}\,%
        \includegraphics[width=0.24\linewidth]{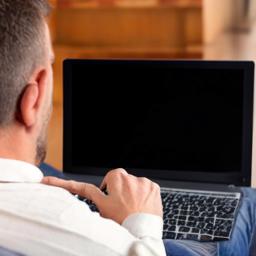}\,%
        \includegraphics[width=0.24\linewidth]{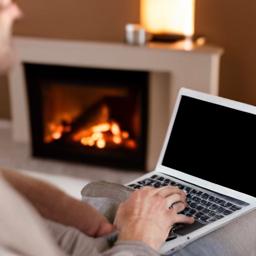}\,%
        \includegraphics[width=0.24\linewidth]{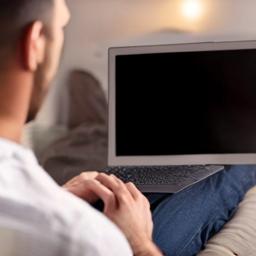}\,%
        \includegraphics[width=0.24\linewidth]{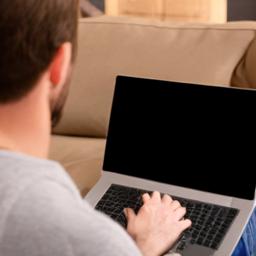}
        \caption*{Text prompt: ``\textit{Man sitting on sofa at home in front of fireplace and using laptop computer, rear view}''}
    \end{subfigure}\hfill
    \begin{subfigure}[t]{0.49\linewidth}
        \centering
        \includegraphics[width=0.24\linewidth]{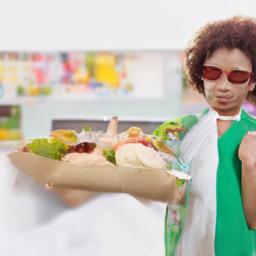}\,%
        \includegraphics[width=0.24\linewidth]{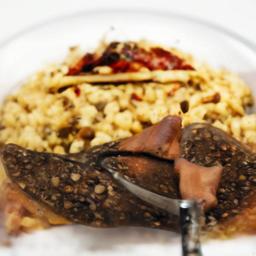}\,%
        \includegraphics[width=0.24\linewidth]{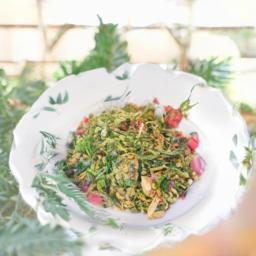}\,%
        \includegraphics[width=0.24\linewidth]{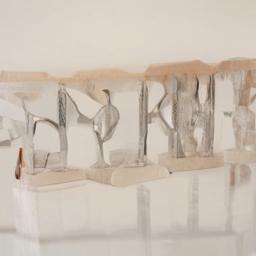}\\
        \includegraphics[width=0.24\linewidth]{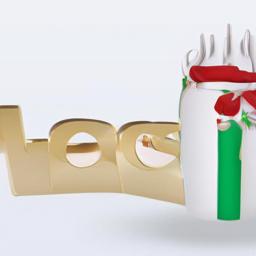}\,%
        \includegraphics[width=0.24\linewidth]{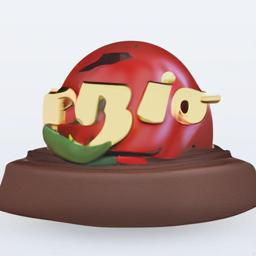}\,%
        \includegraphics[width=0.24\linewidth]{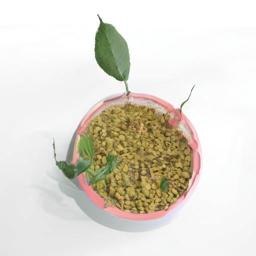}\,%
        \includegraphics[width=0.24\linewidth]{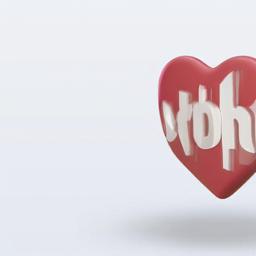}\\
        \includegraphics[width=0.24\linewidth]{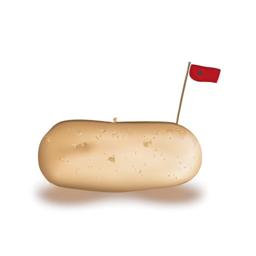}\,%
        \includegraphics[width=0.24\linewidth]{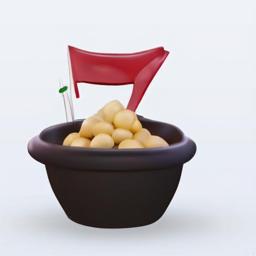}\,%
        \includegraphics[width=0.24\linewidth]{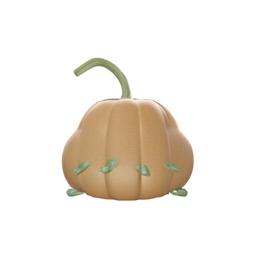}\,%
        \includegraphics[width=0.24\linewidth]{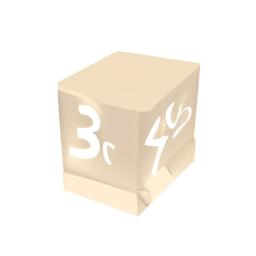}
        \caption*{Text prompt: ``\textit{3D World Food Day Morocco}''}
    \end{subfigure}
    \caption{
    Generated samples from varying classifier-free guidance weights, from the pre-trained Flow Matching model. 
    Corresponding samples from the fine-tuned model can be found in \Cref{fig:ablation_tradeoff_cfg}. 
    }
    \label{fig:ablation_tradeoff_cfg_base}
\end{figure}

\begin{figure}[h!]
    \centering
    \begin{subfigure}[t]{0.32\linewidth}
    \centering
    \caption*{Base Flow Matching model}
	\includegraphics[width=0.320\linewidth]{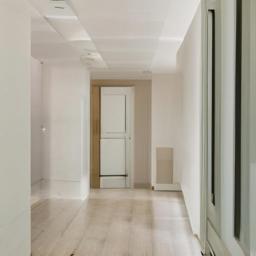}\;%
	\includegraphics[width=0.320\linewidth]{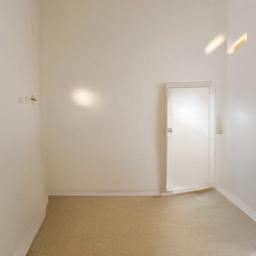}\;%
	\includegraphics[width=0.320\linewidth]{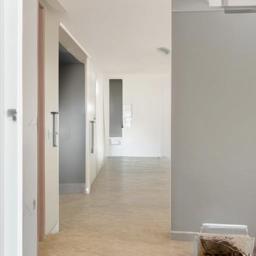}\\ 
	\includegraphics[width=0.320\linewidth]{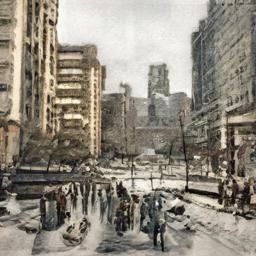}\;%
	\includegraphics[width=0.320\linewidth]{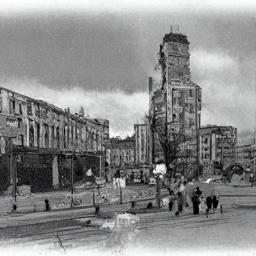}\;%
	\includegraphics[width=0.320\linewidth]{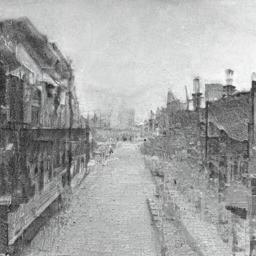}\\ 
	\includegraphics[width=0.320\linewidth]{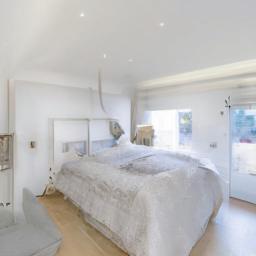}\;%
	\includegraphics[width=0.320\linewidth]{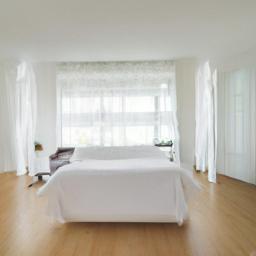}\;%
	\includegraphics[width=0.320\linewidth]{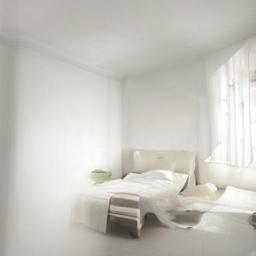}\\ 
	\includegraphics[width=0.320\linewidth]{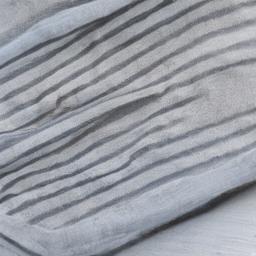}\;%
	\includegraphics[width=0.320\linewidth]{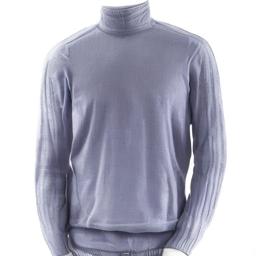}\;%
	\includegraphics[width=0.320\linewidth]{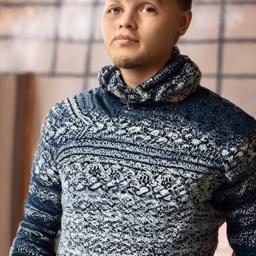}\\ 
	\includegraphics[width=0.320\linewidth]{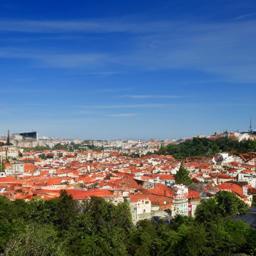}\;%
	\includegraphics[width=0.320\linewidth]{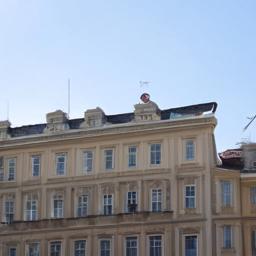}\;%
	\includegraphics[width=0.320\linewidth]{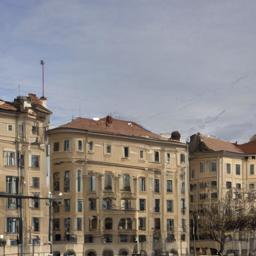}\\ 
	\includegraphics[width=0.320\linewidth]{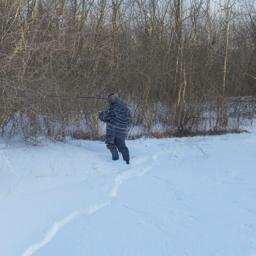}\;%
	\includegraphics[width=0.320\linewidth]{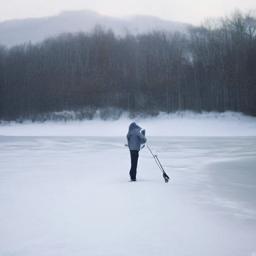}\;%
	\includegraphics[width=0.320\linewidth]{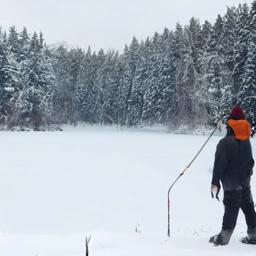}\\ 
	\includegraphics[width=0.320\linewidth]{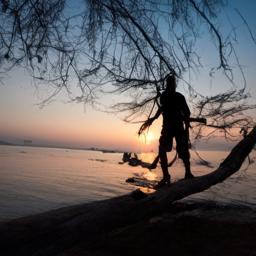}\;%
	\includegraphics[width=0.320\linewidth]{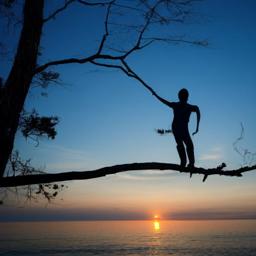}\;%
	\includegraphics[width=0.320\linewidth]{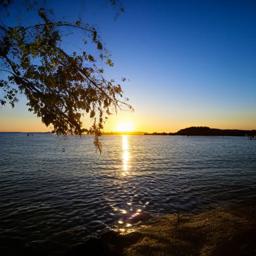}\\ 
	\includegraphics[width=0.320\linewidth]{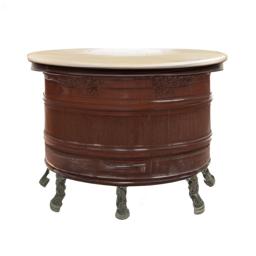}\;%
	\includegraphics[width=0.320\linewidth]{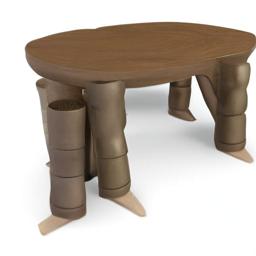}\;%
	\includegraphics[width=0.320\linewidth]{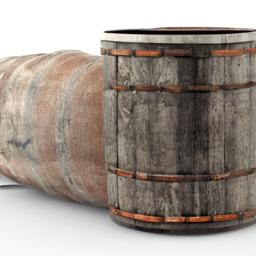}\\ 
	\includegraphics[width=0.320\linewidth]{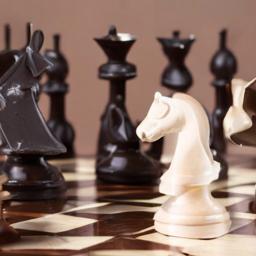}\;%
	\includegraphics[width=0.320\linewidth]{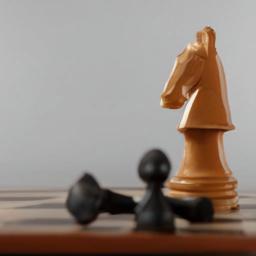}\;%
	\includegraphics[width=0.320\linewidth]{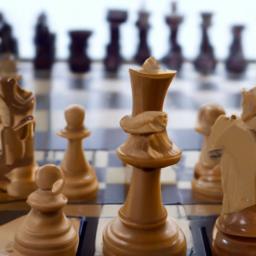}\\ 
	\includegraphics[width=0.320\linewidth]{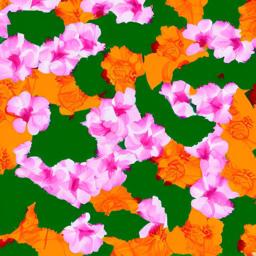}\;%
	\includegraphics[width=0.320\linewidth]{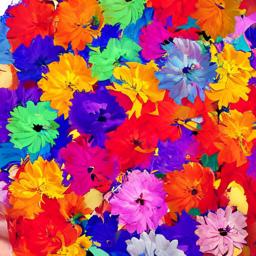}\;%
	\includegraphics[width=0.320\linewidth]{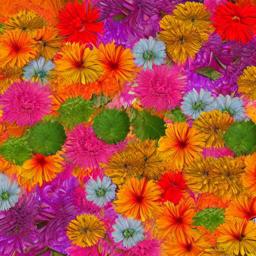}
    \end{subfigure}\hfill
    \begin{subfigure}[t]{0.32\linewidth}
    \centering
    \caption*{Adjoint Matching (Ours)}
	\includegraphics[width=0.320\linewidth]{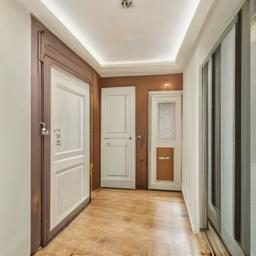}\;%
	\includegraphics[width=0.320\linewidth]{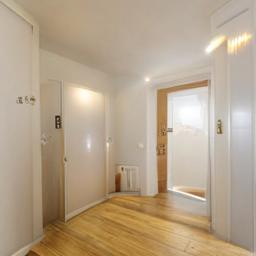}\;%
	\includegraphics[width=0.320\linewidth]{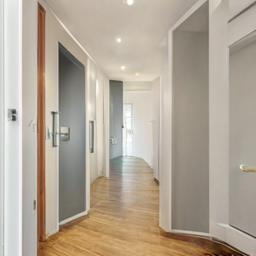}\\ 
	\includegraphics[width=0.320\linewidth]{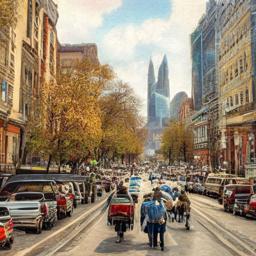}\;%
	\includegraphics[width=0.320\linewidth]{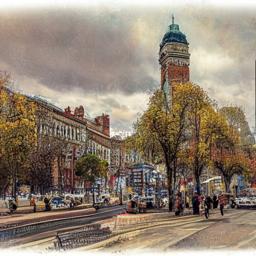}\;%
	\includegraphics[width=0.320\linewidth]{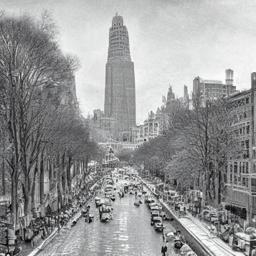}\\ 
	\includegraphics[width=0.320\linewidth]{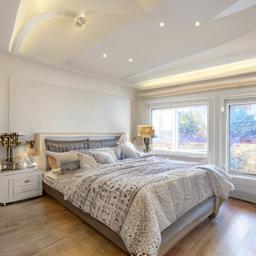}\;%
	\includegraphics[width=0.320\linewidth]{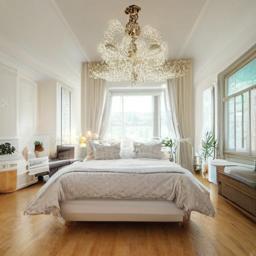}\;%
	\includegraphics[width=0.320\linewidth]{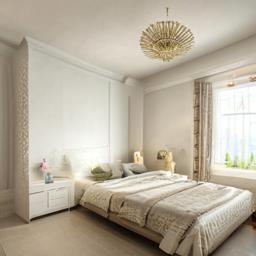}\\ 
	\includegraphics[width=0.320\linewidth]{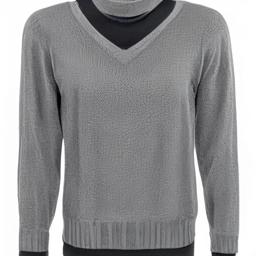}\;%
	\includegraphics[width=0.320\linewidth]{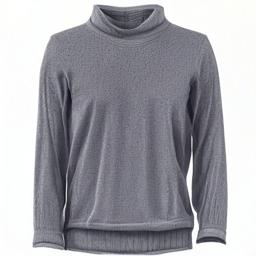}\;%
	\includegraphics[width=0.320\linewidth]{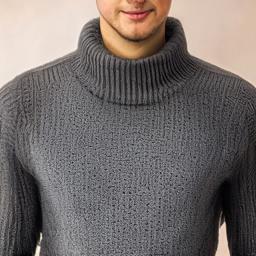}\\ 
	\includegraphics[width=0.320\linewidth]{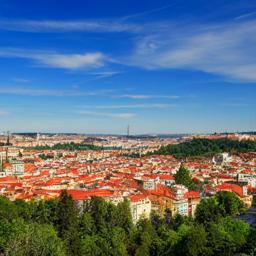}\;%
	\includegraphics[width=0.320\linewidth]{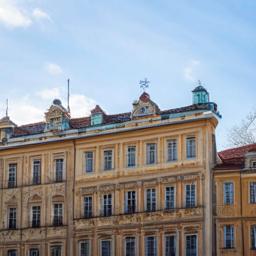}\;%
	\includegraphics[width=0.320\linewidth]{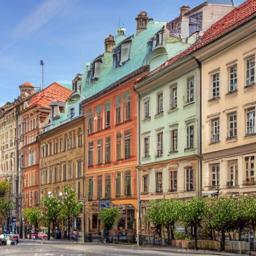}\\ 
	\includegraphics[width=0.320\linewidth]{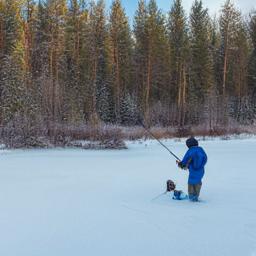}\;%
	\includegraphics[width=0.320\linewidth]{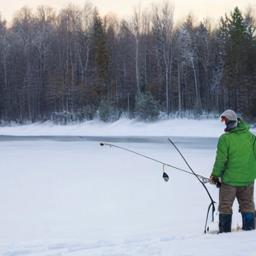}\;%
	\includegraphics[width=0.320\linewidth]{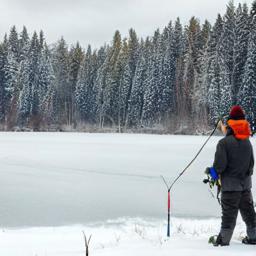}\\ 
	\includegraphics[width=0.320\linewidth]{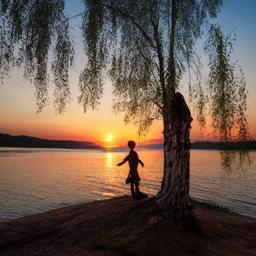}\;%
	\includegraphics[width=0.320\linewidth]{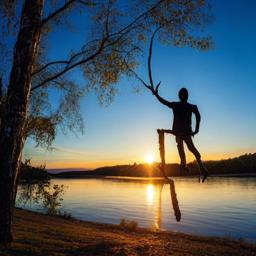}\;%
	\includegraphics[width=0.320\linewidth]{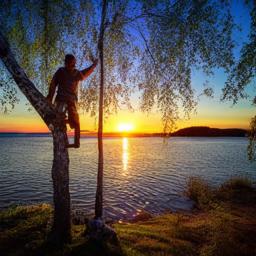}\\ 
	\includegraphics[width=0.320\linewidth]{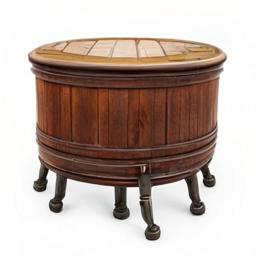}\;%
	\includegraphics[width=0.320\linewidth]{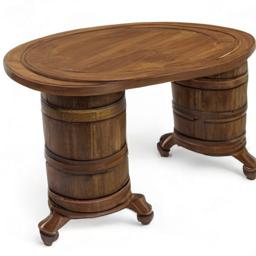}\;%
	\includegraphics[width=0.320\linewidth]{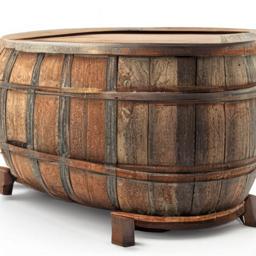}\\ 
	\includegraphics[width=0.320\linewidth]{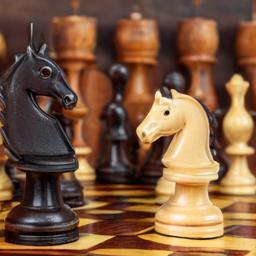}\;%
	\includegraphics[width=0.320\linewidth]{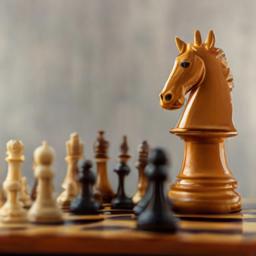}\;%
	\includegraphics[width=0.320\linewidth]{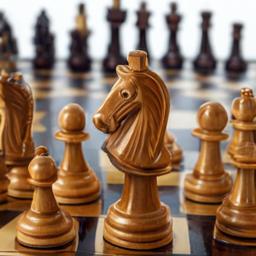}\\ 
	\includegraphics[width=0.320\linewidth]{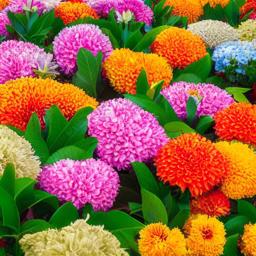}\;%
	\includegraphics[width=0.320\linewidth]{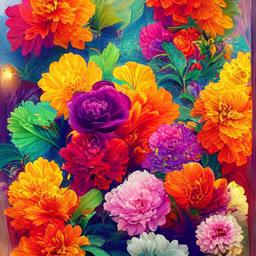}\;%
	\includegraphics[width=0.320\linewidth]{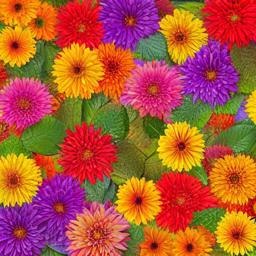}
    \end{subfigure}\hfill
    \begin{subfigure}[t]{0.32\linewidth}
    \centering
    \caption*{DRaFT-1}
	\includegraphics[width=0.320\linewidth]{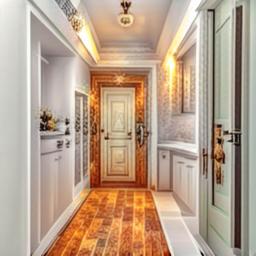}\;%
	\includegraphics[width=0.320\linewidth]{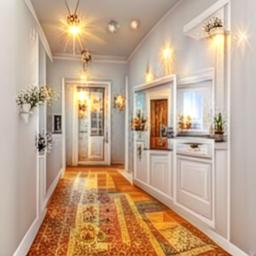}\;%
	\includegraphics[width=0.320\linewidth]{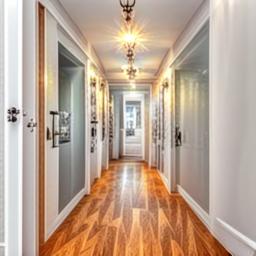}\\ 
	\includegraphics[width=0.320\linewidth]{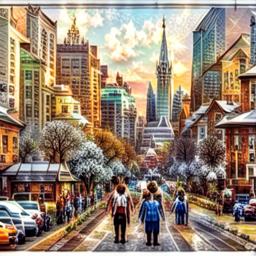}\;%
	\includegraphics[width=0.320\linewidth]{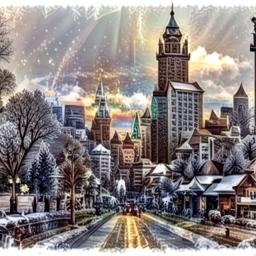}\;%
	\includegraphics[width=0.320\linewidth]{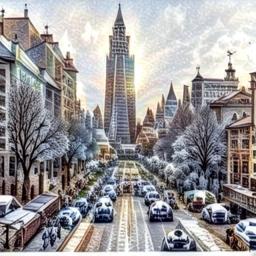}\\ 
	\includegraphics[width=0.320\linewidth]{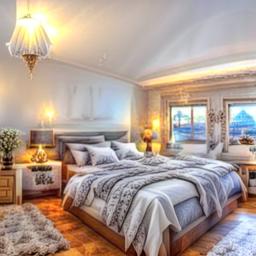}\;%
	\includegraphics[width=0.320\linewidth]{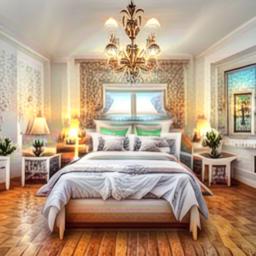}\;%
	\includegraphics[width=0.320\linewidth]{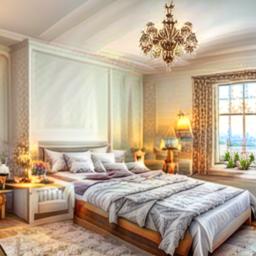}\\ 
	\includegraphics[width=0.320\linewidth]{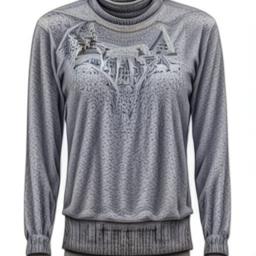}\;%
	\includegraphics[width=0.320\linewidth]{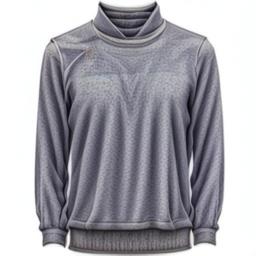}\;%
	\includegraphics[width=0.320\linewidth]{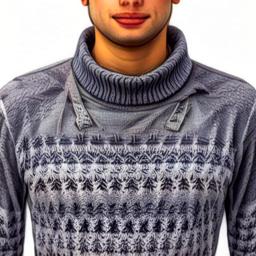}\\ 
	\includegraphics[width=0.320\linewidth]{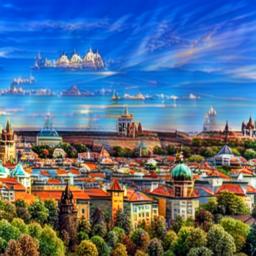}\;%
	\includegraphics[width=0.320\linewidth]{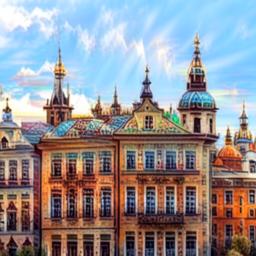}\;%
	\includegraphics[width=0.320\linewidth]{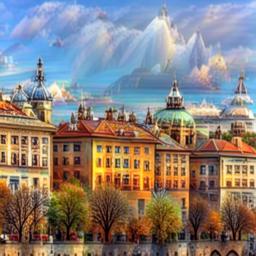}\\ 
	\includegraphics[width=0.320\linewidth]{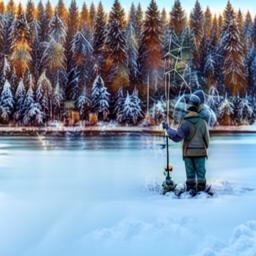}\;%
	\includegraphics[width=0.320\linewidth]{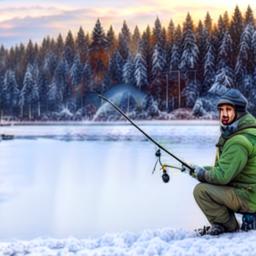}\;%
	\includegraphics[width=0.320\linewidth]{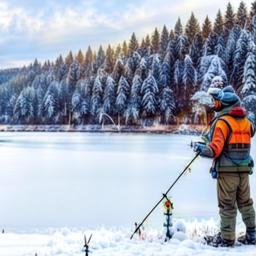}\\ 
	\includegraphics[width=0.320\linewidth]{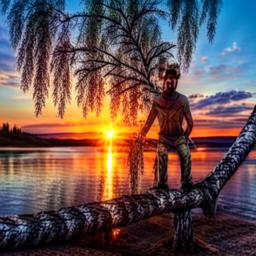}\;%
	\includegraphics[width=0.320\linewidth]{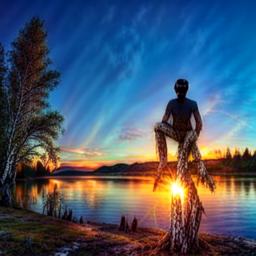}\;%
	\includegraphics[width=0.320\linewidth]{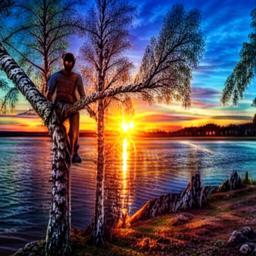}\\ 
	\includegraphics[width=0.320\linewidth]{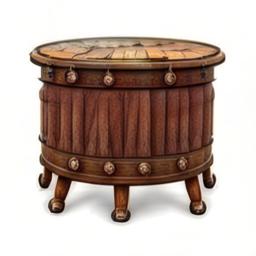}\;%
	\includegraphics[width=0.320\linewidth]{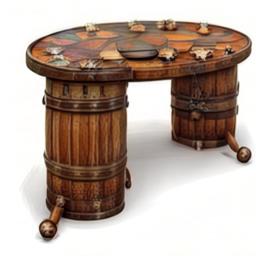}\;%
	\includegraphics[width=0.320\linewidth]{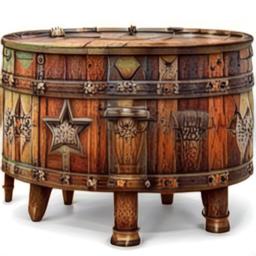}\\ 
	\includegraphics[width=0.320\linewidth]{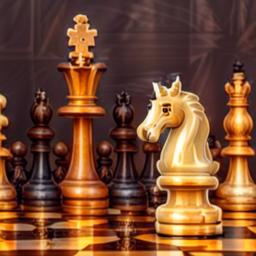}\;%
	\includegraphics[width=0.320\linewidth]{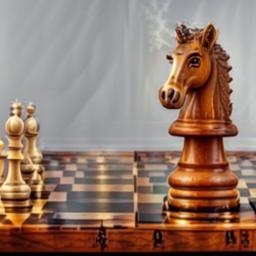}\;%
	\includegraphics[width=0.320\linewidth]{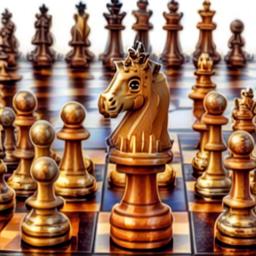}\\ 
	\includegraphics[width=0.320\linewidth]{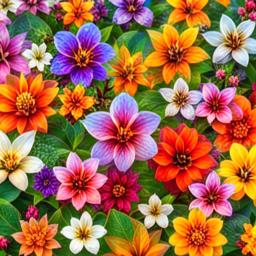}\;%
	\includegraphics[width=0.320\linewidth]{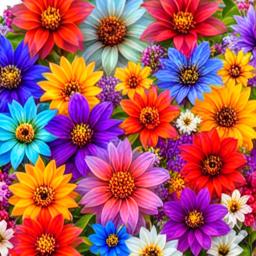}\;%
	\includegraphics[width=0.320\linewidth]{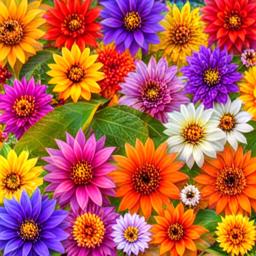}
    \end{subfigure}
    \caption{
    Generated samples with classifier-free guidance ($w=1$) and $\sigma(t)=0$ across ten selected prompts.  Each row corresponds to a different prompt and each image corresponds to a different random seed consistent across models.
    }
    \label{fig:image_comparison_v2}
\end{figure}

\begin{figure}[h!]
\centering
    \begin{subfigure}[t]{0.32\linewidth}
    \centering
    \caption*{Base Flow Matching model}
    	\includegraphics[width=0.32\linewidth]{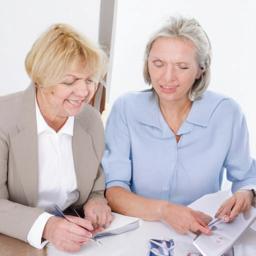}\;%
    	\includegraphics[width=0.32\linewidth]{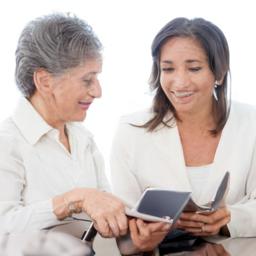}\;%
    	\includegraphics[width=0.32\linewidth]{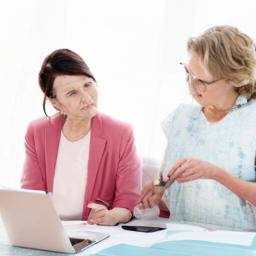}\\ 
    	\includegraphics[width=0.32\linewidth]{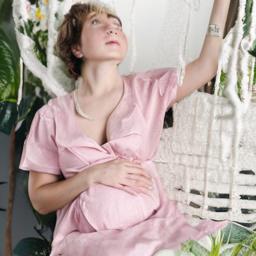}\;%
    	\includegraphics[width=0.32\linewidth]{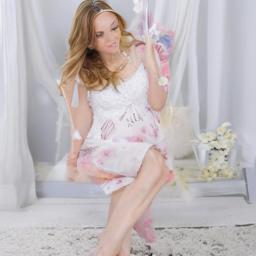}\;%
    	\includegraphics[width=0.32\linewidth]{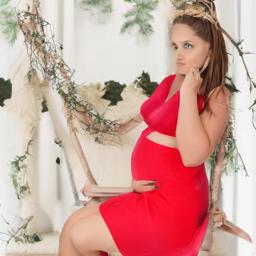}\\ 
    	\includegraphics[width=0.32\linewidth]{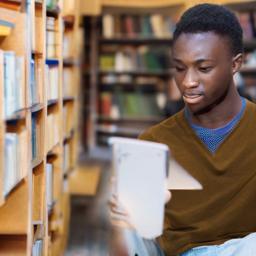}\;%
    	\includegraphics[width=0.32\linewidth]{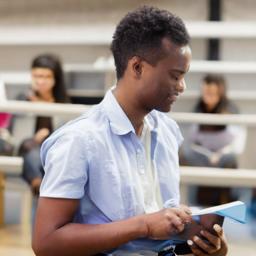}\;%
    	\includegraphics[width=0.32\linewidth]{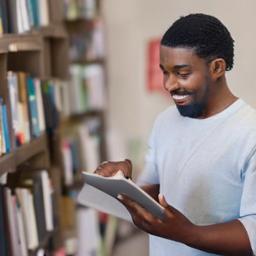}\\ 
    	\includegraphics[width=0.32\linewidth]{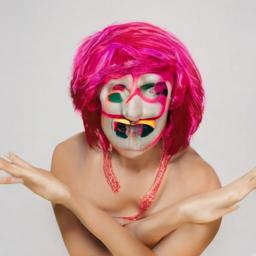}\;%
    	\includegraphics[width=0.32\linewidth]{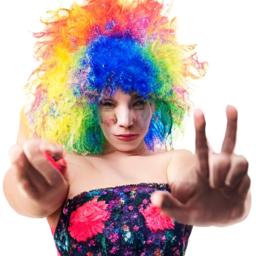}\;%
    	\includegraphics[width=0.32\linewidth]{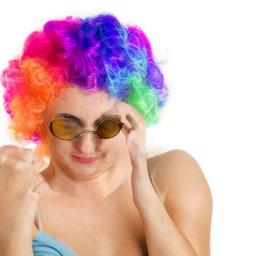}\\ 
    	\includegraphics[width=0.32\linewidth]{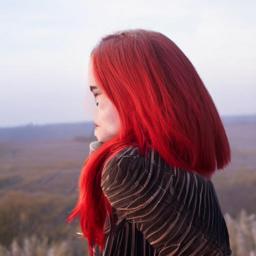}\;%
    	\includegraphics[width=0.32\linewidth]{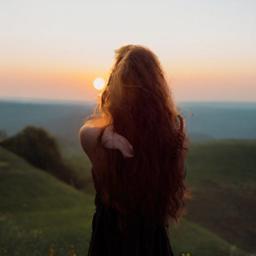}\;%
    	\includegraphics[width=0.32\linewidth]{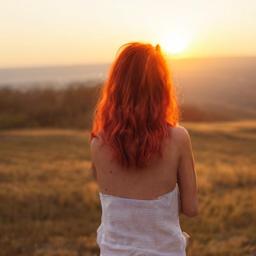}\\ 
    	\includegraphics[width=0.32\linewidth]{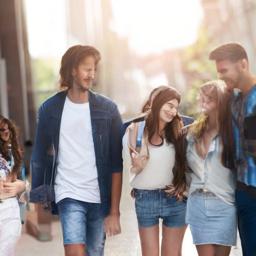}\;%
    	\includegraphics[width=0.32\linewidth]{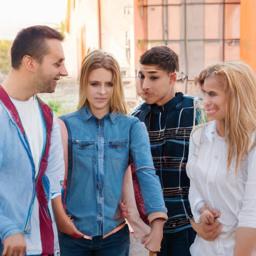}\;%
    	\includegraphics[width=0.32\linewidth]{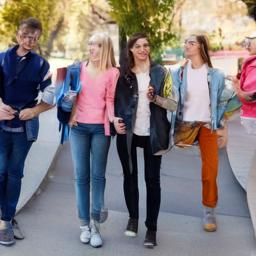}\\ 
    	\includegraphics[width=0.32\linewidth]{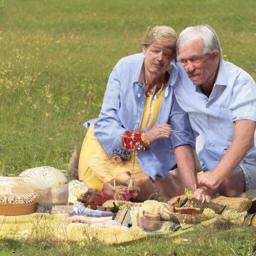}\;%
    	\includegraphics[width=0.32\linewidth]{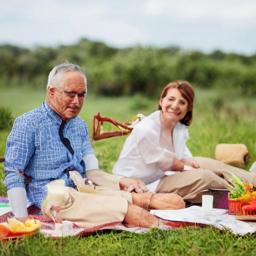}\;%
    	\includegraphics[width=0.32\linewidth]{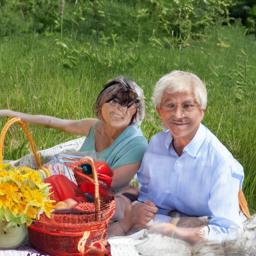}\\ 
    	\includegraphics[width=0.32\linewidth]{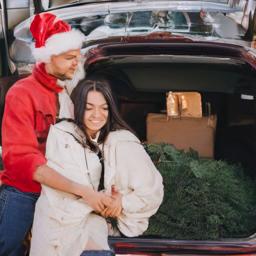}\;%
    	\includegraphics[width=0.32\linewidth]{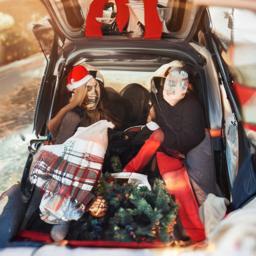}\;%
    	\includegraphics[width=0.32\linewidth]{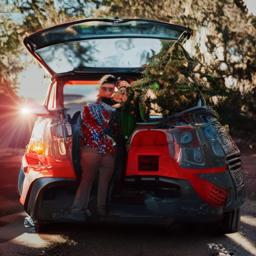}\\ 
    	\includegraphics[width=0.32\linewidth]{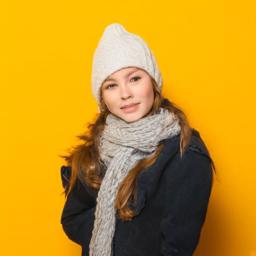}\;%
    	\includegraphics[width=0.32\linewidth]{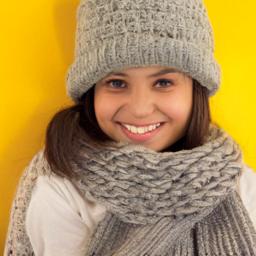}\;%
    	\includegraphics[width=0.32\linewidth]{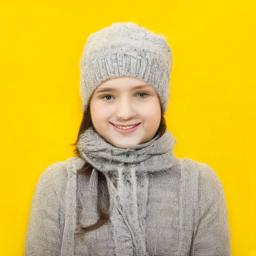}\\ 
    	\includegraphics[width=0.32\linewidth]{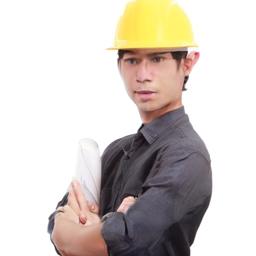}\;%
    	\includegraphics[width=0.32\linewidth]{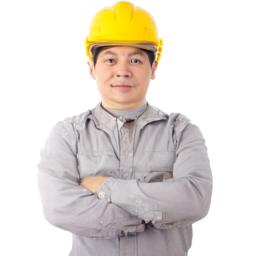}\;%
    	\includegraphics[width=0.32\linewidth]{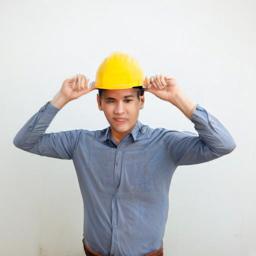}
    \end{subfigure}\hfill
    \begin{subfigure}[t]{0.32\linewidth}
    \centering
    \caption*{Adjoint Matching (Ours)}
    	\includegraphics[width=0.32\linewidth]{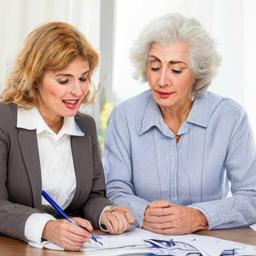}\;%
    	\includegraphics[width=0.32\linewidth]{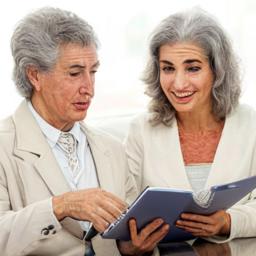}\;%
    	\includegraphics[width=0.32\linewidth]{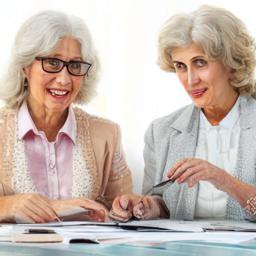}\\ 
    	\includegraphics[width=0.32\linewidth]{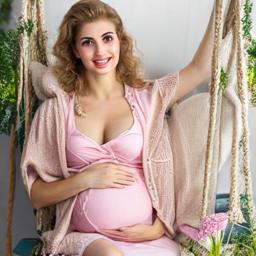}\;%
    	\includegraphics[width=0.32\linewidth]{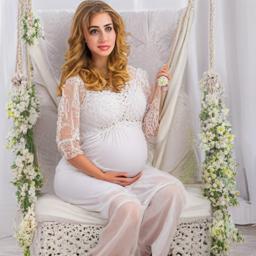}\;%
    	\includegraphics[width=0.32\linewidth]{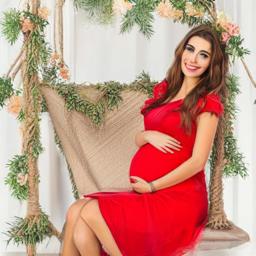}\\ 
    	\includegraphics[width=0.32\linewidth]{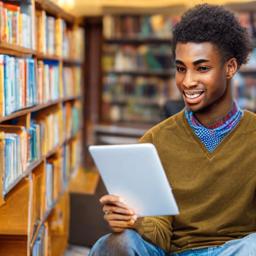}\;%
    	\includegraphics[width=0.32\linewidth]{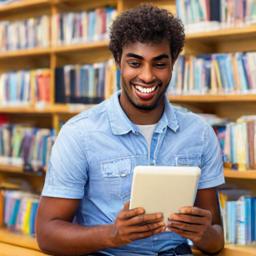}\;%
    	\includegraphics[width=0.32\linewidth]{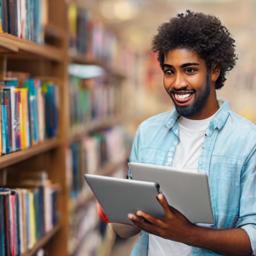}\\ 
    	\includegraphics[width=0.32\linewidth]{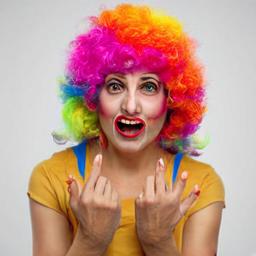}\;%
    	\includegraphics[width=0.32\linewidth]{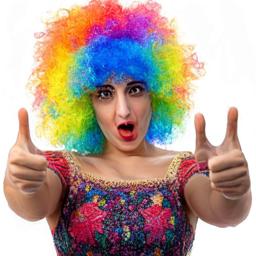}\;%
    	\includegraphics[width=0.32\linewidth]{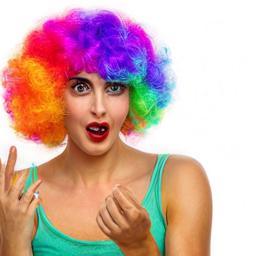}\\ 
    	\includegraphics[width=0.32\linewidth]{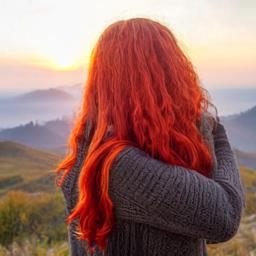}\;%
    	\includegraphics[width=0.32\linewidth]{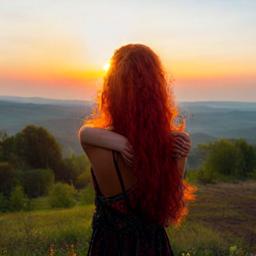}\;%
    	\includegraphics[width=0.32\linewidth]{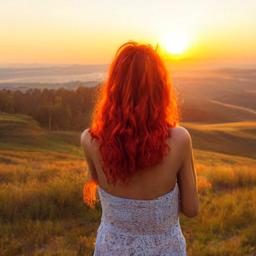}\\ 
    	\includegraphics[width=0.32\linewidth]{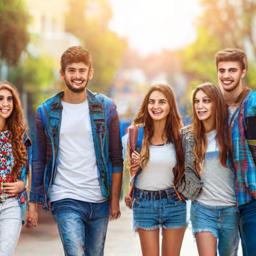}\;%
    	\includegraphics[width=0.32\linewidth]{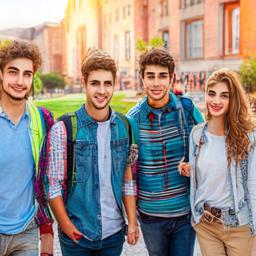}\;%
    	\includegraphics[width=0.32\linewidth]{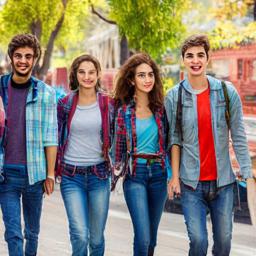}\\ 
    	\includegraphics[width=0.32\linewidth]{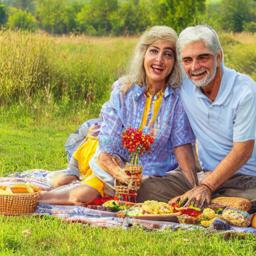}\;%
    	\includegraphics[width=0.32\linewidth]{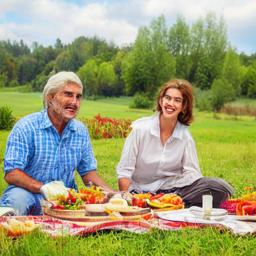}\;%
    	\includegraphics[width=0.32\linewidth]{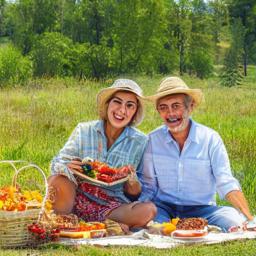}\\ 
    	\includegraphics[width=0.32\linewidth]{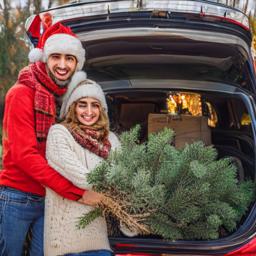}\;%
    	\includegraphics[width=0.32\linewidth]{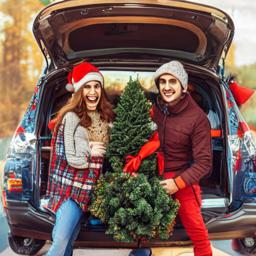}\;%
    	\includegraphics[width=0.32\linewidth]{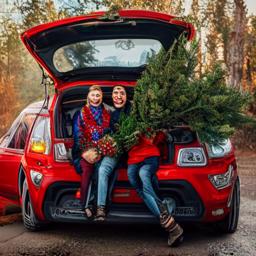}\\ 
    	\includegraphics[width=0.32\linewidth]{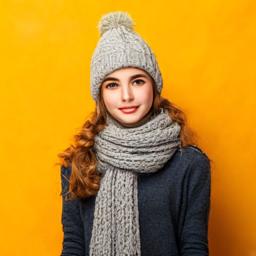}\;%
    	\includegraphics[width=0.32\linewidth]{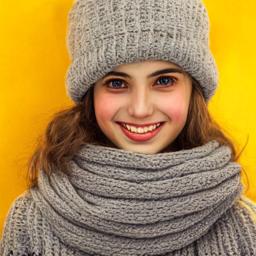}\;%
    	\includegraphics[width=0.32\linewidth]{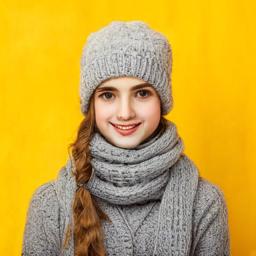}\\ 
    	\includegraphics[width=0.32\linewidth]{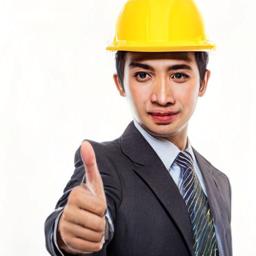}\;%
    	\includegraphics[width=0.32\linewidth]{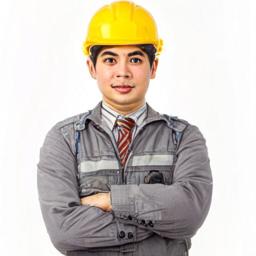}\;%
    	\includegraphics[width=0.32\linewidth]{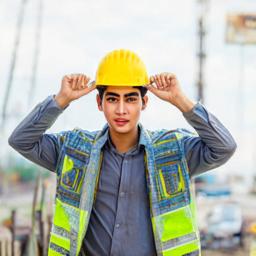}
    \end{subfigure}\hfill
    \begin{subfigure}[t]{0.32\linewidth}
    \centering
    \caption*{DRaFT-1}
    	\includegraphics[width=0.32\linewidth]{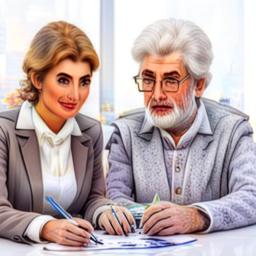}\;%
    	\includegraphics[width=0.32\linewidth]{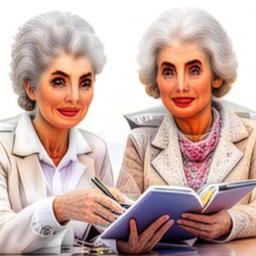}\;%
    	\includegraphics[width=0.32\linewidth]{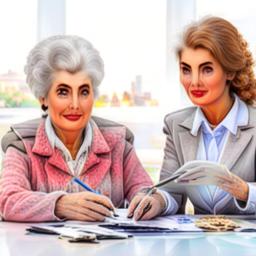}\\ 
    	\includegraphics[width=0.32\linewidth]{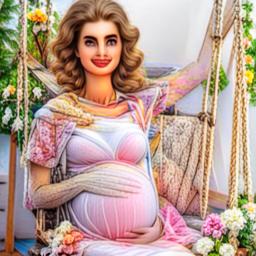}\;%
    	\includegraphics[width=0.32\linewidth]{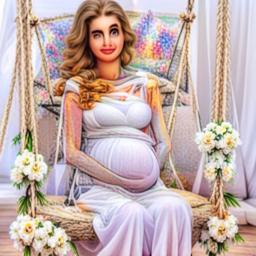}\;%
    	\includegraphics[width=0.32\linewidth]{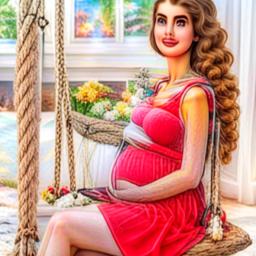}\\ 
    	\includegraphics[width=0.32\linewidth]{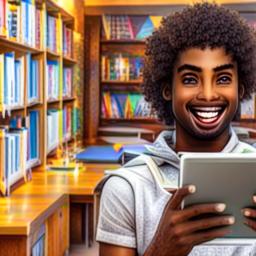}\;%
    	\includegraphics[width=0.32\linewidth]{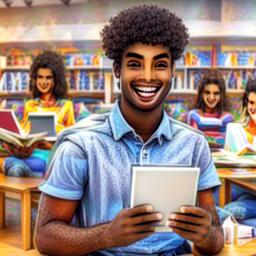}\;%
    	\includegraphics[width=0.32\linewidth]{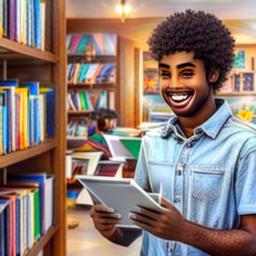}\\ 
    	\includegraphics[width=0.32\linewidth]{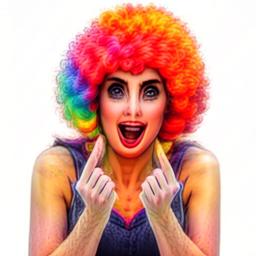}\;%
    	\includegraphics[width=0.32\linewidth]{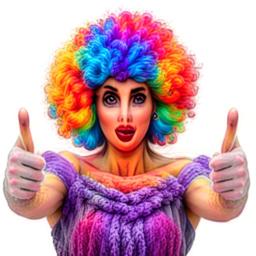}\;%
    	\includegraphics[width=0.32\linewidth]{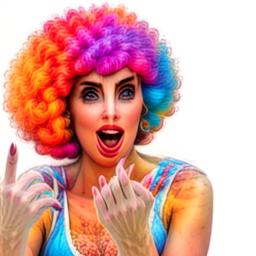}\\ 
    	\includegraphics[width=0.32\linewidth]{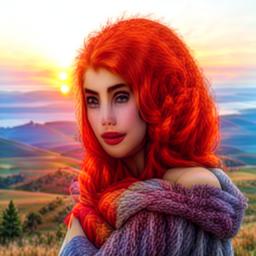}\;%
    	\includegraphics[width=0.32\linewidth]{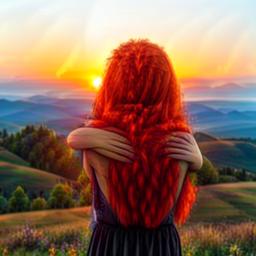}\;%
    	\includegraphics[width=0.32\linewidth]{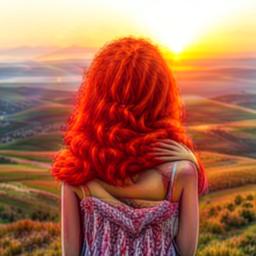}\\ 
    	\includegraphics[width=0.32\linewidth]{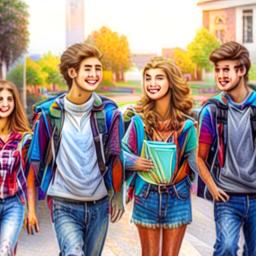}\;%
    	\includegraphics[width=0.32\linewidth]{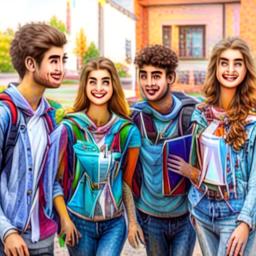}\;%
    	\includegraphics[width=0.32\linewidth]{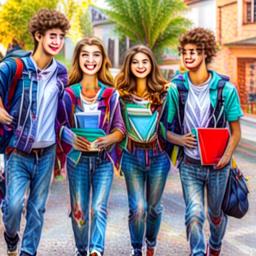}\\ 
    	\includegraphics[width=0.32\linewidth]{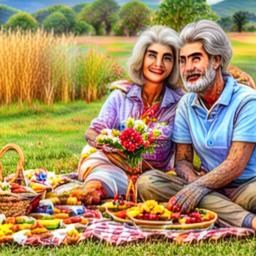}\;%
    	\includegraphics[width=0.32\linewidth]{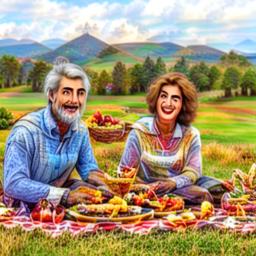}\;%
    	\includegraphics[width=0.32\linewidth]{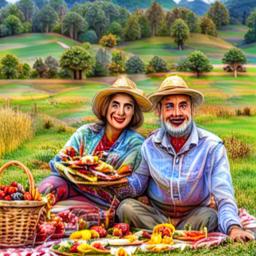}\\ 
    	\includegraphics[width=0.32\linewidth]{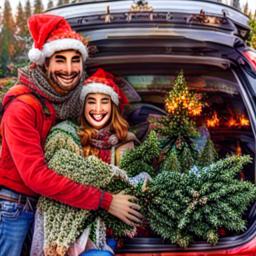}\;%
    	\includegraphics[width=0.32\linewidth]{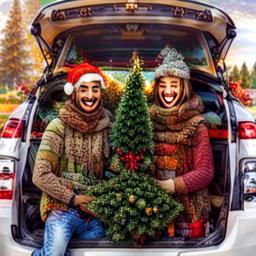}\;%
    	\includegraphics[width=0.32\linewidth]{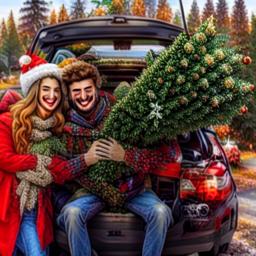}\\ 
    	\includegraphics[width=0.32\linewidth]{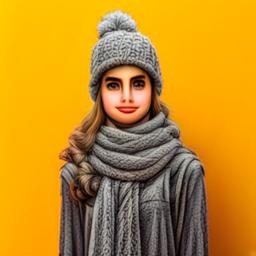}\;%
    	\includegraphics[width=0.32\linewidth]{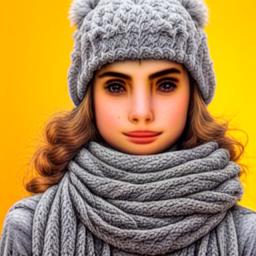}\;%
    	\includegraphics[width=0.32\linewidth]{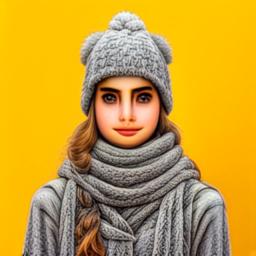}\\ 
    	\includegraphics[width=0.32\linewidth]{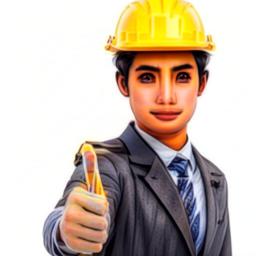}\;%
    	\includegraphics[width=0.32\linewidth]{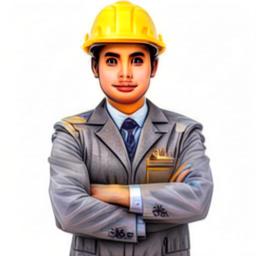}\;%
    	\includegraphics[width=0.32\linewidth]{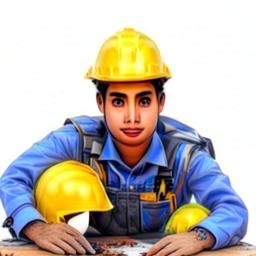}
    \end{subfigure}
    \caption{
    Generated samples with classifier-free guidance ($w=1$) and $\sigma(t) = 0$ across ten selected prompts with people.  Each row corresponds to a different prompt and each image corresponds to a different random seed consistent across models.
    }
    \label{fig:image_comparison_people}
\end{figure}

\begin{figure}[!htb]
    \centering
    \begin{tabular}{>{\centering\arraybackslash}m{0.3cm} m{0.96\linewidth}}
        \rotatebox{90}{\parbox{2cm}{\centering\footnotesize None (Base)}} &
        \includegraphics[width=0.135\linewidth]{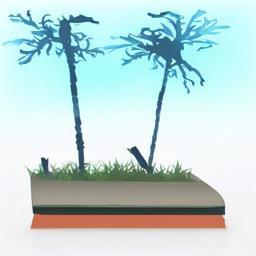}
    	\includegraphics[width=0.135\linewidth]{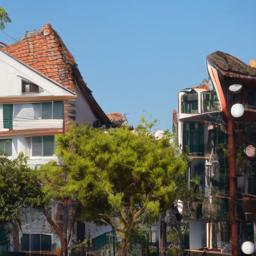}
    	\includegraphics[width=0.135\linewidth]{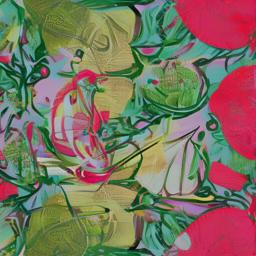}
    	\includegraphics[width=0.135\linewidth]{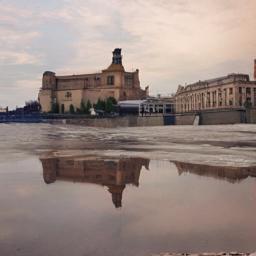}
    	\includegraphics[width=0.135\linewidth]{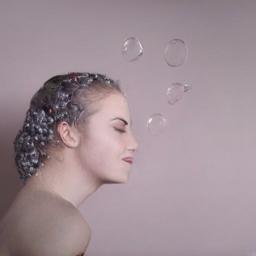}
    	\includegraphics[width=0.135\linewidth]{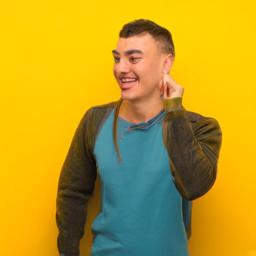}
    	\includegraphics[width=0.135\linewidth]{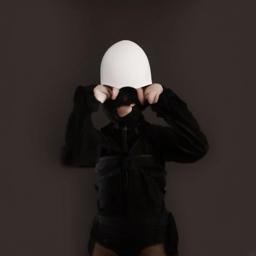} \\

        \rotatebox{90}{\parbox{2cm}{\centering\footnotesize DRaFT-1}} &
        \includegraphics[width=0.135\linewidth]{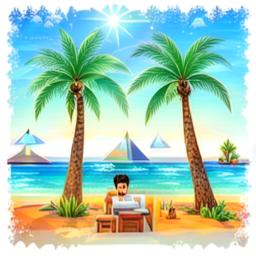}
    	\includegraphics[width=0.135\linewidth]{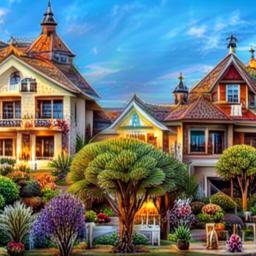}
    	\includegraphics[width=0.135\linewidth]{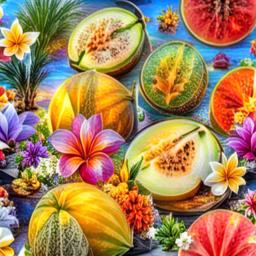}
    	\includegraphics[width=0.135\linewidth]{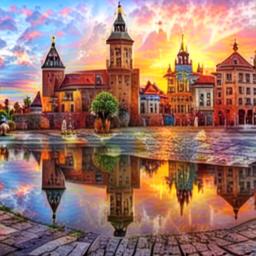}
    	\includegraphics[width=0.135\linewidth]{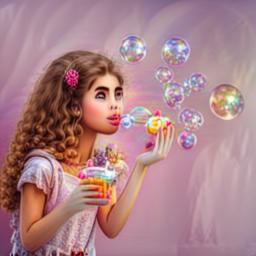}
    	\includegraphics[width=0.135\linewidth]{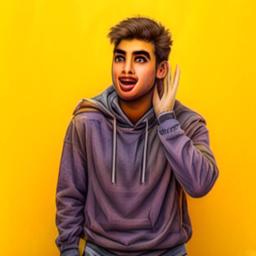}
    	\includegraphics[width=0.135\linewidth]{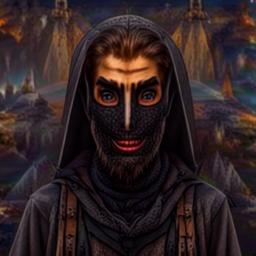} \\

        \rotatebox{90}{\parbox{2cm}{\centering\footnotesize DRaFT-40}} &
        \includegraphics[width=0.135\linewidth]{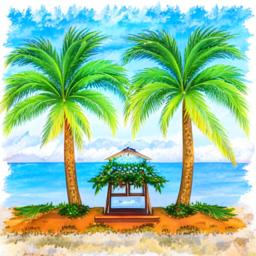}
    	\includegraphics[width=0.135\linewidth]{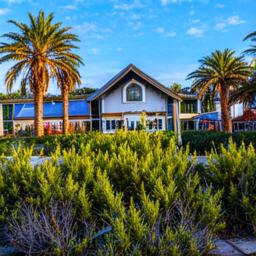}
    	\includegraphics[width=0.135\linewidth]{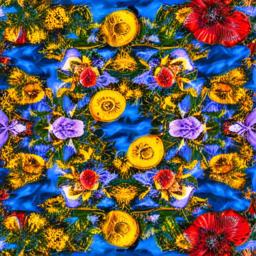}
    	\includegraphics[width=0.135\linewidth]{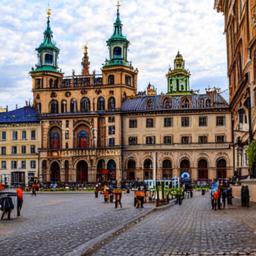}
    	\includegraphics[width=0.135\linewidth]{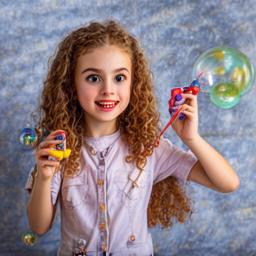}
    	\includegraphics[width=0.135\linewidth]{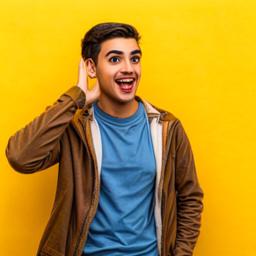}
    	\includegraphics[width=0.135\linewidth]{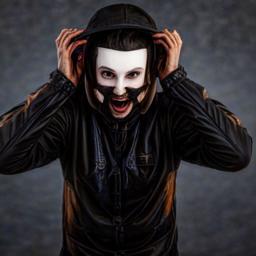} \\

        \rotatebox{90}{\parbox{2cm}{\centering\footnotesize ReFL}} &
        \includegraphics[width=0.135\linewidth]{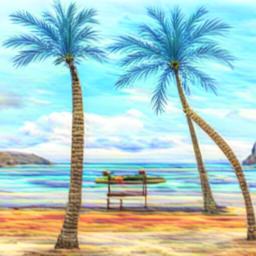}
    	\includegraphics[width=0.135\linewidth]{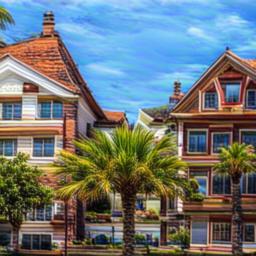}
    	\includegraphics[width=0.135\linewidth]{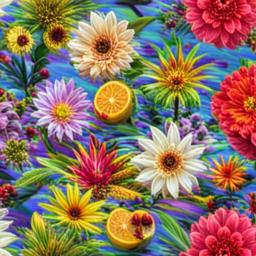}
    	\includegraphics[width=0.135\linewidth]{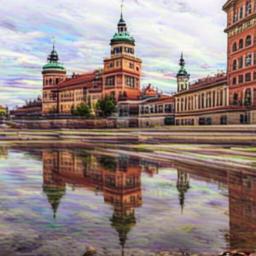}
    	\includegraphics[width=0.135\linewidth]{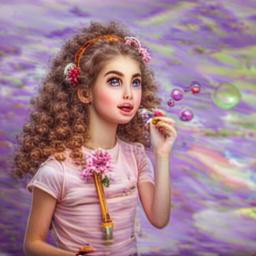}
    	\includegraphics[width=0.135\linewidth]{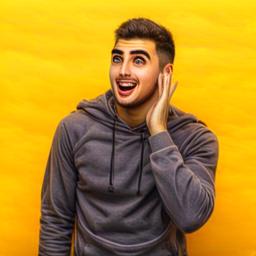}
    	\includegraphics[width=0.135\linewidth]{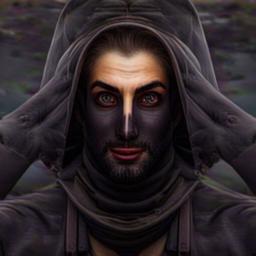} \\

        \rotatebox{90}{\parbox{2cm}{\centering\footnotesize Cont. Adj.\\$\lambda = 12500$}} &
        \includegraphics[width=0.135\linewidth]{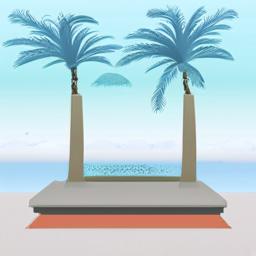}
    	\includegraphics[width=0.135\linewidth]{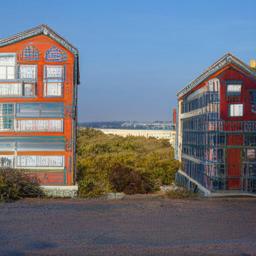}
    	\includegraphics[width=0.135\linewidth]{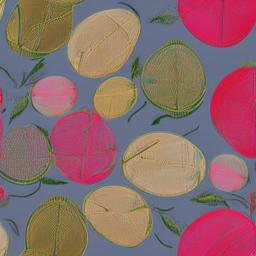}
    	\includegraphics[width=0.135\linewidth]{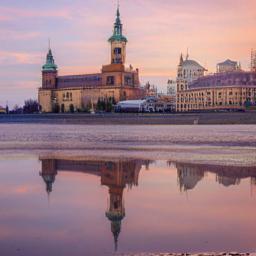}
    	\includegraphics[width=0.135\linewidth]{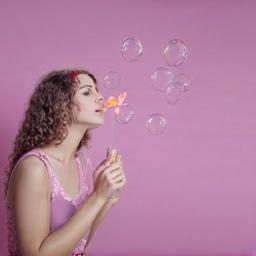}
    	\includegraphics[width=0.135\linewidth]{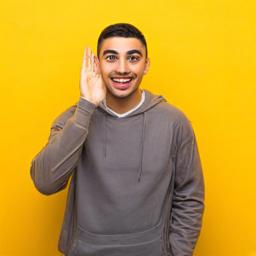}
    	\includegraphics[width=0.135\linewidth]{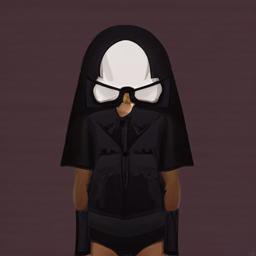} \\

        \rotatebox{90}{\parbox{2cm}{\centering\footnotesize Disc. Adj.\\$\lambda = 12500$}} &
        \includegraphics[width=0.135\linewidth]{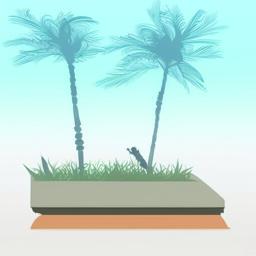}
    	\includegraphics[width=0.135\linewidth]{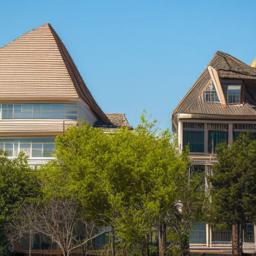}
    	\includegraphics[width=0.135\linewidth]{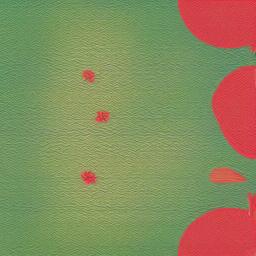}
    	\includegraphics[width=0.135\linewidth]{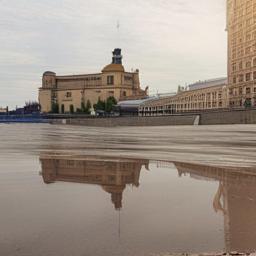}
    	\includegraphics[width=0.135\linewidth]{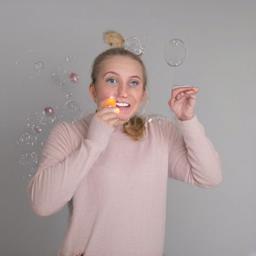}
    	\includegraphics[width=0.135\linewidth]{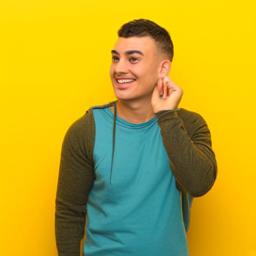}
    	\includegraphics[width=0.135\linewidth]{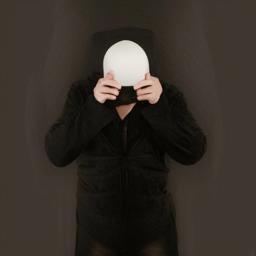} \\
     
        \rotatebox{90}{\parbox{2cm}{\centering\footnotesize Adj. match.\\$\lambda = 1000$}} &
        \includegraphics[width=0.135\linewidth]{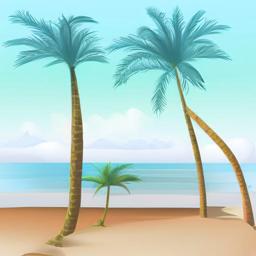}
        \includegraphics[width=0.135\linewidth]{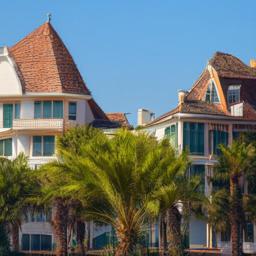}
        \includegraphics[width=0.135\linewidth]{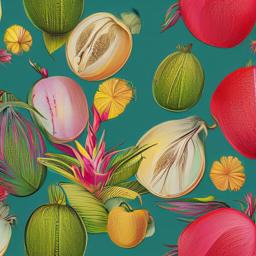}
        \includegraphics[width=0.135\linewidth]{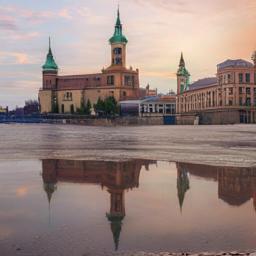}
        \includegraphics[width=0.135\linewidth]{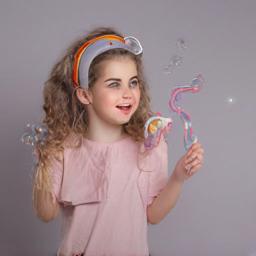}
        \includegraphics[width=0.135\linewidth]{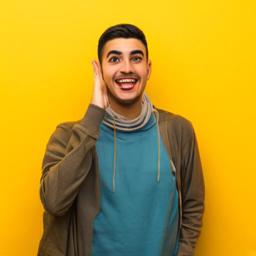}
        \includegraphics[width=0.135\linewidth]{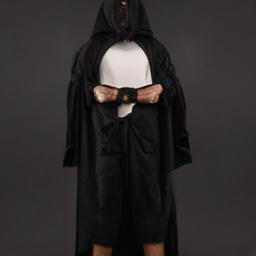} \\

        \rotatebox{90}{\parbox{2cm}{\centering\footnotesize Adj. match.\\$\lambda = 2500$}} &
        \includegraphics[width=0.135\linewidth]{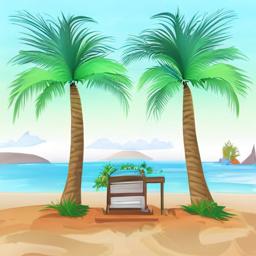}
    	\includegraphics[width=0.135\linewidth]{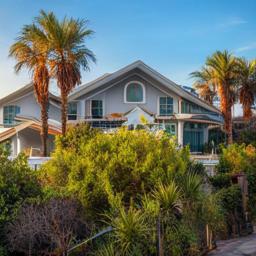}
    	\includegraphics[width=0.135\linewidth]{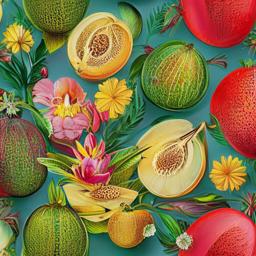}
    	\includegraphics[width=0.135\linewidth]{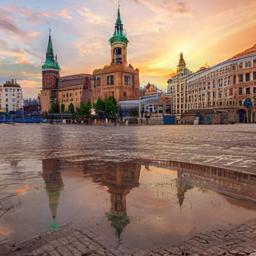}
    	\includegraphics[width=0.135\linewidth]{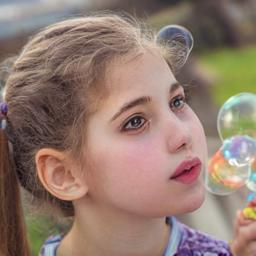}
    	\includegraphics[width=0.135\linewidth]{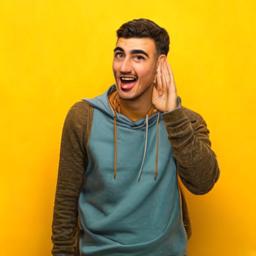}
    	\includegraphics[width=0.135\linewidth]{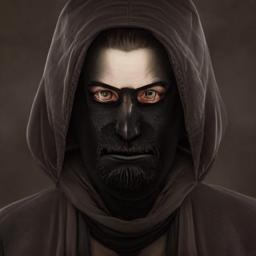} \\

        \rotatebox{90}{\parbox{2cm}{\centering\footnotesize Adj. match.\\$\lambda = 12500$}} &
        \includegraphics[width=0.135\linewidth]{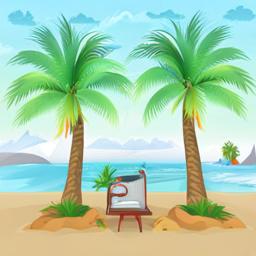}
    	\includegraphics[width=0.135\linewidth]{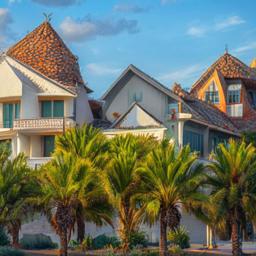}
    	\includegraphics[width=0.135\linewidth]{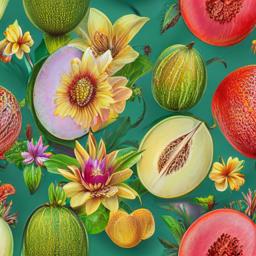}
    	\includegraphics[width=0.135\linewidth]{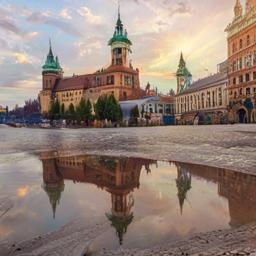}
    	\includegraphics[width=0.135\linewidth]{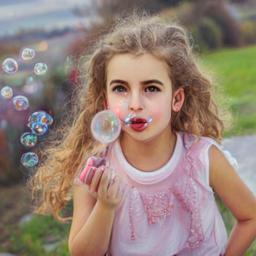}
    	\includegraphics[width=0.135\linewidth]{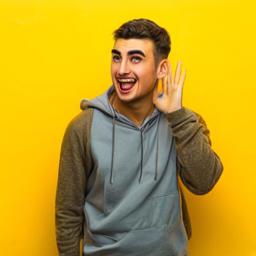}
    	\includegraphics[width=0.135\linewidth]{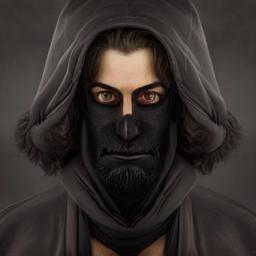} 
     \end{tabular}
     \caption{Generated samples without guidance ($w=0$) and $\sigma(t) = 0$ across seven selected prompts. Each row corresponds to a different finetuning algorithm. Prompts: ``\textit{Seaside view poster with palm trees vector image}'', ``\textit{Cayucos Beach Inn}'', ``\textit{Happy Summer Life- Aloha Flowers and Melon - Pattern Metal Print}'', ``\textit{Castle Square, Warsaw Old Town}'', ``\textit{Funny girl blowing soap bubbles. High quality photo}'', ``\textit{Colombian man with sweatshirt over yellow wall listening to something by putting hand on the ear}'', ``\textit{man in the hood black mask masquerade}''.}
    \label{fig:one_image_per_prompt_ode}
\end{figure}

\begin{figure}[!htb]
    \centering
    \begin{tabular}{>{\centering\arraybackslash}m{0.3cm} m{0.96\linewidth}}
        \rotatebox{90}{\parbox{2cm}{\centering\footnotesize None (Base)}} &
        \includegraphics[width=0.135\linewidth]{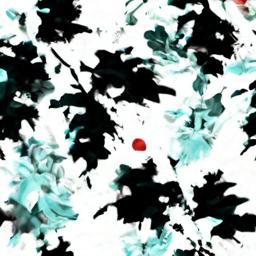}
    	\includegraphics[width=0.135\linewidth]{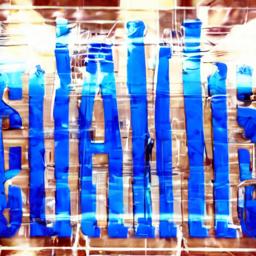}
    	\includegraphics[width=0.135\linewidth]{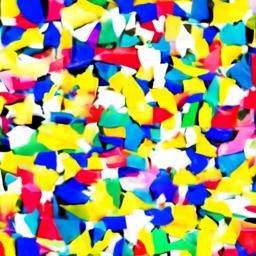}
    	\includegraphics[width=0.135\linewidth]{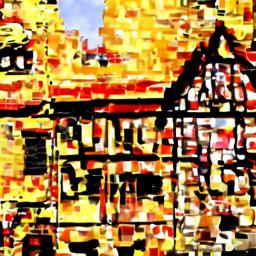}
    	\includegraphics[width=0.135\linewidth]{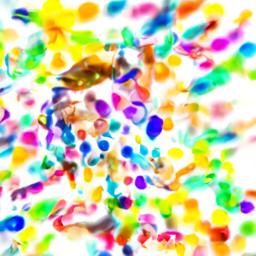}
    	\includegraphics[width=0.135\linewidth]{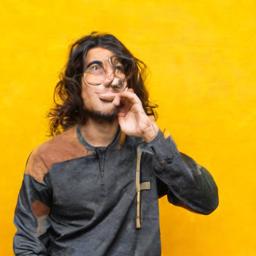}
    	\includegraphics[width=0.135\linewidth]{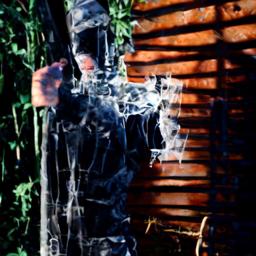} \\

        \rotatebox{90}{\parbox{2cm}{\centering\footnotesize DRaFT-1}} &
        \includegraphics[width=0.135\linewidth]{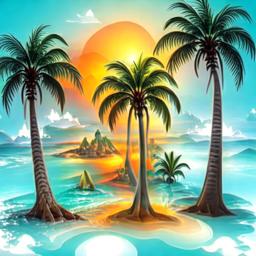}
    	\includegraphics[width=0.135\linewidth]{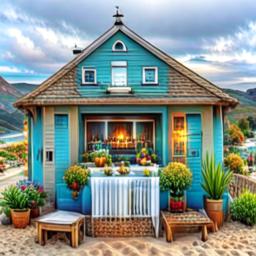}
    	\includegraphics[width=0.135\linewidth]{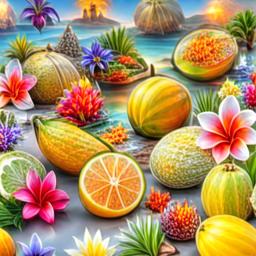}
    	\includegraphics[width=0.135\linewidth]{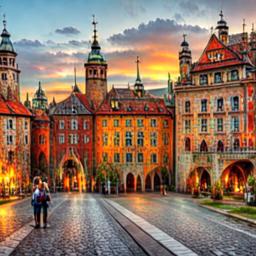}
    	\includegraphics[width=0.135\linewidth]{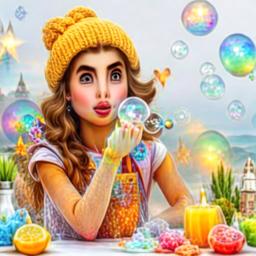}
    	\includegraphics[width=0.135\linewidth]{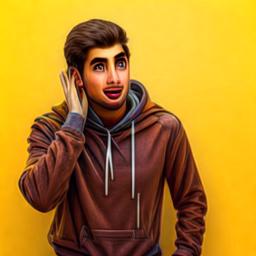}
    	\includegraphics[width=0.135\linewidth]{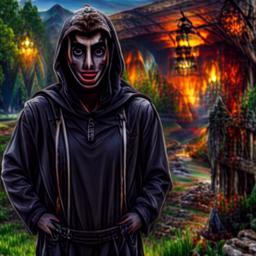} \\

        \rotatebox{90}{\parbox{2cm}{\centering\footnotesize DRaFT-40}} &
        \includegraphics[width=0.135\linewidth]{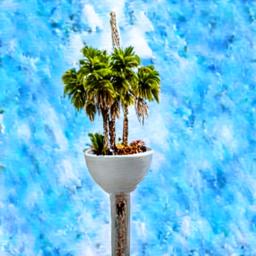}
    	\includegraphics[width=0.135\linewidth]{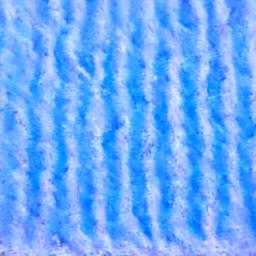}
    	\includegraphics[width=0.135\linewidth]{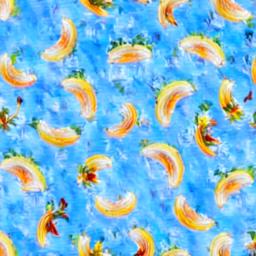}
    	\includegraphics[width=0.135\linewidth]{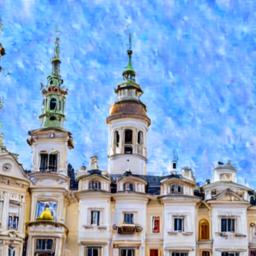}
    	\includegraphics[width=0.135\linewidth]{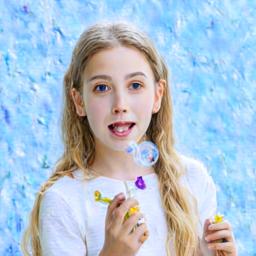}
    	\includegraphics[width=0.135\linewidth]{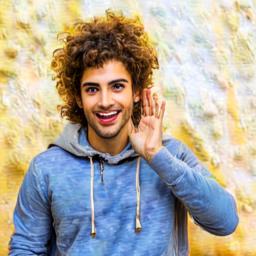}
    	\includegraphics[width=0.135\linewidth]{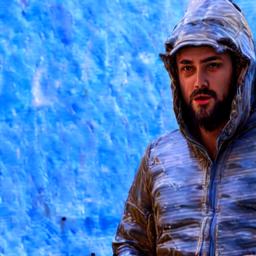}\\

        \rotatebox{90}{\parbox{2cm}{\centering\footnotesize ReFL}} &
        \includegraphics[width=0.135\linewidth]{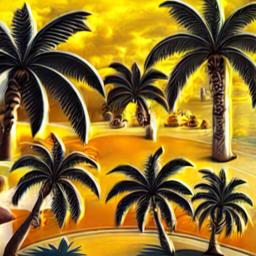}
    	\includegraphics[width=0.135\linewidth]{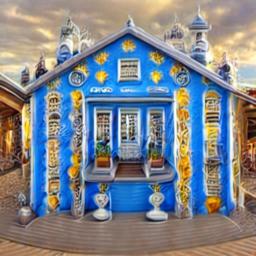}
    	\includegraphics[width=0.135\linewidth]{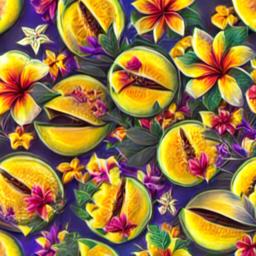}
    	\includegraphics[width=0.135\linewidth]{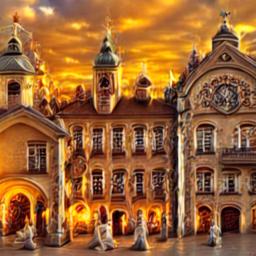}
    	\includegraphics[width=0.135\linewidth]{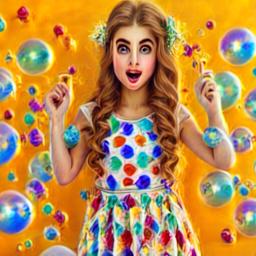}
    	\includegraphics[width=0.135\linewidth]{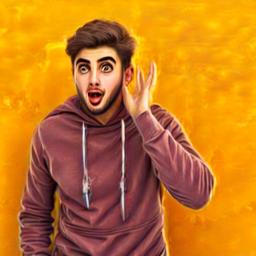}
    	\includegraphics[width=0.135\linewidth]{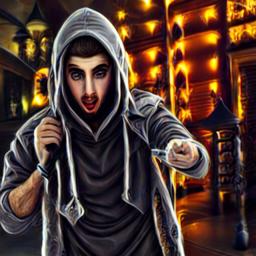} \\

        \rotatebox{90}{\parbox{2cm}{\centering\footnotesize Cont. Adj.\\$\lambda = 12500$}} &
        \includegraphics[width=0.135\linewidth]{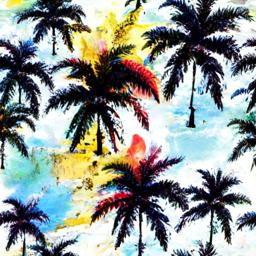}
    	\includegraphics[width=0.135\linewidth]{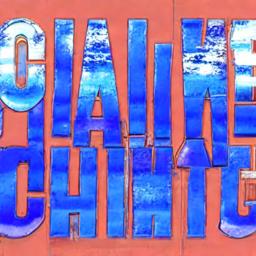}
    	\includegraphics[width=0.135\linewidth]{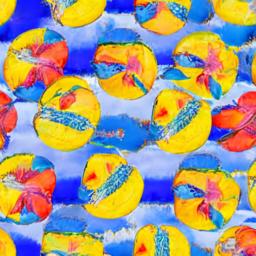}
    	\includegraphics[width=0.135\linewidth]{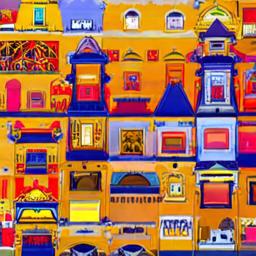}
    	\includegraphics[width=0.135\linewidth]{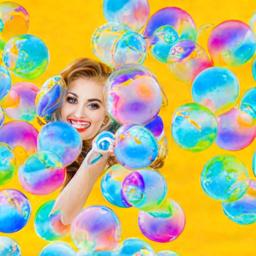}
    	\includegraphics[width=0.135\linewidth]{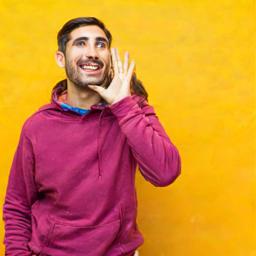}
    	\includegraphics[width=0.135\linewidth]{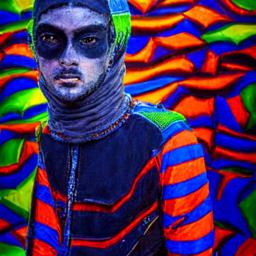} \\

        \rotatebox{90}{\parbox{2cm}{\centering\footnotesize Disc. Adj.\\$\lambda = 12500$}} &
        \includegraphics[width=0.135\linewidth]{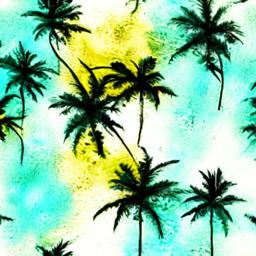}
    	\includegraphics[width=0.135\linewidth]{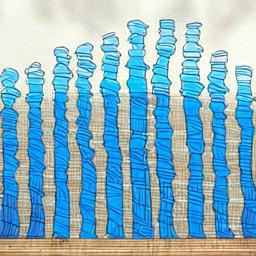}
    	\includegraphics[width=0.135\linewidth]{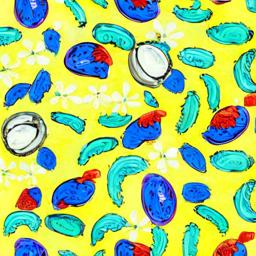}
    	\includegraphics[width=0.135\linewidth]{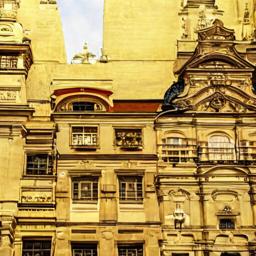}
    	\includegraphics[width=0.135\linewidth]{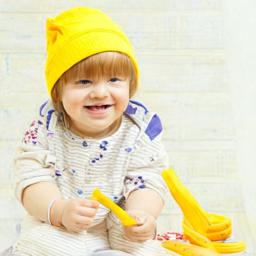}
    	\includegraphics[width=0.135\linewidth]{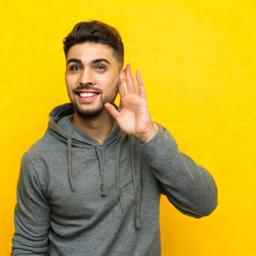}
    	\includegraphics[width=0.135\linewidth]{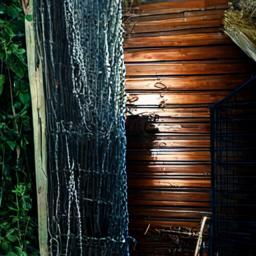} \\

        \rotatebox{90}{\parbox{2cm}{\centering\footnotesize Adj. match.\\$\lambda = 1000$}} &
        \includegraphics[width=0.135\linewidth]{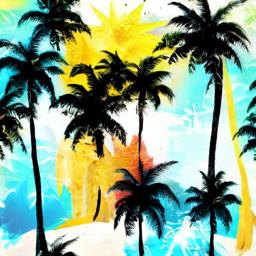}
        \includegraphics[width=0.135\linewidth]{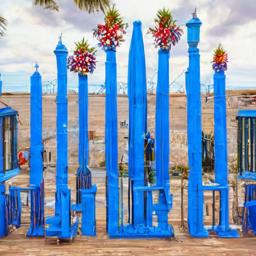}
        \includegraphics[width=0.135\linewidth]{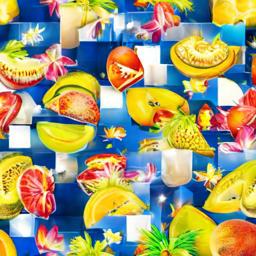}
        \includegraphics[width=0.135\linewidth]{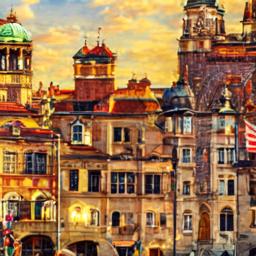}
        \includegraphics[width=0.135\linewidth]{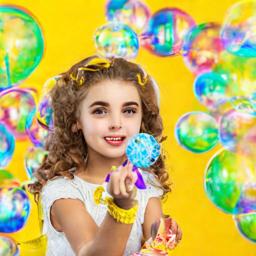}
        \includegraphics[width=0.135\linewidth]{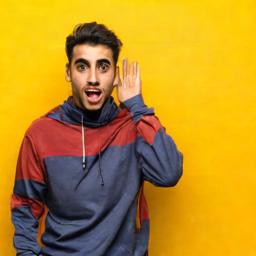}
        \includegraphics[width=0.135\linewidth]{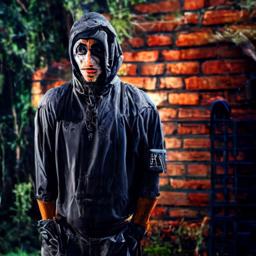} \\

        \rotatebox{90}{\parbox{2cm}{\centering\footnotesize Adj. match.\\$\lambda = 2500$}} &
        \includegraphics[width=0.135\linewidth]{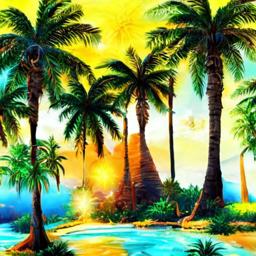}
    	\includegraphics[width=0.135\linewidth]{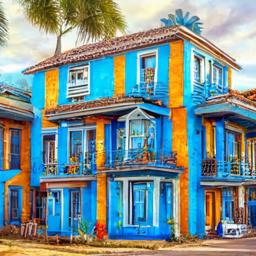}
    	\includegraphics[width=0.135\linewidth]{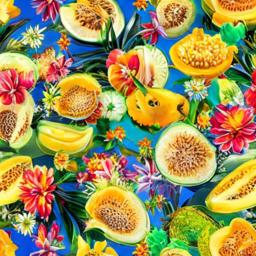}
    	\includegraphics[width=0.135\linewidth]{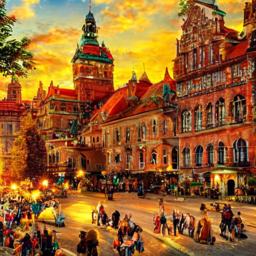}
    	\includegraphics[width=0.135\linewidth]{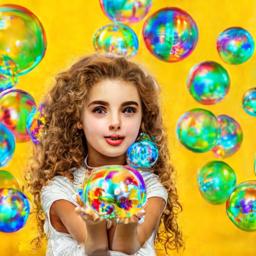}
    	\includegraphics[width=0.135\linewidth]{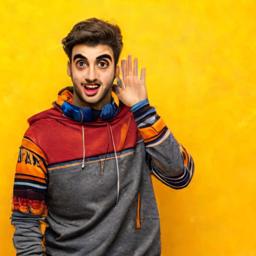}
    	\includegraphics[width=0.135\linewidth]{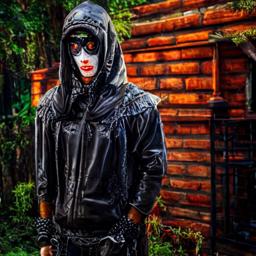} \\

        \rotatebox{90}{\parbox{2cm}{\centering\footnotesize Adj. match.\\$\lambda = 12500$}} &
        \includegraphics[width=0.135\linewidth]{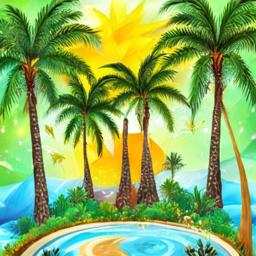}
    	\includegraphics[width=0.135\linewidth]{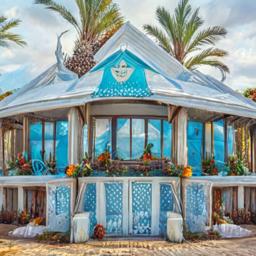}
    	\includegraphics[width=0.135\linewidth]{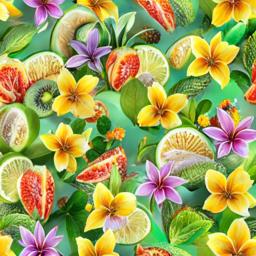}
    	\includegraphics[width=0.135\linewidth]{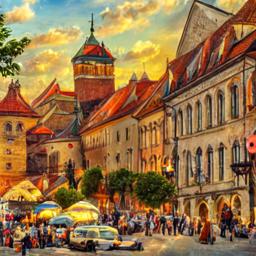}
    	\includegraphics[width=0.135\linewidth]{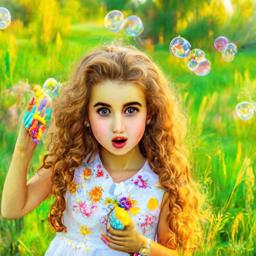}
    	\includegraphics[width=0.135\linewidth]{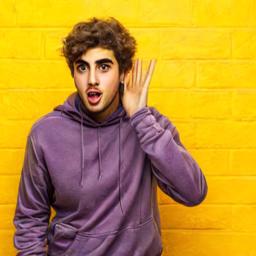}
    	\includegraphics[width=0.135\linewidth]{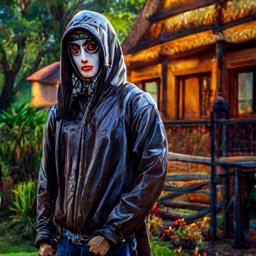} 
     \end{tabular}
     \caption{Generated samples without guidance ($w=0$) and $\sigma(t) = \sqrt{2 \eta_t}$ across seven selected prompts. Each row corresponds to a different finetuning algorithm. The prompts are the same as in \Cref{fig:one_image_per_prompt_ode}.}
    \label{fig:one_image_per_prompt_sde}
\end{figure}

\clearpage
\newpage

\section{Results on DDIM and Flow Matching}
\label{sec:proofs_background}

\subsection{The continuous-time limit of DDIM} \label{subsec:continuous_DDIM}
The DDIM inference update \citep[Eq.~12]{song2021denoising} is 
\begin{talign} \label{eq:DDIM_original}
    x_{k+1} = \sqrt{\bar{\alpha}_{k+1}} \big( \frac{x_{k} - \sqrt{1-\bar{\alpha}_k} \epsilon(x_k,k)}{\sqrt{\bar{\alpha}_k}} \big) + \sqrt{1-\bar{\alpha}_{k+1} - \sigma_{k}^2} \epsilon(x_k,k) + \sigma_{k} \epsilon_{k}, \qquad x_K \sim N(0,I).
\end{talign}
If we let 
$\Delta \bar{\alpha}_{k} = \bar{\alpha}_{k+1} - \bar{\alpha}_{k}$,
we have that
\begin{talign}
    \sqrt{\frac{\bar{\alpha}_{k+1}}{\bar{\alpha}_k}} = \sqrt{\frac{\bar{\alpha}_k + \bar{\alpha}_{k+1} - \bar{\alpha}_k}{\bar{\alpha}_k}} = \sqrt{1 + \frac{\bar{\alpha}_{k+1} - \bar{\alpha}_{k}}{\bar{\alpha}_k}} = \sqrt{1 + \frac{\Delta \bar{\alpha}_k}{\bar{\alpha}_k}} \approx 1 + \frac{\Delta \bar{\alpha}_k}{2\bar{\alpha}_k}, 
\end{talign}
where we used the first-order Taylor approximation of $\sqrt{1+x}$. And
\begin{talign}
\begin{split}
    &- \sqrt{\frac{\bar{\alpha}_{k+1}}{\bar{\alpha}_k} (1-\bar{\alpha}_k)} + \sqrt{1-\bar{\alpha}_{k+1} - \sigma_{k}^2} = - \sqrt{\big( 1 + \frac{\Delta \bar{\alpha}_k}{\bar{\alpha}_k} \big) (1-\bar{\alpha}_k)} + \sqrt{1-\bar{\alpha}_{k+1} - \sigma_{k}^2} \\ &= - \sqrt{1 + \frac{\Delta \bar{\alpha}_k}{\bar{\alpha}_k} -\bar{\alpha}_k - \Delta \bar{\alpha}_k} + \sqrt{1-\bar{\alpha}_{k+1} - \sigma_{k}^2} = - \sqrt{1 -\bar{\alpha}_{k+1} + \frac{\Delta \bar{\alpha}_k}{\bar{\alpha}_k}} + \sqrt{1-\bar{\alpha}_{k+1} - \sigma_{k}^2} \\ &= \sqrt{1- \bar{\alpha}_{k+1}} \big( - \sqrt{1 + \frac{\Delta \bar{\alpha}_k}{\bar{\alpha}_k(1-\bar{\alpha}_{k+1})}} + \sqrt{1 - \frac{\sigma_{k}^2}{1-\bar{\alpha}_{k+1}}} \big) \approx \sqrt{1- \bar{\alpha}_{k+1}} \big( - \big( 1+\frac{\Delta \bar{\alpha}_k}{2\bar{\alpha}_k(1-\bar{\alpha}_{k+1})} \big) + 1 - \frac{\sigma_{k}^2}{2(1-\bar{\alpha}_{k+1})} \big) \\ &= 
    - \big( \frac{\Delta \bar{\alpha}_k}{2\bar{\alpha}_k} + \frac{\sigma_{k}^2}{2} \big) \frac{1}{\sqrt{1-\bar{\alpha}_{k+1}}},
\end{split}
\end{talign}
where we used the same first-order Taylor approximation. Thus, up to first-order approximations, \eqref{eq:DDIM_original} is equivalent to
\begin{talign} \label{eq:DDIM_approx}
    x_{k-1} = \big( 1 + \frac{\Delta \bar{\alpha}_k}{2\bar{\alpha}_k} \big) x_k - \big( \frac{\Delta \bar{\alpha}_k}{2\bar{\alpha}_k} + \frac{\sigma_{k}^2}{2} \big) \frac{\epsilon(x_k,k)}{\sqrt{1-\bar{\alpha}_{k+1}}} + \sigma_k \epsilon_k, \qquad x_K \sim N(0,I).
\end{talign}
If we modify our notation slightly, we can rewrite this as
\begin{talign} \label{eq:DDIM_approx_continuous}
    X_{(k+1)h} = \big( 1 - \frac{h \dot{\bar{\alpha}}_{kh}}{2\bar{\alpha}_{kh}} \big) X_{kh} + \big( \frac{h \dot{\bar{\alpha}}_{kh}}{2\bar{\alpha}_{kh}} - \frac{h \sigma(kh)^2}{2} \big) \frac{\epsilon(X_{kh},kh)}{\sqrt{1-\bar{\alpha}_{kh}}} + \sqrt{h} \sigma(kh) \epsilon_k, \qquad X_{0} \sim N(0,I).
\end{talign}
To go from \eqref{eq:DDIM_approx} to \eqref{eq:DDIM_approx_continuous}, we introduced a continuous time variable and a stepsize $h = 1/K$, and we regard the increment $h \bar{\alpha}_k$ as approximately equal to $h$ times the derivative of $\bar{\alpha}$. We also 
identified $\sigma_k$ with $\sqrt{h} \sigma(kh)$, where $\sigma(kh)$ plays the role of a diffusion coefficient. Note that equation \eqref{eq:DDIM_approx_continuous} can be reverse-engineered as the Euler-Maruyama discretization of the SDE
\begin{talign}
    \mathrm{d}X_t = \big( - \frac{\dot{\bar{\alpha}}_{t}}{2\bar{\alpha}_{t}} + \big( \frac{\dot{\bar{\alpha}}_{t}}{2\bar{\alpha}_{t}} - \frac{\sigma(t)^2}{2} \big) \frac{\epsilon(X_{t},t)}{\sqrt{1-\bar{\alpha}_{t}}} \big) \mathrm{d}t + \sigma(t) \mathrm{d}B_t, \qquad X_{0} \sim N(0,I).
\end{talign}

\subsection{Forward and backward stochastic differential equations}

Let ${(\kappa_t)}_{t \in [0,1]}$ and ${(\eta_t)}_{t \in [0,1]}$ such that
\begin{talign} \label{eq:kappa_eta_conditions}
    \forall t \in [0,1], \quad \eta_t \geq 0, \qquad\qquad \int_0^1 \kappa_{1-s} \, \mathrm{d}s = +\infty, \qquad\qquad 2 \int_0^1 \eta_{1-t'} \exp \big( - 2 \int_{t'}^{t} \kappa_{1-s} \, \mathrm{d}s \big) \, \mathrm{d}t' = 1.
\end{talign}

As shown in \Cref{tab:coefficients}, DDIM corresponds to $\kappa_t = \frac{\dot{\bar{\alpha}}_{t}}{2\bar{\alpha}_{t}}$, $\eta_t = \frac{\dot{\bar{\alpha}}_{t}}{2\bar{\alpha}_{t}}$, and Flow Matching corresponds to $\kappa_t = \frac{\dot{\alpha}_{t}}{\alpha_{t}}$, $\eta_t = \beta_t \big( \frac{\dot{\alpha}_{t}}{\alpha_{t}} \beta_t - \dot{\beta}_t \big)$. 
\begin{lemma}[DDIM and Flow Matching fulfill the conditions \eqref{eq:kappa_eta_conditions}] \label{lem:DDIM_FM_conditions}
The choices of $(\kappa_t)_{t \in [0,1]}$ and $(\eta_t)_{t \in [0,1]}$ for DDIM and Flow Matching fulfill the conditions \eqref{eq:kappa_eta_conditions}. For DDIM, we have that
\begin{talign}
\begin{split}
    \int_0^t \kappa_{1-s} \, \mathrm{d}s = -\frac{1}{2} \log \bar{\alpha}_{1-t} &\implies \int_0^1 \kappa_{1-s} \, \mathrm{d}s = +\infty, \\
    2 \int_0^t \eta_{t'} \exp \big( - 2 \int_{t'}^{t} \kappa_s \, \mathrm{d}s \big) \, \mathrm{d}t' = 1-\bar{\alpha}_{1-t} &\implies 2 \int_0^1 \eta_{t'} \exp \big( - 2 \int_{t'}^{t} \kappa_s \, \mathrm{d}s \big) \, \mathrm{d}t' = 1.
\end{split}
\end{talign}
For Flow Matching, 
\begin{talign}
    \int_0^t \kappa_{1-s} \, \mathrm{d}s = - \log \alpha_{1-t} &\implies \int_0^1 \kappa_{1-s} \, \mathrm{d}s = +\infty, \\
    2 \int_0^t \eta_{t'} \exp \big( - 2 \int_{t'}^{t} \kappa_s \, \mathrm{d}s \big) \, \mathrm{d}t' = \beta^2_{1-t} &\implies 2 \int_0^1 \eta_{t'} \exp \big( - 2 \int_{t'}^{t} \kappa_s \, \mathrm{d}s \big) \, \mathrm{d}t' = 1.
\end{talign}
\end{lemma}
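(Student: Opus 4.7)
The lemma splits into three elementary tasks per model: non-negativity of $\eta_t$, divergence of the $\kappa$-integral, and the closed-form evaluation of the $\eta$-integral. Each reduces to a direct calculation once one notes that both choices of $\kappa_t$ are logarithmic derivatives of $\alpha_t$, and that in both cases the integrand of the $\eta$-integral can be rewritten as an exact derivative.

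First I would dispose of the $\kappa$-integral. Under the change of variable $u=1-s$, and using $\alpha_1=1$, one gets
\begin{talign*}
\int_0^t \kappa_{1-s}\,\mathrm{d}s \;=\; \int_{1-t}^1 \kappa_u\,\mathrm{d}u \;=\; c\,[\log\alpha_u]_{1-t}^1 \;=\; -c\,\log\alpha_{1-t},
\end{talign*}
with $c=\tfrac12$ for DDIM (since $\kappa_t=\tfrac12\tfrac{d}{dt}\log\alpha_t$) and $c=1$ for Flow Matching (since $\kappa_t=\tfrac{d}{dt}\log\alpha_t$). Divergence at $t=1$ follows from $\alpha_0=0$. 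Non-negativity $\eta_t\ge 0$ is immediate for DDIM from $\dot\alpha_t\ge 0$, and for Flow Matching by writing $\eta_t=\tfrac{\dot\alpha_t}{\alpha_t}\beta_t^2-\beta_t\dot\beta_t$, a sum of two non-negative terms since $\alpha_t$ is increasing and $\beta_t$ is decreasing.

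The main step is the $\eta$-integral. Performing the change of variable that replaces the time indices by their reflections makes the exponential factor $\exp\!\big(\!-2\int \kappa\big)$ collapse to the ratio $(\alpha_{1-t}/\alpha_{t'})^{p}$ with $p=1$ for DDIM and $p=2$ for Flow Matching. The crucial identities are then
\begin{talign*}
\text{DDIM:} \quad \frac{2\eta_{t'}}{\alpha_{t'}} \;=\; \frac{\dot\alpha_{t'}}{\alpha_{t'}^2} \;=\; -\frac{d}{dt'}\Big(\frac{1}{\alpha_{t'}}\Big),
\qquad
\text{FM:} \quad \frac{2\eta_{t'}}{\alpha_{t'}^2} \;=\; -\frac{d}{dt'}\Big(\frac{\beta_{t'}^2}{\alpha_{t'}^2}\Big),
\end{talign*}
where the FM identity is obtained by computing $\frac{d}{dt'}(\beta_{t'}^2/\alpha_{t'}^2)=\tfrac{2\beta_{t'}\dot\beta_{t'}}{\alpha_{t'}^2}-\tfrac{2\beta_{t'}^2\dot\alpha_{t'}}{\alpha_{t'}^3}$ and matching it termwise to $2\eta_{t'}/\alpha_{t'}^2=\tfrac{2\dot\alpha_{t'}\beta_{t'}^2}{\alpha_{t'}^3}-\tfrac{2\beta_{t'}\dot\beta_{t'}}{\alpha_{t'}^2}$. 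After pulling the $\alpha_{1-t}^{p}$ factor out of the integral and applying the fundamental theorem of calculus with boundary values $\alpha_1=1$, $\beta_1=0$, the integrals telescope to $1-\alpha_{1-t}$ and $\beta_{1-t}^2$ respectively, which both equal $1$ at $t=1$, yielding the third condition in \eqref{eq:kappa_eta_conditions}.

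\textbf{Main obstacle.} The calculations are essentially mechanical once the right primitive is identified. The only delicate point is spotting the antiderivative $\beta_t^2/\alpha_t^2$ for Flow Matching: its appearance is not obvious from the definition $\eta_t=\beta_t(\tfrac{\dot\alpha_t}{\alpha_t}\beta_t-\dot\beta_t)$, and checking the sign pattern requires care because $\dot\beta_t\le 0$ flips one of the two contributions. Beyond this, the remaining work is bookkeeping on the change of variables $s\mapsto 1-s$, $t'\mapsto 1-t'$ which brings the lemma's integrals into the form needed to invoke the fundamental theorem of calculus.
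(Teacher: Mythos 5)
Your proposal is correct and follows essentially the same line of computation as the paper: evaluate $\int \kappa_{1-s}\,ds$ as a logarithm of $\alpha$, reduce the exponential weight to a ratio of $\alpha$'s, and collapse the remaining integral by the fundamental theorem of calculus using $\alpha_1 = 1$, $\beta_1 = 0$. The only cosmetic difference is in the Flow Matching case, where you identify the antiderivative $\beta_{t'}^2/\alpha_{t'}^2$ directly whereas the paper arrives at the same cancellation via integration by parts on the $\dot\beta\,\beta$ term; the two routes are equivalent, and yours is marginally more compact.
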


\paragraph{Forward and backward SDEs} Consider the forward and backward SDEs
\begin{talign} \label{eq:forward_generic}
    \mathrm{d}\vec{X}_t &= - \kappa_{1-t} \vec{X}_t \, \mathrm{d}t + \sqrt{2 \eta_{1-t}} \, \mathrm{d}B_t, \qquad \vec{X}_0 \sim p_{\mathrm{data}}, \\
    \mathrm{d}X_t &= \big( \kappa_t X_t + 2 \eta_t \mathfrak{s}(X_t,t) \big) \mathrm{d}t + \sqrt{2 \eta_t} \, \mathrm{d}B_t, \qquad X_0 \sim N(0,I),
    \label{eq:backward_generic}
\end{talign}
where we let $\vec{p}_t$ be the density of $\vec{X}_t$, and we define the score function as $\mathfrak{s}(x,t) := \nabla \log \vec{p}_{1-t}(x)$. Similarly, we let $p_t$ be the density of $X_t$. $\vec{p}_t$ and $p_t$ solve the Fokker-Planck equations:
\begin{talign} \label{eq:FP_forward}
    \partial_t \vec{p}_t &= \nabla \cdot \big( \kappa_{1-t} x \vec{p}_t \big) + \eta_{1-t} \Delta \vec{p}_t, \qquad \vec{p}_0 = p_{\mathrm{data}}, \\
    \partial_t p_t &= \nabla \cdot \big( \big(- \kappa_t x - 2 \eta_t \nabla \log \vec{p}_{1-t}(X_{t}) \big) p_t \big) + \eta_t \Delta p_t, \qquad p_0 = N(0,I). \label{eq:FP_backward}
\end{talign}
\begin{lemma}[Solution of the forward SDE] \label{lem:OU_process}
    Let $(\kappa_t)_{t \geq 0}$, $(\eta_t)_{t \geq 0}$ with $\eta_t \geq 0$, and $(\xi_t)_{t \geq 0}$ be arbitrary. The solution $\vec{X}_t$ of the SDE 
    \begin{talign}
        \mathrm{d}\vec{X}_t &= \big( - \kappa_{1-t} \vec{X}_t + \xi_t \big) \, \mathrm{d}t + \sqrt{2 \eta_{1-t}} \, \mathrm{d}B_t, \qquad \vec{X}_0 \sim p_{\mathrm{data}}
    \end{talign}
    is
    \begin{talign}
        \vec{X}_t = \vec{X}_0 \exp \big( - \int_0^t \kappa_{1-s} \, \mathrm{d}s \big) + \int_0^t \exp \big( - \int_{t'}^{t} \kappa_{1-s} \, \mathrm{d}s \big) \xi_{1-t'} \, \mathrm{d}t' + \int_0^t \sqrt{2 \eta_{1-t'}} \exp \big( - \int_{t'}^{t} \kappa_{1-s} \, \mathrm{d}s \big) \, \mathrm{d}B_{t'},
    \end{talign}
    which has the same distribution as the random variable
    \begin{talign} 
    \begin{split} \label{eq:hat_X_t_OU_process}
        \hat{X}_t &= \vec{X}_0 \exp \big( - \int_0^t \kappa_{1-s} \, \mathrm{d}s \big) + \int_0^t \exp \big( - \int_{t'}^{t} \kappa_{1-s} \, \mathrm{d}s \big) \xi_{1-t'} \, \mathrm{d}t' + \sqrt{2 \int_0^t \eta_{1-t'} \exp \big( - 2 \int_{t'}^{t} \kappa_{1-s} \, \mathrm{d}s \big) \, \mathrm{d}t'} \epsilon, \\ \epsilon &\sim N(0,I).
    \end{split}
    \end{talign}
\end{lemma}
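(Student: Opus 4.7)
The plan is to verify the stated formula by the integrating-factor method and then characterize the distribution using It\^o's isometry. Since the drift is linear in $\vec{X}_t$ with a deterministic, time-dependent coefficient, and the diffusion coefficient is deterministic, this is a classical linear (Ornstein--Uhlenbeck-type) SDE for which a closed-form solution is standard.

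First, I would introduce the integrating factor $E_t := \exp\bigl(\int_0^t \kappa_{1-s}\,\mathrm{d}s\bigr)$ and consider the process $Y_t := E_t \vec{X}_t$. Because $E_t$ is deterministic with $\dot{E}_t = \kappa_{1-t} E_t$, applying It\^o's formula (no quadratic-variation correction since $E_t$ has bounded variation) yields
\begin{talign*}
\mathrm{d}Y_t = \dot{E}_t \vec{X}_t\,\mathrm{d}t + E_t\,\mathrm{d}\vec{X}_t = \kappa_{1-t} E_t \vec{X}_t\,\mathrm{d}t + E_t\bigl[(-\kappa_{1-t}\vec{X}_t + \xi_t)\,\mathrm{d}t + \sqrt{2\eta_{1-t}}\,\mathrm{d}B_t\bigr].
\end{talign*}
The $\kappa_{1-t}\vec{X}_t$ terms cancel, leaving $\mathrm{d}Y_t = E_t \xi_t\,\mathrm{d}t + E_t \sqrt{2\eta_{1-t}}\,\mathrm{d}B_t$. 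Integrating from $0$ to $t$ and dividing through by $E_t$, while rewriting $E_{t'}/E_t = \exp\bigl(-\int_{t'}^{t}\kappa_{1-s}\,\mathrm{d}s\bigr)$, reproduces the claimed explicit solution (up to the indexing of $\xi$ in the drift integral, which should match the convention used in the SDE).

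Second, for the distributional claim, observe that the stochastic integral term $M_t := \int_0^t \exp\bigl(-\int_{t'}^{t}\kappa_{1-s}\,\mathrm{d}s\bigr)\sqrt{2\eta_{1-t'}}\,\mathrm{d}B_{t'}$ has a purely deterministic scalar integrand against a standard $d$-dimensional Brownian motion, hence is a centered Gaussian random vector in $\R^d$ independent of $\vec{X}_0$. By It\^o's isometry applied componentwise, its covariance matrix is
\begin{talign*}
\mathbb{E}[M_t M_t^\top] = I_d \cdot \int_0^t 2\eta_{1-t'}\exp\Bigl(-2\int_{t'}^{t}\kappa_{1-s}\,\mathrm{d}s\Bigr)\mathrm{d}t',
\end{talign*}
which matches exactly the covariance of $\sqrt{2\int_0^t \eta_{1-t'}\exp(-2\int_{t'}^{t}\kappa_{1-s}\,\mathrm{d}s)\,\mathrm{d}t'}\,\epsilon$ for $\epsilon \sim N(0,I)$. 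Since the deterministic and $\vec{X}_0$-dependent terms of $\vec{X}_t$ and $\hat{X}_t$ coincide, and their remaining Gaussian components have equal covariance and are independent of $\vec{X}_0$, the two random variables have the same distribution.

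There is no real obstacle in this proof: the argument is textbook. The only points requiring care are signing the integrating factor correctly so the linear-in-$\vec{X}_t$ terms cancel cleanly, and invoking It\^o's isometry in the correct vector-valued form to relate the quadratic variation of $M_t$ to the covariance of $\epsilon$.
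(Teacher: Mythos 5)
Your proof matches the paper's own proof essentially line for line: both use the integrating factor $\exp\bigl(\int_0^t \kappa_{1-s}\,\mathrm{d}s\bigr)$ to eliminate the linear drift term, integrate, rearrange, and then invoke It\^o's isometry together with the fact that a Wiener integral of a deterministic integrand is a centered Gaussian independent of $\vec{X}_0$. You also correctly flag the notational mismatch between $\xi_t$ in the displayed SDE and $\xi_{1-t'}$ in the displayed solution; the paper's own derivation silently works with $\xi_{1-t}$ in the drift, so the statement's $\xi_t$ is a typo and your parenthetical caveat is exactly right.
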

Applying \Cref{lem:OU_process} with $\xi_t \equiv 0$, we obtain that $\vec{p}_1$ is also the distribution of
\begin{talign} \label{eq:hat_X_1}
    \hat{X}_1 = \vec{X}_0 \exp \big( - \int_0^t \kappa_{1-s} \, \mathrm{d}s \big) + \sqrt{2 \int_0^t \eta_{1-t'} \exp \big( - 2 \int_{t'}^{t} \kappa_{1-s} \, \mathrm{d}s \big) \, \mathrm{d}t'} \epsilon = \epsilon,
\end{talign}
where $\epsilon \sim N(0,I)$. The third equality in \eqref{eq:hat_X_1} holds by \eqref{eq:kappa_eta_conditions}. Hence we obtain that $\vec{p}_1 = N(0,I)$. Note also that
\begin{talign}
    \partial_t \vec{p}_{1-t} &= -\nabla \cdot \big( \kappa_{t} x \vec{p}_{1-t} \big) - \eta_{t} \Delta \vec{p}_{1-t} = -\nabla \cdot \big( \big( - \kappa_{t} x - 2 \eta_t \nabla \log \vec{p}_{1-t}(x) \big) \vec{p}_{1-t} \big) + \eta_{t} \Delta \vec{p}_{1-t}
\end{talign}
Thus, $\vec{p}_{1-t}$ is a solution of the backward Fokker-Planck equation \eqref{eq:FP_backward}, which proves the following: 
\begin{proposition}[Equality of marginal distributions] \label{prop:equality_marginals}
For any time $t \in [0,1]$, the densities of the solutions $\vec{X}_t$, $X_t$ of the forward and backward SDEs are equal up to a time flip: $p_t = \vec{p}_{1-t}$.
\end{proposition}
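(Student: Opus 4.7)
The plan is to show that $t \mapsto \vec{p}_{1-t}$ and $t \mapsto p_t$ satisfy the same backward Fokker--Planck equation \eqref{eq:FP_backward} with the same initial datum at $t=0$, and then invoke uniqueness of the PDE solution to conclude equality on all of $[0,1]$.

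\textbf{Matching the initial condition.} The backward SDE is initialized at $X_0 \sim N(0,I)$, so the first task is to verify $\vec{p}_1 = N(0,I)$. Applying \autoref{lem:OU_process} with $\xi_t \equiv 0$ gives a closed-form Gaussian representation for the law of $\vec{X}_1$: the mean is scaled by $\exp\big(-\int_0^1 \kappa_{1-s}\,\mathrm{d}s\big)$ and the covariance by $2 \int_0^1 \eta_{1-t'} \exp\big(-2 \int_{t'}^{1} \kappa_{1-s}\,\mathrm{d}s\big)\,\mathrm{d}t'$. By the standing assumptions \eqref{eq:kappa_eta_conditions}, the mean factor vanishes and the covariance factor equals the identity, so $\vec{p}_1 = N(0,I) = p_0$.

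\textbf{Matching the PDE.} Starting from the forward Fokker--Planck equation \eqref{eq:FP_forward} and applying the time change $s = 1-t$, a direct chain-rule computation gives
\begin{equation*}
\partial_t \vec{p}_{1-t} = -\nabla \cdot \big(\kappa_t\, x\, \vec{p}_{1-t}\big) - \eta_t \Delta \vec{p}_{1-t}.
\end{equation*}
The key algebraic identity is $\nabla \cdot \big(\nabla \log \vec{p}_{1-t}\, \vec{p}_{1-t}\big) = \Delta \vec{p}_{1-t}$, which I use to rewrite the diffusion term as $-\eta_t \Delta \vec{p}_{1-t} = -\nabla \cdot \big(2\eta_t \nabla \log \vec{p}_{1-t}\, \vec{p}_{1-t}\big) + \eta_t \Delta \vec{p}_{1-t}$. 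Substituting produces exactly \eqref{eq:FP_backward} with the backward drift $-\kappa_t x - 2 \eta_t \nabla \log \vec{p}_{1-t}(x)$ and diffusion coefficient $\sqrt{2\eta_t}$. Hence $\vec{p}_{1-t}$ solves the same linear parabolic PDE as $p_t$, with identical coefficients.

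\textbf{Uniqueness and obstacle.} Under the regularity typically assumed for the backward SDE to be well-posed, the backward Fokker--Planck equation admits a unique solution for the initial datum $N(0,I)$ in the relevant weighted function space, and $p_t = \vec{p}_{1-t}$ follows on all of $[0,1]$. The main obstacle I anticipate is precisely this uniqueness step: the drift $-\kappa_t x - 2\eta_t \nabla \log \vec{p}_{1-t}(x)$ involves the score of the data-initialized forward process, which can be singular near $t=1$ (i.e.\ near $t=0$ in the backward time) when $p_{\mathrm{data}}$ is not smooth. A standard workaround is to work on $[0,1-\delta]$ with a mollified data distribution and pass to the limit as $\delta \to 0$; I would treat that as a standing technical assumption rather than grind through it here.
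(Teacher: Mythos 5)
Your proof matches the paper's argument exactly: establish $\vec{p}_1 = N(0,I) = p_0$ via \autoref{lem:OU_process} with $\xi_t \equiv 0$ and the conditions \eqref{eq:kappa_eta_conditions}, then show $\vec{p}_{1-t}$ solves the backward Fokker--Planck equation \eqref{eq:FP_backward} by the identity $\nabla \cdot \big( (\nabla \log \vec{p}_{1-t}) \vec{p}_{1-t} \big) = \Delta \vec{p}_{1-t}$, and conclude by uniqueness of the linear parabolic PDE. Your explicit caveat that this uniqueness step requires regularity of the score $\nabla \log \vec{p}_{1-t}$ near the data end is a reasonable addition; the paper leaves that point implicit.
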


\paragraph{Forward and backward SDEs with arbitrary noise schedule} Next, we look at the following pair of forward-backward SDEs:
\begin{talign} \label{eq:forward_arbitrary}
    \mathrm{d}\vec{X}_t &= \big( - \kappa_{1-t} \vec{X}_t + \big( \frac{\sigma(1-t)^2}{2} - \eta_{1-t} \big) \mathfrak{s}(\vec{X}_t,1-t) \big) \, \mathrm{d}t + \sigma(1-t) \, \mathrm{d}B_t, \qquad \vec{X}_0 \sim p_{\mathrm{data}}, \\
    \mathrm{d}X_t &= \big( \kappa_t X_t + \big( \frac{\sigma(t)^2}{2} + \eta_t \big) \mathfrak{s}(X_t,t) \big) \mathrm{d}t + \sigma(t) \, \mathrm{d}B_t, \qquad X_0 \sim N(0,I), \label{eq:backward_arbitrary}
\end{talign}
Here, the score function $\mathfrak{s}$ is the same vector field as in \eqref{eq:backward_arbitrary}.
Remark that equations \eqref{eq:forward_generic}-\eqref{eq:backward_generic} are a particular case of \eqref{eq:forward_arbitrary}-\eqref{eq:backward_arbitrary} for which $\sigma(t) = \sqrt{2 \eta_t}$. The Fokker-Planck equations for \eqref{eq:forward_arbitrary}-\eqref{eq:backward_arbitrary} are:
\begin{talign} \label{eq:FP_forward_arbitrary}
    \partial_t \vec{p}_t &= \nabla \cdot \big( \big( \kappa_{1-t} x + \big( - \frac{\sigma(1-t)^2}{2} + \eta_{1-t} \big) \mathfrak{s}(X_{t},t) \big) \vec{p}_t \big) + \eta_{1-t} \Delta \vec{p}_t, \qquad \vec{p}_0 = p_{\mathrm{data}}, \\
    \partial_t p_t &= \nabla \cdot \big( \big(- \kappa_t x - \big( \frac{\sigma(t)^2}{2} + \eta_t \big) \mathfrak{s}(X_{t},t) \big) p_t \big) + \frac{\sigma(t)^2}{2} \Delta p_t, \qquad p_0 = N(0,I). \label{eq:FP_backward_arbitrary}
\end{talign}
It is straight-forward to see that for any $\sigma$, the solutions $\vec{p}_t$ and $p_t$ of \eqref{eq:FP_forward_arbitrary}-\eqref{eq:FP_backward_arbitrary} are also solutions of \eqref{eq:FP_forward}-\eqref{eq:FP_backward}. Hence, the marginals $\vec{X}_t$ and $X_t$ are equally distributed for all noise schedules $\sigma$, and they are equal to each other up to a time flip.

\paragraph{Equality of distributions over trajectories} The result in \Cref{prop:equality_marginals} can be made even stronger:
\begin{proposition}[Equality of distributions over trajectories] \label{lem:equal_process_distributions}
    Let $\vec{\bm{X}}$, $\bm{X}$ be the solutions of the SDEs \eqref{eq:forward_arbitrary}-\eqref{eq:backward_arbitrary} with arbitrary noise schedule. For any sequence of times $(t_i)_{0 \leq i \leq I}$, the joint distribution of $(\vec{X}_{t_i})_{0 \leq i \leq I}$ is equal to the joint distribution of $(X_{1-t_i})_{0 \leq i \leq I}$, or equivalently, that the probability measures $\vec{\mathbb{P}}$, $\mathbb{P}$ of the forward and backward processes $\vec{\bm{X}}$, $\bm{X}$ are equal, up to a flip in the time direction.
\end{proposition}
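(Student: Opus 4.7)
Since equality of all finite-dimensional distributions is equivalent to equality of the path-space laws (the processes being Markov), it suffices to prove the stronger statement that the time-reversed forward process $\tilde{\bm{X}} := (\vec{X}_{1-t})_{t \in [0,1]}$ satisfies the same SDE as the backward process $\bm{X}$ and has the same initial distribution. I would invoke Anderson's time-reversal formula for diffusions: if $\vec{X}_t$ solves $\mathrm{d}\vec{X}_t = \mu_t(\vec{X}_t) \mathrm{d}t + \sigma(1-t) \mathrm{d}B_t$ with marginal densities $\vec{p}_t$, then $\tilde{X}_t = \vec{X}_{1-t}$ satisfies
\begin{talign*}
    \mathrm{d}\tilde{X}_t = \bigl(-\mu_{1-t}(\tilde{X}_t) + \sigma(t)^2 \nabla \log \vec{p}_{1-t}(\tilde{X}_t)\bigr) \mathrm{d}t + \sigma(t) \mathrm{d}\tilde{B}_t,
\end{talign*}
for a new Brownian motion $\tilde{\bm{B}}$, under standard regularity assumptions on the coefficients and the density.

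Applying this to the forward SDE \eqref{eq:forward_arbitrary}, the reverse-time drift reads
\begin{talign*}
    \kappa_{t}\, \tilde{X}_t - \Bigl(\tfrac{\sigma(t)^2}{2} - \eta_{t}\Bigr) \mathfrak{s}(\tilde{X}_t, t) + \sigma(t)^2\, \mathfrak{s}(\tilde{X}_t, t) = \kappa_t\, \tilde{X}_t + \Bigl(\tfrac{\sigma(t)^2}{2} + \eta_t\Bigr) \mathfrak{s}(\tilde{X}_t, t),
\end{talign*}
where I used the definition $\mathfrak{s}(x,t) = \nabla \log \vec{p}_{1-t}(x)$. This matches exactly the drift of the backward SDE \eqref{eq:backward_arbitrary}, and the diffusion coefficient $\sigma(t)$ also matches. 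Finally, the initial distribution of $\tilde{\bm{X}}$ is the law of $\vec{X}_1$, which was shown in \eqref{eq:hat_X_1} via \autoref{lem:OU_process} to be $N(0,I)$, coinciding with $X_0$.

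Thus $\tilde{\bm{X}}$ and $\bm{X}$ solve the same SDE with the same initial distribution, so by weak uniqueness for the backward SDE the two processes have the same law on path space, which implies equality of all finite-dimensional distributions after a time flip. The main technical obstacle is the justification of Anderson's formula in this setting: one has to verify enough regularity of $\mathfrak{s}$ and of $\vec{p}_t$ (smoothness and integrability conditions) for the reverse-time SDE to be well-defined and to admit a weakly unique solution; this can be reduced to the corresponding assumptions used implicitly in \autoref{prop:equality_marginals} and in the derivation of the Fokker--Planck equations \eqref{eq:FP_forward_arbitrary}--\eqref{eq:FP_backward_arbitrary}, so no new hypotheses beyond those already in play are needed.
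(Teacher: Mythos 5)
Your proposal is correct and takes a genuinely different route from the paper's. You invoke Anderson's time-reversal theorem as a black box: the forward SDE \eqref{eq:forward_arbitrary} has drift $\mu_t(x) = -\kappa_{1-t}x + (\tfrac{\sigma(1-t)^2}{2}-\eta_{1-t})\mathfrak{s}(x,1-t)$ and diffusion $\sigma(1-t)$, and plugging these into Anderson's formula gives the reverse-time drift $\kappa_t x + (\tfrac{\sigma(t)^2}{2}+\eta_t)\mathfrak{s}(x,t)$, which matches \eqref{eq:backward_arbitrary}; together with $\vec{p}_1 = N(0,I)$ and weak uniqueness, this yields equality of path measures after the time flip. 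Your algebra checks out. The paper deliberately does \emph{not} take this route. It observes that an earlier derivation (de Bortoli et al.) that implicitly relies on this kind of reasoning ``lacks rigor as it uses some unexplained approximations,'' and instead carries out a self-contained discrete-time argument: it Euler--Maruyama discretizes the forward SDE, applies a discrete It\^o lemma and a telescoping-product decomposition of the path density, and shows order by order in the stepsize $h$ (keeping $O(h)$ terms, discarding $O(h^{3/2})$) that the reversed forward transition kernel is a conditional Gaussian that matches the Euler--Maruyama discretization of the backward SDE. That derivation is itself informal (it lives in discrete time and hides a limit), but it is explicit about exactly where the approximations enter. Your approach is shorter and cleaner, but it delegates all the regularity issues to Anderson's theorem, which you acknowledge; the hypotheses there (e.g.\ integrability of $|\nabla\log\vec{p}_t|^2$ against $\vec{p}_t$, local Lipschitz/linear-growth conditions on the score) are nontrivial near $t=0$ where the data density may not be smooth, and the paper's authors chose not to sweep that under the rug. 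Both are valid; yours is the standard textbook shortcut, the paper's is an effort to substitute explicit, if still heuristic, discrete-time bookkeeping for that citation.
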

This result states that sampling trajectories from the backward process is equivalent to sampling them from the forward process and then flipping their order.

\subsubsection{Proof of \Cref{lem:DDIM_FM_conditions}}
As shown in \Cref{tab:coefficients}, DDIM corresponds to $\kappa_t = \frac{\dot{\bar{\alpha}}_{t}}{2\bar{\alpha}_{t}}$, $\eta_t = \frac{\dot{\bar{\alpha}}_{t}}{2\bar{\alpha}_{t}}$. Thus, $\eta_t \geq 0$ because $\bar{\alpha}_t$ is increasing, and
\begin{talign}
    \begin{split}
        &\int_0^t \kappa_{1-s} \, \mathrm{d}s 
         = 
        \int_0^t \frac{\dot{\bar{\alpha}}_{1-s}}{2\bar{\alpha}_{1-s}} \, \mathrm{d}s 
         =  
        - \frac{1}{2} \int_0^t \partial_s \log \bar{\alpha}_{1-s} \, \mathrm{d}s 
         =  
        - \frac{1}{2} (\log \bar{\alpha}_{1-t} - \log \bar{\alpha}_{1}) 
         = - \frac{1}{2} \log \bar{\alpha}_{1-t}, \\
        &\implies \int_0^1 \kappa_{1-s} \, \mathrm{d}s = - \frac{1}{2} \log \bar{\alpha}_{0} = +\infty
    \end{split} \\
    \begin{split}
        & 
        2 \int_0^t \eta_{t'} \exp \big( - 2 \int_{t'}^{t} \kappa_s \, \mathrm{d}s \big) \, \mathrm{d}t'
        = 
        \int_0^t \frac{\dot{\bar{\alpha}}_{1-t'}}{\bar{\alpha}_{1-t'}} \exp \big( - \int_{t'}^{t} \frac{\dot{\bar{\alpha}}_{1-s}}{\bar{\alpha}_{1-s}} \, \mathrm{d}s \big) \, \mathrm{d}t'
        \\ &= 
        \int_0^t \frac{\dot{\bar{\alpha}}_{1-t'}}{\bar{\alpha}_{1-t'}} \frac{\bar{\alpha}_{1-t}}{\bar{\alpha}_{1-t'}} \, \mathrm{d}t'
        = 
        \bar{\alpha}_{1-t} \int_0^t \partial_{t'} \big( \frac{1}{\bar{\alpha}_{1-t'}} \big) \, \mathrm{d}t'
        = 
        \bar{\alpha}_{1-t} \big( \frac{1}{\bar{\alpha}_{1-t}} - \frac{1}{\bar{\alpha}_{1}} \big)
        = 
        1 - \bar{\alpha}_{1-t},
        \\
        &\implies 
        2 \int_0^1 \eta_{t'} \exp \big( - 2 \int_{t'}^{t} \kappa_s \, \mathrm{d}s \big) \, \mathrm{d}t'
        = 
        1 - \bar{\alpha}_{0}
        = 1.
    \end{split}
    \end{talign}
    where we used that $\bar{\alpha}_1 = 1$ and $\bar{\alpha}_0 = 0$. And Flow Matching corresponds to $\kappa_t = \frac{\dot{\alpha}_{t}}{\alpha_{t}}$, $\eta_t = \beta_t \big( \frac{\dot{\alpha}_{t}}{\alpha_{t}} \beta_t - \dot{\beta}_t \big)$. We have that $\eta_t \geq 0$ because $\alpha_t$ is increasing and $\beta_t$ is decreasing, and
    \begin{talign}
        \begin{split}
        &\int_0^t \kappa_{1-s} \, \mathrm{d}s = 
        \int_0^t \frac{\dot{\alpha}_{1-s}}{\alpha_{1-s}} \, \mathrm{d}s 
         =  
        - \int_0^t \partial_s \log \alpha_{1-s} \, \mathrm{d}s 
         =  
        - (\log \alpha_{1-t} - \log \alpha_{1}) 
         = - \log \alpha_{1-t}, \\
        &\implies \int_0^1 \kappa_{1-s} \, \mathrm{d}s = - \log \alpha_{0} = +\infty,
        \end{split}
    \end{talign}
    and
    \begin{talign}
    \begin{split} \label{eq:second_term_FM}
        &2 \int_0^t \eta_{1-t'} \exp \big( - 2 \int_{t'}^{t} \kappa_{1-s} \, \mathrm{d}s \big) \, \mathrm{d}t' = 2 \int_0^t  \beta_{1-t'} \big( \frac{\dot{\alpha}_{1-t'}}{\alpha_{1-t'}} \beta_{1-t'} - \dot{\beta}_{1-t'} \big) \exp \big( - 2 \int_{t'}^{t} \frac{\dot{\alpha}_{1-s}}{\alpha_{1-s}} \, \mathrm{d}s \big) \, \mathrm{d}t' \\ &= 2 \int_0^t  \beta_{1-t'} \big( \frac{\dot{\alpha}_{1-t'}}{\alpha_{1-t'}} \beta_{1-t'} - \dot{\beta}_{1-t'} \big) \big( \frac{\alpha_{1-t}}{\alpha_{1-t'}} \big)^2 \, \mathrm{d}t',
    \end{split}
    \end{talign}
    To develop the right-hand side, note that by integration by parts,
    \begin{talign}
    \begin{split}
        &\int_0^t \dot{\beta}_{1-t'} \beta_{1-t'} \big( \frac{\alpha_{1-t}}{\alpha_{1-t'}} \big)^2 \, \mathrm{d}t' = - \int_0^t \partial_{t'} \big( \frac{\beta_{1-t'}^2}{2} \big) \big( \frac{\alpha_{1-t}}{\alpha_{1-t'}} \big)^2 \, \mathrm{d}t' \\ &= - \big[ \frac{\beta_{1-t'}^2}{2} \big( \frac{\alpha_{1-t}}{\alpha_{1-t'}} \big)^2 \big]_{0}^{1} + \int_0^t \frac{\beta_{1-t'}^2}{2} \partial_{t'} \big( \frac{\alpha_{1-t}}{\alpha_{1-t'}} \big)^2 \, \mathrm{d}t' 
        = - \big[ \frac{\beta_{1-t'}^2}{2} \big( \frac{\alpha_{1-t}}{\alpha_{1-t'}} \big)^2 \big]_{0}^{t} + \int_0^t \beta_{1-t'}^2 \frac{\alpha_{1-t}^2 \dot{\alpha}_{1-t'}}{\alpha_{1-t'}^3} \, \mathrm{d}t'. 
    \end{split}
    \end{talign}
    And if we plug this into the right-hand side of \eqref{eq:second_term_FM}, we obtain
    \begin{talign}
        &2 \int_0^t \eta_{1-t'} \exp \big( - 2 \int_{t'}^{t} \kappa_{1-s} \, \mathrm{d}s \big) \, \mathrm{d}t' = \big[ \beta_{1-t'}^2 \big( \frac{\alpha_{1-t}}{\alpha_{1-t'}} \big)^2 \big]_{0}^{t} = \beta_{1-t}^2 - \beta_{1}^2 \big( \frac{\alpha_{1-t}}{\alpha_{1}} \big)^2 = \beta_{1-t}^2, \\
        &\implies 2 \int_0^1 \eta_{1-t'} \exp \big( - 2 \int_{t'}^{t} \kappa_{1-s} \, \mathrm{d}s \big) \, \mathrm{d}t' = \beta_{1}^2 = 1.
    \end{talign}
    where we used that $\beta_{1} = 0$, $\alpha_1 = 1$.

\subsubsection{Proof of \Cref{lem:OU_process}}
    We can solve this equation by variation of parameters. To simplify the notation, we replace $\kappa_{1-s}$, $\eta_{1-s}$ and $\xi_{1-s}$ by $\kappa_{s}$, $\eta_{s}$ and $\xi_{s}$. Defining $f(\vec{X}_t,t) = \vec{X}_t \exp \big( \int_0^t \kappa_{1-s} \, \mathrm{d}s \big)$, we get that
    \begin{talign}
    \begin{split}
        df(\vec{X}_t,t) &= \kappa_{1-t} \vec{X}_t \exp \big( \int_0^t \kappa_{1-s} \, \mathrm{d}s \big) \, \mathrm{d}t + \exp \big( \int_0^t \kappa_{1-s} \, \mathrm{d}s \big) \mathrm{d}\vec{X}_t \\ &= \kappa_{1-t} \vec{X}_t \exp \big( \int_0^t \kappa_{1-s} \, \mathrm{d}s \big) \, \mathrm{d}t + \exp \big( \int_0^t \kappa_{1-s} \, \mathrm{d}s \big) \big( (- \kappa_{1-t} \vec{X}_t + \xi_{1-t}) \, \mathrm{d}t + \sqrt{2 \eta_{1-t}} \, \mathrm{d}B_t \big) \\ &= \exp \big( \int_0^t \kappa_{1-s} \, \mathrm{d}s \big) \xi_{1-t} \, \mathrm{d}t + \sqrt{2 \eta_t} \exp \big( \int_0^t \kappa_{1-s} \, \mathrm{d}s \big) \, \mathrm{d}B_t.
    \end{split}
    \end{talign}
    Integrating from 0 to $t$, we get that
    \begin{talign}
        &\vec{X}_t \exp \big( \int_0^t \kappa_{1-s} \, \mathrm{d}s \big) = \vec{X}_0 + \int_0^t \exp \big( \int_0^{t'} \kappa_{1-s} \, \mathrm{d}s \big) \xi_{1-t'} \, \mathrm{d}t' + \int_0^t \sqrt{2 \eta_{1-t'}} \exp \big( \int_0^{t'} \kappa_{1-s} \, \mathrm{d}s \big) \, \mathrm{d}B_{t'}, \\
        &\iff \vec{X}_t = \vec{X}_0 \exp \big( - \int_0^t \kappa_{1-s} \, \mathrm{d}s \big) + \int_0^t \exp \big( - \int_{t'}^{t} \kappa_{1-s} \, \mathrm{d}s \big) \xi_{1-t'} \, \mathrm{d}t' + \int_0^t \sqrt{2 \eta_{1-t'}} \exp \big( - \int_{t'}^{t} \kappa_{1-s} \, \mathrm{d}s \big) \, \mathrm{d}B_{t'}. \label{eq:alternative_forward_process}
    \end{talign}
    Since 
    \begin{talign}
    \mathbb{E} \big[ \big( \int_0^t \sqrt{2 \eta_{1-t'}} \exp \big( - \int_{t'}^{t} \kappa_{1-s} \, \mathrm{d}s \big) \, \mathrm{d}B_{t'} \big)^2 \big] = 2 \int_0^t \eta_{1-t'} \exp \big( - 2 \int_{t'}^{t} \kappa_{1-s} \, \mathrm{d}s \big) \, \mathrm{d}t',
    \end{talign}
    we obtain that $\int_0^t \sqrt{2 \eta_{1-t'}} \exp \big( - \int_{t'}^{t} \kappa_{1-s} \, \mathrm{d}s \big) \, \mathrm{d}B_{t'}$ has the same distribution as $\sqrt{2 \int_0^t \eta_{1-t'} \exp \big( - 2 \int_{t'}^{t} \kappa_{1-s} \, \mathrm{d}s \big) \, \mathrm{d}t'} \epsilon$, where $\epsilon \sim N(0,1)$. 

\subsubsection{Proof of \Cref{lem:equal_process_distributions}}

This is a result that has been used by previous works, e.g. \cite[Sec.~2.1]{debortoli2021diffusion}, but their derivation lacks rigor as it uses some unexplained approximations. While natural, the result is not common knowledge in the area. We provide a derivation which is still in discrete time, and hence not completely formal, but that corrects the gaps in the proof of \cite{debortoli2021diffusion}.

We introduce the short-hand
\begin{talign}
    \vec{b}(x,t) &= - \kappa_{1-t} x + \big( \frac{\sigma(1-t)^2}{2} - \eta_{1-t} \big) \mathfrak{s}(x,1-t), \\
    b(x,t) &= \kappa_t X_t + \big( \frac{\sigma(t)^2}{2} + \eta_t \big) \mathfrak{s}(X_t,t), \\
    \vec{\sigma}(t) &= \sigma(1-t).
\end{talign}
Remark that $b(x,t) = - \vec{b}(x,1-t) + \sigma(t)^2 \mathfrak{s}(X_t,t)$.

Suppose that we discretize the forward process $\vec{X}$ using $K+1$ equispaced timesteps:
\begin{talign} \label{eq:forward_discretized}
    x_{k+1} = x_{k} + h \vec{b}(x_k,kh) + \sqrt{h} \vec{\sigma}(kh) \epsilon_k, \qquad  \text{with} \ \epsilon_k \sim N(0,1).
\end{talign}
It is important to remark that $x_{k+1} - x_k = O(h^{1/2})$. Throughout the proof we will keep track of all terms up to linear order in $h$, while neglecting terms of order $O(h^{3/2})$ and higher.
The distribution of the discretized forward process is:
\begin{talign}
    \vec{p}(x_{0:K}) = \vec{p}_0(x_0) \prod_{k=0}^{K-1} \vec{p}_{k+1|k}(x_{k+1}|x_{k}), \qquad \text{where} \qquad
    \vec{p}_{k+1|k}(x_{k+1}|x_{k}) = \frac{\exp \big( - \frac{\|x_{k+1} - x_{k} - h \vec{b}(x_k,kh) \|^2}{2 h \vec{\sigma}(kh)^2} \big)}{(2\pi h \vec{\sigma}(kh)^2)^{d/2}}
\end{talign}
Using telescoping products, we have that
\begin{talign}
\begin{split} \label{eq:telescoping}
    \vec{p}(x_{0:K}) &= \vec{p}_{K}(x_{K}) \prod_{k=0}^{K-1} \vec{p}_{k+1|k}(x_{k+1}|x_{k}) \frac{\vec{p}_{k}(x_{k})}{\vec{p}_{k+1}(x_{k+1})} \\ &= \vec{p}_{K}(x_{K}) \prod_{k=0}^{K-1} \vec{p}_{k+1|k}(x_{k+1}|x_{k}) \exp \big( \log(\vec{p}_{k}(x_{k})) - \log(\vec{p}_{k+1}(x_{k+1})) \big) 
\end{split}
\end{talign}
We can use a discrete time version of Ito's lemma:
\begin{talign} \label{eq:update_log_p_ito}
    \log \vec{p}(x_{k+1},(k+1)h) &\approx \log \vec{p}(x_{k},kh) + h \big( \partial_t \log \vec{p}(x_{k},kh) + 
    \frac{\vec{\sigma}(kh)^2}{2} \Delta \log \vec{p}(x_k,kh) 
    \big) \\ &\qquad 
    + \langle \nabla \log \vec{p}(x_k,kh), x_{k+1} - x_k \rangle + O(h^{3/2}).
\end{talign}
Using equation \eqref{eq:forward_discretized} and a Taylor approximation, observe that
\begin{talign}
\begin{split} \label{eq:nabla_log_diff_x}
    &\langle \nabla \log p(x_k,kh), x_{k+1} - x_k \rangle \\ &= \langle \nabla \log p(x_{k+1},(k+1)h) -\nabla^2 \log p(x_{k+1},(k+1)h) (x_{k+1} - x_k), x_{k+1} - x_k \rangle + O(h^{3/2}) \\ &= \langle \nabla \log p(x_{k+1},(k+1)h), x_{k+1} - x_k \rangle 
    \\ &\qquad - \langle h \vec{b}(x_k,kh) + \sqrt{h} \vec{\sigma}(kh) \epsilon_k, \nabla^2 \log p(x_{k+1},(k+1)h) \big( h \vec{b}(x_k,kh) + \sqrt{h} \vec{\sigma}(kh) \epsilon_k \big) \rangle  + O(h^{3/2}) \\ &= \langle \nabla \log p(x_{k+1},(k+1)h), x_{k+1} - x_k \rangle - h \vec{\sigma}(kh)^2 \Delta \log p(x_{k+1},(k+1)h)  + O(h^{3/2}).
\end{split}
\end{talign}
And since $\vec{p}$ satisfies the Fokker-Planck equation
\begin{talign}
    \partial_t \vec{p}_t = \nabla \cdot \big( (-\vec{b}(x,t) + \frac{\vec{\sigma}(t)^2}{2} \nabla \log \vec{p}_t(x) ) \vec{p}_t \big),
\end{talign}
we have that
\begin{talign}
\begin{split}
    \partial_t \log \vec{p}_t &= \frac{\partial_t \vec{p}_t}{\vec{p}_t} = \frac{\nabla \cdot \big( (-\vec{b}(x,t) + \frac{\vec{\sigma}(t)^2}{2} \nabla \log \vec{p}_t(x) ) \vec{p}_t \big)}{\vec{p}_t} \\ &= 
    - \nabla \cdot \vec{b}(x,t) + \frac{\vec{\sigma}(t)^2}{2} \Delta \log \vec{p}_t(x) + \langle -\vec{b}(x,t) + \frac{\vec{\sigma}(t)^2}{2} \nabla \log \vec{p}_t(x), \nabla \log \vec{p}_t(x) \rangle.
\end{split}
\end{talign}
Hence,
\begin{talign}
\begin{split} \label{eq:partial_t}
    &\partial_t \log p(x_{k},kh) = \partial_t \log p(x_{k+1},(k+1)h) + O(h^{1/2}) \\ &= - \nabla \cdot \vec{b}(x_{k+1},(k+1)h) + \frac{\vec{\sigma}((k+1)h)^2}{2} \Delta \log \vec{p}(x_{k+1},(k+1)h) \\ &\qquad + \langle -\vec{b}(x_{k+1},(k+1)h) + \frac{\vec{\sigma}((k+1)h)^2}{2} \nabla \log \vec{p}(x_{k+1},(k+1)h), \nabla \log \vec{p}(x_{k+1},(k+1)h) \rangle + O(h^{1/2}).
\end{split}
\end{talign}
If we plug \eqref{eq:nabla_log_diff_x} and \eqref{eq:partial_t} into \eqref{eq:update_log_p_ito}, we obtain
\begin{talign}
\begin{split} \label{eq:log_difference}
    &\log p(x_{k+1},(k+1)h) - \log p(x_{k},kh) \\ &= h \big( \! - \! \nabla \cdot \vec{b}(x_{k+1},(k+1)h) \! + \! \langle -\vec{b}(x_{k+1},(k+1)h) \! + \! \frac{\vec{\sigma}((k+1)h)^2}{2} \nabla \log \vec{p}(x_{k+1},(k+1)h), \nabla \log \vec{p}(x_{k+1},(k+1)h) \rangle \big) \\ &\qquad + \langle \nabla \log p(x_{k+1},(k+1)h), x_{k+1} - x_k \rangle + O(h^{3/2}) \\ &= \frac{\langle 2 h \vec{\sigma}(kh)^2 \nabla \log p(x_{k+1},(k+1)h), x_{k+1} - x_k - h \vec{b}(x_{k+1},(k+1)h) \rangle}{2 h \vec{\sigma}(kh)^2} \\ &\qquad + h \big( - \nabla \cdot \vec{b}(x_{k+1},(k+1)h) + \frac{\vec{\sigma}((k+1)h)^2}{2} \| \nabla \log \vec{p}(x_{k+1},(k+1)h) \|^2 \big) + O(h^{3/2}).
\end{split}
\end{talign}
Applying a discrete time version of Ito's lemma again, we have that
\begin{talign}
\begin{split}
    \vec{b}(x_k,kh) 
    &= \vec{b}(x_{k+1},(k+1)h) - h \big( \partial_t \vec{b}(x_{k+1},(k+1)h) + 
    \frac{\vec{\sigma}((k+1)h)^2}{2} \Delta \vec{b}(x_{k+1},(k+1)h) 
    \big)
    \\ &\qquad + \nabla \vec{b}(x_{k+1},(k+1)h)^{\top} (x_{k} - x_{k+1}) + O(h^{3/2}) \\ &= \vec{b}(x_{k+1},(k+1)h) + \nabla \vec{b}(x_{k+1},(k+1)h)^{\top} (x_{k} - x_{k+1}) + O(h).
\end{split}
\end{talign}
where $\Delta \vec{b}$ denotes the component-wise Laplacian of $\vec{b}$. Thus,
\begin{talign}
\begin{split} \label{eq:log_transition}
    &\log \vec{p}_{k+1|k}(x_{k+1}|x_{k}) \\ &= - \frac{d}{2} \log \big( 2\pi h \vec{\sigma}(kh)^2 \big) - \frac{\|x_{k+1} - x_{k} - h \vec{b}(x_k,kh) \|^2}{2 h \vec{\sigma}(kh)^2} \\ &= - \frac{d}{2} \log \big( 2\pi h \vec{\sigma}(kh)^2 \big) - \frac{\|x_{k+1} - x_{k} - h (\vec{b}(x_{k+1},(k+1)h) + \nabla \vec{b}(x_{k+1},(k+1)h)^{\top} (x_{k} - x_{k+1})) \|^2}{2 h \vec{\sigma}(kh)^2} + O(h^{3/2}) \\ &= - \frac{d}{2} \log \big( 2\pi h \vec{\sigma}(kh)^2 \big) - \frac{\|x_{k+1} - x_{k} - h \vec{b}(x_{k+1},(k+1)h) \|^2}{2 h \vec{\sigma}(kh)^2} + \frac{\langle x_{k+1} - x_{k}, \nabla \vec{b}(x_{k+1},(k+1)h)^{\top} (x_{k} - x_{k+1}) \rangle}{\vec{\sigma}(kh)^2} + O(h^{3/2}) \\ &= - \frac{d}{2} \log \big( 2\pi h \vec{\sigma}(kh)^2 \big) - \frac{\|x_{k+1} - x_{k} - h \vec{b}(x_{k+1},(k+1)h) \|^2}{ h \vec{\sigma}(kh)^2} - \frac{h \vec{\sigma}(kh)^2 \langle \epsilon_k, \nabla \vec{b}(x_{k+1},(k+1)h)^{\top} \epsilon_k \rangle}{\vec{\sigma}(kh)^2} + O(h^{3/2}) \\
    &= - \frac{d}{2} \log \big( 2\pi h \vec{\sigma}(kh)^2 \big) - \frac{\|x_{k+1} - x_{k} - h \vec{b}(x_{k+1},(k+1)h) \|^2}{ h \vec{\sigma}(kh)^2} - h \Delta \vec{b}(x_{k+1},(k+1)h) + O(h^{3/2}) 
\end{split}
\end{talign}
Combining \eqref{eq:log_difference} and \eqref{eq:log_transition}, we obtain that
\begin{talign}
\begin{split}
    &\log \vec{p}_{k+1|k}(x_{k+1}|x_{k}) - \big( \log p(x_{k+1},(k+1)h) - \log p(x_{k},kh) \big) \\ &= - \frac{d}{2} \log \big( 2\pi h \vec{\sigma}(kh)^2 \big) - \frac{\|x_{k+1} - x_{k} - h \vec{b}(x_{k+1},(k+1)h) + h \vec{\sigma}(kh)^2 \nabla \log p(x_{k+1},(k+1)h) \|^2}{ h \vec{\sigma}(kh)^2} + O(h^{3/2})
    \\ &= - \frac{d}{2} \log \big( 2\pi h \vec{\sigma}((k+1)h)^2 \big) - \frac{\|x_{k+1} - x_{k} - h \vec{b}(x_{k+1},(k+1)h) + h \vec{\sigma}((k+1)h)^2 \nabla \log p(x_{k+1},(k+1)h) \|^2}{ h \vec{\sigma}((k+1)h)^2} + O(h^{3/2}). 
\end{split}    
\end{talign}
By Bayes rule, and taking the exponential of this equation, we obtain
\begin{talign}
\begin{split} \label{eq:bayes_application}
    \vec{p}_{k+1|k}(x_{k+1}|x_{k}) &:= \vec{p}_{k+1|k}(x_{k+1}|x_{k}) \frac{\vec{p}_{k}(x_{k})}{\vec{p}_{k+1}(x_{k+1})} \\ &= \frac{\exp \big( - \frac{\|x_{k} - x_{k+1} + h \vec{b}(x_{k+1},(k+1)h) - h \vec{\sigma}((k+1)h)^2 \nabla \log p(x_{k+1},(k+1)h) \|^2}{2 h \vec{\sigma}((k+1)h)^2} \big)}{(2\pi h \vec{\sigma}((k+1)h)^2)^{d/2}} + O(h^{3/2}).
\end{split}    
\end{talign}
Up to the $O(h^{3/2})$ term, the right-hand side is the conditional Gaussian corresponding to the update
\begin{talign} \label{eq:cond_gaussian_update}
    x_{k} = x_{k+1} + h \big( - \vec{b}(x_{k+1},(k+1)h) + \vec{\sigma}((k+1)h)^2 \nabla \log p(x_{k+1},(k+1)h) \big) + \sqrt{h} \vec{\sigma}((k+1)h) \epsilon_{k+1}, \ \epsilon_{k+1} \sim N(0,I).
\end{talign}
If we define $y_k = x_{K-k}$, and we use that $b(x,t) = - \vec{b}(x,1-t) + \vec{\sigma}(t)^2 \nabla \log p(x,1-t)$, we can rewrite \eqref{eq:cond_gaussian_update} as
\begin{talign}
\begin{split}
    y_{K-k} &= y_{K-k-1} + h \big( - \vec{b}(y_{K-k-1},(K-k-1)h) + \vec{\sigma}((K-k-1)h)^2 \nabla \log p(y_{K-k-1},(K-k-1)h) \big) \\ &\qquad\qquad\quad + \sqrt{h} \vec{\sigma}((K-k-1)h) \epsilon_{k} = y_{K-k-1} + h b(y_{K-k-1},kh) + \sqrt{h} \sigma(kh) \epsilon_{K-k-1}, \\
    &\implies y_{k+1} = y_{k} + h b(y_{k},kh) + \sqrt{h} \sigma(kh) \epsilon_{k}.
\end{split}
\end{talign}
And this is the Euler-Maruyama discretization of the backward process $\cev{X}$.
If we plug \eqref{eq:bayes_application} into \eqref{eq:telescoping}, we obtain that 
\begin{talign}
    \vec{p}(x_{0:K}) &\approx \vec{p}_{K}(x_{K}) \prod_{k=0}^{K-1} \vec{p}_{k+1|k}(x_{k+1}|x_{k}).
\end{talign}
which concludes the proof, as $\vec{p}_{K}(x_{K})$ is the initial distribution of the backward process, and $\vec{p}_{k+1|k}(x_{k+1}|x_{k})$ are its transition kernels. 

\subsection{The relationship between the noise predictor $\epsilon$ and the score function} \label{subsec:hat_epsilon_score}
    Applying \Cref{lem:OU_process} with the choices of $(\kappa_t)_{t \geq 0}$ and $(\eta_t)_{t \geq 0}$ for DDIM, we obtain that $\vec{X}_t$ has the same distribution as
    \begin{talign} \label{eq:hat_X_t_diffusion}
        \hat{X}_t = \sqrt{\bar{\alpha}_{1-t}} \vec{X}_0 + \sqrt{1 - \bar{\alpha}_{1-t}} \epsilon, \qquad \epsilon \sim N(0,1).
    \end{talign}
    
    Since $\vec{X}_t$ and $\hat{X}_t$ have the same distribution, predicting the noise of $\vec{X}_t$ is equivalent to predicting the noise of $\hat{X}_t$. The noise predictor $\epsilon$ can be written as:
    \begin{talign} \label{eq:hat_epsilon_app}
        \epsilon(x,t) := \mathbb{E}[\epsilon|\hat{X}_{1-t} = x] = \mathbb{E} \big[\epsilon| \sqrt{\bar{\alpha}_{t}} \vec{X}_0 + \sqrt{1 - \bar{\alpha}_{t}} \epsilon = x \big] = \mathbb{E} \big[\frac{x - \sqrt{\bar{\alpha}_{t}} \vec{X}_0}{\sqrt{1 - \bar{\alpha}_{t}}} | \sqrt{\bar{\alpha}_{t}} \vec{X}_0 + \sqrt{1 - \bar{\alpha}_{t}} \epsilon = x \big]
    \end{talign}
    And the score function $\mathfrak{s}(x,t) := \nabla \log \vec{p}_{1-t}(x)$ admits the expression
    \begin{talign} \label{eq:score_app}
        \mathfrak{s}(x,t) &:= 
        \nabla \log \vec{p}_{1-t}(x) = \frac{\nabla \vec{p}_{1-t}(x)}{\vec{p}_{1-t}(x)} = \frac{\nabla \mathbb{E}[\vec{p}_{1-t|0}(x|\vec{X}_0)]}{\vec{p}_{1-t}(x)} = \frac{\mathbb{E}[\nabla \log \vec{p}_{1-t|0}(x|\vec{X}_0) \vec{p}_{1-t|0}(x|\vec{X}_0)]}{\vec{p}_{1-t}(x)}, 
    \end{talign}
    where
    \begin{talign}
        \vec{p}_{1-t|0}(x|\vec{X}_0) = \frac{\exp(-\|x - \sqrt{\bar{\alpha}_t} Y_1\|^2/(2(1-\bar{\alpha}_{t})))}{(2\pi (1-\bar{\alpha}_{t}))^{d/2}} \implies \nabla \log \vec{p}_{t|1}(x|Y_1) = - \frac{x - \sqrt{\bar{\alpha}_t} Y_1}{1-\bar{\alpha}_{t}}. 
    \end{talign} 
    Plugging this into the right-hand side of \eqref{eq:score_app} and using Bayes' rule, we get
    \begin{talign} \label{eq:score_app_2}
        \mathfrak{s}(x,t) = \mathbb{E}\big[ - \frac{x - \sqrt{\bar{\alpha}_{t}} \vec{X}_0}{1 - \bar{\alpha}_{t}} | \sqrt{\bar{\alpha}_{t}} \vec{X}_0 + \sqrt{1 - \bar{\alpha}_{t}} \epsilon = x \big].
    \end{talign}
    Comparing the right-hand sides of \eqref{eq:hat_epsilon_app} and \eqref{eq:score_app_2}, we obtain that $\mathfrak{s}(x,t) = - \frac{\epsilon(x,t)}{\sqrt{1-\bar{\alpha}_t}}$.

\subsection{The relationship between the vector field $v$ and the score function} \label{subsec:v_score}
    By construction \citep{lipman2023flow,albergo2023building,albergo2023stochastic}, we have that
    \begin{talign} 
    \begin{split} \label{eq:v}
        v(x,t) &= \mathbb{E}[ \dot{\alpha}_t Y_1 + \dot{\beta}_t Y_0 | x = \alpha_t Y_1 + \beta_t Y_0] \\ &= \mathbb{E}[ \frac{\dot{\alpha}_t(x - \beta_t Y_0)}{\alpha_t} + \dot{\beta}_t Y_0 | x = \alpha_t Y_1 + \beta_t Y_0] \\ &= \frac{\dot{\alpha}_t}{\alpha_t} x + (\dot{\beta}_t - \frac{\dot{\alpha}_t}{\alpha_t} \beta_t) \mathbb{E}[ Y_0 | x = \alpha_t Y_1 + \beta_t Y_0 ], 
    \end{split}
    \end{talign}
    where we used that 
    $Y_1 = (x - \beta_t Y_0)/\alpha_t$.
    Also, we can write the score as follows 
    \begin{talign} \label{eq:score}
        \mathfrak{s}(x,t) \! := \! \nabla \log p_t(x) \! = \! \frac{\nabla p_t(x)}{p_t(x)} \! = \! \frac{\nabla \mathbb{E}[p_{t|1}(x|Y_1)]}{p_t(x)} \! = \! \frac{\mathbb{E}[\nabla p_{t|1}(x|Y_1)]}{p_t(x)} \! = \! \frac{\mathbb{E}[p_{t|1}(x|Y_1) \nabla \log p_{t|1}(x|Y_1)]}{p_t(x)},
    \end{talign}
    where
    \begin{talign}
        p_{t|1}(x|Y_1) = \frac{\exp(-\|x - \alpha_t Y_1\|^2/(2\beta^2_t))}{(2\pi \beta^2_t)^{d/2}} \implies \nabla \log \vec{p}_{t|1}(x|Y_1) = - \frac{x - \alpha_t Y_1}{\beta^2_t} 
    \end{talign} 
    Plugging this back into the right-hand side of \eqref{eq:score}, we obtain
    \begin{talign} 
    \begin{split} \label{eq:score2}
        \mathfrak{s}(x,t) &= - \frac{\mathbb{E}[ p_{t|1}(x|Y_1) \frac{x - \alpha_t Y_1}{\beta^2_t} ]}{p_{t}(x)} = - \frac{\int \vec{p}_{t|1}(x|Y_1) p_{1}(Y_1) \frac{x - \alpha_t Y_1}{\beta^2_t} \, dY_1}{\vec{p}_{t}(x)} \\ &= - \int p_{1|t}(Y_1|x) \frac{x - \alpha_t Y_1}{\beta^2_t} \, dY_1 = - \mathbb{E}[ \frac{x - \alpha_t Y_1}{\beta^2_t} | x = \alpha_t Y_1 + \beta_t Y_0 ] 
        = - \frac{\mathbb{E}[ Y_0 | x = \alpha_t Y_1 + \beta_t Y_0 ]}{\beta_t}
    \end{split}
    \end{talign}
    The last equality holds because $(x - \alpha_t Y_1)/\beta_t = Y_0$. Putting together \eqref{eq:v} and \eqref{eq:score2}, we obtain that
    \begin{talign} \label{eq:v_and_s}
        v(x,t) = \frac{\dot{\alpha}_t}{\alpha_t} x + \beta_t(\frac{\dot{\alpha}_t}{\alpha_t} \beta_t - \dot{\beta}_t) \mathfrak{s}(x,t) \iff \mathfrak{s}(x,t) = \frac{1}{\beta_t(\frac{\dot{\alpha}_t}{\alpha_t} \beta_t - \dot{\beta}_t)} \big( v(x,t) - \frac{\dot{\alpha}_t}{\alpha_t} x \big)
    \end{talign}
    Thus, the 
    ODE \eqref{eq:FM_ode} 
    can be rewritten like this:
    \begin{talign} \label{eq:FM_ode_rewritten}
        \frac{\mathrm{d}X_t}{\mathrm{d}t} &= \frac{\dot{\alpha}_t}{\alpha_t} X_t + \beta_t(\frac{\dot{\alpha}_t}{\alpha_t} \beta_t - \dot{\beta}_t) \mathfrak{s}(X_t,t), \qquad X_0 \sim p_0.
    \end{talign}
    To allow for an arbitrary diffusion coefficient, we need to add a correction term to the drift:
    \begin{talign} \label{eq:SDE_FM_generic}
        \mathrm{d}X_t &= \big(\frac{\dot{\alpha}_t}{\alpha_t} X_t + \big( \frac{\sigma(t)^2}{2} + \beta_t(\frac{\dot{\alpha}_t}{\alpha_t} \beta_t - \dot{\beta}_t) \big) \mathfrak{s}(X_t,t) \big) \mathrm{d}t + \sigma(t) \mathrm{d}B_t, \qquad X_0 \sim p_0.
    \end{talign}
    This can be easily shown by writing down the Fokker-Planck equations for \eqref{eq:FM_ode_rewritten} and \eqref{eq:SDE_FM_generic}, and observing that they are the same up to a cancellation of terms.
    Finally, if we plug the right-hand side of \eqref{eq:v_and_s} into \eqref{eq:SDE_FM_generic}, we obtain the SDE for Flow Matching with arbitrary noise schedule (equation  \eqref{eq:FM_general_diffusion_coeff}).

\section{Stochastic optimal control as maximum entropy RL in continuous space and time}
\label{subsec:max_ent_RL}

In this section, we bridge KL-regularized (or MaxEnt) reinforcement learning and stochastic optimal control. We show that when the action space is Euclidean and the transition probabilities are conditional Gaussians, taking the limit in which the stepsize goes to zero on the KL-regularized RL problem gives rise to the SOC problem.  A consequence of this connection is that all algorithms for KL-regularized RL admit an analog for diffusion fine-tuning. This is not novel, but it may be useful for researchers that are familiar with RL fine-tuning formulations. 

\Cref{subsec:proof_eq_cond_kl} is providing a more direct, rigorous, continuous-time connection between SOC and MaxEnt RL, as it shows that the expected control cost is equal to the KL divergence between the distributions over trajectories, conditioned on the starting points (see equation \eqref{eq:cond_kl}). 

\subsection{Maximum entropy RL}
Several diffusion fine-tuning methods \citep{black2024training,uehara2024finetuning} are based on KL-regularized RL, also known as maximum entropy RL, which we review in the following.
In the classical reinforcement learning (RL) setting, we have an agent that, starting from state $s_0 \sim p_0$, iteratively observes a state $s_k$, takes an action $a_k$ according to a policy $\pi(a_k;s_k,k)$ which leads to a new state $s_{k+1}$ according to a fixed transition probability $p(s_{k+1}|a_k,s_k)$, and obtains rewards $r_k(s_k,a_k)$. 
This can be summarized into a trajectory $\tau = ((s_k,a_k))_{k=0}^{K}$. 
The goal is to optimize the policy $\pi$ in order to maximize the expected total reward, i.e. $\max_{\pi} \mathbb{E}_{\tau \sim \pi,p} [\sum_{k=0}^{K} r_k(s_k,a_k)
]$. 


Maximum entropy RL (MaxEnt RL; \cite{ziebart2008maximum}) amounts to adding the entropy $H(\pi)$ of the 
policy $\pi(\cdot;s_k,k)$ to the reward for each step $k$, in order to encourage exploration and improve robustness to changes in the environment: $\max_{\pi} \mathbb{E}_{\tau \sim \pi,p} [\sum_{k=0}^{K} r_k(s_k,a_k)
+ \sum_{k=0}^{K-1} H(\pi(\cdot;s_k,k))]$ \footnote{The entropy terms are usually multiplied by a factor to tune their magnitude, but one can equivalently rescale the rewards, which is why we do not add any factor.}. As a generalization, one can regularize using the negative KL divergence between $\pi(\cdot;s_k,k)$ and a base 
policy $\pi_{\mathrm{base}}(\cdot;s_k,k)$: 
\begin{talign} \label{eq:maxent_rl}
\max_{\pi} \mathbb{E}_{\tau \sim \pi,p} [\sum_{k=0}^{K} r_k(s_k,a_k) 
- \sum_{k=0}^{K-1} \mathrm{KL}(\pi(\cdot;s_k,k)||\pi_{\mathrm{base}}(\cdot;s_k,k))],
\end{talign}
which prevents the learned policy to deviate too much from the base policy. Each 
policy $\pi$ induces a distribution $q(\tau)$ over trajectories $\tau$, and the MaxEnt RL problem \eqref{eq:maxent_rl} can be expressed solely in terms of such distributions (\Cref{lem:KL_equality} in \Cref{sec:proofs_max_ent_RL}):
\begin{talign} \label{eq:distribution_maxent_RL_first}
    \max_{q} \mathbb{E}_{\tau \sim q}[\sum_{k=0}^{K} r_k(s_k,a_k)
    ] - \mathrm{KL}(q||q^{\mathrm{base}}),
\end{talign}
where $q^{\mathrm{base}}$ is the distribution induced by the base 
policy $\pi_{\mathrm{base}}$, and the maximization is over all distributions $q$ such that their marginal for $s_0$ is $p_0$. We can further recast this problem as (\Cref{lem:q_star_lemma} in \Cref{sec:proofs_max_ent_RL}):
\begin{talign} \label{eq:distribution_maxent_RL}
    &\min_{q} \mathrm{KL}(q||q^*), \qquad \text{where} \ 
     q^*(\tau) := 
    q^{\mathrm{base}}(\tau) \exp\big( \sum_{k=0}^{K} r_k(s_k,a_k) 
    - \mathcal{V}(s_0,0) \big), 
\end{talign}
where 
\begin{talign}
\begin{split} \label{eq:value_function_maxent_RL}
    \mathcal{V}(s_k,k) &:= \log\big( \mathbb{E}_{\tau \sim \pi_{\mathrm{base}},p}[ \exp\big( \sum_{k'=k}^{K} r_{k'}(s_{k'},a_{k'}) 
    \big) | s_k] \big) \\ &= \max_{\pi} \mathbb{E}_{\tau \sim \pi,p} \big[\sum_{k'=k}^{K} r_{k'}(s_{k'},a_{k'})
    - \sum_{k'=k}^{K-1} \mathrm{KL}(\pi(\cdot;s_{k'},k')||\pi_{\mathrm{base}}(\cdot;s_{k'},k')) | s_{k} \big]
\end{split}
\end{talign}
is the value function.
Problem \eqref{eq:distribution_maxent_RL} directly implies that the distribution induced by the optimal policy $\pi^*$ is the tilted distribution $q^*$ (which has initial marginal $p_0$). 

\subsection{From maximum entropy RL to stochastic optimal control}
The following well-known result, which we prove in \Cref{sec:proofs_max_ent_RL}, shows that in a natural sense, the continuous-time continuous-space version of MaxEnt RL is the SOC
framework introduced in \Cref{sec:SOC_formulation}. In particular, when states and actions are vectors in $\R^d$, policies are specified by a vector field $u$ (the control), and transition probabilities are conditional Gaussians, the MaxEnt RL problem becomes an SOC problem when the number of timesteps grows to infinity. 

\begin{proposition} \label{prop:max_ent_stochastic_optimal_control}
    Suppose that 
    \begin{enumerate}[label=(\roman*)]
    \item The state space and the action space are 
    $\R^d$,
    \item Policies $\pi$ are specified as $\pi(a_k;s_k,k) = \delta(a_k - u(s_k,kh))$, where $u : \R^d \times [0,T] \to \R^d$ is a vector field, and $\delta$ denotes the Dirac delta, 
    \item 
    Transition probabilities are conditional Gaussian densities: $p(s_{k+1}|a_k,s_k) = N(s_k + h (b(s_k,kh) + \sigma(kh) a_k), h 
    \sigma(kh)\sigma(kh)^{\top})$, where $h  = T/K$ is the stepsize, and $b$ and $\sigma$ are defined as in \Cref{sec:SOC_formulation}. 
    \end{enumerate}
    Then, in the limit in which the number of steps $K$ grows to infinity, the problem \eqref{eq:maxent_rl} is equivalent to the SOC
    problem \eqref{eq:control_problem_def}-\eqref{eq:controlled_SDE}, identifying
    \begin{itemize}
    \item the sequence of states $(s_k)_{k=0}^{k}$ with the trajectory $\bm{X}^u = (X^u_t)_{t \in [0,1]}$,
    \item the running reward $\sum_{k=0}^{K-1} r_k(s_k,a_k)$ with the negative running cost $- \int_0^T f(X^u_t,t) \, \mathrm{d}t$, 
    \item the terminal reward $r_K(s_K,a_K)$ with the negative terminal cost $- 
    g(X^u_T)$, 
    \item the KL regularization $\mathbb{E}_{\tau \sim \pi,p}[\sum_{k=0}^{K-1} \mathrm{KL}(\pi(\cdot;s_k,k)||\pi_{\mathrm{base}}(\cdot;s_k,k))]$ with $\frac{1}{2}$ times the expected $L^2$ norm of the control $\frac{1}{2} \mathbb{E} \big[ \int_0^T 
   \|u(X^u_t,t)\|^2 \, \mathrm{d}t \big]$,
   \item and the value function $\mathcal{V}(s_k,k)$ defined in \eqref{eq:value_function_maxent_RL} with the negative value function $-V(x,t)$ defined in \Cref{sec:SOC_formulation}. 
   \end{itemize}
\end{proposition}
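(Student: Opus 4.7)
The plan is to rewrite the discrete MaxEnt RL objective as an expectation over Euler--Maruyama trajectories, read off each term as a Riemann-type sum, and take $h=T/K \to 0$.

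First, I would integrate out the deterministic action in assumption (ii) against the Gaussian transition in (iii) to obtain the induced state-to-state kernel $\tilde\pi(s_{k+1};s_k,k) = N(s_k + h(b(s_k,kh) + \sigma(kh) u(s_k,kh)),\, h\sigma(kh)\sigma(kh)^\top)$, and similarly $\tilde\pi_{\mathrm{base}}$ with $u \equiv 0$. The joint law of $(s_0,\dots,s_K)$ is then exactly the law of the Euler--Maruyama discretization of the controlled SDE \eqref{eq:controlled_SDE} with step $h$, under the identification $s_k \leftrightarrow X^u_{kh}$; under standard regularity on $b,\sigma,u$ this law converges weakly to the law of $\bm{X}^u$ as $h \to 0$.

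Second, I would evaluate the KL regularization in \eqref{eq:maxent_rl} using the closed form for the KL divergence between two Gaussians with identical covariance $h\sigma(kh)\sigma(kh)^\top$ and means differing by $h\sigma(kh) u(s_k,kh)$. Since $\sigma(kh)$ is invertible,
\begin{talign*}
\mathrm{KL}(\pi(\cdot;s_k,k)\|\pi_{\mathrm{base}}(\cdot;s_k,k)) = \tfrac{1}{2}(h\sigma u)^\top (h\sigma\sigma^\top)^{-1}(h\sigma u) = \tfrac{h}{2}\|u(s_k,kh)\|^2,
\end{talign*}
so summing over $k$ yields $\mathbb{E}[\sum_{k=0}^{K-1} \tfrac{h}{2}\|u(s_k,kh)\|^2]$, a Riemann sum whose limit is $\tfrac{1}{2}\mathbb{E}\big[\int_0^T \|u(X^u_t,t)\|^2\,dt\big]$. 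Identifying $r_k(s_k,a_k) = -h f(s_k,kh)$ for $k<K$ and $r_K(s_K,a_K) = -g(s_K)$ converts $\sum_{k=0}^K r_k$ into a Riemann-plus-terminal sum whose limit is $-\int_0^T f(X^u_t,t)\,dt - g(X^u_T)$. Combining these three pieces, and flipping the sign between $\max$ in \eqref{eq:maxent_rl} and $\min$ in \eqref{eq:control_problem_def}, gives term-by-term convergence of the MaxEnt RL objective to the SOC objective.

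Finally, for the value-function identification, I would substitute the same rewards into the path-integral expression \eqref{eq:value_function_maxent_RL}. Under $\pi_{\mathrm{base}}$, the discrete trajectory is the Euler--Maruyama discretization of the uncontrolled SDE ($u\equiv 0$), so
\begin{talign*}
\mathcal{V}(s_k,k) = \log \mathbb{E}_{\pi_{\mathrm{base}}}\!\Big[\exp\!\Big(\!\sum_{k'=k}^{K} r_{k'}\Big)\Big|s_k\Big] \longrightarrow \log \mathbb{E}\Big[\exp\!\Big(\!-\!\int_t^T \!f(X_s,s)\,ds - g(X_T)\Big)\Big|X_t=x\Big] = -V(x,t),
\end{talign*}
using the path-integral formula for $V$ recalled in \autoref{sec:soc_problem}.

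The main obstacle is making the $h\to 0$ passage fully rigorous: term-by-term convergence of Riemann sums must be lifted to convergence of expectations of functionals of the whole trajectory. A clean way is to adopt the smoothness and growth assumptions of \cite{domingoenrich2023stochastic,nüsken2023solving} on $b,\sigma,u,f,g$, invoke standard weak-convergence (or strong $L^2$) bounds for the Euler--Maruyama scheme, and apply dominated convergence to swap limits with the expectation; the log-expectation appearing in the value-function step additionally requires an exponential integrability bound on $\int f\,ds + g(X_T)$, which again follows from the standard SOC assumptions.
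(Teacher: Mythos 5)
Your proposal is correct, and it takes a slightly different and arguably more transparent route than the paper for the key KL step. The paper first converts the per-step KL into a KL between the path measures of the discretized processes (via the chain rule in Lemma~\ref{lem:KL_equality}), passes to the continuous limit, and then invokes the Girsanov theorem to read off $\mathrm{KL}(\mathbb{P}^u\|\mathbb{P}^0)=\tfrac{1}{2}\mathbb{E}\big[\int_0^T\|u(X^u_t,t)\|^2\,\mathrm{d}t\big]$. You instead compute the per-step KL \emph{exactly in the discrete setting} as the Gaussian identity $\tfrac{1}{2}(h\sigma u)^\top(h\sigma\sigma^\top)^{-1}(h\sigma u)=\tfrac{h}{2}\|u(s_k,kh)\|^2$ (using invertibility of $\sigma$), turning the KL regularizer into a bona fide Riemann sum that converges to $\tfrac{1}{2}\mathbb{E}\big[\int_0^T\|u\|^2\,\mathrm{d}t\big]$. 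This avoids Girsanov entirely, keeps the identification exact at finite $K$ rather than only in the limit, and makes the analogy between the discrete and continuous objectives more visibly term-by-term; the paper's route, by contrast, immediately yields the trajectory-level Radon--Nikodym derivative, which is the object it later needs for the change-of-measure characterization \eqref{eq:optimal_distribution_SOC}. Your treatment of the value-function identification via the path-integral representation is also correct, and is a step the paper omits from this particular proof (it follows there instead from Lemma~\ref{lem:q_star_lemma} together with the other identifications). Your closing remarks about the regularity needed to justify the $h\to 0$ passage and the exponential integrability for the log-expectation are a reasonable acknowledgment of the technical level at which both arguments operate.
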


A first consequence of this result is that every loss function designed for generic MaxEnt RL problems has a corresponding loss function for SOC
problems. The geometric structure of the latter allows for additional losses that do not have an analog in the classical MaxEnt RL setting; in particular, we can differentiate the state and terminal costs. 


A second consequence of \Cref{prop:max_ent_stochastic_optimal_control} is that the characterization \eqref{eq:distribution_maxent_RL} can be translated to the SOC setting. The analogs of the distributions $q^{*}$, $q^{\mathrm{base}}$ induced by the optimal policy $\pi^{*}$ and the base policy $\pi^{\mathrm{base}}$ are the distributions 
$p^*, p^{\mathrm{base}}$ induced by the optimal control $u^*$ and the null control.
For an arbitrary trajectory $\bm{X} = (X_t)_{t\in[0,T]}$, the 
relation between $\mathbb{P}^{*}$ and $\mathbb{P}^{\mathrm{base}}$ is given by
\begin{talign}
    \frac{\mathrm{d}\mathbb{P}^{*}}{\mathrm{d}\mathbb{P}^{\mathrm{base}}}(\bm{X}) = \exp ( - \int_0^T f(X_t,t) \, \mathrm{d}t - g(X_T) + V(X_0,0))
\end{talign}
where $V$ is the value function as defined in \Cref{sec:SOC_formulation}. Note that this matches the statement in \eqref{eq:optimal_distribution_SOC}.

\subsection{Proof of \Cref{prop:max_ent_stochastic_optimal_control}: from MaxEnt RL to SOC} \label{sec:proofs_max_ent_RL}


Since the transition $p(s_{k+1}|a_k,s_k)$ is fixed, for each $\pi$ we can define 
\begin{talign} \label{eq:tilde_pi_def}
\tilde{\pi}(a_k,s_{k+1};s_k,k) = \pi(a_k;s_k,k) p(s_{k+1}|a_k,s_k) \ \text{and} \ \tilde{\pi}_{\mathrm{base}}(a_k,s_{k+1};s_k,k) = \pi_{\mathrm{base}}(a_k;s_k,k) p(s_{k+1}|a_k,s_k), 
\end{talign}
and reexpress \eqref{eq:maxent_rl} as (see \Cref{lem:KL_equality})
\begin{talign} \label{eq:loss_maxent_RL}
    \min_{\tilde{\pi}} \mathbb{E}_{\tau \sim \tilde{\pi}} [\sum_{k=0}^{K} r_k(s_k,a_k) - \sum_{k=0}^{K-1} \mathrm{KL}(\tilde{\pi}(\cdot,\cdot;s_k,k)||\tilde{\pi}_{\mathrm{base}}(\cdot,\cdot;s_k,k))].
\end{talign}
Using the hypothesis of the proposition, we can write
\begin{talign}
\begin{split}
\tilde{\pi}(a_k,s_{k+1};s_k,k) &= \delta(a_k - u(s_k,k\eta)) N(s_k + \eta (b(s_k,k\eta) + \sigma(k\eta) a_k), \eta \sigma(k\eta)\sigma(k\eta)^{\top}) \\ &= \delta(a_k - u(s_k,k\eta)) \tilde{\pi}(s_{k+1};s_k,k), 
\end{split}
\end{talign}
where $\tilde{\pi}(s_{k+1};s_k,k) = N(s_k + \eta (b(s_k,k\eta) + \sigma(k\eta) u(s_k,k\eta)), \eta \sigma(k\eta)\sigma(k\eta)^{\top})$ is the state transition kernel. We set the base policy as $\pi_{\mathrm{base}}(a_k;s_k,k) = \delta(a_k)$, and we obtain analogously that $\tilde{\pi}(a_k,s_{k+1};s_k,k) = \delta(a_k) \tilde{\pi}_{\mathrm{base}}(s_{k+1};s_k,k)$ with $\tilde{\pi}_{\mathrm{base}}(s_{k+1};s_k,k) = N(s_k + \eta b(s_k,k\eta), \eta \sigma(k\eta)\sigma(k\eta)^{\top})$. Now, if we take $K$ large, the trajectory $(s_k)_{k=0}^{K}$ generated by $\tilde{\pi}$ can be regarded as the Euler-Maruyama discretization of a solution $X^u$ of the controlled SDE \eqref{eq:controlled_SDE}, while the trajectory generated by $\tilde{\pi}_{\mathrm{base}}$ is the discretization of the uncontrolled process $X^0$ obtained by setting $u = 0$. As a consequence
\begin{talign}
\begin{split}
    &\lim_{K \to \infty} \mathbb{E}_{\tau \sim \tilde{\pi}} [\sum_{k=0}^{K-1}\mathrm{KL}(\tilde{\pi}(\cdot,\cdot;s_k,k)||\tilde{\pi}_{\mathrm{base}}(\cdot,\cdot;s_k,k))] \\ &= \lim_{K \to \infty} \mathbb{E}_{\tau \sim \tilde{\pi}} [\sum_{k=0}^{K-1}\mathrm{KL}(\tilde{\pi}(\cdot;s_k,k)||\tilde{\pi}_{\mathrm{base}}(\cdot;s_k,k))] = \mathbb{E}_{X^u \sim \mathbb{P}^{u}} [ \log \frac{\mathrm{d}\mathbb{P}^{u}}{\mathrm{d}\mathbb{P}^{0}}(X^u)],
\end{split}
\end{talign}
where $\mathbb{P}^{u}$ and $\mathbb{P}^{0}$ are the measures of the processes $X^u$ and $X^0$, respectively. The Girsanov theorem (\Cref{cor:girsanov_sdes}) implies that $\log \frac{\mathrm{d}\mathbb{P}^{u}}{\mathrm{d}\mathbb{P}^{0}}(X^u) = - \int_0^T \langle u(X^u_t,t), \mathrm{d}B_t \rangle - \frac{1}{2} \int_0^T \|u(X^u_t,t)\|^2 \, \mathrm{d}t$, which implies that $\mathbb{E}_{X^u \sim \mathbb{P}^{u}}[\log \frac{\mathrm{d}\mathbb{P}^{u}}{\mathrm{d}\mathbb{P}^{0}}(X^u)] = - \frac{1}{2} \mathbb{E}_{X^u \sim \mathbb{P}^{u}}[\int_0^T \|u(X^u_t,t)\|^2 \, \mathrm{d}t]$. Setting the rewards $r_k(a_k,s_k) = \eta f(s_k, k\eta)$ for $k \in \{0,\dots,K-1\}$ and $r_{K}(a_K,s_K) = \eta g(s_k)$, where $f$ and $g$ are as in \Cref{sec:SOC_formulation}, yields the following limiting object: 
\begin{talign}
\lim_{K \to \infty} \mathbb{E}_{\tau \sim \tilde{\pi}} [\sum_{k=0}^{K} r_k(s_k,a_k)] = \mathbb{E}_{X^u \sim \mathbb{P}^{u}}[\int_0^T f(X^u_t,t) \, dt + g(X^u_T)].
\end{talign}
Hence, the limit of the MaxEnt RL loss \eqref{eq:loss_maxent_RL} is the 
SOC loss \eqref{eq:control_problem_def}. 

\begin{lemma} \label{lem:KL_equality}
Let $\tilde{\pi}(a_k,s_{k+1};s_k,k)$ and $\tilde{\pi}_{\mathrm{base}}(a_k,s_{k+1};s_k,k)$ be as defined in \eqref{eq:tilde_pi_def}. $\mathrm{KL}(\tilde{\pi}(\cdot,\cdot;s_k,k)||\tilde{\pi}_{\mathrm{base}}(\cdot,\cdot;s_k,k))]$ and $\mathrm{KL}(\pi(\cdot;s_k,k)||\pi_{\mathrm{base}}(\cdot;s_k,k))]$ are equal. Moreover, if $q$, $q^{\mathrm{base}}$ denote the distributions over trajectories induced by $\pi$, $\pi_{\mathrm{base}}$, we have that
\begin{talign} \label{eq:equality_KL}
    \mathrm{KL}(q||q^{\mathrm{base}}) = \mathbb{E}[\sum_{k=0}^{K-1} \mathrm{KL}(\pi(\cdot;s_k,k)||\pi_{\mathrm{base}}(\cdot;s_k,k))].
\end{talign}
\end{lemma}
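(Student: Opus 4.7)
The plan is to establish both identities by direct computation, exploiting that $\tilde{\pi}$ and $\tilde{\pi}_{\mathrm{base}}$ share the same transition kernel $p(s_{k+1}|a_k,s_k)$, so this factor cancels inside every log-ratio.

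For the first identity, I would substitute the factorization $\tilde{\pi}(a_k,s_{k+1};s_k,k) = \pi(a_k;s_k,k)\, p(s_{k+1}|a_k,s_k)$ (and analogously for $\tilde{\pi}_{\mathrm{base}}$) into the definition of the KL divergence. The ratio $\tilde{\pi}/\tilde{\pi}_{\mathrm{base}}$ inside the logarithm reduces to $\pi(a_k;s_k,k)/\pi_{\mathrm{base}}(a_k;s_k,k)$ because the $p(s_{k+1}|a_k,s_k)$ factors cancel, and then integrating out $s_{k+1}$ via $\int p(s_{k+1}|a_k,s_k)\,ds_{k+1} = 1$ collapses the expression to $\int \pi(a_k;s_k,k) \log (\pi(a_k;s_k,k)/\pi_{\mathrm{base}}(a_k;s_k,k))\,da_k = \mathrm{KL}(\pi(\cdot;s_k,k)||\pi_{\mathrm{base}}(\cdot;s_k,k))$.

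For the second identity, I would expand the trajectory density as $q(\tau) = p_0(s_0) \prod_{k=0}^{K-1} \pi(a_k;s_k,k)\, p(s_{k+1}|a_k,s_k)$ (with a terminal $\pi(a_K;s_K,K)$ factor if the trajectory carries an action at step $K$), and analogously for $q^{\mathrm{base}}$. Since the initial law $p_0$ and the transition kernels are identical under both measures, everything cancels in $\log(q(\tau)/q^{\mathrm{base}}(\tau))$ except a telescoping sum $\sum_{k} \log(\pi(a_k;s_k,k)/\pi_{\mathrm{base}}(a_k;s_k,k))$. Taking expectation under $q$ and applying the tower property---integrating $a_k$ conditional on $s_k$ to recover a local KL term via the first identity---yields $\mathrm{KL}(q||q^{\mathrm{base}}) = \mathbb{E}_{\tau \sim q}\left[\sum_{k=0}^{K-1} \mathrm{KL}(\pi(\cdot;s_k,k)||\pi_{\mathrm{base}}(\cdot;s_k,k))\right]$, which is exactly \eqref{eq:equality_KL}.

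The main obstacle is purely bookkeeping around the indexing convention---specifically, whether a terminal action $a_K$ contributes a KL term (if it does, the convention must be that the two policies coincide at $k=K$, or that the final policy is deterministic, so the $k=K$ term vanishes and the sum truncates at $K-1$). One should also be explicit that the expectation on the right-hand side is over the $s_k$-marginal under $q$, not under $q^{\mathrm{base}}$; this follows automatically from where the outer $\mathbb{E}_{\tau \sim q}$ comes from. No nontrivial probabilistic machinery beyond the chain rule for KL divergence is required.
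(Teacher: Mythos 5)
Your proposal is correct and matches the paper's proof in both parts: for the first identity you substitute the factorization $\tilde{\pi} = \pi \cdot p$, cancel the shared transition kernel inside the log-ratio, and integrate out $s_{k+1}$; for the second identity you expand the trajectory density as a product of stepwise factors, cancel $p_0$ and the transition kernels, and apply the tower property conditioning on $s_k$ to collapse each summand into the stepwise KL. The paper organizes the second part by first obtaining the KL between $\tilde{\pi}$ and $\tilde{\pi}_{\mathrm{base}}$ and then invoking the first identity at the end, but this is only a cosmetic reordering of the same argument, and your observation about the $K-1$ truncation matches the paper's convention.
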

\begin{proof}
    We have that 
    \begin{talign}
    \begin{split}
        &\mathrm{KL}(\tilde{\pi}(\cdot,\cdot;s_k,k)||\tilde{\pi}_{\mathrm{base}}(\cdot,\cdot;s_k,k))] = \sum_{a_k,s_{k+1}} \tilde{\pi}(a_k,s_{k+1};s_k,k) \log \frac{\tilde{\pi}(a_k,s_{k+1};s_k,k)}{\tilde{\pi}_{\mathrm{base}}(a_k,s_{k+1};s_k,k)} \\ &= \sum_{a_k,s_{k+1}} \pi(a_k;s_k,k) p(s_{k+1}|a_k,s_k) \log \frac{\pi(a_k;s_k,k) p(s_{k+1}|a_k,s_k)}{\pi_{\mathrm{base}}(a_k;s_k,k) p(s_{k+1}|a_k,s_k)} \\ &= \sum_{a_k,s_{k+1}} \pi(a_k;s_k,k) p(s_{k+1}|a_k,s_k) \log \frac{\pi(a_k;s_k,k)}{\pi_{\mathrm{base}}(a_k;s_k,k)} \\ &= \sum_{a_k} \pi(a_k;s_k,k) \big( \sum_{s_{k+1}} p(s_{k+1}|a_k,s_k) \big) \log \frac{\pi(a_k;s_k,k)}{\pi_{\mathrm{base}}(a_k;s_k,k)} \\ &= \sum_{a_k} \pi(a_k;s_k,k) \log \frac{\pi(a_k;s_k,k)}{\pi_{\mathrm{base}}(a_k;s_k,k)} = \mathrm{KL}(\pi(\cdot;s_k,k)||\pi_{\mathrm{base}}(\cdot;s_k,k))].
    \end{split}
    \end{talign}
    To prove \eqref{eq:equality_KL}, by construction we can write
    \begin{talign}
        q(\tau) = p_0(s_0) \prod_{k=0}^{K-1} \tilde{\pi}(a_k,s_{k+1};s_k,k), \qquad\qquad q^{\mathrm{base}}(\tau) = p_0(s_0) \prod_{k=0}^{K-1} \tilde{\pi}_{\mathrm{base}}(a_k,s_{k+1};s_k,k),
    \end{talign}
    which means that 
    \begin{talign}
    \begin{split}
        \mathrm{KL}(q||q^{\mathrm{base}}) &= \mathbb{E}_{\tau \sim q}[\log \frac{q(\tau)}{q^{\mathrm{base}}(\tau)}] = \mathbb{E}_{\tau \sim q}[\sum_{k=0}^{K-1} \log \frac{\tilde{\pi}(a_k,s_{k+1};s_k,k)}{\tilde{\pi}_{\mathrm{base}}(a_k,s_{k+1};s_k,k)}]
        \\ &= \sum_{k=0}^{K-1} \mathbb{E}_{\tau \sim q^{0:(k+1)}}[\log \frac{\tilde{\pi}(a_k,s_{k+1};s_k,k)}{\tilde{\pi}_{\mathrm{base}}(a_k,s_{k+1};s_k,k)}] \\ &= \sum_{k=0}^{K-1} \mathbb{E}_{\tau \sim q^{0:k}}[ \sum_{a_k, s_{k+1}} \tilde{\pi}(a_k,s_{k+1};s_k,k) \log \frac{\tilde{\pi}(a_k,s_{k+1};s_k,k)}{\tilde{\pi}_{\mathrm{base}}(a_k,s_{k+1};s_k,k)}] \\ &= \sum_{k=0}^{K-1} \mathbb{E}_{\tau \sim q^{0:k}}[ \mathrm{KL}(\tilde{\pi}(\cdot,\cdot;s_k,k)||\tilde{\pi}_{\mathrm{base}}(\cdot,\cdot;s_k,k))]
        \\ &= \sum_{k=0}^{K-1} \mathbb{E}_{\tau \sim q^{0:k}}[ \mathrm{KL}(\pi(\cdot;s_k,k)||\pi_{\mathrm{base}}(\cdot;s_k,k))]
        \\ &= \mathbb{E}_{\tau \sim q^{0:k}}[\sum_{k=0}^{K-1} \mathrm{KL}(\pi(\cdot;s_k,k)||\pi_{\mathrm{base}}(\cdot;s_k,k))]
    \end{split}
    \end{talign}
    Here, the notation $q^{0:k}$ denotes the trajectory $q$ up to the state $s_{k}$.
\end{proof}

\begin{lemma} \label{lem:q_star_lemma}
    The distribution-based MaxEnt RL formulation in \eqref{eq:distribution_maxent_RL_first} is equivalent to the the following problem:
    \begin{talign} \label{eq:equivalent_max_ent}
        \min_{q} \mathrm{KL}(q||q^*), \qquad \text{where} \ q^*(\tau) := \frac{q^{\mathrm{base}}(\tau) \exp\big( \sum_{k=0}^{K} r_k(s_k,a_k) 
        \big)}{\frac{1}{p_0(s_0)} \sum_{\{\tau' | s'_0 = s_0 \}} q^{\mathrm{base}}(\tau') \exp\big( \sum_{k=0}^{K} r_k(s'_k,a'_k) 
        \big)},
    \end{talign}
    where the minimization is over $q$ with marginal $p_0$ at step zero. The optimum of the problem is $q^*$, which satisfies the marginal constraint. The following alternative characterization of $q^*$ holds:
    \begin{talign}
        q^*(\tau) &= q^{\mathrm{base}}(\tau) \exp\big( \sum_{k=0}^{K} r_k(s_k,a_k) 
        - \mathcal{V}(s_0,0) \big), \\
        \text{where} \ \mathcal{V}(x,k) &= \max_{\pi} \mathbb{E}_{\tau \sim \pi,p} \big[\sum_{k'=k}^{K} r_{k'}(s_{k'},a_{k'})
        - \sum_{k'=k}^{K-1} \mathrm{KL}(\pi(\cdot;s_{k'},k')||\pi_{\mathrm{base}}(\cdot;s_{k'},k')) | s_k = x \big].
    \end{talign}
\end{lemma}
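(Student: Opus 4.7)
The plan is to reduce the constrained KL-regularized reward-maximization problem \eqref{eq:distribution_maxent_RL_first} to an unconstrained minimization of a single KL divergence, by completing the square inside the relative entropy. First, I would verify that the candidate optimum $q^*$ in \eqref{eq:equivalent_max_ent} is a valid probability distribution on trajectories whose step-zero marginal equals $p_0$. Writing $Z(s_0) := \sum_{\{\tau'\mid s'_0=s_0\}} q^{\mathrm{base}}(\tau')\exp\!\big(\sum_k r_k(s'_k,a'_k)\big)$, the denominator in the definition of $q^*$ is $Z(s_0)/p_0(s_0)$, so summing $q^*(\tau)$ over all trajectories with a fixed initial state $s_0$ recovers exactly $p_0(s_0)$; this also confirms $q^*$ lies in the feasible set.

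Next I would perform the key algebraic identity: for any trajectory distribution $q$ with step-zero marginal $p_0$,
\begin{talign*}
\mathbb{E}_{\tau\sim q}\!\Big[\sum_{k=0}^{K}\! r_k(s_k,a_k)\Big] - \mathrm{KL}(q\|q^{\mathrm{base}})
&= \mathbb{E}_{\tau\sim q}\!\Big[\log\frac{q^{\mathrm{base}}(\tau)\exp(\sum_k r_k)}{q(\tau)}\Big] \\
&= -\mathrm{KL}(q\|q^*) + \mathbb{E}_{s_0\sim p_0}\!\big[\log(Z(s_0)/p_0(s_0))\big],
\end{talign*}
where the last step substitutes $q^{\mathrm{base}}(\tau)\exp(\sum_k r_k) = q^*(\tau)\cdot Z(s_0)/p_0(s_0)$ and uses that $q$ has marginal $p_0$ so that the $\log(Z(s_0)/p_0(s_0))$ term depends only on this shared marginal, not on the rest of $q$. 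Hence maximizing the left-hand side over feasible $q$ is equivalent to minimizing $\mathrm{KL}(q\|q^*)$, and the unique optimum is $q=q^*$ (since $q^*$ itself is feasible and attains $\mathrm{KL}=0$), which proves \eqref{eq:equivalent_max_ent}.

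Finally, for the alternative characterization of $q^*$, I would identify the log-normalizer with the value function. By construction $q^{\mathrm{base}}(\tau)/p_0(s_0)$ is the conditional law of the trajectory under the base policy given $s_0$, so $\log(Z(s_0)/p_0(s_0)) = \log \mathbb{E}_{\tau\sim\pi_{\mathrm{base}},p}\!\big[\exp(\sum_k r_k(s_k,a_k))\,|\,s_0\big]$, which is precisely the first expression for $\mathcal{V}(s_0,0)$ in \eqref{eq:value_function_maxent_RL}. Substituting this into the definition of $q^*$ yields $q^*(\tau)=q^{\mathrm{base}}(\tau)\exp(\sum_k r_k(s_k,a_k)-\mathcal{V}(s_0,0))$. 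The equivalence with the second (max-over-policies) expression for $\mathcal{V}$ in \eqref{eq:value_function_maxent_RL} follows by applying the same argument to the suffix problem started at time $k$ with initial state $s_k$: the optimal KL-regularized reward from $s_k$ equals the log-partition function by the identity above.

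\textbf{Main obstacle.} This is essentially a bookkeeping proof; no single step is hard. The one subtlety is handling the step-zero marginal constraint carefully, since $\mathrm{KL}(q\|q^*)$ is only a well-defined and meaningful measure of closeness to $q^*$ when $q$ shares its initial marginal $p_0$. The second mild subtlety is being precise about the recursion used to match the two expressions for $\mathcal{V}$, but this is a direct instantiation of the identity at a later starting time and introduces no new ideas.
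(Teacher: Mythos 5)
Your proof is correct and follows essentially the same route as the paper's: the key step in both is rewriting the KL-regularized reward objective as $-\mathrm{KL}(q\|q^*)$ plus a term that depends only on the fixed initial marginal $p_0$, verifying feasibility of $q^*$, and then identifying the log-normalizer $\log(Z(s_0)/p_0(s_0))$ with $\mathcal{V}(s_0,0)$. The only cosmetic difference is that you expand starting from the objective while the paper expands $\mathrm{KL}(q\|q^*)$ directly, and the paper additionally invokes its Lemma~\ref{lem:KL_equality} to pass between the trajectory-KL and per-step policy-KL expressions for $\mathcal{V}$, which you handle by appealing to the same suffix argument.
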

\begin{proof}
    Let us expand $\mathrm{KL}(q||q^*)$:
    \begin{talign}
    \begin{split} \label{eq:KL_q_q_star}
        \mathrm{KL}(q||q^*) &= \mathbb{E}_{\tau \sim q} \big[ \log \frac{q(\tau)}{q^*(\tau)} \big] \\ &= \mathbb{E}_{\tau \sim q} \big[ \log q(\tau) - \log q^{\mathrm{base}}(\tau) - \sum_{k=0}^{K} 
        r_k(s_k,a_k)
        \\ &\qquad\qquad + \log \big( \frac{1}{p_0(s_0)} \sum_{\{\tau' | s'_0 = s_0 \}} q^{\mathrm{base}}(\tau') \exp\big( \sum_{k=0}^{K} r_k(s'_k,a'_k)
        \big) \big) \big] \\
        &= \mathrm{KL}(q || q^{\mathrm{base}}) - \mathbb{E}_{\tau \sim q} \big[ \sum_{k=0}^{K} r_k(s_k,a_k) 
        \big] \\ &\qquad + \mathbb{E}_{s_0 \sim p_0} \big[ \log \big( \frac{1}{p_0(s_0)} \sum_{\{\tau' | s'_0 = s_0 \}} q^{\mathrm{base}}(\tau') \exp\big( \sum_{k=0}^{K} r_k(s'_k,a'_k)
        \big) \big) \big],
    \end{split}
    \end{talign}
    where the third equality holds because the marginal of $q$ at step zero is $p_0$ by hypothesis. Since the third term in the right-hand side is independent of $q$, this proves the equivalence between \eqref{eq:distribution_maxent_RL_first} and \eqref{eq:equivalent_max_ent}.
    
    Next, we prove that the marginal of $q^*$ at step zero is $p_0$:
    \begin{talign}
        \sum_{\{\tau | s_0 = x \}} q^*(\tau) := \sum_{\{\tau | s_0 = x \}} \frac{q^{\mathrm{base}}(\tau) \exp\big( \sum_{k=0}^{K} r_k(s_k,a_k) 
        \big)}{ \frac{1}{p_0(x)} \sum_{\{\tau' | s'_0 = x \}} q^{\mathrm{base}}(\tau') \exp\big( \sum_{k=0}^{K} r_k(s'_k,a'_k)
        \big)} = p_0(x).
    \end{talign}

    Now, for an arbitrary $s_0$, let $q_{s_0}$, $q^*_{s_0}$ be the distributions $q$, $q^*$ conditioned on the initial state being $s_0$. We can write an analog to equation \eqref{eq:KL_q_q_star} for $q_{s_0}$, $q^*_{s_0}$:
    \begin{talign}
    \begin{split}
        \mathrm{KL}(q_{s_0}||q^*_{s_0}) &= \mathbb{E}_{\tau \sim q_{s_0}} \big[ \log \frac{q_{s_0}(\tau)}{q^*_{s_0}(\tau)} \big] \\ &= \mathbb{E}_{\tau \sim q_{s_0}} \big[ \log q_{s_0}(\tau) - \log q^{\mathrm{base}}_{s_0}(\tau) - \sum_{k=0}^{K} r_k(s_k,a_k)
        \\ &\qquad\qquad + \log \big( \frac{1}{p_0(s_0)} \sum_{\{\tau' | s'_0 = s_0 \}} q^{\mathrm{base}}_{s_0}(\tau') \exp\big( \sum_{k=0}^{K} r_k(s'_k,a'_k)
        \big) \big) \big] \\
        &= \mathrm{KL}(q_{s_0} || q^{\mathrm{base}}_{s_0}) - \mathbb{E}_{\tau \sim q_{s_0}} \big[ 
        \sum_{k=0}^{K} r_k(s_k,a_k)
        \big] \\ &\qquad + 
        \log \big( \frac{1}{p_0(s_0)} \sum_{\{\tau' | s'_0 = s_0 \}} q^{\mathrm{base}}(\tau') \exp\big( \sum_{k=0}^{K} r_k(s'_k,a'_k)
        \big) \big), 
    \end{split}
    \end{talign}
    Hence,
    \begin{talign}
    \begin{split}
        0 = \min_{q_{s_0}} \mathrm{KL}(q_{s_0}||q^*_{s_0}) &= - \max_{q_{s_0}} \{ \mathbb{E}_{\tau \sim q_{s_0}} \big[ 
        \sum_{k=0}^{K} r_k(s_k,a_k) \big] - \mathrm{KL}(q_{s_0} || q^{\mathrm{base}}_{s_0}) \} \\ &+ \log \big( \frac{1}{p_0(s_0)} \sum_{\{\tau' | s'_0 = s_0 \}} q^{\mathrm{base}}(\tau') \exp\big( \sum_{k=0}^{K} r_k(s'_k,a'_k) \big) \big).
    \end{split}
    \end{talign}
    And applying \eqref{eq:equality_KL} from \eqref{eq:equality_KL}, we obtain that
    \begin{talign}
    \begin{split}
        &\log \big( \frac{1}{p_0(s_0)} \sum_{\{\tau' | s'_0 = s_0 \}} q^{\mathrm{base}}(\tau') \exp\big( \sum_{k=0}^{K} r_k(s'_k,a'_k) \big) \big) \\ &= \max_{\pi} \mathbb{E}_{\tau \sim \pi,p} \big[\sum_{k=0}^{K} r_{k}(s_{k},a_{k}) - \sum_{k=0}^{K-1} \mathrm{KL}(\pi(\cdot;s_{k},k)||\pi_{\mathrm{base}}(\cdot;s_{k},k)) | s_0 \big] = \mathcal{V}(s_0,0),
    \end{split}
    \end{talign}
    which concludes the proof.
\end{proof}

\subsection{Proof of equation \eqref{eq:cond_kl}: the control cost is a KL regularizer}
\label{subsec:proof_eq_cond_kl}

\begin{theorem}[Girsanov theorem for SDEs] 
\label{cor:girsanov_sdes}
    If the two SDEs
    \begin{talign}
    \mathrm{d}X_{t} &= b_1 (X_{t},t) \, \mathrm{d}t + \sigma (X_{t},t) \, \mathrm{d}B_{t}, \qquad X_0 = x_{\mathrm{init}}  \\
    dY_{t} &= (b_1 (Y_{t},t) + b_2 (Y_{t},t)) \, \mathrm{d}t + \sigma (Y_{t},t) \, \mathrm{d}B_{t}, \qquad Y_0 = x_{\mathrm{init}}
    \end{talign}
    admit unique strong solutions on $[0,T]$, then for any bounded continuous functional $\Phi$ on $C([0,T])$, we have that
    \begin{talign} 
    \begin{split} \label{eq:X_to_Y}
        \mathbb{E}[\Phi(\bm{X})] &= \mathbb{E}\big[ \Phi(\bm{Y}) \exp \big( - \int_0^T \sigma(Y_{t},t)^{-1} b_2 (Y_{t},t) \, \mathrm{d}B_t - \frac{1}{2} \int_0^T \|\sigma(Y_{t},t)^{-1} b_2 (Y_{t},t)\|^2 \, \mathrm{d}t \big) \big] \\ &= \mathbb{E}\big[ \Phi(\bm{Y}) \exp \big( - \int_0^T \sigma(Y_{t},t)^{-1} b_2 (Y_{t},t) \, d\tilde{B}_t + \frac{1}{2} \int_0^T \|\sigma(Y_{t},t)^{-1} b_2 (Y_{t},t)\|^2 \, \mathrm{d}t \big) \big], 
    \end{split}
    \end{talign}
    where $\tilde{B}_t = B_t + \int_0^t \sigma(Y_{s},s)^{-1} b_2 (Y_{s},s) \, \mathrm{d}s$. More generally, $b_1$ and $b_2$ can be random processes that are adapted to filtration of $\bm{B}$.
\end{theorem}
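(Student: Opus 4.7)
The plan is to apply the classical Girsanov theorem to construct an equivalent measure under which the drift of $\bm{Y}$ matches the drift of $\bm{X}$, and then transfer the expectation of $\Phi$ via the associated Radon-Nikodym derivative. Concretely, I would define the exponential process
\begin{talign}
Z_t := \exp\Bigl(-\int_0^t \sigma(Y_s,s)^{-1} b_2(Y_s,s)\, \mathrm{d}B_s - \tfrac{1}{2}\int_0^t \|\sigma(Y_s,s)^{-1} b_2(Y_s,s)\|^2\, \mathrm{d}s\Bigr),
\end{talign}
and, assuming enough integrability on $\sigma^{-1} b_2$ along $\bm{Y}$ (e.g.\ Novikov's condition $\mathbb{E}[\exp(\tfrac{1}{2}\int_0^T \|\sigma(Y_s,s)^{-1} b_2(Y_s,s)\|^2 \, \mathrm{d}s)] < \infty$), conclude that $Z$ is a true $\mathcal{F}_t$-martingale with $\mathbb{E}[Z_T]=1$.

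Next, I would define the equivalent measure $\mathbb{Q}$ on $\mathcal{F}_T$ by $\mathrm{d}\mathbb{Q}/\mathrm{d}\mathbb{P} = Z_T$. By Girsanov's theorem in its standard form, the process
\begin{talign}
\tilde{B}_t := B_t + \int_0^t \sigma(Y_s,s)^{-1} b_2(Y_s,s)\, \mathrm{d}s
\end{talign}
is a Brownian motion under $\mathbb{Q}$. Substituting $\mathrm{d}B_t = \mathrm{d}\tilde{B}_t - \sigma(Y_t,t)^{-1} b_2(Y_t,t)\, \mathrm{d}t$ into the SDE satisfied by $\bm{Y}$, the $b_2$ drift cancels and we obtain
\begin{talign}
\mathrm{d}Y_t = b_1(Y_t,t)\, \mathrm{d}t + \sigma(Y_t,t)\, \mathrm{d}\tilde{B}_t, \qquad Y_0 = x_{\mathrm{init}}.
\end{talign}
Thus under $\mathbb{Q}$, the process $\bm{Y}$ solves the \emph{same} SDE that $\bm{X}$ solves under $\mathbb{P}$, driven now by $\tilde{B}$. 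Because both SDEs admit unique strong solutions by assumption, the law of $\bm{Y}$ under $\mathbb{Q}$ coincides with the law of $\bm{X}$ under $\mathbb{P}$ on $C([0,T])$. Hence for any bounded continuous $\Phi$,
\begin{talign}
\mathbb{E}_{\mathbb{P}}[\Phi(\bm{X})] = \mathbb{E}_{\mathbb{Q}}[\Phi(\bm{Y})] = \mathbb{E}_{\mathbb{P}}[\Phi(\bm{Y})\, Z_T],
\end{talign}
which is precisely the first equality of \eqref{eq:X_to_Y}. The second equality in \eqref{eq:X_to_Y} follows by rewriting the stochastic integral in the exponent in terms of $\tilde{B}$: substituting $\mathrm{d}B_t = \mathrm{d}\tilde{B}_t - \sigma(Y_t,t)^{-1} b_2(Y_t,t)\, \mathrm{d}t$ in $Z_T$ turns $-\int_0^T \sigma(Y_t,t)^{-1} b_2(Y_t,t)\, \mathrm{d}B_t$ into $-\int_0^T \sigma(Y_t,t)^{-1} b_2(Y_t,t)\, \mathrm{d}\tilde{B}_t + \int_0^T \|\sigma(Y_t,t)^{-1} b_2(Y_t,t)\|^2\, \mathrm{d}t$, and combining with the $-\tfrac{1}{2}$ quadratic-variation term flips the sign of the quadratic-variation term, yielding the desired expression.

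The main obstacle is making the martingale argument rigorous: one must guarantee that $Z$ is a genuine martingale (not merely a local one) so that $\mathbb{Q}$ is a probability measure with $\mathbb{E}[Z_T]=1$. In an SOC context where $b_2 = \sigma u$ with $u$ the admissible control, this reduces to an integrability condition on $u$ along $\bm{Y}$ (e.g.\ Novikov, or weaker sufficient conditions such as Kazamaki or the Beneš condition); the paper's ambient assumptions on $\mathcal{U}$ and $\sigma$ in \autoref{sec:SOC_formulation} are intended to cover this. Once $Z_T$ is shown to be a valid Radon-Nikodym derivative, the rest of the argument is a mechanical application of the change-of-measure formula and the strong-uniqueness identification of laws.
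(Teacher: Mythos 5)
The paper states this Girsanov theorem as a known classical result and gives no proof of its own, so there is no internal argument to compare against. Your proposal is the standard textbook derivation: define the exponential local martingale $Z$, invoke Novikov (or a weaker sufficient condition) to make it a true martingale, change measure, observe that $\tilde{B}$ is a $\mathbb{Q}$-Brownian motion, cancel the $b_2$ drift in the SDE for $\bm{Y}$, and identify $\mathrm{Law}_{\mathbb{Q}}(\bm{Y})$ with $\mathrm{Law}_{\mathbb{P}}(\bm{X})$ via uniqueness (strong uniqueness $\Rightarrow$ uniqueness in law by Yamada--Watanabe); the second form of the Radon--Nikodym density follows by substituting $\mathrm{d}B_t = \mathrm{d}\tilde{B}_t - \sigma^{-1}b_2\,\mathrm{d}t$ into the exponent. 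All steps are correct, and your flagging of the martingale/integrability issue as the only genuinely delicate point is exactly right.
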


Consider the SDEs
\begin{talign} \label{eq:X_app}
    \mathrm{d}X_t &=  b(X_t,t) \, \mathrm{d}t + 
    \sigma(t) \mathrm{d}B_t, \qquad &X_0 = x_0, \\
    \mathrm{d}X^u_t &=  \left( b(X^u_t,t) + \sigma(t) u(X^u_t,t) \right) \, \mathrm{d}t + 
    \sigma(t) \mathrm{d}B_t, \qquad &X^u_0 = x_0.
    \label{eq:X_u_app}
\end{talign}
If we let $\mathbb{P} \rvert_{x_0}$, $\mathbb{P}^u \rvert_{x_0}$ be the probability measures of the solutions of \eqref{eq:X_app} and \eqref{eq:X_u_app}, \Cref{cor:girsanov_sdes} implies that
\begin{talign}
    \log \frac{\mathrm{d}\mathbb{P} \rvert_{x_0}}{\mathrm{d}\mathbb{P}^u \rvert_{x_0}} (\bm{X}^u) = - \int_0^1 u(X^u_t,t) \, \mathrm{d}B_t - \frac{1}{2} \int_0^1 \| u(X^u_t,t) \|^2 \, \mathrm{d}t.
\end{talign}
Hence, 
\begin{talign}
\begin{split}
    \infdiv*{\mathbb{P}^u \rvert_{x_0}}{\mathbb{P} \rvert_{x_0}} &= \mathbb{E} \big[\log \frac{\mathrm{d}\mathbb{P}^u \rvert_{x_0}}{\mathrm{d}\mathbb{P} \rvert_{x_0}}(\bm{X}^u) | X^u_0 = x_0 \big] = - \mathbb{E} \big[\log \frac{\mathrm{d}\mathbb{P} \rvert_{x_0}}{\mathrm{d}\mathbb{P}^u \rvert_{x_0}} (\bm{X}^u) | X^u_0 = x_0 \big] \\ &= \mathbb{E} \big[ \int_0^1 u(X^u_t,t) \, \mathrm{d}B_t + \frac{1}{2} \int_0^1 \| u(X^u_t,t) \|^2 \, \mathrm{d}t | X^u_0 = x_0 \big] = \mathbb{E} \big[ \frac{1}{2} \int_0^1 \| u(X^u_t,t) \|^2 \, \mathrm{d}t | X^u_0 = x_0 \big],
\end{split}
\end{talign}
where we used that stochastic integrals are martingales.

\section{Proofs of \Cref{sec:memoryless_schedule}: memoryless noise schedule and fine-tuning recipe}

\subsection{Proof of 
\Cref{prop:memorylessness_noise_schedule}:
the memoryless noise schedule} \label{subsec:proof_memoryless}
    We consider the forward-backward SDEs \eqref{eq:forward_arbitrary}-\eqref{eq:backward_arbitrary} with arbitrary noise schedule. By \Cref{lem:equal_process_distributions}, the trajectories $\vec{\bm{X}}$, $\bm{X}$ of these two processes are equally distributed up to a time flip, which also means that their marginals satisfy $\vec{p}_{t} = p_{1-t}$, for all $t \in [0,1]$.
    First, we develop an explicit expression for the score function $s(x,t) = \nabla \log p_t(x)$. By the properties of flow matching, we know that $p_t$ is the distribution of the interpolation variable $\bar{X}_t = \beta_t \bar{X}_0 + \alpha_t \bar{X}_1$, where $\bar{X}_0  \sim N(0,I), \bar{X}_1 \sim p^{\mathrm{data}}$ are independent. Thus, $\frac{\bar{X}_t - \alpha_t \bar{X}_1}{\beta_t} \sim N(0,\mathrm{I})$, which means that we can express the density $p_t$ as
    \begin{talign}
        p_t(x) = \int_{\mathbb{R}^d} \frac{\exp\big(-\frac{\|x-\alpha_t y\|^2}{2\beta_t^2}\big)}{(2\pi \beta_t^2)^{d/2}} p^{\mathrm{data}}(y) \, \mathrm{d}y. 
    \end{talign}
    Thus,
    \begin{talign}
        s(x,t) = \nabla \log p_t(x) = - \frac{x}{\beta_t^2} + \frac{\alpha_t}{\beta_t^2} \frac{\int_{\mathbb{R}^d} y \exp\big( - \frac{\|x-\alpha_t y\|^2}{2\beta_t^2}\big) p^{\mathrm{data}}(y) \, \mathrm{d}y}{\int_{\mathbb{R}^d} \exp\big( - \frac{\|x-\alpha_t y\|^2}{2\beta_t^2} \big) p^{\mathrm{data}}(y) \, \mathrm{d}y} := - \frac{x - \alpha_t \xi_t(x)}{\beta_t^2},
    \end{talign}
    where we defined
    \begin{talign}
        \xi_t(x) = \frac{\int_{\mathbb{R}^d} y \exp\big( - \frac{\|x-\alpha_t y\|^2}{2\beta_t^2}\big) p^{\mathrm{data}}(y) \, \mathrm{d}y}{\int_{\mathbb{R}^d} \exp\big( - \frac{\|x-\alpha_t y\|^2}{2\beta_t^2} \big) p^{\mathrm{data}}(y) \, \mathrm{d}y}.
    \end{talign}
    Hence, we can rewrite the forward SDE \eqref{eq:forward_arbitrary} as
    \begin{talign}
        \mathrm{d}\vec{X}_t &= \big( - \kappa_{1-t} \vec{X}_t - \big( \frac{\sigma(1-t)^2}{2} - \eta_{1-t} \big) \frac{\vec{X}_t - \alpha_{1-t} \xi_{1-t}(\vec{X}_{t})}{\beta_{1-t}^2} \big) \, \mathrm{d}t + \sigma(1-t) \, \mathrm{d}B_t, \qquad \vec{X}_0 \sim p_{\mathrm{data}}
    \end{talign}
    Hence, if we substitute $\kappa_{1-t} \gets \kappa_{1-t} + \frac{\sigma(1-t)^2 - 2\eta_{1-t}}{2\beta_{1-t}^2}$, $\xi_{1-t} \gets \frac{\alpha_{1-t}(\sigma(1-t)^2 - 2\eta_{1-t})}{2\beta_{1-t}^2} \xi_{1-t}(\vec{X}_t)$ (where we ignore the dependency on $\vec{X}_t$), $\sqrt{2 \eta_{1-t}} \gets \sigma(1-t)$, we can apply \Cref{lem:OU_process}, which yields
    \begin{talign}
    \begin{split} \label{eq:memoryless_solution}
        \vec{X}_t &= \vec{X}_0 \exp \big( - \int_0^t \big( \kappa_{1-s} + \frac{\sigma(1-s)^2 - 2\eta_{1-s}}{2\beta_{1-s}^2} \big) \, \mathrm{d}s \big) \\ &\qquad + \int_0^t \exp \big( - \int_{t'}^{t} \big( \kappa_{1-s} + \frac{\sigma(1-s)^2 - 2\eta_{1-s}}{2\beta_{1-s}^2} \big) \, \mathrm{d}s \big) \frac{\alpha_{1-t'}(\sigma(1-t')^2 - 2\eta_{1-t'})}{2\beta_{1-t'}^2} \xi_{1-t'}(\vec{X}_{t'}) \, \mathrm{d}t' \\ &\qquad + \int_0^t \sigma(1-t') \exp \big( - \int_{t'}^{t} \big( \kappa_{1-s} + \frac{\sigma(1-s)^2 - 2\eta_{1-s}}{2\beta_{1-s}^2} \big) \, \mathrm{d}s \big) \, \mathrm{d}B_{t'}.
    \end{split}
    \end{talign}
    We simplify the recurring expression: 
    \begin{talign}
        \kappa_{1-s} + \frac{\sigma(1-s)^2 - 2\eta_{1-s}}{2\beta_{1-s}^2} = \frac{\dot{\alpha}_{1-s}}{\alpha_{1-s}} + \frac{\sigma(1-s)^2 - 2 \beta_{1-s} \big( \frac{\dot{\alpha}_{1-s}}{\alpha_{1-s}} \beta_{1-s} - \dot{\beta}_{1-s} \big)}{2 \beta_{1-s}^2} = \frac{\sigma(1-s)^2}{2 \beta_{1-s}^2} + \frac{\dot{\beta}_{1-s}}{\beta_{1-s}}
    \end{talign}
    Thus,
    \begin{talign}
        \int_{t'}^t \big( \kappa_{1-s} + \frac{\sigma(1-s)^2 - 2\eta_{1-s}}{2\beta_{1-s}^2} \big) \, \mathrm{d}s = \int_{t'}^t \big( \frac{\sigma(1-s)^2}{2 \beta_{1-s}^2} - \partial_s \log \beta_{1-s} \big) \, \mathrm{d}s = \int_{t'}^t \frac{\sigma(1-s)^2}{2 \beta_{1-s}^2} \, \mathrm{d}s - \big( \log \beta_{1-t} - \log \beta_{1-t'} \big),
    \end{talign}
    which means that
    \begin{talign} \label{eq:simplification_1a}
        \exp \big( - \int_{t'}^{t} \big( \kappa_{1-s} + \frac{\sigma(1-s)^2 - 2\eta_{1-s}}{2\beta_{1-s}^2} \big) \, \mathrm{d}s \big) &= \exp \big( - \int_{t'}^t \frac{\sigma(1-s)^2}{2 \beta_{1-s}^2} \, \mathrm{d}s \big) \frac{\beta_{1-t}}{\beta_{1-t'}}, \\
        \frac{\alpha_{1-t'}(\sigma(1-t')^2 - 2\eta_{1-t'})}{2\beta_{1-t'}^2} \xi_{1-t'}(\vec{X}_{t'}) &= \alpha_{1-t'} \big( \frac{\sigma(1-t')^2}{2 \beta_{1-t'}^2} + \frac{\dot{\beta}_{1-t'}}{\beta_{1-t'}} - \frac{\dot{\alpha}_{1-t'}}{\alpha_{1-t'}} \big) \xi_{1-t'}(\vec{X}_{t'}).
        \label{eq:simplification_1b}
    \end{talign}
    If we define $\chi(1-s)$ such that $\sigma^2(1-s) = 2 \beta_{1-s} \big( \frac{\dot{\alpha}_{1-s}}{\alpha_{1-s}} \beta_{1-s} - \dot{\beta}_{1-s} \big) + \chi(1-s)$, we obtain that
    \begin{talign} \label{eq:simplification_2a}
    \begin{split}
        &\exp \big( - \int_{t'}^t \frac{\sigma(1-s)^2}{2 \beta_{1-s}^2} \, \mathrm{d}s \big) \frac{\beta_{1-t}}{\beta_{1-t'}} = \exp \big( - \int_{t'}^t \big( \frac{\dot{\alpha}_{1-s}}{\alpha_{1-s}} - \frac{\dot{\beta}_{1-s}}{\beta_{1-s}} + \frac{\chi(1-s)}{2 \beta_{1-s}^2}\big) \, \mathrm{d}s \big) \frac{\beta_{1-t}}{\beta_{1-t'}} \\ &= \exp \big( \int_{t'}^t \big( \partial_s \log \alpha_{1-s} - \partial_s \log \beta_{1-s} - \frac{\chi(1-s)}{2 \beta_{1-s}^2}\big) \, \mathrm{d}s \big) \frac{\beta_{1-t}}{\beta_{1-t'}} = \exp \big( - \int_{t'}^t \frac{\chi(1-s)}{2 \beta_{1-s}^2} \, \mathrm{d}s \big) \frac{\alpha_{1-t}}{\alpha_{1-t'}},
    \end{split} \\
        &\alpha_{1-t'} \big( \frac{\sigma(1-t')^2}{2 \beta_{1-t'}^2} + \frac{\dot{\beta}_{1-t'}}{\beta_{1-t'}} - \frac{\dot{\alpha}_{1-t'}}{\alpha_{1-t'}} \big) \xi_{1-t'}(\vec{X}_{t'}) = \frac{\alpha_{1-t'} \chi(1-t')}{2 \beta_{1-t'}^2} \xi_{1-t'}(\vec{X}_{t'})
        \label{eq:simplification_2b}
    \end{talign}
    If we plug equations \eqref{eq:simplification_2a}-\eqref{eq:simplification_2b} into \eqref{eq:simplification_1a}-\eqref{eq:simplification_1b}, and then those into \eqref{eq:memoryless_solution}, we obtain that
    \begin{talign}
    \begin{split} \label{eq:memoryless_solution_2}
        \vec{X}_t &= \vec{X}_0 \exp \big( - \int_{0}^t \frac{\chi(1-s)}{2 \beta_{1-s}^2} \, \mathrm{d}s \big) \frac{\alpha_{1-t}}{\alpha_{1}} + \alpha_{1-t} \int_0^t \exp \big( - \int_{t'}^t \frac{\chi(1-s)}{2 \beta_{1-s}^2} \, \mathrm{d}s \big) 
        \frac{\chi(1-t')}{2 \beta_{1-t'}^2} \xi_{1-t'}(\vec{X}_{t'}) \, \mathrm{d}t' \\ &\qquad + \int_0^t \big( 2 \beta_{1-t'} \big( \frac{\dot{\alpha}_{1-t'}}{\alpha_{1-t'}} \beta_{1-t'} - \dot{\beta}_{1-t'} \big) + \chi(1-t') \big) \exp \big( - \int_{t'}^t \frac{\chi(1-s)}{2 \beta_{1-s}^2} \, \mathrm{d}s \big) \frac{\alpha_{1-t}}{\alpha_{1-t'}} \, \mathrm{d}B_{t'}.
    \end{split}
    \end{talign}
    and if we take the limit $t \to 1^-$ and use that $\alpha_1 = 1$,
    \begin{talign}
    \begin{split} \label{eq:memoryless_solution_3}
        \vec{X}_1 &= \vec{X}_0 \big( \lim_{t \to 1^-} \exp \big( - \int_{0}^t \frac{\chi(1-s)}{2 \beta_{1-s}^2} \, \mathrm{d}s \big) \alpha_{1-t} \big) + \lim_{t \to 1^-} \alpha_{1-t} \int_0^t \exp \big( - \int_{t'}^t \frac{\chi(1-s)}{2 \beta_{1-s}^2} \, \mathrm{d}s \big) 
        \frac{\chi(1-t')}{2 \beta_{1-t'}^2} \xi_{1-t'}(\vec{X}_{t'}) \, \mathrm{d}t' \\ &\qquad + \lim_{t \to 1^-} \int_0^t \big( 2 \beta_{1-t'} \big( \frac{\dot{\alpha}_{1-t'}}{\alpha_{1-t'}} \beta_{1-t'} - \dot{\beta}_{1-t'} \big) + \chi(1-t') \big) \exp \big( - \int_{t'}^t \frac{\chi(1-s)}{2 \beta_{1-s}^2} \, \mathrm{d}s \big) \frac{\alpha_{1-t}}{\alpha_{1-t'}} \, \mathrm{d}B_{t'}.
    \end{split}
    \end{talign}
    The assumption on $\chi$ in \eqref{eq:chi_condition} is equivalent, up to a rearrangement of the notation and a flip in the time variable, to the statement that for all $t' \in [0,1)$,
    \begin{talign} \label{eq:coefficient_1}
        \lim_{t \to 1^-} \exp \big( - \int_{t'}^t \frac{\chi(1-s)}{2 \beta_{1-s}^2} \, \mathrm{d}s \big) \alpha_{1-t} = 0.
    \end{talign}
    Hence, under assumption \eqref{eq:chi_condition}, the factor accompanying $\vec{X}_0$ in equation \eqref{eq:memoryless_solution_3} is zero. Moreover, this assumption also implies that
    \begin{talign}
    \begin{split} \label{eq:coefficient_2}
        &\lim_{t \to 1^-} \alpha_{1-t} \int_0^t \exp \big( - \int_{t'}^t \frac{\chi(1-s)}{2 \beta_{1-s}^2} \, \mathrm{d}s \big) 
        \frac{\chi(1-t')}{2 \beta_{1-t'}^2} \xi_{1-t'}(\vec{X}_{t'}) \, \mathrm{d}t' \\ &= \int_0^1 \big( \lim_{t \to 1^-} \exp \big( - \int_{t'}^t \frac{\chi(1-s)}{2 \beta_{1-s}^2} \, \mathrm{d}s \big) \alpha_{1-t} \big) 
        \frac{\chi(1-t')}{2 \beta_{1-t'}^2} \xi_{1-t'}(\vec{X}_{t'}) \, \mathrm{d}t' = 0.
    \end{split}
    \end{talign}
    If we plug \eqref{eq:coefficient_1} and \eqref{eq:coefficient_2} into \eqref{eq:memoryless_solution_3}, we obtain that
    \begin{talign}
        \vec{X}_1 = \lim_{t \to 1^-} \int_0^t \big( 2 \beta_{1-t'} \big( \frac{\dot{\alpha}_{1-t'}}{\alpha_{1-t'}} \beta_{1-t'} - \dot{\beta}_{1-t'} \big) + \chi(1-t') \big) \exp \big( - \int_{t'}^t \frac{\chi(1-s)}{2 \beta_{1-s}^2} \, \mathrm{d}s \big) \frac{\alpha_{1-t}}{\alpha_{1-t'}} \, \mathrm{d}B_{t'},
    \end{talign}
    which shows that $\vec{X}_1$ is independent of $\vec{X}_0$. Next, we leverage that $\vec{\bm{X}}$ and $\bm{X}$ have equal distributions over trajectories (\Cref{lem:equal_process_distributions}). In particular, the joint distribution of $(\vec{X}_0,\vec{X}_1)$ is equal to the joint distribution of $(X_1,X_0)$. We conclude that $X_1$ and $X_0$ are independent, which is the definition of the memorylessness property.
    Hence, the assumption \eqref{eq:chi_condition} is sufficient for memorylessness to hold.
    
    It remains to prove that the assumption \eqref{eq:chi_condition} is necessary. Looking at equation \eqref{eq:memoryless_solution_2} we deduce that generally, for any $t \in [0,1)$, $\vec{X}_0$ and $\vec{X}_t$ are not independent, because the first two terms in \eqref{eq:memoryless_solution_2} are different from zero. Thus, if there existed a $t' \in [0,1)$ such that the limit \eqref{eq:coefficient_1} is different from zero, then $\vec{X}_1$ would not be independent from $\vec{X}_{t'}$, which means that in general it would not be independent of $\vec{X}_0$ either.

\subsection{Proof of \Cref{thm:general_fine-tuning}: fine-tuning recipe for general noise schedules} \label{subsec:proof_prop_diff_finetuning}

The proof of this result relies heavily on the properties of the Hamilton-Jacobi-Bellman equation:
\begin{theorem}[Hamilton-Jacobi-Bellman equation] \label{thm:HJB}
    If we define the infinitesimal generator
    \begin{talign}
    \mathcal{L} := \frac{1}{2} \sum_{i,j=1}^{d} (\sigma \sigma^{\top})_{ij} (t) \partial_{x_i} \partial_{x_j} + \sum_{i=1}^{d} b_i(x,t) \partial_{x_i},
    \end{talign}
    the value function $V$ for the SOC
    problem \eqref{eq:control_problem_def}-\eqref{eq:controlled_SDE} solves the following Hamilton-Jacobi-Bellman (HJB) partial differential equation:
    \begin{talign}
    \begin{split} \label{eq:HJB_setup}
        &\partial_t V(x,t) = - \mathcal{L} V(x,t) + \frac{1}{2} \| (\sigma^{\top} \nabla V) (x,t) \|^2 - f(x,t), \\
        &V(x,T) = g(x).
    \end{split}
    \end{talign}
\end{theorem}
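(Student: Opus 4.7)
The plan is to derive the HJB PDE from the dynamic programming principle (DPP) combined with Itô's formula, following the classical textbook argument. First, I would establish the DPP for the value function: for any small $\delta > 0$,
\begin{talign*}
V(x,t) = \inf_{u \in \mathcal{U}} \mathbb{E}\Big[\int_t^{t+\delta} \big(\tfrac{1}{2}\|u(X^u_s,s)\|^2 + f(X^u_s,s)\big) \mathrm{d}s + V(X^u_{t+\delta}, t+\delta) \,\Big|\, X^u_t = x\Big].
\end{talign*}
This follows from the definition of $V$ in \eqref{eq:cost_functional} by splitting the integral at time $t+\delta$ and using the tower property together with the Markov property of $X^u$ to reconstitute a value function at the interior time.

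Next, assuming sufficient regularity of $V$ (say $V \in C^{1,2}$), I would apply Itô's formula to $V(X^u_s, s)$ along the controlled SDE \eqref{eq:controlled_SDE}, which yields
\begin{talign*}
\mathrm{d}V(X^u_s,s) = \big(\partial_s V + \mathcal{L}V + (\sigma^{\top}\nabla V) \cdot u\big)(X^u_s,s)\, \mathrm{d}s + (\nabla V)^{\top} \sigma(s)\, \mathrm{d}B_s,
\end{talign*}
where the $(\sigma^{\top}\nabla V) \cdot u$ term arises from the drift contribution $\sigma u$ of the controlled SDE, and $\mathcal{L}$ absorbs the base drift $b$ together with the second-order term from the Itô correction. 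Taking expectations (the stochastic integral is a martingale under standard integrability) and substituting into the DPP gives
\begin{talign*}
0 = \inf_{u} \mathbb{E}\Big[\int_t^{t+\delta} \big(\tfrac{1}{2}\|u\|^2 + f + \partial_s V + \mathcal{L}V + (\sigma^{\top}\nabla V) \cdot u\big)(X^u_s,s)\, \mathrm{d}s\Big].
\end{talign*}
Dividing by $\delta$ and sending $\delta \to 0^+$, the integrand at $s=t$ with $X^u_t = x$ gives the pointwise HJB equality
\begin{talign*}
0 = \inf_{a \in \R^d}\Big\{\tfrac{1}{2}\|a\|^2 + (\sigma^{\top}\nabla V)(x,t) \cdot a\Big\} + \partial_t V(x,t) + \mathcal{L}V(x,t) + f(x,t).
\end{talign*}

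The pointwise infimum over $a$ is a quadratic in $a$ and is attained at $a^* = -\sigma(t)^{\top}\nabla V(x,t)$, consistent with equation \eqref{eq:optimal_control}, giving infimum value $-\tfrac{1}{2}\|\sigma(t)^{\top}\nabla V(x,t)\|^2$. Substituting back yields
\begin{talign*}
\partial_t V(x,t) = -\mathcal{L}V(x,t) + \tfrac{1}{2}\|(\sigma^{\top}\nabla V)(x,t)\|^2 - f(x,t),
\end{talign*}
and the terminal condition $V(x,T) = g(x)$ is immediate from \eqref{eq:cost_functional} since the integral vanishes when $t=T$. The main technical obstacle is the regularity assumption: the argument above presupposes $V \in C^{1,2}$ so that Itô's formula applies classically. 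A fully rigorous treatment would either impose smoothness/nondegeneracy hypotheses on $b, \sigma, f, g$ that ensure this regularity, or work instead with viscosity solutions so that the DPP-plus-Itô argument yields HJB in the viscosity sense; since the paper is a non-rigorous statement of a standard result, I would simply note that we work under regularity assumptions on the problem data sufficient to justify the formal manipulations, and refer the reader to standard references (e.g., Fleming and Soner) for the precise hypotheses.
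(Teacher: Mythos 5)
The paper states \autoref{thm:HJB} as a standard result and never proves it (for the HJB connection it defers to the cited SOC references), so there is no in-paper argument to compare against line by line; your dynamic-programming-plus-It\^o derivation is the classical route and is correct at the formal level. The It\^o expansion correctly splits the drift contribution into the base-drift generator $\mathcal{L}$ plus the control term $(\sigma^{\top}\nabla V)\cdot u$, the pointwise quadratic minimization over the instantaneous action gives exactly $-\tfrac{1}{2}\|\sigma^{\top}\nabla V\|^2$ (this is where the control-affine dynamics and quadratic control cost of \eqref{eq:control_problem_def}--\eqref{eq:controlled_SDE} enter), and the terminal condition is immediate. The closest internal analogue is the paper's proof of Lemma~\ref{eq:lemma_cost_functional}, which performs the same splitting at $t+\Delta t$ and passes to the generator limit for the cost functional $J(u;\cdot,\cdot)$ of a fixed feedback control satisfying $u=-\sigma^{\top}\nabla_x J$; your argument supplies precisely the extra ingredient (the infimum over actions, together with the DPP) needed to upgrade that computation to the value function itself. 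The caveats you flag are the right ones: a fully rigorous treatment needs the DPP, the two-sided argument for interchanging $\delta\to 0^+$ with the infimum (a constant suboptimal control for one inequality, $\epsilon$-optimal controls for the other), and either $C^{1,2}$ regularity of $V$ or a viscosity formulation. As an alternative consistent with the paper's toolkit, one could also verify \eqref{eq:HJB_setup} directly from the representation $V(x,t)=-\log \mathbb{E}[\exp(-\int_t^T f(X_s,s)\,\mathrm{d}s - g(X_T))\,|\,X_t=x]$ given in the SOC background section, via Feynman--Kac and the Hopf--Cole substitution -- the same device the paper uses in the proof of \autoref{thm:general_fine-tuning}.
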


Consider forward SDEs like \eqref{eq:forward_arbitrary}, starting from the distributions $p^{\mathrm{base}}$ and $p^*$, where $p^*(x) \propto p^{\mathrm{base}}(x) \exp(r(x))$.
\begin{talign}
\label{eq:forward_base}
    \mathrm{d}\vec{X}_t = \vec{b}(\vec{X}_t,t) \, \mathrm{d}t + \sigma(t) \, \mathrm{d}B_t, \qquad \vec{X}_0 \sim p^{\mathrm{base}}, \\
    \mathrm{d}\vec{X}^*_t = \vec{b}^*(\vec{X}^*_t,t) \, \mathrm{d}t + \sigma(t) \, \mathrm{d}B_t, \qquad \vec{X}_0 \sim p^*. \label{eq:forward_star}
\end{talign}
where the drifts are defined as
\begin{talign}
\begin{split} \label{eq:b_b_star}
    \vec{b}(x,t) &= - \kappa_{1-t} x +
     \big( \frac{\sigma(1-t)^2}{2} - \eta_{1-t} \big) \mathfrak{s}(x,1-t) = - \kappa_{1-t} x +
    \big( \frac{\sigma(1-t)^2}{2} - \eta_{1-t} \big) \nabla \log \vec{p}_t(x), \\
    \vec{b}^*(x,t) &= - \kappa_{1-t} x +
    \big( \frac{\sigma(1-t)^2}{2} - \eta_{1-t} \big)
    \mathfrak{s}^*(x,1-t) = - \kappa_{1-t} x +
    \big( \frac{\sigma(1-t)^2}{2} - \eta_{1-t} \big)
    \nabla \log \vec{p}_t^*(x),
\end{split}
\end{talign}
and $\vec{p}_t$, $\vec{p}_t^*$ are the densities of $X_t$, $\vec{X}_t$, respectively. $\vec{p}_t$, $\vec{p}_t^*$ satisfy Fokker-Planck equations:
\begin{talign}
\begin{split} \label{eq:fokker_planck_1}
    \partial_t \vec{p}_t = \nabla \cdot (\vec{b}(x,t) \vec{p}_t) + \nabla \cdot ( 
    \frac{\sigma(1-t)^2}{2} \nabla \vec{p}_t), \qquad \vec{p}_0 = p^{\mathrm{base}}, \\
    \partial_t \vec{p}^*_t = \nabla \cdot (\vec{b}^*(x,t) \vec{p}^*_t) + \nabla \cdot ( 
    \frac{\sigma(1-t)^2}{2}
    \nabla \vec{p}^*_t), \qquad \vec{p}_0 = p^*.
\end{split}
\end{talign}
Plugging \eqref{eq:b_b_star} into \eqref{eq:fokker_planck_1}, we obtain 
\begin{talign}
\begin{split}
    \partial_t \vec{p}_t = \nabla \cdot (\kappa_{1-t} x \vec{p}_t) + \nabla \cdot \big( 
    \eta_{1-t}
    \nabla \vec{p}_t \big), \qquad \vec{p}_0 = p^{\mathrm{base}}, \\
    \partial_t \vec{p}^*_t = \nabla \cdot (\kappa_{1-t} x \vec{p}^*_t) + \nabla \cdot \big( 
    \eta_{1-t}
    \nabla \vec{p}^*_t \big), \qquad \vec{p}_0 = p^*.
\end{split}
\end{talign}
We apply the Hopf-Cole transformation to obtain PDEs for $-\log \vec{p}_t$ (and $-\log \vec{p}^*_t$ analogously):
\begin{talign}
\begin{split}
    - \partial_t (- \log \vec{p}_t) &= \frac{\partial_t p_t}{p_t} = \frac{\nabla \cdot (\kappa_{1-t} x \vec{p}_t) + \nabla \cdot \big( 
    \eta_{1-t}
    \nabla \vec{p}_t \big)}{p_t} \\ &= \kappa_{1-t} \nabla \cdot x + \kappa_{1-t} \langle x, \nabla \log \vec{p}_t \rangle +
    \eta_{1-t}
    \frac{\nabla \cdot (\nabla \log \vec{p}_t \exp(\log p_t))}{p_t} \\ &= \kappa_{1-t} d + \kappa_{1-t} \langle x, \nabla \log \vec{p}_t \rangle + 
    \eta_{1-t}
    \big( \Delta \log \vec{p}_t + \|\nabla \log \vec{p}_t \|^2 \big).
\end{split}
\end{talign}
Hence, if we define $\mathscr{V}(x,t) = - \log \vec{p}_t(x)$, $\mathscr{V}^*(x,t) = - \log \vec{p}^*_t(x)$, then $\mathscr{V}$ and $\mathscr{V}^*$ satisfy the following Hamilton-Jacobi-Bellman equations:
\begin{talign} \label{eq:V_HJB}
    -\partial_t \mathscr{V} &= \kappa_{1-t} d - \kappa_{1-t} \langle x, \nabla \mathscr{V} \rangle + 
    \eta_{1-t}
    \big( - \Delta \mathscr{V} + \|\nabla \mathscr{V} \|^2 \big), \qquad \mathscr{V}(x,0) = - \log p^{\mathrm{base}}(x), \\
    -\partial_t \mathscr{V}^* &= \kappa_{1-t} d - \kappa_{1-t} \langle x, \nabla \mathscr{V}^* \rangle + 
    \eta_{1-t}
    \big( - \Delta \mathscr{V}^* + \|\nabla \mathscr{V}^* \|^2 \big), \qquad \mathscr{V}^*(x,0) = - \log p^*(x). \label{eq:V_star_HJB}
\end{talign}
Now, define $\hat{\mathscr{V}}(x,t) = \mathscr{V}^*(x,t) - \mathscr{V}(x,t)$. Subtracting \eqref{eq:V_star_HJB} from \eqref{eq:V_HJB}, we obtain
\begin{talign}
\begin{split}
    -\partial_t \hat{\mathscr{V}} &= - \kappa_{1-t} \langle x, \nabla \hat{\mathscr{V}} \rangle + 
    \eta_{1-t}
    \big( - \Delta \hat{\mathscr{V}} + \|\nabla \mathscr{V}^* \|^2 - \|\nabla \mathscr{V} \|^2 \big)
    \\ &= - \kappa_{1-t} \langle x, \nabla \hat{\mathscr{V}} \rangle + 
    \eta_{1-t}
    \big( - \Delta \hat{\mathscr{V}} + \|\nabla (\hat{\mathscr{V}} + \mathscr{V}) \|^2 - \|\nabla \mathscr{V} \|^2 \big)
    \\ &= - \kappa_{1-t} \langle x, \nabla \hat{\mathscr{V}} \rangle + 
    \eta_{1-t}
    \big( - \Delta \hat{\mathscr{V}} + \|\nabla \hat{\mathscr{V}}\|^2 + 2\langle \nabla \mathscr{V}, \nabla \hat{\mathscr{V}} \rangle \big) \\ &= \langle - \kappa_{1-t} x + 
    2 \eta_{1-t}
    \nabla \mathscr{V}, \nabla \hat{\mathscr{V}} \rangle + 
    \eta_{1-t}
    \big( - \Delta \hat{\mathscr{V}} + \|\nabla \hat{\mathscr{V}}\|^2 \big)
    \\ &= \langle - \kappa_{1-t} x - 
    2 \eta_{1-t}
    \mathfrak{s}(x,1-t), \nabla \hat{\mathscr{V}} \rangle + 
    \eta_{1-t}
    \big( - \Delta \hat{\mathscr{V}} + \|\nabla \hat{\mathscr{V}}\|^2 \big),
    \\
    \hat{\mathscr{V}}(x,0) &= - \log p^*(x) + \log p^{\mathrm{base}}(x) = - r(x) + \log \big( \int p^{\mathrm{base}}(y) \exp(r(y)) \, \mathrm{d}y \big). 
\end{split}
\end{talign}
Hence, $\hat{\mathscr{V}}$ also satisfies a Hamilton-Jacobi-Bellman equation. If we define $V$ such that $\hat{\mathscr{V}}(x,t) = V(x,1-t)$, we have that
\begin{talign}
    \partial_t V = \langle - \kappa_{t} x - 
    2 \eta_{t}
    \mathfrak{s}(x,t), \nabla V \rangle + 
    \eta_{t}
    \big( - \Delta V + \|\nabla V\|^2 \big), \qquad 
    V(x,1) = r(x) - \log \big( \int p^{\mathrm{base}}(y) \exp(r(y)) \, \mathrm{d}y \big). 
\end{talign}
Using \Cref{thm:HJB}, we can reverse-engineer 
$V$ as the value function of the following SOC
problem:
\begin{talign} \label{eq:control_problem_def_diff}
    &\min\limits_{u \in \mathcal{U}} \mathbb{E} \big[ \frac{1}{2} \int_0^1 \|u(X^u_t,t)\|^2 \, \mathrm{d}t \! - \! r(x) \! + \! \log \big( \int p^{\mathrm{base}}(y) \exp(r(y)) \, \mathrm{d}y \big) \big], \\
    \begin{split}
    \text{s.t.}~ \mathrm{d}X^u_t \! = \! \big(
    \kappa_t x + 2 \eta_t \mathfrak{s}(x,t)
    \! + \! 
    \sqrt{2 \eta_{t}} 
    u(X^u_t,t) \big) \, \mathrm{d}t \! + \! 
    \sqrt{2 \eta_{t}}
    \mathrm{d}B_t, \qquad X^u_0 \sim p_0.
    \end{split} 
    \label{eq:controlled_SDE_diff}
\end{talign}
Note that this SOC problem is equal to the problem \eqref{eq:control_problem_def}-\eqref{eq:controlled_SDE} with the choices $f=0$, $g=-r$, and $\sigma(t) = \sqrt{2 \eta_t}$.
By equation \eqref{eq:optimal_control}, 
the optimal control of the problem \eqref{eq:control_problem_def_diff}-\eqref{eq:controlled_SDE_diff} is of the form:
\begin{talign} 
\begin{split} \label{eq:optimal_control_HJB}
u^*(x,t) &= - 
\sqrt{2 \eta_{t}}
\nabla V(x,t) = - 
\sqrt{2 \eta_{t}}
\nabla \hat{\mathscr{V}}(x, 1-t) = - 
\sqrt{2 \eta_{t}}
\big( \nabla \mathscr{V}^*(x, 1-t) - \nabla \mathscr{V}(x, 1-t) \big) \\ &= - \sqrt{2 \eta_{t}} \big( - \nabla \log \vec{p}^*_{1-t}(x) + \nabla \log \vec{p}_{1-t}(x) \big) = 
\sqrt{2 \eta_{t}}
\big( \mathfrak{s}^*(x,t) - \mathfrak{s}(x,t) \big), 
\end{split} \\
&\iff \mathfrak{s}^*(x,t) = \mathfrak{s}(x,t) + u^*(x,t) / 
\sqrt{2 \eta_{t}}.
\label{eq:s_star_u_star}
\end{talign}
As in \eqref{eq:backward_arbitrary}, the backward SDEs corresponding to the forward SDEs \eqref{eq:forward_star} take the following form:
\begin{talign}
    \mathrm{d}X^*_t &= \big( \kappa_t X^*_t + \big( \frac{\sigma(t)^2}{2} + \eta_t \big) \mathfrak{s}^*(X^*_t,t) \big) \mathrm{d}t + \sigma(t) \, \mathrm{d}B_t, \qquad X^*_0 \sim N(0,I).
\end{talign}
If we plug \eqref{eq:s_star_u_star} into this equation, we obtain
\begin{talign}
    \mathrm{d}X^*_t &= \big( \kappa_t X^*_t + \big( \frac{\sigma(t)^2}{2} + \eta_t \big) \big( \mathfrak{s}(X^*_t,t) + \frac{u^*(X^*_t,t)}{\sqrt{2 \eta_{t}}} \big) \big) \mathrm{d}t + \sigma(t) \, \mathrm{d}B_t, \qquad X^*_0 \sim N(0,I), \\
    \iff \mathrm{d}X^*_t &= \big( b(X^*_t,t) + \frac{\frac{\sigma(t)^2}{2} + \eta_t}{\sqrt{2 \eta_{t}}} u^*(X^*_t,t) \big) \mathrm{d}t + \sigma(t) \, \mathrm{d}B_t, \qquad X^*_0 \sim N(0,I).
\end{talign}
where we used that $b(x,t) = \kappa_t x + \big( \frac{\sigma(t)^2}{2} + \eta_t \big) \mathfrak{s}(x,t)$ by definition in equation \eqref{eq:gen_process_2}. 

\paragraph{The fine-tuned inference SDE for DDIM}
Now, for DDIM, we have that $u^*(x,t) = 
- \sqrt{\frac{\dot{\alpha}_t}{\alpha_t(1-\alpha_t)}} (\epsilon^*(x,t) - \epsilon^{\mathrm{base}}(x,t))$ by \eqref{eq:conversion_DDPM}. Hence,
\begin{talign}
    &\frac{\frac{\sigma(t)^2}{2} + \eta_t}{\sqrt{2 \eta_{t}}} u^*(x,t) = - \frac{\frac{\sigma(t)^2}{2} + \frac{\dot{\alpha}_t}{2\alpha_t}}{\sqrt{\frac{\dot{\alpha}_t}{\alpha_t}}} \sqrt{\frac{\dot{\alpha}_t}{\alpha_t(1-\alpha_t)}} (\epsilon^*(x,t) - \epsilon^{\mathrm{base}}(x,t)) = - \frac{\frac{\sigma(t)^2}{2} + \frac{\dot{\alpha}_t}{2\alpha_t}}{\sqrt{1-\alpha_t}} (\epsilon^*(x,t) - \epsilon^{\mathrm{base}}(x,t)), \\
    \begin{split}
    &\implies b(x,t) + \frac{\frac{\sigma(t)^2}{2} + \eta_t}{\sqrt{2 \eta_{t}}} u^*(x,t) = 
    \frac{\dot{\alpha}_{t}}{2\alpha_{t}} X_t - \big( \frac{\dot{\alpha}_{t}}{2\alpha_{t}} + \frac{\sigma(t)^2}{2} \big) \frac{\epsilon^{\mathrm{base}}(X_{t},t)}{\sqrt{1-\alpha_{t}}}
    - \frac{\frac{\sigma(t)^2}{2} + \frac{\dot{\alpha}_t}{2\alpha_t}}{\sqrt{1-\alpha_t}} (\epsilon^*(x,t) - \epsilon^{\mathrm{base}}(x,t)) \\ &\qquad\qquad\qquad\qquad\qquad = 
    \frac{\dot{\alpha}_{t}}{2\alpha_{t}} X_t - \big( \frac{\dot{\alpha}_{t}}{2\alpha_{t}} + \frac{\sigma(t)^2}{2} \big) \frac{\epsilon^{*}(X_{t},t)}{\sqrt{1-\alpha_{t}}}.
    \end{split}
\end{talign}
We obtain that the fine-tuned inference SDE for DDIM is 
\begin{talign}
    \mathrm{d}X^*_t &= \big( \frac{\dot{\alpha}_{t}}{2\alpha_{t}} X^*_t - \big( \frac{\dot{\alpha}_{t}}{2\alpha_{t}} + \frac{\sigma(t)^2}{2} \big) \frac{\epsilon^{*}(X^*_{t},t)}{\sqrt{1-\alpha_{t}}} \big) \mathrm{d}t + \sigma(t) \, \mathrm{d}B_t, \qquad X^*_0 \sim N(0,I),
\end{talign}
which is matches the SDE \eqref{eq:euler_maruyama_DDIM} with the choice $\epsilon = \epsilon^*$.

\paragraph{The fine-tuned inference SDE for Flow Matching}
For Flow Matching, we have that $u^*(x,t) = 
\sqrt{\frac{2}{\beta_{t}(\frac{\dot{\alpha}_{t}}{\alpha_{t}} \beta_{t} - \dot{\beta}_{t})}} (v^*(x,t) - v^{\mathrm{base}}(x,t))$ by \eqref{eq:conversion_MFM}. Hence,
\begin{talign}
    \begin{split}
    &\frac{\frac{\sigma(t)^2}{2} + \eta_t}{\sqrt{2 \eta_{t}}} u^*(x,t) = \frac{\frac{\sigma(t)^2}{2} + \beta_t(\frac{\dot{\alpha}_t}{\alpha_t} \beta_t - \dot{\beta}_t)}{\sqrt{2 \beta_t(\frac{\dot{\alpha}_t}{\alpha_t} \beta_t - \dot{\beta}_t)}} \sqrt{\frac{2}{\beta_{t}(\frac{\dot{\alpha}_{t}}{\alpha_{t}} \beta_{t} - \dot{\beta}_{t})}} (v^*(x,t) - v^{\mathrm{base}}(x,t)) \\ &\qquad\qquad\qquad \ = \big( 1 + \frac{\sigma(t)^2}{2 \beta_t(\frac{\dot{\alpha}_t}{\alpha_t} \beta_t - \dot{\beta}_t)} \big) (v^*(x,t) - v^{\mathrm{base}}(x,t)). 
    \end{split}
    \\
    \begin{split}
    &\implies b(x,t) + \frac{\frac{\sigma(t)^2}{2} + \eta_t}{\sqrt{2 \eta_{t}}} u^*(x,t) =  
    v^{\mathrm{base}}(x,t) + \frac{\sigma(t)^2}{2\beta_{t}(\frac{\dot{\alpha}_{t}}{\alpha_{t}} \beta_{t} -\dot{\beta}_{t})} \big( v^{\mathrm{base}}(x,t) - \frac{\dot{\alpha}_{t}}{\alpha_{t}} x \big) 
    \\ &\qquad\qquad\qquad\qquad\qquad\qquad\quad + \big( 1 + \frac{\sigma(t)^2}{2 \beta_t(\frac{\dot{\alpha}_t}{\alpha_t} \beta_t - \dot{\beta}_t)} \big) (v^*(x,t) - v^{\mathrm{base}}(x,t)) \\ &\qquad\qquad\qquad\qquad\qquad\qquad\quad = 
    v^{*}(x,t) + \frac{\sigma(t)^2}{2\beta_{t}(\frac{\dot{\alpha}_{t}}{\alpha_{t}} \beta_{t} -\dot{\beta}_{t})} \big( v^{*}(x,t) - \frac{\dot{\alpha}_{t}}{\alpha_{t}} x \big).
    \end{split}
\end{talign}
We obtain that the fine-tuned inference SDE for Flow Matching is 
\begin{talign}
    \mathrm{d}X^*_t = \big( v(X^*_t,t) + \frac{\sigma(t)^2}{2\beta_{t}(\frac{\dot{\alpha}_{t}}{\alpha_{t}} \beta_{t} -\dot{\beta}_{t})} \big( v^*(X^*_t,t) - \frac{\dot{\alpha}_{t}}{\alpha_{t}} X^*_t \big) \big) \, \mathrm{d}t + \sigma(t) \, \mathrm{d}B_t, \qquad X^*_0 \sim N(0,I),
\end{talign}
which matches equation \eqref{eq:FM_general_diffusion_coeff} with the choice $v = v^*$.

\section{Loss function derivations}
\label{sec:proof_existing_losses}

\subsection{Derivation of the Continuous Adjoint method} \label{subsec:derivation_cont_adj_method}
\begin{proposition} \label{prop:cont_adjoint_method}
    The gradient $\frac{\mathrm{d} \mathcal{L}}{\mathrm{d} \theta}$ of the adjoint loss $\mathcal{L}(u ; \fX)$ defined in \eqref{eq:L_RE} with respect to the parameters $\theta$ of the control can be expressed as in \eqref{eq:continuous_adjoint_grads}.
\end{proposition}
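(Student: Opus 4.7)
The strategy is the standard continuous-time adjoint sensitivity argument, adapted to the controlled SDE \eqref{eq:controlled_SDE}. A key simplification is that the diffusion coefficient $\sigma(t)$ is deterministic and does not depend on $\theta$, so the Brownian increment $\sigma(t)\,\mathrm{d}B_t$ contributes nothing when we differentiate in $\theta$; the whole computation then reduces to ODE-style reasoning pathwise.

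First, I would differentiate \eqref{eq:controlled_SDE} in $\theta$ to obtain the variational process $Y_t := \partial_\theta X^{u_\theta}_t$, which satisfies the random linear ODE
\begin{equation*}
\mathrm{d}Y_t = \bigl[\nabla_x (b + \sigma u)(X_t,t)\bigr]\,Y_t\,\mathrm{d}t + \sigma(t)\,\nabla_\theta u_\theta(X_t,t)\,\mathrm{d}t, \qquad Y_0 = 0,
\end{equation*}
with no stochastic term. Differentiating $\mathcal{L}_{\mathrm{Disc-Adj}}$ in \eqref{eq:L_RE} under the expectation (justified by dominated convergence under the Lipschitz/growth assumptions) and applying the chain rule yields
\begin{equation*}
\frac{\mathrm{d}\mathcal{L}}{\mathrm{d}\theta} = \mathbb{E}\Bigl[\int_0^T \bigl(u^\top \nabla_\theta u_\theta + \nabla_x\bigl(f + \tfrac{1}{2}\|u\|^2\bigr)^\top Y_t\bigr)\,\mathrm{d}t + \nabla g(X_T)^\top Y_T\Bigr].
\end{equation*}

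The core step is to eliminate $Y_t$ by invoking the adjoint ODE \eqref{eq:cont_adjoint_1}--\eqref{eq:cont_adjoint_2}. I would compute $\mathrm{d}(a(t,\bm X,u)^\top Y_t)$ by the product rule; the two terms involving $[\nabla_x(b+\sigma u)]\,Y_t$, one contributed by $\mathrm{d}a$ and the other by $\mathrm{d}Y_t$, cancel exactly, leaving
\begin{equation*}
\mathrm{d}(a^\top Y_t) = \bigl\{-\nabla_x\bigl(f + \tfrac{1}{2}\|u\|^2\bigr)^\top Y_t + a^\top \sigma\,\nabla_\theta u_\theta\bigr\}\,\mathrm{d}t.
\end{equation*}
Integrating from $0$ to $T$, using $Y_0 = 0$ and $a(T) = \nabla g(X_T)$, the left-hand side becomes $\nabla g(X_T)^\top Y_T$. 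Rearranging lets me substitute $\nabla g(X_T)^\top Y_T + \int_0^T \nabla_x(f + \tfrac{1}{2}\|u\|^2)^\top Y_t\,\mathrm{d}t$ by $\int_0^T a^\top\sigma\,\nabla_\theta u_\theta\,\mathrm{d}t$ in the expression for $\mathrm{d}\mathcal{L}/\mathrm{d}\theta$ above. After this substitution every remaining term pairs $\nabla_\theta u_\theta$ against $u + \sigma^\top a$, giving
\begin{equation*}
\frac{\mathrm{d}\mathcal{L}}{\mathrm{d}\theta} = \mathbb{E}\Bigl[\int_0^T (\nabla_\theta u_\theta(X_t,t))^\top \bigl(u_\theta(X_t,t) + \sigma(t)^\top a(t,\bm X,u_\theta)\bigr)\,\mathrm{d}t\Bigr],
\end{equation*}
which is \eqref{eq:continuous_adjoint_grads}.

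The main obstacle is the technical bookkeeping: justifying the interchange of $\partial_\theta$ with $\mathbb{E}$ (and with the Ito integral appearing implicitly in $X_t$) and establishing pathwise well-posedness of the random backward ODE for $a$. Both follow from Gronwall-type estimates on $\|Y_t\|$ and $\|a(t)\|$ under the standard Lipschitz hypotheses, together with uniform $L^2$ bounds on $\sup_t\|X_t\|$. Crucially, because $\sigma$ is $\theta$-independent, no Malliavin-calculus machinery is required; if $\sigma$ depended on $\theta$, an additional $\nabla_\theta\sigma$ term and a stochastic contribution to $\mathrm{d}Y_t$ would appear, making the cancellation in the product-rule step considerably more delicate.
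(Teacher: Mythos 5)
Your derivation is correct: the variational process $Y_t=\partial_\theta X^{u_\theta}_t$ indeed carries no stochastic term because $\sigma$ is independent of both $x$ and $\theta$, the product-rule computation of $\tfrac{\mathrm{d}}{\mathrm{d}t}(a^\top Y_t)$ cancels the two Jacobian terms exactly by the choice of adjoint ODE \eqref{eq:cont_adjoint_1}--\eqref{eq:cont_adjoint_2}, and integrating with $Y_0=0$, $a(T)=\nabla g(X_T)$ turns the implicit-trajectory part of the gradient into $\int_0^T (\nabla_\theta u_\theta)^\top\sigma^\top a\,\mathrm{d}t$, giving the claimed expression. The route is genuinely different in its bookkeeping from the paper's: there, the gradient is first split into the explicit $\nabla_\theta\tfrac12\|u_\theta\|^2$ term and a stop-gradient trajectory term (equation \eqref{eq:first_eq_cont_adjoint}), and the latter is handled by \autoref{lem:adjoint_state_properties}, which is proved with a Lagrange-multiplier/stochastic-integration-by-parts argument (extending Lemma~8 of \cite{domingoenrich2023stochastic}): the zero constraint term $\int_0^T\langle a_t, \mathrm{d}X_t-\tilde b_\theta\,\mathrm{d}t-\sigma\,\mathrm{d}B_t\rangle$ is added, $\nabla_\theta$ is applied, and the adjoint ODE is chosen so that the coefficients multiplying $\nabla_\theta X_t$ vanish — the same cancellation you obtain via the duality pairing, just organized without ever naming $Y_t$. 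Your version is the classical variational-equation-plus-adjoint argument; it is somewhat more elementary and makes transparent why no It\^o/Malliavin correction appears. The paper's Lagrangian formulation buys something extra, though: the same computation with $\nabla_{x_0}$ in place of $\nabla_\theta$ yields the identity \eqref{eq:adjoint_grad_x}, i.e.\ that $a(t)$ is the pathwise gradient of the cost-to-go in the state, which is reused later (e.g.\ in the proof of \autoref{prop:continuous_adjoint_loss_main}); your argument as stated only produces the parameter-gradient identity \eqref{eq:nabla_theta_cost}. Your closing remarks on the technical side (interchange of $\partial_\theta$ and $\mathbb{E}$, Gronwall bounds, pathwise well-posedness of the backward ODE) are the right ones and match the regularity assumed in the paper.
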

\begin{proof}
    First, note that we can write 
    \begin{talign}
    \begin{split} \label{eq:first_eq_cont_adjoint}
        &\nabla_{\theta} \mathbb{E} \big[ \int_0^T \big(\frac{1}{2} \|u_{\theta}(X^{u_{\theta}}_t,t)\|^2 \! + \! f(X^{u_{\theta}}_t,t) \big) \, \mathrm{d}t \! + \! g(X^{u_{\theta}}_T) \big] \\ &= \mathbb{E} \big[ \int_0^T \nabla_{\theta} u_{\theta}(X^{u_{\theta}}_t,t) u_{\theta}(X^{u_{\theta}}_t,t) \, \mathrm{d}t \big] 
        + \nabla_{\theta} \mathbb{E} \big[ \int_0^T \big(\frac{1}{2} \|v(X^{u_{\theta}}_t,t)\|^2 \! + \! f(X^{u_{\theta}}_t,t) \big) \, \mathrm{d}t \! + \! g(X^{u_{\theta}}_T)
        \big] \rvert_{v = \mathrm{stopgrad}(u_{\theta})}.
    \end{split}
    \end{talign}
    To develop the second term, we apply \Cref{lem:adjoint_state_properties}. Namely, by the Leibniz rule and equation \eqref{eq:nabla_theta_cost}, we have that
    \begin{talign}
    \begin{split}
        &\nabla_{\theta} \mathbb{E} \big[ \int_0^T \big(\frac{1}{2} \|v(X^{u_{\theta}}_t,t)\|^2 \! + \! f(X^{u_{\theta}}_t,t) \big) \, \mathrm{d}t \! + \! g(X^{u_{\theta}}_T) \big] \rvert_{v = \mathrm{stopgrad}(u_{\theta})} \\ &= \mathbb{E} \big[ \nabla_{\theta} \big( \int_0^T \big(\frac{1}{2} \|v(X^{u_{\theta}}_t,t)\|^2 \! + \! f(X^{u_{\theta}}_t,t) \big) \, \mathrm{d}t \! + \! g(X^{u_{\theta}}_T) 
        \big) \rvert_{v = \mathrm{stopgrad}(u_{\theta})} \big] \\ &= \mathbb{E} \big[ \int_0^T (\nabla_{\theta} u_{\theta})(X^{u_{\theta}}_t(\omega),t)^{\top} \sigma(t)^{\top} a_t(\omega) \, \mathrm{d}t \big].
    \end{split}
    \end{talign}
    Plugging the right-hand side of this equation into \eqref{eq:first_eq_cont_adjoint} concludes the proof.
\end{proof}

\begin{lemma} \label{lem:adjoint_state_properties}
    Let $v$ be an arbitrary fixed vector field.
    The unique solution of the ODE 
    \begin{talign} 
    \begin{split} \label{eq:cont_adjoint_1_app}
        \frac{\mathrm{d}}{\mathrm{d}t} a(t;\fX^u,u)  &=  - \left[ \left(\nabla_{X^u_t} (b (X^u_t,t) + \sigma(t) u(X^u_t,t))\right)\tran{} a(t;\fX^u,u) 
        + \nabla_{X^u_t} \left( f(X^u_t,t) + \frac{1}{2}\|v(X^u_t,t)\|^2 \right) \right],
    \end{split}
        \\ a(1;\fX^u,u) &= \nabla g(X^u_1), \label{eq:cont_adjoint_2_app}
    \end{talign}
    satisfies:  
    \begin{talign} 
    \begin{split} \label{eq:adjoint_grad_x}
        &a(t ; \fX^u, u) := \nabla_{X^u_t} \big(\int_t^1 \big(\frac{1}{2} \|u(X_{t'}^u,t')\|^2 \! + \! f(X_{t'}^u,t') \big) \, \mathrm{d}t' \! + \! g(X^u_1) \big), \\
        &\text{where } \fX^u \text{ solves } \mathrm{d}X^u_t =  \left( b(X^u_t,t) + \sigma(t) u(X^u_t,t) \right) \, \mathrm{d}t + 
        \sigma(t) \mathrm{d}B_t.
    \end{split}
    \end{talign}
    Moreover, when $u = u_{\theta}$ is parameterized by $\theta$ we have that 
    \begin{talign}
    \begin{split} \label{eq:nabla_theta_cost}
        \nabla_{\theta} \big( \int_0^T \big(\frac{1}{2} \|v(X^{u_{\theta}}_t,t)\|^2 \! + \! f(X^{u_{\theta}}_t,t) \big) \, \mathrm{d}t \! + \! g(X^{u_{\theta}}_T) 
        \big) = \int_0^T (\nabla_{\theta} u_{\theta})(X^{u_{\theta}}_t(\omega),t) \sigma(t)^{\top} a_t(\omega) \, \mathrm{d}t.
    \end{split}
    \end{talign}
\end{lemma}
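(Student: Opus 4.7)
The plan is to prove both assertions via the standard continuous adjoint (sensitivity) analysis, which reduces here to pathwise ODE computations: since $\sigma$ depends only on $t$, fixing a realization $\omega$ of the Brownian motion turns the controlled SDE into an ODE for the purposes of differentiating in initial conditions or parameters, and no It\^o corrections intrude into the sensitivity equations.

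For the first assertion, I would write $L(x,t) := \tfrac{1}{2}\|v(x,t)\|^2 + f(x,t)$ (reading \eqref{eq:adjoint_grad_x} with $u$ replaced by $v$ to match the running cost in \eqref{eq:cont_adjoint_1_app}; these agree when $v = u$, which is the only case that gets used later), and introduce the forward sensitivity matrix $S_{t\to t'}(\omega) := \nabla_{X^u_t} X^u_{t'}$. Because $\sigma$ is state-independent, $S_{t\to t'}$ satisfies pathwise the linear ODE
$$
\partial_{t'} S_{t\to t'} = A(t')\, S_{t\to t'}, \qquad S_{t\to t}=I, \qquad A(s) := \nabla_x\bigl(b(X^u_s,s) + \sigma(s) u(X^u_s,s)\bigr).
$$
By the pathwise chain rule applied to the right-hand side of \eqref{eq:adjoint_grad_x}, the candidate expression for the adjoint is
$$
\tilde a(t) = \int_t^1 S_{t\to t'}^\top \nabla_x L(X^u_{t'},t')\, dt' + S_{t\to 1}^\top \nabla g(X^u_1).
$$
The terminal condition $\tilde a(1)=\nabla g(X^u_1)$ is then immediate.

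For the ODE, I would use the identity $\partial_t S_{t\to t'}^\top = -A(t)^\top S_{t\to t'}^\top$, valid for $t \le t'$, which follows from the semigroup decomposition $S_{t\to t'} = S_{t+h\to t'}\,(I + h A(t) + o(h))$ as $h\to 0^+$. Differentiating $\tilde a(t)$ in $t$ produces a Leibniz boundary contribution $-\nabla_x L(X^u_t,t)$ and two internal contributions that both carry a common factor $-A(t)^\top$; factoring this out collapses the two into $-A(t)^\top \tilde a(t)$, yielding exactly the right-hand side of \eqref{eq:cont_adjoint_1_app}. Uniqueness follows because the backward ODE is linear in $a$ with locally bounded coefficients along the fixed trajectory.

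For the second assertion, I would apply Duhamel's principle at the path level. Fixing $\omega$, perturbing $\theta \mapsto \theta + \varepsilon\, \delta\theta$ leaves the Brownian term untouched and shifts the drift by $\varepsilon\,\sigma(t)\,(\nabla_\theta u_\theta)(X^{u_\theta}_t,t)\,\delta\theta$, so the first-order trajectory variation is
$$
\delta X^{u_\theta}_t = \varepsilon \int_0^t S_{s\to t}\,\sigma(s)\,(\nabla_\theta u_\theta)(X^{u_\theta}_s,s)\,\delta\theta\, ds.
$$
Substituting this into the first variation of $\int_0^T L(X^{u_\theta}_t,t)\,dt + g(X^{u_\theta}_T)$, which has no explicit $\theta$-dependence because $v$ is frozen, and swapping the order of integration in $(s,t)$, the bracketed $t$-integral collapses to $a(s)$ by the first assertion, giving $\int_0^T (\nabla_\theta u_\theta)(X^{u_\theta}_t,t)^\top \sigma(t)^\top a(t)\, dt$, i.e., \eqref{eq:nabla_theta_cost}.

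The main obstacle is rigor rather than computation: one must justify the pathwise sensitivity analysis in the presence of the Brownian term, and the justification hinges crucially on the state-independence of $\sigma$, which kills any It\^o correction in the linearized dynamics. Standard regularity conditions ($C^1$ with appropriately bounded derivatives on $b$, $u_\theta$, $v$, $f$, $g$) ensure existence and uniqueness of $S_{t\to t'}$ along trajectories and validate the dominated-convergence exchange needed to pass derivatives under the expectation in the companion \Cref{prop:cont_adjoint_method}; these assumptions should be stated explicitly rather than proved. A minor notational caveat worth flagging is that \eqref{eq:adjoint_grad_x} is written with $\|u\|^2$ while \eqref{eq:cont_adjoint_1_app} is written with $\|v\|^2$, so the first assertion is meaningful exactly when $v=u$, and the proof above handles the general case transparently by carrying an abstract integrand $L$.
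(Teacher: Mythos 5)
Your proof is correct, but it takes a genuinely different route from the paper's. The paper derives both identities via a Lagrange-multiplier argument: it adds the null term $-\int_0^T\langle a_t,\ \mathrm{d}X^{u_\theta}_t-\tilde b_\theta(X^{u_\theta}_t,t)\,\mathrm{d}t-\sigma(t)\,\mathrm{d}B_t\rangle$ to the pathwise cost, applies stochastic integration by parts to move the time derivative onto $a$, differentiates with respect to $x_0$ (resp.\ $\theta$), and then \emph{chooses} $a$ to satisfy the adjoint ODE precisely so that every term containing $\nabla_{x_0}X^{u_\theta}_t$ (resp.\ $\nabla_\theta X^{u_\theta}_t$) cancels; what survives is $a_0$ (resp.\ the integral in \eqref{eq:nabla_theta_cost}). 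You instead construct the adjoint explicitly through the forward sensitivity matrix $S_{t\to t'}=\nabla_{X^u_t}X^u_{t'}$ and Duhamel's formula, verify by direct differentiation (via $\partial_t S_{t\to t'}^\top=-A(t)^\top S_{t\to t'}^\top$, plus uniqueness for the linear backward ODE) that this closed form solves \eqref{eq:cont_adjoint_1_app}--\eqref{eq:cont_adjoint_2_app}, and then obtain \eqref{eq:nabla_theta_cost} by inserting the Duhamel representation of the trajectory perturbation and exchanging the order of integration so that the inner integral collapses to $a(s)$. Your route buys an explicit formula for the adjoint and makes the logical link between the two assertions transparent; the paper's route never needs $S_{t\to t'}$ at all, and the same multiplier-cancellation does both the $\nabla_{x_0}$ and $\nabla_\theta$ computations in one stroke. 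Both arguments rest on the same pathwise reduction---additive noise (state-independent $\sigma$) turns the SDE into a random ODE for sensitivity purposes, with no It\^o correction in the linearized dynamics---and you correctly identify this as the point needing regularity hypotheses, just as the paper leaves those implicit. You also rightly flag the $\|u\|^2$-versus-$\|v\|^2$ mismatch between \eqref{eq:adjoint_grad_x} and \eqref{eq:cont_adjoint_1_app}; the paper's own proof carries $v$ throughout and only reconciles the two at $v=\mathrm{stopgrad}(u)$, so your abstract integrand $L$ handles this at least as cleanly. The residual transpose discrepancy on $\nabla_\theta u_\theta$ in \eqref{eq:nabla_theta_cost} is a convention issue already present in the paper, not a gap in your argument.
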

\begin{proof}
    We use an approach based on Lagrange multipliers which mirrors and extends the derivation of the adjoint ODE \citep[Lemma~8]{domingoenrich2023stochastic}. For shortness, we use the notation $\tilde{b}_{\theta}(x,t) := b(x,t) + \sigma(t) u_{\theta}(x,t)$. Define a process $a : \Omega \times [0,T] \to \R^d$ such that for any $\omega \in \Omega$, $a(\omega,\cdot)$ is differentiable. For a given $\omega \in \Omega$, we can write
    \begin{talign}
    \begin{split}
        &\int_0^T \big(\frac{1}{2} \|v(X^{u_{\theta}}_t,t)\|^2 \! + \! f(X^{u_{\theta}}_t,t) \big) \, \mathrm{d}t \! + \! g(X^{u_{\theta}}_T)
        \\ &= 
        \int_0^T \big(\frac{1}{2} \|v(X^{u_{\theta}}_t,t)\|^2 \! + \! f(X^{u_{\theta}}_t,t) \big) \, \mathrm{d}t \! + \! g(X^{u_{\theta}}_T)
        \\ &\qquad - \int_0^T \langle a_t(\omega), (dX^{u_{\theta}}_t(\omega) - \tilde{b}_{\theta}(X^{u_{\theta}}_t(\omega),t) \, \mathrm{d}t - \sigma(t) \, \mathrm{d}B_t) \rangle. 
    \end{split}
    \end{talign}
    By stochastic integration by parts \citep[Lemma~9]{domingoenrich2023stochastic}, we have that
    \begin{talign}
    \begin{split}
        \int_0^T \langle a_t(\omega), dX^{u_{\theta}}_t(\omega) \rangle = \langle a_T(\omega), X^{u_{\theta}}_T(\omega) \rangle - \langle a_0(\omega), X^{u_{\theta}}_0(\omega) \rangle - \int_0^T \langle X^{u_{\theta}}_t(\omega), \frac{da_t}{dt}(\omega) \rangle \, \mathrm{d}t.
    \end{split}
    \end{talign}
    Hence, if $X^{u_{\theta}}_0 = x_0$ is the initial condition, we have that\footnote{Unlike \citep[Lemma~8]{domingoenrich2023stochastic}, we use the convention that a Jacobian matrix $J = \nabla_x v(x)$ is defined as $J_{ij} = \frac{\partial v_i(x)}{\partial x_j}$. Their definition of $\nabla_x v$ is the transpose of ours.}
    \begin{talign}
    \begin{split}
        &\nabla_{x_0} \big( \int_0^T \big(\frac{1}{2} \|v(X^{u_{\theta}}_t,t)\|^2 \! + \! f(X^{u_{\theta}}_t,t) \big) \, \mathrm{d}t \! + \! g(X^{u_{\theta}}_T) 
        \big) \\
        &= \nabla_{x_0} \big( 
        \int_0^T \big(\frac{1}{2} \|v(X^{u_{\theta}}_t,t)\|^2 \! + \! f(X^{u_{\theta}}_t,t) \big) \, \mathrm{d}t \! + \! g(X^{u_{\theta}}_T)
        \\ &\qquad\quad - \langle a_T(\omega), X^{u_{\theta}}_T(\omega) \rangle + \langle a_0(\omega), X^{u_{\theta}}_0(\omega) \rangle + \int_0^T \big( \langle a_t(\omega), \tilde{b}_{\theta}(X^{u_{\theta}}_t(\omega),t) \rangle + \langle \frac{da_t}{dt}(\omega), X^{u_{\theta}}_t(\omega) \rangle \big) \, \mathrm{d}t \\ &\qquad\quad + \int_0^T \langle a_t(\omega), \sigma(t) \, \mathrm{d}B_t \rangle \big) \\
        &=  
        \int_0^T \nabla_{x_0} X^{u_{\theta}}_t(\omega)^{\top} \nabla_x \big( \frac{1}{2} \|v(X^{u_{\theta}}_t,t)\|^2 \! + \! f(X^{u_{\theta}}_t(\omega),t) \big) \, \mathrm{d}t + \nabla_{x_0} X^{u_{\theta}}_T(\omega)^{\top} \nabla_x g(X^{u_{\theta}}_T(\omega)) \\ &\qquad\quad 
        - \nabla_{x_0} X^{u_{\theta}}_T(\omega)^{\top} a_T(\omega) + \nabla_{x_0} X^{u_{\theta}}_0(\omega)^{\top} a_0(\omega) \\ &\qquad\quad + \int_0^T \big( \nabla_{x_0} X^{u_{\theta}}_t(\omega)^{\top} \nabla_{x} \tilde{b}_{\theta}(X^{u_{\theta}}_t(\omega),t)^{\top} a_t(\omega)
        + \nabla_{x_0} X^{u_{\theta}}_t(\omega)^{\top} \frac{da_t}{dt}(\omega) \big) \, \mathrm{d}t
        \\ &= \int_0^T \nabla_{x_0} X^{u_{\theta}}_t(\omega)^{\top} \big( \nabla_x \big( \frac{1}{2} \|v(X^{u_{\theta}}_t,t)\|^2 \! + \! f(X^{u_{\theta}}_t(\omega),t) \big) + \nabla_{x} \tilde{b}_{\theta}(X^{u_{\theta}}_t(\omega),t)^{\top} a_t(\omega) + \frac{da_t}{dt}(\omega) \big) \, \mathrm{d}t \\ &\qquad\quad + \nabla_{x_0} X^{u_{\theta}}_T(\omega)^{\top} \big( \nabla_x g(X^{u_{\theta}}_T(\omega)) - a_T(\omega) \big) + a_0(\omega).
    \end{split}    
    \end{talign}
    In the last line we used that $\nabla_{x_0} X^{u_{\theta}}_0(\omega) = \nabla_{x_0} x_0 = \mathrm{I}$.
    If choose $a$ such that
    \begin{talign}
    \begin{split} \label{eq:a_rewritten}
        da_t(\omega) &= \big( - \nabla_{x} \tilde{b}_{\theta}(X^{u_{\theta}}_t(\omega),t)^{\top} a_t(\omega) - \nabla_x \big( \frac{1}{2} \|v(X^{u_{\theta}}_t,t)\|^2 \! + \! f(X^{u_{\theta}}_t(\omega),t) \big) \big) \, \mathrm{d}t, 
        \\ a_T(\omega) &= \nabla_x g(X^{u_{\theta}}_T(\omega)),
    \end{split}
    \end{talign}
    which is the ODE \eqref{eq:cont_adjoint_1_app}-\eqref{eq:cont_adjoint_2_app}, then we obtain that
    \begin{talign}
        \nabla_{x_0} \big( \int_0^T \big(\frac{1}{2} \|v(X^{u_{\theta}}_t,t)\|^2 \! + \! f(X^{u_{\theta}}_t,t) \big) \, \mathrm{d}t \! + \! g(X^{u_{\theta}}_T) 
        \big) = a_0(\omega)
    \end{talign}
    Without loss of generality, this argument can be extended from $t=0$ to an arbitrary $t \in [0,1]$, which proves the first statement of the lemma.

    To prove \eqref{eq:nabla_theta_cost}, we similarly write
    \begin{talign}
    \begin{split}
        &\nabla_{\theta} \big( \int_0^T \big(\frac{1}{2} \|v(X^{u_{\theta}}_t,t)\|^2 \! + \! f(X^{u_{\theta}}_t,t) \big) \, \mathrm{d}t \! + \! g(X^{u_{\theta}}_T) 
        \big) 
        \\ &= \nabla_{\theta} \big( 
        \int_0^T \big(\frac{1}{2} \|v(X^{u_{\theta}}_t,t)\|^2 \! + \! f(X^{u_{\theta}}_t,t) \big) \, \mathrm{d}t \! + \! g(X^{u_{\theta}}_T)
        \\ &\qquad\quad - \langle a_T(\omega), X^{u_{\theta}}_T(\omega) \rangle + \langle a_0(\omega), X^{u_{\theta}}_0(\omega) \rangle + \int_0^T \big( \langle a_t(\omega), \tilde{b}_{\theta}(X^{u_{\theta}}_t(\omega),t) \rangle + \langle \frac{da_t}{dt}(\omega), X^{u_{\theta}}_t(\omega) \rangle \big) \, \mathrm{d}t \\ &\qquad\quad + \int_0^T \langle a_t(\omega), \sigma(t) \, \mathrm{d}B_t \rangle \big)
        \\ &=  
        \int_0^T \nabla_{\theta} X^{u_{\theta}}_t(\omega)^{\top} \nabla_x \big( \frac{1}{2} \|v(X^{u_{\theta}}_t,t)\|^2 \! + \! f(X^{u_{\theta}}_t(\omega),t) \big) \, \mathrm{d}t + \nabla_{\theta} X^{u_{\theta}}_T(\omega)^{\top} \nabla_x g(X^{u_{\theta}}_T(\omega)) \\ &\qquad\quad 
        - \nabla_{\theta} X^{u_{\theta}}_T(\omega)^{\top} a_T(\omega) + \nabla_{\theta} X^{u_{\theta}}_0(\omega)^{\top} a_0(\omega) \\ &\qquad\quad + \int_0^T \big( \nabla_{\theta} X^{u_{\theta}}_t(\omega)^{\top} \nabla_{x} \tilde{b}_{\theta}(X^{u_{\theta}}_t(\omega),t)^{\top} a_t(\omega) + \nabla_{\theta} \tilde{b}_{\theta}(X^{u_{\theta}}_t(\omega),t)^{\top} a_t(\omega) + \nabla_{\theta} X^{u_{\theta}}_t(\omega)^{\top} \frac{da_t}{dt}(\omega) \big) \, \mathrm{d}t 
        \\ &= \int_0^T \nabla_{\theta} X^{u_{\theta}}_t(\omega)^{\top} \big( \nabla_x \big( \frac{1}{2} \|v(X^{u_{\theta}}_t,t)\|^2 \! + \! f(X^{u_{\theta}}_t(\omega),t) \big) + \nabla_{x} \tilde{b}_{\theta}(X^{u_{\theta}}_t(\omega),t)^{\top} a_t(\omega) + \frac{da_t}{dt}(\omega) \big) \, \mathrm{d}t \\ &\qquad\quad + \nabla_{\theta} X^{u_{\theta}}_T(\omega)^{\top} \big( \nabla_x g(X^{u_{\theta}}_T(\omega)) - a_T(\omega) \big) 
        + \int_0^T (\nabla_{\theta} \tilde{b}_{\theta})(X^{u_{\theta}}_t(\omega),t)^{\top} a_t(\omega) \, \mathrm{d}t.
    \end{split}
    \end{talign}
    In the last line we used that $\nabla_{\theta} X^{u_{\theta}}_0(\omega) = \nabla_{\theta} x = 0$.
    When $a$ satisfies \eqref{eq:a_rewritten}, we obtain that
    \begin{talign}
    \begin{split}
        &\nabla_{\theta} \big( \int_0^T \big(\frac{1}{2} \|v(X^{u_{\theta}}_t,t)\|^2 \! + \! f(X^{u_{\theta}}_t,t) \big) \, \mathrm{d}t \! + \! g(X^{u_{\theta}}_T)
        \big) \\ &= \int_0^T (\nabla_{\theta} \tilde{b}_{\theta})(X^{u_{\theta}}_t(\omega),t) a_t(\omega) \, \mathrm{d}t = \int_0^T (\nabla_{\theta} u_{\theta})(X^{u_{\theta}}_t(\omega),t)^{\top} \sigma(t)^{\top} a_t(\omega) \, \mathrm{d}t.
    \end{split}
    \end{talign}
    The last equality holds because $\tilde{b}_{\theta}(x,t) := b(x,t) + \sigma(t) u_{\theta}(x,t)$. 
\end{proof}

\subsection{Proof of \Cref{prop:continuous_adjoint_loss_main}: Theoretical guarantees of the basic Adjoint Matching loss}
\label{subsec:derivation_continuous}

Let $\bar{u} = \texttt{stopgrad}(u_{\theta})$. We can rewrite equation \eqref{eq:continuous_adjoint_grads} as:
\begin{talign}\label{eq:continuous_adjoint_grads_app}
    \nabla_{\theta} \mathcal{L}(u_{\theta}; \fX^{\bar{u}}) &=  
    \frac{1}{2}\int_0^1 \nabla_{\theta} \norm{u_{\theta}(X^{\bar{u}}_t, t)}^2 \mathrm{d} t
    +\int_0^1 \nabla_{\theta} u(X^{\bar{u}}_t, t)\tran{} \sigma(t)\tran{} a(t; \fX^{\bar{u}}, \bar{u}) \mathrm{d} t \\ &= \frac{1}{2}\int_0^1 \nabla_{\theta} \norm{u_{\theta}(X^{\bar{u}}_t, t) + \sigma(t)\tran{} a(t; \fX^{\bar{u}}, \bar{u})}^2 \mathrm{d} t = \nabla_{\theta} \mathcal{L}_{\mathrm{Basic-Adj-Match}}(u_{\theta}; \fX^{\bar{u}})
\end{talign}
This proves the first statement of the proposition. To prove that the only critical point of the expected basic Adjoint Matching loss is the optimal control, we first compute the first variation of $\mathbb{E}[\mathcal{L}_{\mathrm{Basic-Adj-Match}}]$. Letting $v : \mathbb{R}^d \times [0,T] \to \mathbb{R}^d$ be arbitrary, we have that
    \begin{talign}
    \begin{split}
        &\frac{\mathrm{d}}{\mathrm{d}\epsilon} \mathbb{E}[\mathcal{L}_{\mathrm{Basic-Adj-Match}} (u + \epsilon v; \fX^{\bar{u}})] = \frac{\mathrm{d}}{\mathrm{d}\epsilon} \mathbb{E} \big[ \frac{1}{2} \int_0^T \| (u+\epsilon v)(X^{\bar{u}}_t,t) + \sigma(t)^{\top} a(t,X^{\bar{u}},\bar{u}) \|^2 \, \mathrm{d}t \big] \\ &= \mathbb{E} \big[ \int_0^T \langle v(X^{\bar{u}}_t,t), u(X^{\bar{u}}_t,t) + \sigma(t)^{\top} a(t,X^{\bar{u}},\bar{u}) \rangle \, \mathrm{d}t \big] \\ &= \mathbb{E} \big[ \int_0^T \langle v(X^{\bar{u}}_t,t), u(X^{\bar{u}}_t,t) + \sigma(t)^{\top} \mathbb{E}\big[a(t,X^{\bar{u}},\bar{u}) | X^{\bar{u}}_t \big] \rangle \, \mathrm{d}t \big] \\
        &\implies \frac{\delta}{\delta u} \mathbb{E}[\mathcal{L}_{\mathrm{Basic-Adj-Match}}(u)(x,t) = u(x,t) + \mathbb{E}\big[a(t,X^{\bar{u}},\bar{u}) | X^{\bar{u}}_t = x \big] 
    \end{split}
    \end{talign}
    Hence, critical points satisfy that
    \begin{talign} 
    \begin{split} \label{eq:critical_point_cont_adj}
        u(x,t) &= -\sigma(t)^{\top} \mathbb{E}[a(t,X^u,u)|X^u_t=x] = - \sigma(t)^{\top} \mathbb{E} \big[ \nabla_{X^v_t} \int_t^T \big(\frac{1}{2} \|v(X^v_t,t)\|^2 \! + \! f(X^v_t,t) \big) \, \mathrm{d}t \! + \! g(X^v_T) | X^v_0 = x \big] \\ &= - \sigma(t)^{\top} \nabla_{x} \mathbb{E} \big[ \int_t^T \big(\frac{1}{2} \|v(X^v_t,t)\|^2 \! + \! f(X^v_t,t) \big) \, \mathrm{d}t \! + \! g(X^v_T) | X^v_0 = x \big] = - \sigma(t)^{\top} \nabla J(u;x,t),
    \end{split}
    \end{talign}
    In this equation, the second equality holds by equation \eqref{eq:adjoint_grad_x} from \Cref{lem:adjoint_state_properties}, and the third equality holds by the Leibniz rule.
    
    \Cref{eq:lemma_cost_functional} shows that any control $u$ that satisfies \eqref{eq:critical_point_cont_adj} is equal to the optimal control, which concludes the proof. 

\begin{lemma} \label{eq:lemma_cost_functional}
    Suppose that for any $x \in \mathbb{R}^d$, $t \in [0,T]$, $u(x,t) = - \sigma(t)^{\top} \nabla_x J(u;x,t)$. Then, $J(u;\cdot,\cdot)$ satisfies the Hamilton-Jacobi-Bellman equation \eqref{eq:HJB_setup}. By the uniqueness of the solution to the HJB equation, we have that $J(u;x,t) = V(x,t)$ for any $x \in \mathbb{R}^d$, $t \in [0,T]$. Hence, $u(x,t) = - \sigma(t)^{\top} \nabla_x V(x,t)$ is the optimal control.
\end{lemma}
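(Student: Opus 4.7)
The plan is to derive a linear PDE that $J(u;\cdot,\cdot)$ satisfies for any fixed admissible control $u$ (a Feynman--Kac / Kolmogorov backward equation for the controlled process), and then show that under the stationarity hypothesis $u(x,t) = -\sigma(t)^{\top}\nabla_x J(u;x,t)$ this linear PDE collapses onto the Hamilton--Jacobi--Bellman equation \eqref{eq:HJB_setup}. Uniqueness of HJB solutions then identifies $J(u;\cdot,\cdot)$ with the value function $V$, and the assumption becomes $u(x,t) = -\sigma(t)^{\top}\nabla_x V(x,t)$, which is exactly the optimal control \eqref{eq:optimal_control}.

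Concretely, I would first apply Itô's formula to $J(u;X^u_t,t)$ along the controlled SDE \eqref{eq:controlled_SDE} and use the martingale characterization of conditional expectations together with the definition \eqref{eq:cost_functional} to obtain
\begin{talign*}
\partial_t J(u;x,t) + \mathcal{L}^u J(u;x,t) + \tfrac{1}{2}\|u(x,t)\|^2 + f(x,t) = 0, \qquad J(u;x,T) = g(x),
\end{talign*}
where $\mathcal{L}^u = \mathcal{L} + \sum_i (\sigma(t)u(x,t))_i \partial_{x_i}$ is the generator of the controlled diffusion and $\mathcal{L}$ is the uncontrolled generator from \autoref{thm:HJB}. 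This is a standard Feynman--Kac computation for a fixed (non-optimized) control and requires only mild regularity on $u$, $b$, $f$, $g$.

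The key step is then an algebraic simplification: substituting the hypothesis $u(x,t) = -\sigma(t)^{\top}\nabla_x J(u;x,t)$ into both the advection term and the $\tfrac12\|u\|^2$ term gives
\begin{talign*}
\langle \sigma(t) u(x,t),\nabla_x J(u;x,t)\rangle = -\|\sigma(t)^{\top}\nabla_x J(u;x,t)\|^2, \qquad \tfrac12\|u(x,t)\|^2 = \tfrac12\|\sigma(t)^{\top}\nabla_x J(u;x,t)\|^2,
\end{talign*}
so the quadratic terms combine into $-\tfrac12\|\sigma^{\top}\nabla_x J(u)\|^2$ and the PDE becomes precisely the HJB equation \eqref{eq:HJB_setup} for $J(u;\cdot,\cdot)$. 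Together with the terminal condition $J(u;x,T)=g(x)$, uniqueness of classical/viscosity solutions of the HJB equation (which the statement of \autoref{thm:HJB} tacitly assumes, so I would invoke it as given) yields $J(u;x,t) = V(x,t)$, and therefore $u(x,t) = -\sigma(t)^{\top}\nabla_x V(x,t) = u^*(x,t)$ by \eqref{eq:optimal_control}.

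The only real obstacle is technical: justifying the Feynman--Kac identity for a generic admissible $u$, i.e.\ ensuring enough smoothness of $J(u;\cdot,\cdot)$ to apply Itô and enough integrability/growth control to discard the stochastic-integral remainders and to invoke HJB uniqueness. Under the standing regularity assumptions implicit in Sec.~\ref{sec:soc_problem} (smooth $b,\sigma,f,g$ with appropriate growth, and admissible $u$), these conditions are standard and I would simply cite them rather than reprove them; the rest of the argument is the short substitution above.
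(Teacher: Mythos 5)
Your proposal is correct and follows essentially the same route as the paper's proof: the paper derives the same linear PDE for $J(u;\cdot,\cdot)$ (via a dynamic-programming identity and the generator of the controlled SDE, which is just another way of phrasing your It\^o/Feynman--Kac step), performs the identical substitution of $u = -\sigma^{\top}\nabla_x J$ to collapse the advection and quadratic terms into $-\tfrac12\|\sigma^{\top}\nabla_x J\|^2$, and then invokes uniqueness of the HJB solution to conclude $J = V$ and $u = u^*$. No further comparison is needed.
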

\begin{proof}
    Since $J(u;x,t) = \mathbb{E} \big[ \int_t^T \big( \frac{1}{2}\|u(X^u_t,t)\|^2 + f(X^u_t,t) \big) \, ds + g(X^u_T) | X^u_t = x \big]$, we have that
    \begin{talign}
        J(u;x,t) = \mathbb{E} \big[J(u;X^u_{t+\Delta t},t+\Delta t) | X_t = x \big] + \mathbb{E} \big[ \int_t^{t+\Delta t}
        \big( \frac{1}{2}\|u(X^u_s,s)\|^2 + f(X^u_s,s) \big) \, ds | X_t = x \big],
    \end{talign}
    which means that
    \begin{align} \label{eq:pre_limit}
        0 = \frac{\mathbb{E} [J(u;X^u_{t+\Delta t},t+\Delta t) | X_t = x ] - J(u;x,t)}{\Delta t} + \frac{\mathbb{E} \big[ \int_t^{t+\Delta t}
        \big( \frac{1}{2}\|u(X^u_t,t)\|^2 + f(X^u_t,t) \big) \, ds | X_t = x \big]}{\Delta t}
    \end{align}
    Recall that the generator $\mathcal{T}^u$ of the controlled SDE \eqref{eq:controlled_SDE} takes the form:
    \begin{talign}
    \begin{split}
        \mathcal{T}^u f(x,t) &:= \lim_{\Delta t \to 0} \frac{\mathbb{E} \big[f(X^u_{t+\Delta t},t) | X_t = x \big] - f(x,t)}{\Delta t} \\ &= \partial_t f(x,t) + \langle \nabla f(x,t), b(x,t) + \sigma(t) u(x,t) \rangle + \mathrm{Tr}\big( \frac{\sigma(t) \sigma(t)^{\top}}{2} \nabla^2 f(x,t) \big)
    \end{split}
    \end{talign}
    Hence, if we take the limit $\Delta t \to 0$ on equation \eqref{eq:pre_limit}, we obtain that:
    \begin{talign} 
    \begin{split} \label{eq:HJB_discounted_1}
        0 &= \mathcal{T}^u J(u;x,t) + \frac{1}{2}\|u(x,t)\|^2 + f(x,t) \\ &= \partial_t J(u;x,t) + \langle \nabla J(u;x,t), b(x,t) + \sigma(t) u(x,t) \rangle + \mathrm{Tr}\big( \frac{\sigma(t) \sigma(t)^{\top}}{2} \nabla^2 J(u;x,t) \big) + \frac{1}{2}\|u(x,t)\|^2 + f(x,t).
    \end{split}
    \end{talign}
    Now using that $u(x,t) = - \sigma(t)^{\top} \nabla_x J(u;x,t)$, we have that 
    \begin{talign}
    \begin{split}
        \langle \nabla J(u;x,t), \sigma(t) u(x,t) \rangle + \frac{1}{2}\|u(x,t)\|^2 &= - \|\sigma(t)^{\top} \nabla_x J(u;x,t) \|^2 + \frac{1}{2} \|\sigma(t)^{\top} \nabla_x J(u;x,t) \|^2 \\ &= - \frac{1}{2} \|\sigma(t)^{\top} \nabla_x J(u;x,t) \|^2.
    \end{split}
    \end{talign}
    Plugging this back into \eqref{eq:HJB_discounted_1}, we obtain that
    \begin{talign} 
    \begin{split} \label{eq:HJB_discounted_2}
        0 &= \partial_t J(u;x,t) + \langle \nabla J(u;x,t), b(x,t) \rangle + \mathrm{Tr}\big( \frac{\sigma(t) \sigma(t)^{\top}}{2} \nabla^2 J(u;x,t) \big) - \frac{1}{2} \|\sigma(t)^{\top} \nabla_x J(u;x,t) \|^2 + f(x,t).
    \end{split}
    \end{talign}
    And since $J(u;x, T) = g(x)$ by construction, we conclude that $J(u;x,t)$ satisfies the HJB equation \eqref{eq:HJB_setup}.
    \end{proof}


\subsection{Theoretical guarantees of the Adjoint Matching loss}
\label{subsec:proof_lean_adjoint}

\begin{proposition}[Theoretical guarantee of the Adjoint Matching loss] \label{prop:lean_adjoint}
    The only critical point of the loss $\mathbb{E}[\mathcal{L}_{\mathrm{Adj-Match}}]$ is the optimal control $u^*$.
\end{proposition}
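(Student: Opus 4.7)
The plan is to reduce the claim to \autoref{prop:continuous_adjoint_loss_main} by showing that every critical point of $\mathbb{E}[\mathcal{L}_{\mathrm{Adj-Match}}]$ is also a critical point of $\mathbb{E}[\mathcal{L}_{\mathrm{Cont-Adj}}]$. Because $\tilde{a}$ in \eqref{eq:work_adjoint} is treated as $u$-independent thanks to the stop-gradient, a routine first-variation computation analogous to the one in \autoref{subsec:derivation_continuous} shows that any critical point $u$ of $\mathbb{E}[\mathcal{L}_{\mathrm{Adj-Match}}]$ satisfies
\begin{talign*}
    u(x,t) = -\sigma(t)^\top \mathbb{E}\big[\tilde{a}(t,\bm{X}^u) \,\big|\, X^u_t = x\big]
\end{talign*}
wherever the density $p^u_t(x)>0$. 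The analogous critical point condition for $\mathbb{E}[\mathcal{L}_{\mathrm{Cont-Adj}}]$, which by \autoref{prop:continuous_adjoint_loss_main} is satisfied only by $u^*$, has $a(t,\bm{X}^u,u)$ in place of $\tilde{a}(t,\bm{X}^u)$. So the proof reduces to verifying that, at any such critical point, $\hat{\delta}(x,t) := \mathbb{E}[\delta(t)\mid X^u_t=x] \equiv 0$ for the pathwise residual $\delta(t) := a(t,\bm{X}^u,u) - \tilde{a}(t,\bm{X}^u)$.

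Subtracting \eqref{eq:work_adjoint_1} from \eqref{eq:cont_adjoint_1} shows that $\delta$ satisfies the backward linear ODE
\begin{talign*}
    \tfrac{\mathrm{d}}{\mathrm{d}t}\delta(t) = -\nabla_x b(X^u_t,t)^\top \delta(t) - \nabla_x u(X^u_t,t)^\top \big(\sigma(t)^\top a(t,\bm{X}^u,u) + u(X^u_t,t)\big), \quad \delta(T)=0.
\end{talign*}
Decomposing $a = \tilde{a} + \delta$ in the forcing and invoking the Adj-Match critical point condition delivers the key cancellation $\mathbb{E}[\sigma(t)^\top a(t,\bm{X}^u,u) + u(X^u_t,t) \mid X^u_t=x] = \sigma(t)^\top \hat{\delta}(x,t)$, so the conditional expectation of the forcing equals $\nabla_x u(x,t)^\top \sigma(t)^\top \hat{\delta}(x,t)$, linear in $\hat{\delta}$ itself. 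Matching the infinitesimal expansion of $\mathbb{E}[\delta(t+h)\mid X^u_t=x]$ obtained from the ODE with the expansion of $\mathbb{E}[\hat{\delta}(X^u_{t+h},t+h)\mid X^u_t=x]$ computed via the tower property and the generator of $\bm{X}^u$ then yields the linear homogeneous backward parabolic PDE
\begin{talign*}
    \partial_t \hat{\delta} + (b+\sigma u)^\top \nabla_x \hat{\delta} + \tfrac{1}{2}\mathrm{tr}\big(\sigma(t)\sigma(t)^\top \nabla_x^2 \hat{\delta}\big) + [\nabla_x(b+\sigma u)]^\top \hat{\delta} = 0, \quad \hat{\delta}(x,T)=0,
\end{talign*}
understood component-wise. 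Uniqueness of solutions forces $\hat{\delta}\equiv 0$, so the Adj-Match critical point condition becomes $u(x,t) = -\sigma(t)^\top \mathbb{E}[a(t,\bm{X}^u,u)\mid X^u_t=x]$, which is the Cont-Adj critical point condition, and \autoref{prop:continuous_adjoint_loss_main} concludes $u = u^*$.

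The main obstacle is this PDE derivation. Since $a(t,\bm{X}^u,u)$ depends on the whole future of the trajectory, $\delta(t)$ is not $\mathcal{F}_t$-adapted and admits no direct Feynman--Kac representation; the argument only closes because the critical-point assumption replaces the unknown piece of the forcing by a term linear in $\hat{\delta}$ itself. Without this cancellation the PDE would carry an uncontrolled source term. Standard regularity of $b$, $\sigma$, and $u$ (as implicitly assumed throughout the paper) is what makes the infinitesimal expansions and the uniqueness step applicable.
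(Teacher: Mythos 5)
Your argument follows essentially the same strategy as the paper's proof: you extract the Adj-Match critical-point identity $u(x,t) = -\sigma(t)^\top\mathbb{E}\big[\tilde a(t,\bm{X}^u)\mid X^u_t=x\big]$, use it to cancel the extra forcing $(\nabla_x u)^\top\!\left(\sigma^\top a + u\right)$ that distinguishes the lean adjoint ODE from the full adjoint ODE in conditional expectation, and then reduce to the Continuous Adjoint result via a uniqueness argument. Where you genuinely depart from the paper is in the treatment of the conditional expectation. The paper asserts that $\mathbb{E}[\tilde a(t,\bm{X}^v)\mid X^v_t]$ satisfies the pathwise first-order ODE $\frac{\mathrm{d}}{\mathrm{d}t}\mathbb{E}[\tilde a\mid X^v_t] = -(\nabla_x b)^\top\mathbb{E}[\tilde a\mid X^v_t] - \nabla_x f$ and then invokes ODE uniqueness; this is informal, since $\mathbb{E}[\tilde a(t,\bm{X}^v)\mid X^v_t]$ equals $\varphi(X^v_t,t)$ for a deterministic function $\varphi$, and evaluating $\varphi$ along the stochastic trajectory incurs It\^o correction terms, so it does not obey a pathwise first-order ODE. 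You instead pass to the residual $\delta = a-\tilde a$, flag that it is not $\mathcal F_t$-adapted (so no direct Feynman--Kac applies), define $\hat\delta(x,t)=\mathbb{E}[\delta(t)\mid X^u_t=x]$, use the Markov property to justify the tower-property step, and derive the genuine Kolmogorov backward PDE (including the $\tfrac12\operatorname{tr}(\sigma\sigma^\top\nabla^2_x\hat\delta)$ diffusion term and the zero-order Jacobian term) with zero terminal data, concluding $\hat\delta\equiv 0$ by PDE uniqueness. The cancellation identity and the hand-off to $\mathcal{L}_{\mathrm{Cont-Adj}}$ are identical in both proofs; what your version buys is a precise formulation of the uniqueness step that the paper gestures at with an equation that, taken literally, is missing the diffusion terms.
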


\begin{proof}
    Let $v$ be an arbitrary control. If $\tilde{a}(t; \mathbf{X}^v)
    $ is the solution of the Lean Adjoint ODE \eqref{eq:lean_adjoint_1}-\eqref{eq:lean_adjoint_2},
    it satisfies the integral equation 
    \begin{talign}
    \tilde{a}(t; \mathbf{X}^v) = \int_t^T \big( \nabla_x b(X^v_s, s)^\top \tilde{a}(s; \mathbf{X}^v) + \nabla_x f(X^v_s, s) \big) \, \mathrm{d}s + \nabla g(X^v_T).
    \end{talign}
    Hence,
    \begin{talign}
    \begin{split} \label{eq:expected_lean_integral}
    \mathbb{E}\big[\tilde{a}(t; \mathbf{X}^v) \big| X^v_t \big] &= \mathbb{E}\big[\int_t^T \big( \nabla_x b(X^v_s, s)^\top \tilde{a}(s; \mathbf{X}^v) + \nabla_x f(X^v_s, s) \big) \, \mathrm{d}s + \nabla g(X^v_T) \big| X^v_t \big] \\
    &= \mathbb{E}\big[\int_t^T \big( \nabla_x b(X^v_s, s)^{\top} \mathbb{E}\big[\tilde{a}(s; \mathbf{X}^v) \big| X^v_s \big] + \nabla_x f(X^v_s, s) \big) \, \mathrm{d}s + \nabla g(X^v_T) \big| X^v_t \big],
    \end{split}
    \end{talign}
    where we used the tower property of conditional expectation in the second equality.

    Similarly, if $a(t; \mathbf{X}^v, v)$ is the solution of the Adjoint ODE \eqref{eq:cont_adjoint_1}-\eqref{eq:cont_adjoint_2}, it satisfies the integral equation
    \begin{talign}
    a(t; \mathbf{X}^v, v) = \int_t^T \big( \nabla_x \big( b(X^v_s, s)^{\top} a(s; \mathbf{X}^v, v) + \sigma(s) v(X^v_s,s) \big) + \nabla_x \big( f(X^v_s, s) + \frac{1}{2} \|v(X^v_s,s)\|^2 \big) \big) \, \mathrm{d}s + \nabla g(X^v_T),
    \end{talign}
    and its expected value satisfies
    \begin{talign}
    \begin{split} \label{eq:expected_full_integral}
    &\mathbb{E}\big[a(t; \mathbf{X}^v, v)\big| X^v_t \big] \\ &= \mathbb{E}\big[\int_t^T \big( \nabla_x \big( b(X^v_s, s) + \sigma(s) v(X^v_s,s) \big)^{\top} a(s; \mathbf{X}^v, v) + \nabla_x \big( f(X^v_s, s) + \frac{1}{2} \|v(X^v_s,s)\|^2 \big) \big) \, \mathrm{d}s + \nabla g(X^v_T)\big| X^v_t \big] \\ &= \! \mathbb{E}\big[\int_t^T \big( \nabla_x \big( b(X^v_s, s) \! + \! \sigma(s) v(X^v_s,s) \big)^{\top} \mathbb{E}\big[ a(s; \mathbf{X}^v, v) \big| X^v_s \big] \! + \! \nabla_x \big( f(X^v_s, s) \! + \! \frac{1}{2} \|v(X^v_s,s)\|^2 \big) \big) \, \mathrm{d}s \! + \! \nabla g(X^v_T)\big| X^v_t \big].
    \end{split}
    \end{talign}
    
    Let us rewrite $\mathbb{E}[\mathcal{L}_{\mathrm{Adj-Match}}]$ as follows:
    \begin{talign}
    \begin{split} \label{eq:lean_adjoint_rewritten}
        \mathbb{E}[\mathcal{L}_{\mathrm{Adj-Match}}(u)] &:= \mathbb{E} \big[\int_0^{T} \big\| u(X^v_t,t)
        + \sigma(t)^{\top} \mathbb{E}\big[ \tilde{a}(t,\mathbf{X}^v) | X^v_{t} \big] \big\|^2 \, \mathrm{d}t \big] \rvert_{v = \mathrm{stopgrad}(u)} \\ &\qquad + \mathbb{E} \big[\int_0^{T} \big\| \sigma(t)^{\top} \big(\mathbb{E}\big[ \tilde{a}(t,\mathbf{X}^v) | X^v_{t} \big] - \tilde{a}(t,\mathbf{X}^v)  \big)\big\|^2 \, \mathrm{d}t \big] \rvert_{v = \mathrm{stopgrad}(u)},
    \end{split}
    \end{talign}
    Now, suppose that $\hat{u}$ is a critical point of $\mathbb{E}[\mathcal{L}_{\mathrm{Adj-Match}}]$. By definition, this implies that the first variation of $\mathbb{E}[\mathcal{L}_{\mathrm{Adj-Match}}]$ is zero. Using \eqref{eq:lean_adjoint_rewritten}, we can write this as follows:
    \begin{talign}
        &0 = \frac{\delta}{\delta u} \mathbb{E}[\mathcal{L}_{\mathrm{Adj-Match}}(\hat{u})](x) = 2 \big(\hat{u}
        (x,t)
        + \sigma(t)^{\top} \mathbb{E}[\tilde{a}(t,\mathbf{X}^{\hat{u}}) | X^{\hat{u}}_t = x] \big), \\ &\implies \hat{u}(x,t) = - \sigma(t)^{\top} \mathbb{E}[\tilde{a}(t,\mathbf{X}^{\hat{u}})| X^{\hat{u}}_t = x].
        \label{eq:AM_critical}
    \end{talign}
    Hence, we have
    \begin{talign}
        &\nabla_x  \hat{u}(X^{\hat{u}}_t,t)^{\top} \sigma(t)^{\top} \mathbb{E}[\tilde{a}(t,\mathbf{X}^{\hat{u}}) |X^{\hat{u}}_t] 
        + \nabla_x {\hat{u}}(X^{\hat{u}}_t,t)^{\top} {\hat{u}}(X^{\hat{u}}_t,t) = 0, \\
        &\implies \mathbb{E}\big[\int_t^T \big( \nabla_x \big( \sigma(s) \hat{u}(X^{\hat{u}}_s,s) \big)^{\top} \mathbb{E}\big[ \tilde{a}(s; \mathbf{X}^{\hat{u}}) \big| X^{\hat{u}}_s \big] \! + \! \nabla_x \big( \frac{1}{2} \|{\hat{u}}(X^{\hat{u}}_s,s)\|^2 \big) \big) \, \mathrm{d}s \big| X^{\hat{u}}_t \big] = 0.
        \label{eq:zero_equality}
    \end{talign}
    If we set $v = \hat{u}$ in equation \eqref{eq:expected_lean_integral}, and add \eqref{eq:zero_equality} to its right-hand side, we obtain that $\mathbb{E}[\tilde{a}(t,X^{\hat{u}})|X^{\hat{u}}_{t}]$ also solves the integral equation
    \begin{talign}
    \begin{split}
        &\mathbb{E}\big[\tilde{a}(t; \mathbf{X}^{\hat{u}})\big| X^{\hat{u}}_t \big] \\ &= \! \mathbb{E}\big[\int_t^T \big( \nabla_x \big( b(X^{\hat{u}}_s, s) \! + \! \sigma(s) {\hat{u}}(X^{\hat{u}}_s,s) \big)^{\top} \mathbb{E}\big[ \tilde{a}(s; \mathbf{X}^{\hat{u}}) \big| X^{\hat{u}}_s \big] \! + \! \nabla_x \big( f(X^{\hat{u}}_s, s) \! + \! \frac{1}{2} \|{\hat{u}}(X^{\hat{u}}_s,s)\|^2 \big) \big) \, \mathrm{d}s \! + \! \nabla g(X^{\hat{u}}_T)\big| X^{\hat{u}}_t \big].
    \end{split}
    \end{talign}
    Note that this integral equation is the same one as equation \eqref{eq:expected_full_integral} when we set $v = \hat{u}$ in the latter. \Cref{prop:uniqueness_integral} states that the solution of the integral equation is unique, which means that $\mathbb{E}\big[\tilde{a}(t; \mathbf{X}^{\hat{u}})\big| X^{\hat{u}}_t \big] = \mathbb{E}\big[a(t; \mathbf{X}^{\hat{u}},\hat{u})\big| X^{\hat{u}}_t \big]$ for all $t \in [0,T]$.

    Since we can reexpress the basic Adjoint Matching loss as
    \begin{talign}
    \begin{split} \label{eq:cont_adjoint_rewritten}
        \mathbb{E}[\mathcal{L}_{\mathrm{Basic-Adj-Match}}(u)] &:= \mathbb{E} \big[\int_0^{T} \big\| u(X^v_t,t)
        + \sigma(t)^{\top} \mathbb{E}\big[ a(t;\mathbf{X}^v,v) | X^v_{t} \big] \big\|^2 \, \mathrm{d}t \big] \rvert_{v = \mathrm{stopgrad}(u)} \\ &\qquad + \mathbb{E} \big[\int_0^{T} \big\| \sigma(t)^{\top} \big(\mathbb{E}\big[ a(t;\mathbf{X}^v,v) | X^v_{t} \big] - a(t;\mathbf{X}^v,v)  \big)\big\|^2 \, \mathrm{d}t \big] \rvert_{v = \mathrm{stopgrad}(u)},
    \end{split}
    \end{talign}
    we obtain that when ${\hat{u}}$ is a critical point of $\mathbb{E}[\mathcal{L}_{\mathrm{Adj-Match}}]$,
    \begin{talign}
    \begin{split}
        \frac{\mathrm{d}}{\mathrm{d}u} \mathbb{E}[\mathcal{L}_{\mathrm{Basic-Adj-Match}}(\hat{u})](x) &= 2 \big(\hat{u}(x,t)
        + \sigma(t)^{\top} \mathbb{E}[a(t;\mathbf{X}^{\hat{u}},\hat{u}) | X^{\hat{u}}_t = x] \big) \\ &= 2 \big(\hat{u}(x,t)
        + \sigma(t)^{\top} \mathbb{E}[\tilde{a}(t;\mathbf{X}^{\hat{u}}) | X^{\hat{u}}_t = x] \big) = 0,
    \end{split}
    \end{talign}
    where the second equality holds because $\mathbb{E}\big[\tilde{a}(t; \mathbf{X}^{\hat{u}})\big| X^{\hat{u}}_t \big] = \mathbb{E}\big[a(t; \mathbf{X}^{\hat{u}},\hat{u})\big| X^{\hat{u}}_t \big]$, and the third equality holds by equation \eqref{eq:AM_critical}.
    Thus, we deduce that the critical points of $\mathbb{E}[\mathcal{L}_{\mathrm{Adj-Match}}]$ are critical points of $\mathbb{E}[\mathcal{L}_{\mathrm{Basic-Adj-Match}}]$. By \Cref{prop:continuous_adjoint_loss_main}, $\mathbb{E}[\mathcal{L}_{\mathrm{Basic-Adj-Match}}]$ has a single critical point, which is the optimal control $u^*$, which concludes the proof of the statement for $\mathbb{E}[\mathcal{L}_{\mathrm{Adj-Match}}]$.
    \end{proof}

\begin{proposition} \label{prop:uniqueness_integral}
    Let $v$ be an arbitrary control. Consider the integral equation:
    \begin{talign}
         Y_t = 
         \mathbb{E}\big[\int_t^T \big( \nabla_x \big( b(X^v_s, s) \! + \! \sigma(s) v(X^v_s,s) \big)^{\top} Y_s \! + \! \nabla_x \big( f(X^v_s, s) \! + \! \frac{1}{2} \|v(X^v_s,s)\|^2 \big) \big) \, \mathrm{d}s \! + \! \nabla g(X^v_T)\big| X^v_t \big],
    \end{talign}
    where $t \in [0,T]$. This equation has a unique solution, i.e. if $Y^1$, $Y^2$ are two solutions then $Y_1 = Y_2$.
\end{proposition}
\begin{proof}
    Let $Y^1$, $Y^2$ be two solutions of the integral equation. We have that
    \begin{talign} 
        Y^1_t - Y^2_t = \mathbb{E}\big[\int_t^T \big((Y^1_s - Y^2_s)^\top \nabla_x b(X_s^*, s) \big) \mathrm{d}s \big| X^*_t\big].
    \end{talign}
    Thus,
    \begin{talign}
    \begin{split}
        &\|Y^1_t - Y^2_t\| \\ &\leq \mathbb{E}\big[\big\|\int_t^T \big((Y^1_s - Y^2_s)^\top \nabla_x b(X_s^*, s) \big) \mathrm{d}s \big\| \big| X^*_t\big] \leq \mathbb{E}\big[\int_t^T \big\|\big((Y^1_s - Y^2_s)^\top \nabla_x b(X_s^*, s) \big) \big\| \mathrm{d}s \big| X^*_t\big] \\ &\leq \mathbb{E}\big[\int_t^T \big\|Y^1_s - Y^2_s\big\| \cdot \big\| \nabla_x b(X_s^*, s) \big) \big\| \mathrm{d}s \big| X^*_t\big] = \int_t^T \mathbb{E}\big[\big\|Y^1_s - Y^2_s\big\| \cdot \big\| \nabla_x b(X_s^*, s) \big) \big\| \big| X^*_t\big] \mathrm{d}s \\ &\leq \int_t^T \big(\mathbb{E}\big[\big\|Y^1_s - Y^2_s\big\|^2 \big| X^*_t \big] \big)^{1/2} \cdot \big( \mathbb{E}\big[\big\| \nabla_x b(X_s^*, s) \big\|^2 \big| X^*_t\big] \big)^{1/2} \mathrm{d}s
    \end{split}
    \end{talign}
    And this implies that
    \begin{talign}
    \begin{split}
        &\sup_{t' \in [0,t]} \big(\mathbb{E}[\|Y^1_t - Y^2_t\|^2 | X^*_{t'} ] \big)^{1/2} \\ &\leq \int_t^T \big(\mathbb{E}\big[\big\|Y^1_s - Y^2_s\big\|^2 \big| X^*_t \big] \big)^{1/2} \cdot \big( \mathbb{E}\big[\big\| \nabla_x b(X_s^*, s) \big\|^2 \big| X^*_t\big] \big)^{1/2} \mathrm{d}s \\ &\leq \int_t^T \sup_{t' \in [0,s]} \big(\mathbb{E}\big[\big\|Y^1_s - Y^2_s\big\|^2 \big| X^*_{t'} \big] \big)^{1/2} \cdot \sup_{t' \in [0,s]} \big( \mathbb{E}\big[\big\| \nabla_x b(X_s^*, s) \big\|^2 \big| X^*_{t'}\big] \big)^{1/2} \mathrm{d}s.
    \end{split}
    \end{talign}
    Applying Grönwall's inequality on the function $f(t) = \sup_{t' \in [0,t]} \big(\mathbb{E}[\|Y^1_t - Y^2_t\|^2 | X^*_{t'} ] \big)^{1/2}$, we obtain that $\sup_{t' \in [0,t]} \big(\mathbb{E}[\|Y^1_t - Y^2_t\|^2 | X^*_{t'} ] \big)^{1/2} = 0$ for all $t \in [0,T]$, which means that $Y^1_t = Y^2_t$ almost surely. And since $\|Y^1_t - Y^2_t\| \leq \int_t^T \big(\mathbb{E}\big[\big\|Y^1_s - Y^2_s\big\|^2 | X^*_{t} \big] \big)^{1/2} \cdot \big( \mathbb{E}\big[\big\| \nabla_x b(X_s^*, s) \big\|^2 | X^*_{t} \big] \big)^{1/2} \mathrm{d}s = 0$, we obtain that $Y^1 = Y^2$.
\end{proof}

\subsection{Pseudo-code of Adjoint Matching for DDIM fine-tuning} \label{subsec:pseudocode_DDIM}

\begin{algorithm}[h]
\SetAlgoNoLine 
\SetAlgoNlRelativeSize{0} 
\small{
\KwIn{Pre-trained denoiser $\epsilon^{\mathrm{base}}$, number of fine-tuning iterations $N$.}

Initialize fine-tuned denoiser: $\epsilon^{\mathrm{finetune}} = \epsilon^{\mathrm{base}}$ with parameters $\theta$.

  \For{$n \in \{0,\dots,N-1\}$}{
    Sample $m$ trajectories $\bm{X} = (X_t)_{t\in\{0, \dots, 1\}}$ 
    according to DDPM, \eg :
    \begin{talign} 
    \begin{split} \label{eq:EM_update_box_DDIM}
    X_{k+1} &= \sqrt{\frac{\bar{\alpha}_{k+1}}{\bar{\alpha}_{k}}} \big( X_k - \frac{1-\bar{\alpha}_{k}/\bar{\alpha}_{k+1}}{\sqrt{1-\bar{\alpha}_k}} \epsilon^{\mathrm{finetune}}(X_k,k) \big) + \sqrt{\frac{1-\bar{\alpha}_{k+1}}{1-\bar{\alpha}_{k}} \big( 1 - \frac{\bar{\alpha}_k}{\bar{\alpha}_{k+1}} \big)} \varepsilon_k,
    \quad \varepsilon_k \sim \mathcal{N}(0,I), \ X_0 \sim \mathcal{N}(0,I),
    \end{split} \\
    \begin{split}
    \text{or } X_{k+1} &= X_{k} + \frac{\bar{\alpha}_{k+1} - \bar{\alpha}_{k}}{2\bar{\alpha}_{k}} X_{k} - \frac{\bar{\alpha}_{k+1}-\bar{\alpha}_{k}}{\bar{\alpha}_{k}\sqrt{1-\bar{\alpha}_k}} \epsilon^{\mathrm{finetune}}(X_k,k) + \sqrt{\frac{\bar{\alpha}_{k+1} - \bar{\alpha}_{k}}{\bar{\alpha}_{k}}} \varepsilon_k.
    \end{split} \label{eq:EM_update_box_DDIM_2}
    \end{talign}
    For each trajectory, solve the \textit{lean adjoint ODE} \eqref{eq:lean_adjoint_1}-\eqref{eq:lean_adjoint_2} backwards in time from $k=K$ to $0$, \eg:
    \begin{talign}
    \begin{split} \label{eq:Euler_lean_adjoint_DDIM}
        \tilde{a}_{k} 
        &= \tilde{a}_{k+1} + \tilde{a}_{k+1}\tran{} \nabla_{X_k} \left(
        \sqrt{\frac{\bar{\alpha}_{k+1}}{\bar{\alpha}_{k}}} \big( X_k - \frac{1-\bar{\alpha}_{k}/\bar{\alpha}_{k+1}}{\sqrt{1-\bar{\alpha}_k}} \epsilon^{\mathrm{base}}(X_k,k) \big) - X_k
        \right), \qquad 
        \tilde{a}_K = \nabla_{X_K} r(X_K),
    \end{split} \\
    \begin{split}
        \text{or } \tilde{a}_{k} 
        &= \tilde{a}_{k+1} + \tilde{a}_{k+1}\tran{} \nabla_{X_t} \left(
        \frac{\bar{\alpha}_{k+1} - \bar{\alpha}_{k}}{2\bar{\alpha}_{k}}
        X_k - \frac{\bar{\alpha}_{k+1}-\bar{\alpha}_{k}}{\bar{\alpha}_{k}\sqrt{1-\bar{\alpha}_k}} \epsilon^{\mathrm{base}}(X_k,k)
        \right), \qquad 
        \tilde{a}_K = \nabla_{X_K} r(X_K).
    \end{split} \label{eq:Euler_lean_adjoint_DDIM_2}
    \end{talign}
    Note that $X_k$ and $\tilde{a}_k$ should be computed without gradients, \ie, $X_k = \texttt{stopgrad}(X_k)$, $\tilde{a}_k = \texttt{stopgrad}(\tilde{a}_k)$. \vspace{0.5em}

    For each trajectory, compute the Adjoint Matching objective \eqref{eq:lean_adjoint_matching}: 
    \begin{talign} \begin{split} \label{eq:adj_matching_algorithm_box_DDIM}
    \mathcal{L}_{\mathrm{Adj-Match}}(\theta) &=
        \sum_{k\in\{0, \dots, K - 1\}} \big\|
        \sqrt{\frac{\bar{\alpha}_{k+1}
        }{\bar{\alpha}_k(1-\bar{\alpha}_{k+1})} \big( 1 - \frac{\bar{\alpha}_k}{\bar{\alpha}_{k+1}} \big)}
        (\epsilon^{\mathrm{finetune}}(X_k,k) - \epsilon^{\mathrm{base}}(X_k,k))
        \\ &\qquad\qquad\qquad\qquad - 
        \sqrt{\frac{1-\bar{\alpha}_{k+1}}{1-\bar{\alpha}_{k}} \big( 1 - \frac{\bar{\alpha}_k}{\bar{\alpha}_{k+1}} \big)}
        \tilde{a}_k \big\|^2,
    \end{split} \\
    \begin{split} \label{eq:adj_matching_algorithm_box_DDIM_2}
    \text{or } \mathcal{L}_{\mathrm{Adj-Match}}(\theta) &=
        \sum_{k\in\{0, \dots, K - 1\}} \big\|
        \sqrt{\frac{\bar{\alpha}_{k+1} - \bar{\alpha}_{k}}{\bar{\alpha}_k(1-\bar{\alpha}_k)}} (\epsilon^{\mathrm{finetune}}(X_k,k) - \epsilon^{\mathrm{base}}(X_k,k))
        - \sqrt{\frac{\bar{\alpha}_{k+1} - \bar{\alpha}_{k}}{\bar{\alpha}_{k}}} \tilde{a}_k \big\|^2.
    \end{split}
    \end{talign}
    
    Compute the gradient $\nabla_{\theta} \mathcal{L}(\theta)$ and update $\theta$ using favorite gradient descent algorithm.
  }
\KwOut{Fine-tuned vector field $v^{\mathrm{finetune}}$}}
\caption{Adjoint Matching for fine-tuning DDIM}
\label{alg:adjoint_matching_finetuning_DDIM}
\end{algorithm}

Note that for each pair of equations \eqref{eq:EM_update_box_DDIM}-\eqref{eq:EM_update_box_DDIM_2}, \eqref{eq:Euler_lean_adjoint_DDIM}-\eqref{eq:Euler_lean_adjoint_DDIM_2}, 
\eqref{eq:adj_matching_algorithm_box_DDIM}-\eqref{eq:adj_matching_algorithm_box_DDIM_2}, the first equation corresponds to the updates in the DDPM paper, while the second equation is an Euler-Maruyama / Euler discretization of the continuous-time object. 
To check that both discretizations are equal up to first order, remark that
\begin{talign}
    \sqrt{\frac{\bar{\alpha}_{k+1}}{\bar{\alpha}_{k}}} = \sqrt{1 + \frac{\bar{\alpha}_{k+1} - \bar{\alpha}_{k}}{\bar{\alpha}_{k}}} \approx 1 + \frac{\bar{\alpha}_{k+1} - \bar{\alpha}_{k}}{2\bar{\alpha}_{k}} + O((\bar{\alpha}_{k+1} - \bar{\alpha}_{k})^2).
\end{talign}

\section{Adapting diffusion fine-tuning baselines to flow matching}

\subsection{Adapting ReFL \citep{xu2023imagereward} to flow matching}
Reward Feedback Learning (ReFL) is a diffusion fine-tuning algorithm introduced by \cite{xu2023imagereward} which tries to increase the reward on denoised samples. Namely, if $\bm{X} = (X_t)_{t \in [0,1]}$ is the solution of the DDPM SDE \eqref{eq:euler_maruyama_DDPM}, we can denoise $X_t$ as
\begin{talign}
    \hat{X}_1(X_t) = \frac{X_t - \sqrt{1-\bar{\alpha}_t} \epsilon(X_t,t)}{\sqrt{\bar{\alpha}_t}}.
\end{talign}
This equation follows from the stochastic interpolant equation \eqref{eq:reference_flow} if we replace $\bar{X}_0$ with the noise predictor $\epsilon(X_t,t)$. And then, the ReFL optimization update is based on the gradient:
\begin{talign}
    \nabla_{\theta} r(\hat{X}_1(X_t)) = \nabla_{\theta} r\big(\frac{X_t - \sqrt{1-\bar{\alpha}_t} \epsilon_{\theta}(X_t,t)}{\sqrt{\bar{\alpha}_t}} \big),
\end{talign}
where the trajectories have been detached. 

To adapt ReFL to Flow Matching, we need to express the denoiser map in terms of the vector field $v$. We have that
\begin{talign}
\begin{split}
    v(x,t) &= \mathbb{E} \big[\dot{\beta}_t \bar{X}_0 + \dot{\alpha}_t \bar{X}_1 \big| \beta_t \bar{X}_0 + \alpha_t \bar{X}_1 = x \big] = \mathbb{E} \big[ \frac{\dot{\beta}_t}{\beta_t} \big( \beta_t \bar{X}_0 + \alpha_t \bar{X}_1 \big) + \big( \dot{\alpha}_t - \frac{\dot{\beta}_t}{\beta_t} \alpha_t \big) \bar{X}_1 \big| \beta_t \bar{X}_0 + \alpha_t \bar{X}_1 = x \big] \\ &= \frac{\dot{\beta}_t}{\beta_t} x + \big( \dot{\alpha}_t - \frac{\dot{\beta}_t}{\beta_t} \alpha_t \big) \hat{X}_1(x,t).
\end{split}
\end{talign}
where we defined the denoiser map $\hat{X}_1(x,t) := \mathbb{E}\big[\bar{X}_1|\beta_t \bar{X}_0 + \alpha_t \bar{X}_1 = x\big]$.
Hence, 
\begin{talign} \label{eq:FM_denoiser}
\hat{X}_1(x,t) = \frac{v(x,t) - \frac{\dot{\beta}_t}{\beta_t} x}{\dot{\alpha}_t - \frac{\dot{\beta}_t}{\beta_t} \alpha_t}.
\end{talign}

\subsection{Adapting Diffusion-DPO \citep{wallace2023diffusion} to flow matching}
The Diffusion-DPO loss assumes access to ranked pairs of generated samples $x_1^w \succ x_1^l$, where $x^w$ and $x^l$ are the winning and losing samples. For DDPM, the loss implemented in practice reads \citep[Eq.~46]{wallace2023diffusion}:
\begin{talign}
\begin{split}
    L_{\mathrm{DPO}}(\theta) &= - \mathbb{E}_{(x_1^w, x_1^l) \sim \mathcal{D}, k \sim U[0,K], x_{kh}^w \sim q(x_{kh}^w|x_1^w), x_t^l \sim q(x_{kh}^l|x_1^l)} \big[ \\ &\qquad \log S \big( - \frac{\tilde{\beta}}{2} 
    \big( \|\varepsilon^w - \epsilon_{\theta}(x_{kh}^w,kh) \|^2 - \|\varepsilon^w - \epsilon_{\mathrm{ref}}(x_{kh}^w,kh) \|^2 \\ &\qquad\qquad\qquad\quad - \big( \|\varepsilon^l - \epsilon_{\theta}(x_{kh}^l,kh) \|^2 - \|\varepsilon^l - \epsilon_{\mathrm{ref}}(x_{kh}^l,kh) \|^2 \big) \big) \big) \big],
\end{split}
\end{talign}
where $S(x) = \frac{1}{1+e^{-x}}$ denotes the sigmoid function, and $q(x_{kh}^{*}|x_1^{*})$ is the conditional distribution of the forward process, i.e. $x_{kh}^{*}$ is sampled as $x_{kh}^* = \sqrt{\gamma_{kh}} x_1^{*} + \sqrt{1-\gamma_{kh}} \epsilon$, $\epsilon \sim N(0,I)$. Following the derivation of the Diffusion-DPO loss in \cite[Sec.~S4]{wallace2023diffusion}, we observe that the term $- \frac{\tilde{\beta}}{2} \|\varepsilon^w - \epsilon_{\theta}(x_{kh}^w,kh) \|^2$ arises from 
\begin{talign}
    -\frac{\tilde{\beta}}{2\frac{1-\gamma_{kh}}{\gamma_{kh}}} \|\hat{x}_1(x_{kh}^w) - x_1^w\|^2, 
\end{talign}
up to a constant term in $\theta$. If we switch to the more general flow matching scheme, the analog of this term is
\begin{talign} \label{eq:denoiser_FM_1}
    -\frac{\tilde{\beta}}{2\frac{\beta^2_{kh}}{\alpha^2_{kh}}} \|\hat{x}_1(x_{kh}^w) - x_1^w\|^2.
\end{talign}
Using the expression of the denoiser map in terms of the vector field $v$ in equation \eqref{eq:FM_denoiser}, we can rewrite \eqref{eq:denoiser_FM_1} as:
\begin{talign} \label{eq:denoiser_FM_2}
    -\frac{\tilde{\beta}}{2\frac{\beta^2_{kh}}{\alpha^2_{kh}}} \big\|\frac{v(x^w_{kh},kh) - \frac{\dot{\beta}_{kh}}{\beta_{kh}} x^w_{kh}}{\dot{\alpha}_{kh} - \frac{\dot{\beta}_{kh}}{\beta_{kh}} \alpha_{kh}} - x_1^w \big\|^2 = -\frac{\tilde{\beta}}{2
    } \big\|\frac{v(x^w_{kh},kh) - \frac{\dot{\beta}_{kh}}{\beta_{kh}} x^w_{kh}}{\frac{\dot{\alpha}_{kh}}{\alpha_{kh}} \beta_{kh} - \dot{\beta}_{kh}} - \frac{\alpha_{kh}}{\beta_{kh}} x_1^w \big\|^2.
\end{talign}
Thus, the Diffusion-DPO loss for Flow Matching reads
\begin{talign}
\begin{split} \label{eq:diff_dpo_2}
    L_{\mathrm{DPO}}(\theta) &= - \mathbb{E}_{(x_1^w, x_1^l) \sim \mathcal{D}, k \sim U[0,K], x_{kh}^w \sim q(x_{kh}^w|x_1^w), x_t^l \sim q(x_{kh}^l|x_1^l)} \big[ \\ &\qquad \log S \big( - \frac{\tilde{\beta}}{2} 
    \big( \big\|\frac{v_{\theta}(x^w_{kh},kh) - \frac{\dot{\beta}_{kh}}{\beta_{kh}} x^w_{kh}}{\frac{\dot{\alpha}_{kh}}{\alpha_{kh}} \beta_{kh} - \dot{\beta}_{kh}} - \frac{\alpha_{kh}}{\beta_{kh}} x_1^w \big\|^2 - \big\|\frac{v_{\mathrm{ref}}(x^w_{kh},kh) - \frac{\dot{\beta}_{kh}}{\beta_{kh}} x^w_{kh}}{\frac{\dot{\alpha}_{kh}}{\alpha_{kh}} \beta_{kh} - \dot{\beta}_{kh}} - \frac{\alpha_{kh}}{\beta_{kh}} x_1^w \big\|^2 \\ &\qquad\qquad\qquad\quad - \big( \big\|\frac{v_{\theta}(x^l_{kh},kh) - \frac{\dot{\beta}_{kh}}{\beta_{kh}} x^l_{kh}}{\frac{\dot{\alpha}_{kh}}{\alpha_{kh}} \beta_{kh} - \dot{\beta}_{kh}} - \frac{\alpha_{kh}}{\beta_{kh}} x_1^l \big\|^2 - \big\|\frac{v_{\mathrm{ref}}(x^l_{kh},kh) - \frac{\dot{\beta}_{kh}}{\beta_{kh}} x^l_{kh}}{\frac{\dot{\alpha}_{kh}}{\alpha_{kh}} \beta_{kh} - \dot{\beta}_{kh}} - \frac{\alpha_{kh}}{\beta_{kh}} x_1^l \big\|^2 \big) \big) \big) \big],
\end{split}
\end{talign}
\cite[Sec.~5.1]{wallace2023diffusion} claim that $\beta \in [2000,5000]$ yields good performance on Stable Diffusion 1.5 and Stable Diffusion XL-1.0, which if we translate to our notation corresponds to $\tilde{\beta} \in [4000,10000]$.

When we have access to the reward function $r$, instead of a winning sample $x^w_1$ and a losing sample $x^l_1$, we have a pair of samples $(x^a_1,x^b_1)$ with winning weights $S(r(x^a_1) - r(x^b_1)) = \frac{1}{1+\exp \big(r(x^b_1) - r(x^a_1) \big)}$, $S(-(r(x^a_1) - r(x^b_1))) = \frac{1}{1+\exp \big(-(r(x^b_1) - r(x^a_1)) \big)}$. Hence, the loss \eqref{eq:diff_dpo_2} becomes:
\begin{talign}
\begin{split} \label{eq:diff_dpo_3}
    L_{\mathrm{DPO}}(\theta) &= - \mathbb{E}_{(x_1^a, x_1^b) \sim \mathcal{D}, k \sim U[0,K], x_{kh}^a \sim q(x_{kh}^a|x_1^a), x_t^b \sim q(x_{kh}^b|x_1^b)} \bigg[ \sum_{s \in \{ \pm 1\}} S\big( s (r(x^a_1) - r(x^b_1)) \big) \times \\ &\qquad \log S \big( - \frac{s\tilde{\beta}}{2} 
    \big( \big\|\frac{v_{\theta}(x^a_{kh},kh) - \frac{\dot{\beta}_{kh}}{\beta_{kh}} x^a_{kh}}{\frac{\dot{\alpha}_{kh}}{\alpha_{kh}} \beta_{kh} - \dot{\beta}_{kh}} - \frac{\alpha_{kh}}{\beta_{kh}} x_1^a \big\|^2 - \big\|\frac{v_{\mathrm{ref}}(x^a_{kh},kh) - \frac{\dot{\beta}_{kh}}{\beta_{kh}} x^a_{kh}}{\frac{\dot{\alpha}_{kh}}{\alpha_{kh}} \beta_{kh} - \dot{\beta}_{kh}} - \frac{\alpha_{kh}}{\beta_{kh}} x_1^a \big\|^2 \\ &\qquad\qquad\qquad\quad - \big( \big\|\frac{v_{\theta}(x^b_{kh},kh) - \frac{\dot{\beta}_{kh}}{\beta_{kh}} x^b_{kh}}{\frac{\dot{\alpha}_{kh}}{\alpha_{kh}} \beta_{kh} - \dot{\beta}_{kh}} - \frac{\alpha_{kh}}{\beta_{kh}} x_1^b \big\|^2 - \big\|\frac{v_{\mathrm{ref}}(x^b_{kh},kh) - \frac{\dot{\beta}_{kh}}{\beta_{kh}} x^b_{kh}}{\frac{\dot{\alpha}_{kh}}{\alpha_{kh}} \beta_{kh} - \dot{\beta}_{kh}} - \frac{\alpha_{kh}}{\beta_{kh}} x_1^b \big\|^2 \big) \big) \big) \bigg].
\end{split}
\end{talign}
We want to emphasize that despite the similarities, even though the loss $L_{\mathrm{DPO}}$ that we use (equation \eqref{eq:diff_dpo_3}) is very similar to the one implemented by \cite{wallace2023diffusion}, the preference data pairs that we use are very different from theirs. We sample the preference data from the current model, which results in imperfect samples, while they consider off-policy, high-quality, curated preference samples. The reason for this discrepancy is that the starting point of our work is a reward model, not a set of preference data, and we only benchmark against approaches that leverage reward models for an apples-to-apples comparison. Our experimental results on DPO (\Cref{tab:evaluation_metrics}, \Cref{fig:training_figures}, \Cref{table:metrics_multiprompt_diversity}) show that the resulting model performs like the base model, or a bit worse according to some metrics. Hence, we conclude that DPO is not a competitive alternative for on-policy fine-tune when the base model is not already good.

\section{Experimental details}
\label{sec:experimental_details}

Unless otherwise specified, we used the same hyperparameters across all fine-tuning methods. Namely, we used:
\begin{itemize}
    \item $K=40$ timesteps.
    \item Adam optimizer with learning rate $\num{2e-5}$ and parameters $\beta_1 = 0.95$, $\beta_2 = 0.999$, $\epsilon=\num{1e-8}$, weight decay $\num{1e-2}$, gradient norm clipping value $1$. For Discrete Adjoint, these hyperparameters resulted in fine-tuning instability (see \Cref{table:alternative_hyperparameters}); the results that we report in all other tables for Discrete Adjoint were obtained with learning rate $\num{1e-5}$.
    \item Bfloat16 precision.
    \item Effective batch size 40; for each run we used two 80GB A100 GPUs with batch size 20 each.
    \item A set of 40k fine-tuning prompts taken from a licensed dataset consisting of text and image pairs (note that we disregarded the images). Thus, each epoch lasts 1000 iterations; see the total amount of fine-tuning iterations for each algorithm in \Cref{table:metrics_multiprompt_diversity}. For each of the three runs that we perform for each data point that we report, the set of 40k prompts is sampled independently among a total set of 100k prompts.
\end{itemize}

\subsection{Noise schedule details} \label{subsec:noise_schedule}
Since we use $K=40$ discretization steps, the timesteps are $t \in \{0, 0.025, 0.05, 0.075, 0.1, \dots, 0.95, 0.975\}$. To sample $X_{t+h}$ from $X_t$ we use equation \eqref{eq:EM_update_box}. We use the choices $\alpha_t = t$, $\beta_t = 1-t$, which means that $\sigma(t) = \sqrt{2 \beta_t (\frac{\dot{\alpha}_t}{\alpha_t} \beta_t - \dot{\beta}_t)} = \sqrt{2 (1-t) (\frac{1-t}{t} + 1)} = \sqrt{\frac{2 (1-t)}{t}}$. 

Note that if we plug $t=0$ into this expression, we obtain infinity, and if we plug $t \lessapprox 1$, we obtain $\sigma(t) \approx 0$. For obvious reasons, the former issue requires a fix: we simply add a small offset to the denominator of $\sigma(t)$, replacing $\sqrt{1/t}$ by $\sqrt{1/(t+h)}$ (note that $h:= 1/K = 0.025$). But the latter issue is also not completely satisfactory from a practical standpoint, because looking at the adjoint matching loss \eqref{eq:lean_adjoint_matching}, we observe that $u(X^{\bar{u}}_t,t)$ is trained to approximate the conditional expectation of  $\sigma(t)\tran{} \tilde{a}(t;\bm{X}^{\bar{u}})$. Thus, if we set $\sigma(t)$ very close to zero for $t \lessapprox 1$, we are forcing the control $u$ to be close to zero as well, or equivalently preventing $v^{\mathrm{finetune}}$ from deviating from $v^{\mathrm{base}}$. While this is the right thing to do from a theoretical perspective, we concluded experimentally that setting $\sigma(t)$ just slightly larger results in substantially faster fine-tuning, thanks to the additional leeway provided to $v^{\mathrm{finetune}}$ to deviate from $v^{\mathrm{base}}$. In particular, we added a small offset to the factor $1-t$ in the numerator $1-t$ of $\sigma(t)$: we replaced $1-t$ by $1-t+h$. Thus, the expression that we used to compute the diffusion coefficient in our experiments is
\begin{talign} \label{eq:sigma_empirical}
    \sigma(t) = \sqrt{\frac{2 (1-t + h)}{t + h}}.
\end{talign}
When solving the lean adjoint ODE \eqref{eq:lean_adjoint_1}-\eqref{eq:lean_adjoint_2} backwards in time via the Euler scheme \eqref{eq:Euler_lean_adjoint}, the timesteps we use are $t\in \{1, 0.975, 0.95, 0.925, 0.9, \dots, 0.05, 0.025\}$. We do not actually initialize the adjoint state as $\nabla_x g(X_1)$, but rather as $\nabla_x g(\hat{X}_1)$, where $\hat{X}_1 := X_{1-h} + h v^{\mathrm{base}}(X_{1-h},1-h)$. That is, $\hat{X}_1$ is obtained by performing a final noiseless update, instead of using noise $\sigma(1-h) = \sqrt{4h}$ given by equation \eqref{eq:sigma_empirical}. The reason for this is that the regular final iterate $X_1$ contains some noise that was added in the final step, and that can distort the gradient $\nabla_x g(X_1)$. By setting $\tilde{a}(1;\bm{X}) = \nabla_x g(X_1)$, we get rid of this bias. Note that in the continuous time limit $h \to 0$, $\hat{X}_1 = X_1$, which means that this small trick is consistent.

\subsection{Selection of gradient evaluation timesteps} \label{subsec:timestep_selection}
In \Cref{alg:adjoint_matching_finetuning_FM}, equation \eqref{eq:adj_matching_algorithm_box}, we state that the term $\big\|\frac{2}{\sigma(t)} \big(v^{\mathrm{finetune}}_{\theta}(X_t, t) - v^{\mathrm{base}}(X_t, t) \big) + \sigma(t) \tilde{a}_t \big\|^2$ must be computed for all $K$ steps in $\{0, \dots, 1-h\}$. However, the gradient signal provided by backpropagating through this expression for consecutive times $t$ and $t+h$ is quite similar. In the interest of computational efficiency, we sample a subset $\mathcal{K}$ of timesteps, and we only compute and backpropagate the terms $\big\|\frac{2}{\sigma(t)} \big(v^{\mathrm{finetune}}_{\theta}(X_t, t) - v^{\mathrm{base}}(X_t, t) \big) + \sigma(t) \tilde{a}_t \big\|^2$ for those timesteps. We construct $\mathcal{K}$ by sampling ten timesteps uniformly without repetition among $\{0, \dots, 0.725\}$, and always sampling the last ten timesteps $\{0.75, \dots, 0.975\}$. This is because fine-tuning the last ten steps (25\% of the total) well is critical for good empirical performance, while the initial steps are not as important.

\subsection{Loss function clipping: the $\mathrm{LCT}$ hyperparameter}
Note that the magnitude of $\sigma(t)\tran{} a(t;\bm{X}^{\bar{u}},\bar{u})$ is much larger for times $t \gtrapprox 0$ than for times $t \lessapprox 1$. The reason is two-fold: 
\begin{itemize}
    \item As discussed in \Cref{subsec:noise_schedule}, $\sigma(t)$ is much larger for $t \gtrapprox 0$ than for $t \lessapprox 1$.
    \item The magnitude of the lean adjoint state $\tilde{a}$ grows roughly exponentially as $t$ goes backward in time. In fact, if we assumed that $\nabla_x b(X_t,t)$ is constant in time, this statement would be exact.
\end{itemize}
Observe that when $\sigma(t)\tran{} a(t;\bm{X}^{\bar{u}},\bar{u})$ is large, the gradient $\nabla_{\theta} \big\|\frac{2}{\sigma(t)} \big(v^{\mathrm{finetune}}_{\theta}(X_t, t) - v^{\mathrm{base}}(X_t, t) \big) + \sigma(t) \tilde{a}_t \big\|^2$ also has a high magnitude. Including such terms in our gradient computation decreases the signal to noise ratio of the gradient. Even more so, as discussed in \Cref{subsec:timestep_selection} for good practical performance it is critical to get a good gradient signal from the last 25\% steps. Hence, including the high-magnitude terms for $t \lessapprox 0$ in our gradients can muffle these other important, low-magnitude terms.

To fix this issue, we clip the terms such that $\big\|\frac{2}{\sigma(t)} \big(v^{\mathrm{finetune}}_{\theta}(X_t, t) - v^{\mathrm{base}}(X_t, t) \big) + \sigma(t) \tilde{a}_t \big\|^2 > \mathrm{LCT}$, where $\mathrm{LCT}$ stands for the loss clipping threshold. That is, the adjoint matching loss that we use in our experiments is of the form:
\begin{talign}
    \hat{\mathcal{L}}_{\mathrm{Adj-Match}}(\theta) =
        \sum_{t\in \mathcal{K}} \min\big\{\mathrm{LCT}, \big\|\frac{2}{\sigma(t)} \big(v^{\mathrm{finetune}}_{\theta}(X_t, t) - v^{\mathrm{base}}(X_t, t) \big) + \sigma(t) \tilde{a}_t \big\|^2 \big\},
\end{talign}
where $\mathcal{K}$ is the random timestep subset described in \Cref{subsec:timestep_selection}. 

For adjoint matching, we set $\mathrm{LCT} = 1.6 \times \lambda^2$. Remark that $\mathrm{LCT}$ needs to grow quadratically with $\lambda$, because the magnitude of the lean adjoint $\tilde{a}$ grows quadratically with $\lambda$. We set the constant 1.6 through experimentation; all or almost all of the terms for the last ten timesteps fall below $\mathrm{LCT}$, but only a fraction of the terms ($\approx 25 \%$) for the first ten steps fall below $\mathrm{LCT}$. The constant for $\mathrm{LCT}$ is a relevant hyperparameter that needs to be tuned to obtain a similar behavior.

We also used loss function clipping on the continuous adjoint loss. For that loss we set $\mathrm{LCT} = 1600 \times \lambda^2$. The reason is that the magnitude of the regular adjoint states is significantly larger than the magnitude of the lean adjoint states (which is a big reason why adjoint matching outperforms the continuous adjoint).

\subsection{Computation of evaluation metrics} \label{subsec:evaluation_metrics}
We used the \url{open_clip} library \citep{ilharco2021openclip} to compute ClipScores. We computed ClipScore diversity as the variance of Clip embeddings of 40 generations for a given prompt, averaged across 25 prompts. Namely,
\begin{talign}
    \mathrm{ClipScore\_Diversity} = \frac{1}{40} \sum_{k=1}^{40} \frac{2}{25 \cdot 24} \sum_{1 \leq i < j \leq 25} \|\mathrm{Clip}(g^k_i) - \mathrm{Clip}(g^k_j)\|^2, 
\end{talign}
where $g^k_i$ denotes the $i$-th generation for the $k$-th prompt.

We used the \url{transformers} library to compute the PickScore processor and model \citep{kirstain2023pickapic}. PickScore diversity is computed in analogy with ClipScore diversity.

We used the \url{hps} library to compute values of Human Preference Score v2 \citep{wu2023human}.

To compute Dreamsim diversity we use the \url{dreamsim} library \citep{fu2023learning}. Dreamsim diversity is computed in analogy with ClipScore diversity.

\subsection{Remarks on computational costs} \label{subsec:remarks_computational_cost}
Observe from the figures reported in \Cref{table:metrics_multiprompt_diversity} that the per iteration wall-clock time of Adjoint Matching (156 seconds) is very similar to that of the Discrete Adjoint loss (152 seconds). The reason is that both algorithms perform a similar amount of forward and backward passes on the flow matching model and the reward model. Namely, for each sample in the batch, both algorithms perform $K$ forward passes on the flow model to obtain the trajectories. In order to compute the gradient of the training loss, the Discrete Adjoint loss does $K$ additional forward passes to evaluate the base flow model, one forward and backward pass on the reward model, and $K$ backward passes on the current flow model, which typically use gradient checkpointing to avoid memory overflow. In the case of Adjoint Matching, solving the lean adjoint ODE requires one forward and backward pass on the reward model, and $K$ backward passes on the base flow model. Finally, computing the gradient of the loss takes $K/2$ additional backward passes if we evaluate at only half of the timesteps as we do, although this computation is much quicker because it can be fully parallelized.

Meanwhile, computing the gradient of the Continuous Adjoint loss takes 204 seconds per iteration. With respect to Adjoint Matching, Continuous Adjoint performs additional backward passes to compute the gradients $\nabla_{X_t} \|u(X_t,t)\|^2$ when solving the adjoint ODE. Finally, we observe that models that directly fine-tune the reward are quicker, but that comes with its own set of issues that we discuss throughout the paper.

\subsection{Remarks on number of sampling timesteps} \label{subsec:remarks_sampling_timesteps}

In our experiments and all baselines, we used 40 timesteps in the fine-tuning procedure ($h=1/40$ in \Cref{alg:adjoint_matching_finetuning_FM}). The experiments reported in all tables and figures except for \Cref{table:metrics_multiprompt_sampling_steps} were performed at 40 inference timesteps. In \Cref{table:metrics_multiprompt_sampling_steps} (\Cref{sec:additional_figures_tables}), we show experimental results at 10, 20, 40, 100, and 200 inference timesteps, for the base model and the models fine-tuned with adjoint matching and DRaFT-1. We make the following observations about the results: 
\begin{itemize}
\item The metrics for Adjoint Matching at 100 and 200 timesteps are statistically equal to the ones for 40 timesteps, with slight increases in Dreamsim diversity. This suggests that fine-tuning at large numbers of timesteps is a good idea if we want to perform inference at a large number of timesteps, as otherwise the capabilities of the model are limited by the number of fine-tuning timesteps instead of the inference compute. Also, at 100 and 200 timesteps the difference in performance of Adjoint Matching relative to DRaFT-1 increases.
\item The metrics for Adjoint Matching at 10 and 20 timesteps are worse than at 40 timesteps, especially for 10. The difference in performance between Adjoint Matching and DRaFT-1 vanishes at 10 timesteps for all metrics except for diversity, for which Adjoint Matching is still clearly better.
\end{itemize}

\end{document}